\documentclass{article}

\usepackage{microtype}
\usepackage{graphicx}
\usepackage{subcaption}
\usepackage{booktabs} 
\usepackage{latexsym}
\usepackage{amsmath}
\usepackage{amssymb}
\usepackage{amsthm}
\usepackage{epsfig}
\usepackage{xcolor}
\usepackage{graphicx}
\usepackage{bm}
\usepackage{enumitem}
\usepackage{mathtools}
\usepackage{graphicx}
\usepackage{tikz}
\usepackage{ mathrsfs }
\usepackage[toc,page]{appendix}
\usepackage{graphicx}

\usepackage{bbm}
\usepackage{ytableau}
\usepackage{tkz-berge}
\usepackage{physics}

\usepackage{hyperref}
\hypersetup{
    colorlinks,
    linkcolor={blue!80!black},
    citecolor={green!50!black},
}
\colorlet{linkequation}{blue}

\usepackage{authblk}
\usepackage[textsize=tiny]{todonotes}

\usepackage{hyperref}

 \usepackage[accepted]{icml2026}

\renewcommand{\P}{\mathbb{P}}
\newcommand{\E}{\mathbb{E}}

\newcommand{\cN}{\mathcal{N}}

\newcommand{\R}{\mathbb{R}}

\renewcommand{\S}{\mathbb{S}}

\newcommand{\eps}{\varepsilon}

\newcommand{\<}{\langle}
\renewcommand{\>}{\rangle}

\newcommand{\diag}{\text{diag}}
\newcommand{\Diag}{\text{Diag}}
\newcommand{\ones}{\bm{1}}

\def\sT{{\mathsf T}}

\DeclareMathOperator*{\argmin}{arg\,min}

\newtheorem{theorem}{Theorem}
\newtheorem*{theorem*}{Theorem}
\newtheorem{lemma}{Lemma}

\newtheorem{assumption}{Assumption}
\newtheorem{definition}{Definition}

\newtheorem{claim}{Claim}

\newtheorem{corollary}{Corollary}

\theoremstyle{definition}

\newtheoremstyle{myremark} 
    {\topsep}                    
    {\topsep}                    
    {\it}                        
    {}                           
    {\bf}                        
    {.}                          
    {.5em}                       
    {}  

\theoremstyle{myremark}

\DeclareSymbolFont{rsfs}{U}{rsfs}{m}{n}
\DeclareSymbolFontAlphabet{\mathscrsfs}{rsfs}

\def\bA{{\boldsymbol A}}
\def\bB{{\boldsymbol B}}

\def\bD{{\boldsymbol D}}

\def\bF{{\boldsymbol F}}
\def\bG{{\boldsymbol G}}

\def\bI{{\boldsymbol I}}

\def\bM{{\boldsymbol M}}

\def\bQ{{\boldsymbol Q}}
\def\bR{{\boldsymbol R}}

\def\bU{{\boldsymbol U}}

\def\bX{{\boldsymbol X}}

\def\bZ{{\boldsymbol Z}}

\def\ba{{\boldsymbol a}}

\def\boldf{{\boldsymbol f}}
\def\bg{{\boldsymbol g}}

\def\bu{{\boldsymbol u}}
\def\bv{{\boldsymbol v}}
\def\bw{{\boldsymbol w}}
\def\bx{{\boldsymbol x}}
\def\by{{\boldsymbol y}}
\def\bz{{\boldsymbol z}}

\def\bbeta{{\boldsymbol \beta}}

\def\beps{{\boldsymbol \eps}}

\def\bLambda{{\boldsymbol \Lambda}}

\def\bSigma{{\boldsymbol \Sigma}}
\def\bTheta{{\boldsymbol \Theta}}

\def\hf{{\hat f}}

\def\cR{\mathcal{R}}
\def\test{{\rm test}}

\def\Tr{{\rm Tr}}

\def\Unif{{\rm Unif}}

\def\cW{{\mathcal W}}

\def\cE{{\mathcal E}}

\def\cW{{\mathcal W}}

\def\Unif{{\sf Unif}}

\def\proj{{\mathsf P}}

\def\naturals{{\mathbb N}}

\def\proj{{\mathsf P}}

\def\Unif{{\sf Unif}}

\def\proj{{\mathsf P}}

\def\naturals{{\mathbb N}}
\def\proj{{\mathsf P}}

\def\sV{{\sf V}}

\def\hf{{\hat f}}

\def\cE{{\mathcal E}}

\def\cX{{\mathcal X}}

\def\Unif{{\rm Unif}}

\def\cE{{\mathcal E}}

\def\cX{{\mathcal X}}

\def\bA{{\boldsymbol A}}

\def\bTheta{{\boldsymbol \Theta}}
\def\bLambda{{\boldsymbol \Lambda}}

\def\diag{{\rm diag}}

\def\bD{{\boldsymbol D}}

\def\hf{\hat f}

\def\bR{{\boldsymbol R}}

\def\ind{\mathbbm{1}}

\def\boldf{\boldsymbol{f}}
\def\bXi{\boldsymbol{\Xi}}
\def\cB{\mathcal{B}}

\def\sR{\mathsf R}
\def\sV{\mathsf V}
\def\sB{\mathsf B}

\def\boldf{\boldsymbol{f}}
\def\bXi{\boldsymbol{\Xi}}
\def\cB{\mathcal{B}}

\def\sR{\mathsf R}
\def\sV{\mathsf V}
\def\sB{\mathsf B}

\def\hbSigma{\hat{\bSigma}}
\def\sfC{{\sf C}}
\def\sfc{{\sf c}}

\def\hbSigma{\widehat{\bSigma}}

\def\trho{\widetilde{\rho}}

\def\st{\mathsf{t}}
\def\ss{\mathsf{s}}
\def\op{{\rm op}}

\usepackage[capitalize,noabbrev]{cleveref}

\usepackage[textsize=tiny]{todonotes}

\icmltitlerunning{Improved Scaling Laws via Weak-to-Strong Generalization in Random Feature Ridge Regression}

\begin{document}

\twocolumn[
  \icmltitle{Improved Scaling Laws via Weak-to-Strong Generalization\\ in Random Feature Ridge Regression}



  \icmlsetsymbol{equal}{*}

  \begin{icmlauthorlist}
    \icmlauthor{Diyuan Wu}{yyy}
    \icmlauthor{Lehan Chen}{amyale}
    \icmlauthor{Theodor Misiakiewicz}{sdsyale,equal}
    \icmlauthor{Marco Mondelli}{yyy,equal}
  \end{icmlauthorlist}

  \icmlaffiliation{yyy}{Institute of Science and Technology Austria (ISTA)}
  \icmlaffiliation{amyale}{Department of Applied and Computational Mathematics, Yale University}
  \icmlaffiliation{sdsyale}{Department of Statistics and Data Science, Yale University}

  \icmlcorrespondingauthor{Diyuan Wu}{diyuan.wu@ist.ac.at}


  \icmlkeywords{Machine Learning, ICML}

  \vskip 0.3in
]



 \printAffiliationsAndNotice{\icmlEqualContribution}

\begin{abstract}
It is increasingly common in machine learning to use learned models to label data and then employ such data to train more capable models. The phenomenon of \textit{weak-to-strong generalization} exemplifies the advantage of this two-stage procedure: a strong student is trained on imperfect labels obtained from a weak teacher, and yet the strong student outperforms the weak teacher. In this paper, we show that the potential improvement is substantial, in the sense that it affects the scaling law followed by the test error. Specifically, we consider students and teachers trained via random feature ridge regression (RFRR). Our main technical contribution is to derive a deterministic equivalent for the excess test error of the student trained on labels obtained via the teacher. Via this deterministic equivalent, we then identify regimes in which the scaling law of the student improves upon that of the teacher, unveiling that the improvement can be achieved both in bias-dominated and variance-dominated settings. Strikingly, the student may attain the minimax optimal rate regardless of the scaling law of the teacher---in fact, when the test error of the teacher does not even decay with the sample size. 
\end{abstract}

\section{Introduction}

Across modern machine learning pipelines, models often generate synthetic labels (or synthetic data altogether), which are then used to train other models via a cheaper---but imperfect---supervision.
This two-stage paradigm appears in knowledge distillation \cite{hinton2015distilling,panigrahi2025good}, self-training \cite{xie2020self} and, more recently, in proposals where weaker agents supervise stronger ones \citep{burns2023weak}. Weak-to-strong  generalization (W2SG) formalizes a particularly striking version of such a proposal: a strong student (a pretrained GPT-4 model in the experiments by \citet{burns2023weak}) is fine-tuned on labels produced by a weak teacher (a pretrained GPT-2 model) and, despite the labels being imperfect, the student outperforms the teacher. This motivates the central question of \emph{how much a strong student can improve upon its weak teacher} and, in particular, whether this improvement can happen at the level of the exponent in the \emph{scaling law} \cite{hestness2017deep,kaplan2020scaling,hoffmann2022training}.

On the empirical side, distillation scaling laws have characterized how student performance depends on compute allocation between teacher and student \citep{busbridgedistillation}, and related questions have been explored for scalable oversight \citep{engels2025scaling}. On the theoretical side, a rich line of work has established scaling laws for linear regression \cite{maloney2022solvable,bahri2024explaining,lin2024scaling,lin2025improved,atanasov2024scaling} and simple neural networks \cite{paquette20244+,bordelon2024dynamical,ferbach2025dimension,ren2025emergence,defilippis2024dimension}. Several recent papers have also given theoretical insights into W2SG \cite{charikar2024quantifying,lang2024theoretical,somerstep2025transfer,yao2025understanding,moniri2025mechanisms,xue2025representations}, identifying mechanisms that allow the strong student to outperform the weak teacher, such as early stopping \cite{medvedev2025weak}, feature discrepancy \cite{dongdiscrepancies}, or benign overfitting \cite{wu2024provable}. However,  the only existing result relating W2SG to scaling laws is \emph{negative}: \citet{emrullah2025high} show that, for ridgeless linear regression, training on surrogate teacher labels can improve performance, but it is unable to improve the scaling law. 

Our paper demonstrates that the interplay of regularization and over-parameterization drastically changes the picture: we establish that \emph{improved scaling laws} are already possible in a tractable non-linear model, i.e., \emph{random feature ridge regression (RFRR)}. More precisely, we consider a setting where both teacher and student are trained via RFRR: first, the teacher is fit on $n_\st$ labeled samples using $p_\st$ random features and ridge parameter $\lambda_\st$; then, the student draws $n_\ss$ fresh unlabeled inputs, and it is trained purely on teacher-generated labels using $p_\ss$ random features and ridge parameter $\lambda_\ss$.
Our contributions are summarized as follows:

\begin{itemize}
    \item We show a dimension-free \emph{deterministic equivalent}, together with non-asymptotic approximation guarantees, for the excess test error incurred by the student trained with labels obtained from the teacher  (Theorem~\ref{thm:additive_err}).
\vspace{-.2em}
    \item Then, by leveraging the deterministic equivalent, we obtain \emph{scaling laws} for the student error under \emph{source and capacity conditions}, i.e., power-law decay of target coefficients and covariance spectrum  (Theorem~\ref{thm:scaling-st}).
\vspace{-.2em}
    \item Finally, we compare the exponent in the scaling law of the student with the exponent in the scaling law of the teacher (the latter was derived in earlier work by \citet{defilippis2024dimension}). This allows us to identify regimes in which the error of the student decays faster than that of the teacher (Corollaries \ref{coro:w2s-cond-var} and \ref{coro:w2s-cond-bias}).

\end{itemize}

\vspace{-.4em}

Overall, our analysis sheds light into the enabling mechanisms for weak-to-strong generalization, underscoring the importance of regularization and number of features in the model: \emph{(i)} if the teacher is optimally tuned, then the student cannot improve the scaling law; \emph{(ii)} if the teacher is variance-dominated (and not optimal), then the student can always improve the scaling law by properly selecting regularization and model size; \emph{(iii)} if the teacher is bias-dominated (and not optimal), then there are still settings in which the student improves the scaling law; and \emph{(iv)} the student may attain the minimax-optimal decay rate, no matter the scaling law of the teacher---even in scenarios where the error of the teacher does not decay to $0$ as the sample size grows.


\vspace{-.4em}

\section{Related work}

\paragraph{Weak-to-strong generalization (W2SG).} 
Several recent theoretical works have analyzed mechanisms leading to W2SG. \citet{charikar2024quantifying} focus on the misfit error incurred by the strong model on labels generated by the weaker model, and \citet{yao2025understanding} extend these findings beyond the square loss to output distribution divergence. \citet{lang2024theoretical} suggest that W2SG occurs when the strong model cannot reproduce the teacher's mistakes without sacrificing performance. A link to benign overfitting is provided in \citet{wu2024provable}, and \citet{xue2025representations} show that W2SG occurs when the parts of the weak model's mistakes representable by the strong model are small. In high-dimensional ridgeless linear regression, \citet{emrullah2025high} show that training on teacher labels can be better than training on clean labels, but it does not improve the exponent in the scaling law. In contrast, we show that by allowing over-parameterization and ridge regularization, the scaling law can improve. Ridge regularization is considered in 
\citet{moniri2025mechanisms}, who show that the student can compensate for the teacher's under-regularization.
Closer to our setting, \citet{medvedev2025weak} focus on the random feature model, but train the teacher on the population and consider students with infinitely many features, so that W2SG can only emerge as a consequence of early-stopping. W2SG arises in the variance-dominated regime in \citet{dongdiscrepancies}, due to the fact that fine-tuning occurs in intrinsically low-dimensional spaces. In constrast, our work unveils that W2SG is possible both in variance-dominated and bias-dominated regimes. Overall, we highlight that the key distinguishing factor of this paper w.r.t.\ earlier work is the improvement in scaling laws and, at the technical level, the fact that we  provide a dimension-free deterministic equivalent for the student error. 

\vspace{-.4em}

\paragraph{Scaling laws.} \citet{bahri2024explaining} propose a theory that explains the emergence of neural scaling laws in terms of variance-limited and resolution-limited scaling regimes. Compute/data/parameter tradeoffs are characterized by \citet{lin2024scaling,paquette20244+}.  \citet{lin2025improved,ferbach2025dimension} focus on the role of data re-use and on the choice of the optimizer, respectively; and shallow neural networks are considered in \citep{defilippis2024dimension,atanasov2024scaling,ren2025emergence}.
Most closely related to our work are scaling law results for kernel and random feature methods under source/capacity conditions \citep{caponnetto2007optimal,rudi2017generalization,cui2023error,defilippis2024dimension}.
Our results contribute to this landscape by establishing the excess test error of a student trained on imperfect labels generated from a teacher, and by comparing student and teacher scaling laws.

\vspace{-.4em}

\paragraph{Random features and deterministic equivalents.}
Introduced by \citet{rahimi2007random,balcan2006kernels}, random feature models have been used to understand various phenomena in deep learning, including double descent \cite{mei2022generalizationb}, feature learning \cite{ba2022highdimensional, moniri2023theory}, robustness under adversarial attacks \cite{dohmatob2022non, bombari2023universal, hassani2024curse}, spurious features and correlations \cite{bombari2024spurious,bombarispurious}, and distribution shift \cite{tripuraneni2021overparameterization, lee2023demystifying}. 
Dimension-free deterministic equivalents for ridge regression and random feature regression have recently enabled refined, non-asymptotic predictions of test error and scaling laws \citep{cheng2024dimension,defilippis2024dimension}, and analogous deterministic-equivalent techniques have been developed for kernel ridge regression \citep{misiakiewicz2024non}.
The deterministic equivalent we prove in this paper extends these tools to a two-stage learning pipeline, which introduces additional dependencies and cross-terms absent in the one-stage setting considered in earlier work.

\vspace{-.4em}

\section{Setting}
\label{sec:setting}

Let $(\cX,\mu_x)$ be an input probability space and $(\cW,\mu_w)$ a weight probability space.  
We denote by $L^2(\cX,\mu_x)$ and $L^2(\cW,\mu_w)$ the associated Hilbert spaces of square-integrable functions.

\paragraph{Random Feature (RF) model.} Let $\varphi : \cX \times \cW \to \R$ be a feature map with
$\varphi \in L^2(\cX \times \cW, \mu_x \otimes \mu_w)$. The associated \emph{random feature model} is
\vspace{-.3em}
\begin{equation}\label{eq:RF-model-setting}
\hf (\bx;\ba) = \frac{1}{\sqrt{p}} \sum_{j = 1}^p a_j \varphi(\bx;\bw_j),
\end{equation}
where $\ba \in \R^p$ are trainable coefficients and $\bw_j \sim_{\text{i.i.d.}} \mu_w$ are fixed random weights. Viewing $\varphi$ as a compact linear operator from $L^2(\cX,\mu_x)$ to $L^2(\cW,\mu_w)$, it admits a diagonalization \cite{mei2022generalization,defilippis2024dimension}:
\vspace{-.3em}
\[
\varphi ( \bx ;\bw ) = \sum_{k = 1}^d \xi_k \psi_k ( \bw) \phi_k (\bx),
\]
where $| \xi_1 | \geq | \xi_2| \geq \cdots$ are the eigenvalues, and $\{\psi_k\}_{k\ge1}$ and $\{\phi_k\}_{k\ge1}$ are orthonormal bases of $L^2(\cW,\mu_w)$ and $L^2(\cX,\mu_x)$, respectively. Define the (potentially infinite-dimensional $d= \infty$) diagonal operator
\vspace{-.3em}
\[
\bSigma \;=\; \diag(\xi_1^2,\xi_2^2,\ldots),
\qquad \text{with} \qquad \Tr(\bSigma) < \infty,
\]
which follows from $\varphi \in L^2(\cX \times \cW,\mu_x \otimes \mu_w)$. It is convenient to rewrite the feature map in inner-product form
\vspace{-.3em}
\begin{equation}\label{eq:inner-product-varphi-setting}
\varphi(\bx;\bw) = \<\boldf, \bg\>,
\end{equation}
where $\boldf := \boldf(\bw)$ and $\bg := \bg(\bx)$ with
\vspace{-.3em}
\[
\boldf(\bw) = (\xi_k \psi_k(\bw))_{k\ge1}, 
\quad
 \bg(\bx) =(\phi_k(\bx))_{k\ge1}.
\]
Under $\bw \sim \mu_w$ and $\bx \sim \mu_x$, the random vectors $\boldf$ and $\Tilde \bg := \bSigma^{1/2}\bg$ are elements of $\ell_2$ (the space of squared-summable sequences), with covariance $\bSigma$. The RF model \eqref{eq:RF-model-setting} becomes
\vspace{-.3em}
\[
\hat f (\bx; \ba) = \frac{1}{\sqrt{p}} \ba^\sT \bF \bg,
\]
where $\bF := [\boldf_1, \ldots, \boldf_p]^\sT \in \R^{p \times d}$ with $\boldf_j := \boldf(\bw_j)$.

For our formal results, we follow \citet{cheng2024dimension,misiakiewicz2024non,defilippis2024dimension} and impose the following concentration property on the eigenfunctions of the feature map:

\begin{assumption}[Concentration of the eigenfunctions]\label{ass:concentration-features} There exists a constant $\mathsf{C}_x>0$ such that for any p.s.d.~operator $\bA \in \R^{d \times d}$ with $\Tr(\bA\bSigma) <\infty$, we have 
    \begin{align*}
        \P\big( | \bz^\sT \bA \bz - \Tr(\bSigma \bA) | \geq t \| \bSigma^{1/2}\bA \bSigma^{1/2} \|_F \big) \leq \mathsf{C}_x e^{-\frac{t}{\mathsf{C}_x}},
    \end{align*}
    where $\bz$ is either $\boldf (\bw)$ or $\Tilde \bg (\bx)$.
\end{assumption} 

This condition subsumes several standard assumptions in the literature,
including independent sub-Gaussian coordinates, log-Sobolev concentration,
and convex Lipschitz concentration \cite{cheng2024dimension}.  
Moreover, \citet{defilippis2024dimension} provide numerical evidence that
the deterministic equivalent predictions remain accurate well beyond these conditions,
including for non-linear feature maps and real-data settings, and \citet{misiakiewicz2024non} develop relaxations of Assumption \ref{ass:concentration-features} for kernel regression. We leave similar extensions to RF models, and to the W2SG setting, to future work.

\paragraph*{Random Feature Ridge Regression (RFRR).} Let $(y_i,\bx_i)_{i\le n}$ be i.i.d.\ samples with $\bx_i\sim\mu_x$ and
\begin{equation}\label{eq:target-data-setting}
y_i = f_*(\bx_i) + \eps_i , \;\; f_* \in L^2 (\cX,\mu_x), \;\;\eps_i \sim \cN(0,\tau_\eps^2).
\end{equation}
The target function decomposes in the eigenbasis as (in $L^2$)
\[
f_* (\bx) = \sum_{ k =1}^\infty \beta_{*,k} \phi_k (\bx) = \<\bbeta_*, \bg\>,
\]
where $\| \bbeta_* \|_2 = \| f_* \|_{L^2} < \infty$ by assumption.
We fit this training data using ridge regression over the class of RF models \eqref{eq:RF-model-setting}:
\begin{equation}\label{eq:RFRR-solution-setting}
\begin{aligned}
\hat \ba =&~ \argmin_{\ba \in \R^p} \left\{ \sum_{i=1}^n \big(y_i - \hat f(\bx_i ;\ba) \big)^2 + \lambda \| \ba \|^2_2 \right\} \\
=&~ (\bZ^\sT \bZ + \lambda \bI_p )^{-1} \bZ^\sT \by,
\end{aligned}
\end{equation}
where $\lambda \geq 0$ is the regularization parameter and we denote the label vector $\by = (y_i )_{i \in [n]}$ and the feature matrix $\bZ = ( p^{-1/2} \varphi(\bx_i;\bw_j ) )_{i \in [n], j \in [p]} \in \R^{n \times p}$. 
The excess test error (or population risk) of RFRR is
\begin{equation}\label{eq:test-error-setting}
\begin{aligned}
\cR_{\test} (\hat f , f_*) :=&~ \E_{\bx \sim \mu_x} \big[ \big( f_* (\bx) - \hat f (\bx;\hat \ba) \big)^2\big] \\
= &~ \| \bbeta_* - p^{-1/2} \bF^\sT \hat \ba \|_2^2.
\end{aligned}
\end{equation}

\paragraph{Weak-to-Strong training.} In this paper, we study a two-stage learning scheme consisting of a \emph{teacher} model trained on the target data followed by a \emph{student} model trained on data labeled by the teacher model.

\paragraph*{Teacher.}
We observe $n_\st$ i.i.d.~samples $ (y_i^{(\st)}, \bx_i^{(\st)})_{i \leq n}$ with $\bx^{(\st)}_i \sim \mu_x$ and
\[
y_i^{(\st)} = \< \bbeta_*, \bg_i^{(\st)} \> + \eps_i, \;\;\; \eps_i \sim \cN(0,\tau_\st^2),
\]
where we denote $\bg_i^{(\st)} = \bg (\bx_i^{(\st)})$.
We fit this data with a \textit{teacher} model with $p_\st$ i.i.d.~random features $\bF_\st = [ \boldf_1^{(\st)}, \ldots , \boldf_{p_{\st}}^{(\st)}]^\sT \in \R^{p_\st \times d}$ and regularization parameter $\lambda_\st \geq 0$. The teacher model is then given by
\[
\hat f_\st (\bx) = \frac{1}{\sqrt{p_{\st}}} \< \hat \ba_\st, \bF_\st \bg \>,
\]
where $\hat \ba_\st$ is the RFRR solution \eqref{eq:RFRR-solution-setting}:
\[
\hat \ba_\st = ( \bZ_\st^\sT \bZ_\st + \lambda_\st \bI_{p_\st} )^{-1} \bZ_\st^\sT \by_\st,
\]
with $\by_\st = (y_i^{(\st)})_{i \leq n}$ and $\bZ_\st = \bG_\st \bF_\st^\sT / \sqrt{p_\st}$, where $\bG_\st := [ \bg_1^{(\st)}, \ldots, \bg_{n_\st}^{(\st)}]^\sT \in \R^{n_\st \times d}$. The teacher excess test error is
\begin{equation}\label{eq:teacher-test-error-setting}
\cR_{\test} (\hat f_\st , f_* ) = \| \bbeta_* - p_\st^{-1/2} \bF_\st^\sT \hat \ba_\st \|_{2}^2.
\end{equation}

\paragraph*{Student.} We draw $n_\ss$ fresh inputs  $(\bx^{(\ss)}_{i})_{i \leq n_\ss}  \sim_{\text{i.i.d.}} \mu_x$ and label them using the teacher model:
\[
y^{(\ss)}_i = \hat f_\st (\bx_i^{(\ss)}) = \frac{1}{\sqrt{p_{\st}}}\< \bg_i^{(\ss)}, \bF_\st^\sT \hat \ba_\st \>.
\]
Define the teacher coefficient vector $\bbeta_\st := p_{\st}^{-1/2} \bF_\st^\sT \hat \ba_\st$. We fit this data using a new \textit{student} model with $p_\ss$ i.i.d.~random features $\bF_\ss = [ \boldf_1^{(\ss)}, \ldots , \boldf_{p_{\ss}}^{(\ss)}]^\sT$ and regularization parameter $\lambda_\ss \geq 0$:
\[
\hat f_\ss (\bx) = \frac{1}{\sqrt{p_{\ss}}} \< \hat \ba_\ss, \bF_\ss \bg \>,
\]
where $\hat \ba_\ss$ is the RFRR solution \eqref{eq:RFRR-solution-setting}:
\[
\hat \ba_\ss = ( \bZ_\ss^\sT \bZ_\ss + \lambda_\ss \bI_{p_\ss} )^{-1} \bZ_\ss^\sT \by_\ss,
\]
with $\bZ_\ss = \bG_\ss \bF_\ss^\sT / \sqrt{p_\ss}$. The student excess test error is evaluated against the \emph{original target}:
\begin{equation}\label{eq:student-test-error-setting}
\cR_{\test} (\hat f_\ss , f_* ) = \| \bbeta_* - p_\ss^{-1/2} \bF_\ss^\sT \hat \ba_\ss \|_{2}^2.
\end{equation}
Our main technical contribution is to derive a \emph{deterministic equivalent} for the student test error
and to prove a non-asymptotic approximation guarantee.  
We now introduce a few quantities required to state these results.

\paragraph{Fixed points.} Consider integers $(n,p) \in \naturals^2$ and a regularization parameter $\lambda >0$. We define $(\mu_{1},\mu_{2}) \in \R_{>0}^2$ as the unique positive solution of the following self-consistency equations:
\begin{equation}\label{eq:fixed-points-setting}
\begin{aligned}
  1 + \frac{n}{p} - \sqrt{\left(1 - \frac{n}{p} \right)^2 + 4 \frac{\lambda}{p \mu_{2}}} = &~ \frac{2}{p} \Tr( \bSigma ( \bSigma + \mu_{2})^{-1} ), \\
 1 - \frac{n}{p} + \sqrt{\left(1 - \frac{n}{p} \right)^2 + 4 \frac{\lambda}{p \mu_{2}}} =&~   2\frac{\mu_1}{\mu_2} .
\end{aligned}
\end{equation}
We will denote by
\begin{equation}
    (\mu_{\st,1}, \mu_{\st,2}) \quad \text{and} \quad (\mu_{\ss,1}, \mu_{\ss,2})
\end{equation}
the teacher and student fixed points associated to parameters $(n_\st, p_\st, \lambda_\st)$ and $(n_\ss, p_\ss, \lambda_\ss)$, respectively. {  The quantities $(\mu_{\st,1}, \mu_{\st,2})$ and $(\mu_{\ss,1}, \mu_{\ss,2})$
 can be viewed as effective regularization parameters that appear in the deterministic equivalent for the random feature model. Informally, we have that $(\bZ^\top \bZ + \lambda)^{-1} \approx \frac{\mu_1}{\lambda} (\frac{1}{p} \bF^\top \bF + \mu_{\st,1})^{-1} \approx \frac{\mu_2}{\lambda} (\bSigma + \mu_{\st,2})^{-1}.$}
 Note that this intuition holds for linear functionals and there will be correction terms for functionals of higher order, see  
 \eqref{eq:det_equiv_phi3_main-bis}-\eqref{eq:det_equiv_phi4_main} in Theorem \ref{thm:deteq} and \eqref{eq:det_equiv_phi3_main}-\eqref{eq:det_equiv_phi5_main} in Theorem \ref{thm:deteq-new}.

\paragraph*{Functionals.} For a p.s.d.\ operator $\bA\in\R^{d\times d}$ and fixed points $(\mu_1,\mu_2) \in \R^{2}_{>0}$, define the functionals
\begin{equation}\label{eq:functionals-setting}
\begin{aligned}
    \Upsilon_{n,p} ( \bA; \mu_1,\mu_2) =&~  \frac{1}{n}\Tr (\bA \bSigma ( \bSigma + \mu_2)^{-1} ) \\- \frac{\mu_1}{n}& \frac{\Tr (\bA \bSigma ( \bSigma + \mu_2)^{-2} )}{1 - \frac{1}{p}\Tr ( \bSigma^2 ( \bSigma + \mu_2)^{-2} )} ,\\
    \chi_{n,p} ( \bA;\mu_2) =&~  \frac{\Tr (\bA \bSigma ( \bSigma + \mu_2)^{-2} )}{p - \Tr ( \bSigma^2 ( \bSigma + \mu_2)^{-2} )}.
\end{aligned}
\end{equation} 
Note that we always have $ \Upsilon_{n,p} ( \bA; \mu_1,\mu_2) \geq 0$. For $\bA = \bI$, we simply write $\Upsilon_{n,p} ( \mu_1,\mu_2) := \Upsilon_{n,p} ( \bI; \mu_1,\mu_2)$ and $\chi_{n,p} ( \mu_2) := \chi_{n,p} ( \bI; \mu_2)$.

\section{Deterministic equivalents of the test error}

This section provides a sharp characterization of the excess test errors for both teacher and student models in the weak-to-strong framework introduced in Section~\ref{sec:setting}.  
A deterministic equivalent for the teacher test error \eqref{eq:teacher-test-error-setting} was already derived in \citet{defilippis2024dimension}; we briefly recall this result for completeness.

Our main technical contribution is a dimension-free \emph{deterministic equivalent} for the student test error \eqref{eq:student-test-error-setting}. This characterization is an explicit analytical expression that only depends on the problem parameters and the population eigenvalues of the feature map.  In particular, we establish non-asymptotic approximation guarantees between the (random) student test error and its deterministic equivalent.

\vspace{-.4em}

\subsection{Test error of the teacher}
\label{subsec:det-equiv-teacher}

We begin by recalling the dimension-free characterization of the teacher test error \eqref{eq:teacher-test-error-setting} from \citet{defilippis2024dimension}.

\begin{definition}[Teacher deterministic equivalent]
Let $(\mu_{\st,1},\mu_{\st,2}) \in \R^2_{>0}$ be the unique positive solution of the self-consistency equations \eqref{eq:fixed-points-setting} with parameters $(n_\st, p_\st,\lambda_\st)$.  
The deterministic equivalent of the teacher test error \eqref{eq:teacher-test-error-setting} is
  \vspace{-.3em}
  \begin{align} \label{eq:def-det-equiv-teacher}
        \mathsf{R}_{\st}\coloneqq&~ \<\bbeta_*, \bLambda_\st \bbeta_*\>+\tau_\st^2 \frac{\Upsilon_\st}{1 - \Upsilon_\st}, \\
        \bLambda_\st \coloneqq&~ \frac{\mu_{\st,2}^2}{1 - \Upsilon_\st} \left[ (\bSigma + \mu_{\st,2})^{-2}  +\chi_\st  \bSigma (\bSigma + \mu_{\st,2})^{-2} \right],\notag
    \end{align}
    where $\Upsilon_\st \coloneqq \Upsilon_{n_\st,p_\st} (\mu_{\st,1},\mu_{\st,2}) $ and $\chi_\st \coloneqq \chi_{n_\st,p_\st} (\mu_{\st,2})$ are defined in \eqref{eq:functionals-setting}. The term $\sB_\st\coloneqq \<\bbeta_*, \bLambda_\st \bbeta_*\>$ corresponds to the \emph{bias} of $\mathsf{R}_{\st}$ and the term $\sV_\st\coloneqq \tau_\st^2 \frac{\Upsilon_\st}{1 - \Upsilon_\st}$ to its \emph{variance}.
\end{definition}

\citet{defilippis2024dimension} establish a non-asymptotic approximation guarantee between the test error \eqref{eq:teacher-test-error-setting} and its deterministic equivalent \eqref{eq:def-det-equiv-teacher}, under the technical condition below.

\begin{assumption}\label{ass:tech-assumption}
    There exists a constant $\mathsf{C}_* >0$ such that
   \vspace{-.3em}
 \[
    \frac{\Tr(\bSigma(\bSigma + \mu_{\st,2})^{-1})}{\Tr(\bSigma^2(\bSigma + \mu_{\st,2})^{-2})} \vee \frac{\< \bbeta_*, (\bSigma + \mu_{\st,2})^{-1} \bbeta_*\>}{\mu_{\st,2}\< \bbeta_*, (\bSigma + \mu_{\st,2})^{-2} \bbeta_*\>} \leq \mathsf{C}_*.
    \]
\end{assumption}

In the following, we use the shorthand notations $(a \vee b) \coloneqq \max(a,b)$ and $(a \wedge b) \coloneqq \min(a,b)$. Assumption \ref{ass:tech-assumption} is mild and satisfied, for instance, under the source–capacity conditions introduced in Section~\ref{sec:scaling-laws}.  
The approximation bounds depend on the feature spectrum $\bSigma$ and regularization $\lambda>0$ through the following quantities:
   \vspace{-.3em}
\begin{equation}
\begin{aligned}
    r_{\bSigma}(k)&\coloneqq \frac{\sum_{j\geq k}\xi_j^2}{\xi^2_k}, \\
    M_\bSigma (k) &\coloneqq 1 + \frac{r_{\bSigma} (k_*) \vee k}{k} \log \left( r_{\bSigma} (k_*) \vee k \right),\\
    \rho_\lambda (p) &\coloneqq 1 + \frac{p \cdot \xi^2_{p_*}}{\lambda}  M_\bSigma (p), \\
    \trho_\lambda (n,p) &\coloneqq 1 +  \Big\{ \frac{n \xi_{n_*}^2}{\lambda} + \frac{n}{p}  \rho_\lambda (p)\Big\} M_\bSigma (n) \ind_{n_* \leq p},
\end{aligned}
\end{equation}
where $k_* \coloneqq \lfloor \eta_* k \rfloor$, $n_* \coloneqq \lfloor \eta_* n \rfloor$, and
$p_* \coloneqq \lfloor \eta_* p \rfloor$ for some $\eta_* \in (0,1/2)$ specified in the theorem. Typically, e.g., for regularly varying spectrum, we will have $\rho_\lambda (p) \lesssim (\log p)^C/\lambda$ and $\Tilde \rho_\lambda (n,p) \lesssim (\log p \wedge n)^C/\lambda$.

For convenience, we introduce the shorthand notations:
\begin{equation}\label{eq:rho-rho-tilde-shorthands}
\begin{aligned}
   \rho_{\st} \coloneqq &~ \rho_{p_\st \mu_{\st,1}} (p_\st), \qquad \Tilde \rho_{\st} \coloneqq \Tilde \rho_{p_\st \lambda_\st/n_\st} (n_\st,p_{\st}), \\
     \rho_{\ss} \coloneqq &~ \rho_{p_\ss \mu_{\ss,1}} (p_\ss), \qquad \Tilde \rho_{\ss} \coloneqq \Tilde \rho_{p_\ss \lambda_\ss/n_\ss} (n_\ss,p_{\ss}).
\end{aligned}
\end{equation}
We are now ready to state the deterministic equivalent approximation theorem for the teacher test error \eqref{eq:teacher-test-error-setting}.

\begin{theorem}{\citep[Theorem 3.3]{defilippis2024dimension}}    \label{thm:teacher-error} 
There exist absolute constants $C_0,C_1>0$ such that the following holds.  
Under Assumptions~\ref{ass:concentration-features} and \ref{ass:tech-assumption}, for any $D,K>0$, there exist constants $\eta_*\in(0,1/2)$ and $C_*>0$ depending only on $D,K$ and the constants in the assumptions, such that for all $n_\st,p_\st \ge C_*$, $\lambda_\st>0$, and target $\|\bbeta_*\|_2<\infty$, if
\begin{align*}
    \lambda_\st \geq \max(n_\st^{-K}, n_\st p_\st^{-1-K}), \;\;\; \Tilde \rho_{\st}^{C_0} \leq n_\st, \;\;\; \rho_\st^{C_0} \leq p_\st,
\end{align*}
then with probability at least $1 - \min(n_\st, p_\st)^{-D}$, 
    \[
    \left| \cR_{\test} ( \hat f_\st , f_*) -        \mathsf{R}_{\st}  \right| \leq C_{*} \cE_{n_\st,p_\st}        \mathsf{R}_{\st} ,
    \]
where the approximation rate is given by
\[
\cE_{n_\st,p_\st} \coloneqq \frac{(\Tilde \rho_\st \log n_\st)^{C_1}}{\sqrt{n_\st}} + \frac{(\Tilde \rho_\st \rho_\st \log p_\st)^{C_1}}{\sqrt{p_\st}}.
\]
\end{theorem}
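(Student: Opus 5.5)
The statement is Theorem 3.3 of \citet{defilippis2024dimension}, quoted for completeness. The plan below reproduces its proof structure, using two-level conditioning: first on the random features $\bF_\st$, then on the data $\bG_\st$.

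\textbf{Step 1: bias--variance decomposition.} Write $\by_\st = \bG_\st \bbeta_* + \beps$ and $\bZ_\st = p_\st^{-1/2}\bG_\st\bF_\st^\sT$. Expanding $\|\bbeta_* - p_\st^{-1/2}\bF_\st^\sT\hat\ba_\st\|_2^2$ gives three pieces:
\begin{itemize}
\item a bias term $\|\bbeta_* - p_\st^{-1/2}\bF_\st^\sT(\bZ_\st^\sT\bZ_\st+\lambda_\st)^{-1}\bZ_\st^\sT\bG_\st\bbeta_*\|_2^2$;
\item a variance term $\beps^\sT \bM \beps$ for an explicit p.s.d.\ matrix $\bM$;
\item a cross term linear in $\beps$.
\end{itemize}
The Gaussian noise is independent of everything else. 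Hanson--Wright therefore concentrates the variance term around $\tau_\st^2\Tr(\bM)$ and shows the cross term is negligible relative to $\sqrt{\sB\cdot\sV}$, all with high probability.

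\textbf{Step 2: conditioning on $\bF_\st$.} Conditional on $\bF_\st$, the teacher is kernel ridge regression in the $p_\st$-dimensional feature space. The features are $\bz = p_\st^{-1/2}\bF_\st\bg$, with covariance $\hat{\bSigma}_F = p_\st^{-1}\bF_\st\bSigma\bF_\st^\sT$, and Assumption~\ref{ass:concentration-features} transfers to these features.

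Apply the dimension-free deterministic equivalents for ridge regression of \citet{cheng2024dimension} (or \citet{misiakiewicz2024non}) with $n_\st$ samples. This replaces both the bias and $\Tr(\bM)$ by functionals of $\hat{\bSigma}_F$ with effective regularization $\nu$. Here $\nu$ solves the conditional fixed point $n_\st - \lambda_\st/\nu = \Tr(\hat\bSigma_F(\hat\bSigma_F+\nu)^{-1})$. This step costs relative error $\tilde\rho_\st^{C}/\sqrt{n_\st}$, and the condition $\tilde\rho_\st^{C_0}\le n_\st$ guarantees this error is small.

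\textbf{Step 3: averaging over $\bF_\st$.} The resulting functionals involve traces like $\Tr(\bA\,\bF_\st^\sT(\bF_\st\bSigma\bF_\st^\sT/p_\st + \nu)^{-1}\bF_\st)$ and their squared-resolvent analogues. Via push-through identities, these are functionals of the resolvent $(\bF_\st^\sT\bF_\st/p_\st + \mu_{\st,1})^{-1}$ in the $\bF$-space.

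Next, apply the deterministic equivalents for $\bF_\st$, whose rows have covariance $\bSigma$:
\begin{itemize}
\item the first-order equivalent $\mu_1(\bF^\sT\bF/p+\mu_1)^{-1}\approx\mu_2(\bSigma+\mu_2)^{-1}$;
\item second-order equivalents, which generate the denominators $1-\Upsilon_\st$ and $p_\st-\Tr(\bSigma^2(\bSigma+\mu_{\st,2})^{-2})$, i.e.\ the $\chi_\st$ term.
\end{itemize}
Showing that the random $\nu$ concentrates at a deterministic value and combining the two fixed points yields exactly the system \eqref{eq:fixed-points-setting}. Assembling bias and variance then gives $\sB_\st$ and $\sV_\st$ in \eqref{eq:def-det-equiv-teacher}, with error $(\tilde\rho_\st\rho_\st\log p_\st)^{C}/\sqrt{p_\st}$.

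\textbf{Main obstacle.} The hardest part is making all errors \emph{multiplicative} relative to $\mathsf R_\st$, uniformly over $\lambda_\st$ down to $n_\st^{-K}$. This requires controlling how the conditional error constants ($\tilde\rho$ computed for $\hat\bSigma_F$) depend on the random spectrum of $\hat\bSigma_F$.

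I would bound that spectrum by $\bSigma$ up to $\rho_\st$ factors using intrinsic-dimension arguments (the $r_\bSigma$, $M_\bSigma$ quantities). Assumption~\ref{ass:tech-assumption} is then used to show that the perturbation bounds on the bias quadratic form in $\bbeta_*$ are comparable to the bias itself.

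Finally, a union bound over polynomially many events gives the failure probability $\min(n_\st,p_\st)^{-D}$.
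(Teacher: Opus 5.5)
The paper does not prove this statement; it quotes it from \citet{defilippis2024dimension}. The proof architecture is nonetheless visible in the appendix, which adapts it to the student, and your two-level plan matches it. That plan is: condition on $\bF_\st$; apply ridge equivalents in $\bZ_\st$ at an effective regularization; then apply first- and second-order equivalents in $\bF_\st$ at regularization $p_\st\mu_{\st,1}$, the latter producing $\chi_\st$ and $\Upsilon_\st$. Assumption~\ref{ass:tech-assumption} then turns error terms such as $\mu_{\st,2}\<\bbeta_*,(\bSigma+\mu_{\st,2})^{-1}\bbeta_*\>$ into multiples of $\mathsf{R}_\st$.

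Step~2, however, has a genuine gap: conditional on $\bF_\st$, the problem is \emph{not} a well-specified ridge regression in $\bz=p_\st^{-1/2}\bF_\st\bg$. Since $\bF_\st$ has rank at most $p_\st$ while $d$ may be infinite, $\bbeta_*$ has a component $\proj_\perp\bbeta_*$ orthogonal to the row space of $\bF_\st$. The labels therefore read $y_i=\<\bbeta_{\bF_\st},\bz_i\>+\<\proj_\perp\bbeta_*,\bg_i\>+\eps_i$, where $\bbeta_{\bF_\st}\in\R^{p_\st}$ satisfies $p_\st^{-1/2}\bF_\st^\sT\bbeta_{\bF_\st}=(\bI-\proj_\perp)\bbeta_*$. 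The middle term is uncorrelated with $\bz_i$ but not independent of it. The ridge/KRR equivalents you invoke assume independent noise and a target in the feature span, so they do not apply to your bias term: it contains $\bG_\st\bbeta_*$, which is not a function of $\bZ_\st$ alone.

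The missing ingredient is to split $\bG_\st\bbeta_*=\bZ_\st\bbeta_{\bF_\st}+\bG_\st\proj_\perp\bbeta_*$ and prove two facts. Terms linear in $\bG_\st\proj_\perp\bbeta_*$ must be shown negligible. Quadratic terms must be shown to concentrate as if $\bG_\st\proj_\perp\bbeta_*$ were independent noise of variance $\|\proj_\perp\bbeta_*\|_2^2$. Both follow from leave-one-out and martingale estimates of the type in Lemma~\ref{lem:deteq-zero}, which only use $\E[\bz_i u_i]=0$ with $u_i\propto\<\proj_\perp\bbeta_*,\bg_i\>$. You then add back the approximation error $\|\proj_\perp\bbeta_*\|_2^2$, which appears additively in the test error. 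Only after this recombination does the bias become $(p_\st\nu)^2\<\bbeta_*,(\bF_\st^\sT\bF_\st+p_\st\nu)^{-2}\bbeta_*\>$ divided by $1-n_\st^{-1}\Tr(\hat\bSigma_F^2(\hat\bSigma_F+\nu)^{-2})$. Its second-order equivalent over $\bF_\st$ is what yields $\bLambda_\st$. Without this step you lose $\|\proj_\perp\bbeta_*\|_2^2$, which is a leading-order part of $\sB_\st$ whenever $p_\st$ is the bottleneck.

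A smaller slip concerns the feature covariance. With the paper's conventions ($\boldf$ has covariance $\bSigma$, $\bg$ has covariance $\bI$), one has $\hat\bSigma_F=\bF_\st\bF_\st^\sT/p_\st$, not $p_\st^{-1}\bF_\st\bSigma\bF_\st^\sT$. With your version, the push-through step would produce $\bSigma^{1/2}\bF_\st^\sT\bF_\st\bSigma^{1/2}$ and a fixed point in $\bSigma^2$ rather than \eqref{eq:fixed-points-setting}.
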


This theorem is non-asymptotic, pointwise in the target function (no minimax or randomization), and dimension-free, covering in particular the infinite-dimensional regime $d=\infty$. Furthermore,  the approximation is \textit{multiplicative}, implying that the deterministic equivalent remains accurate even when the test error itself vanishes polynomially in $(n_\st,p_\st)$, provided $\cE_{n_\st,p_\st}$ is sufficiently small. In particular, $\cE_{n_\st,p_\st}=o(1)$ under the source–capacity conditions considered in Section~\ref{sec:scaling-laws} for $\gamma_{\lambda_\st}$ not too large, including the optimal regularization. See \citet[Remark 4.1]{defilippis2024dimension} for a discussion.

\subsection{Test error of the student}
\label{subsec:det-equiv-student}

We now turn to the student test error \eqref{eq:student-test-error-setting} and introduce an analogous dimension-free deterministic equivalent.

\begin{definition}[Student deterministic equivalent]\label{def:st}
    Let $(\mu_{\st,1},\mu_{\st,2}) \in \R^2_{>0}$ and $(\mu_{\ss,1},\mu_{\ss,2}) \in \R^2_{>0}$ be the unique positive solutions of the self-consistency equations \eqref{eq:fixed-points-setting} with parameters $(n_\st, p_\st,\lambda_\st)$ and $(n_\ss, p_\ss,\lambda_\ss)$ respectively. The deterministic equivalent of the student test error is
\begin{align}\label{eq:det-equiv-def-student}
       \mathsf{R}_{\ss}  \coloneqq &~ \< \bbeta _*, \bLambda \bbeta_*\> + \tau_\st^2 \frac{  \Upsilon_\st (\bLambda_0) }{1 - \Upsilon_\st},
    \end{align}
    where the p.s.d.\ operators $\bLambda$ and $\bLambda_0$ are defined by
    \vspace{-.3em}
\begin{align*}
               &\bLambda \coloneqq \bI - 2 \bSigma^2(\bSigma + \mu_{\ss,2})^{-1} (\bSigma + \mu_{\st,2})^{-1}\\
        &+ \bLambda_0 \bSigma^2 (\bSigma + \mu_{\st,2})^{-2} + \frac{ \Upsilon_\st (\bLambda_0 )}{1 - \Upsilon_\st}\mu_{\st,2}^2 (\bSigma + \mu_{\st,2})^{-2} \\
         &+ \left[\frac{ \Upsilon_\st (\bLambda_0 )}{1 - \Upsilon_\st } \chi_\st + \chi_{\st}(\bLambda_0)  \right] \mu_{\st,2}^2  \bSigma(\bSigma + \mu_{\st,2})^{-2}, \\
           &\bLambda_0 \coloneqq  \bSigma^2(\bSigma +  \mu_{\ss,2})^{-2} +  \frac{ \Upsilon_{\ss} }{1 - \Upsilon_{\ss}} \mu_{\ss,2}^2( \bSigma +\mu_{\ss,2})^{-2} \\
            &+  \frac{ \chi_{\ss} }{1 - \Upsilon_{\ss}} \mu_{\ss,2}^2 \bSigma ( \bSigma +\mu_{\ss,2})^{-2},
 \end{align*}
and we use the shorthand notations
\[
    \vspace{-.3em}
\begin{aligned}
    \Upsilon_s \coloneqq& \Upsilon_{n_\ss,p_\ss} (\mu_{\ss,1},\mu_{\ss,2}), \qquad \chi_\ss \coloneqq\chi_{n_\ss,p_\ss} (\mu_{\ss,2}),\\
    \Upsilon_\st (\bLambda_0 ) \coloneqq&\Upsilon_{n_\st, p_\st }(\bLambda_0; \mu_{\st,1}, \mu_{\st,2}), \qquad \Upsilon_\st \coloneqq \Upsilon_\st (\bI), \\
    \chi_{\st}(\bLambda_0) \coloneqq& \chi_{n_\st, p_\st}(\bLambda_0; \mu_{\st,2}), \qquad \chi_\st \coloneqq \chi_\st(\bI).
\end{aligned}
\]
\end{definition}
We note that the student receives labels from the teacher without additional noise. Thus, there is no variance term in $\mathsf{R}_{\ss}$, and the two terms $\sB_{bias, \ss} \coloneqq \< \bbeta _*, \bLambda \bbeta_*\>$, $\sB_{var,\ss} \coloneqq\tau_\st^2 \frac{  \Upsilon_\st (\bLambda_0) }{1 - \Upsilon_\st}$ are respectively induced by $\sB_{\st}$ and $\sV_{\st}$. We further introduce the approximation functional
    \vspace{-.3em}
\begin{equation}
        \begin{aligned}
            \widetilde \cR_{\ss} \coloneqq&\|\bbeta_\st\|_2 \|\bbeta_* - \bbeta_\st\|_2 +\|\bbeta_*\|_2^2 (\|\bLambda_0\|_{{\rm op}} \vee 1) \\ 
            &+ \frac{1}{n_\st} \frac{ \mu_{\st,2}  }{1 - \Upsilon_\st }  \Tr(\bLambda_0 (\bSigma + \mu_{\st,2})^{-1}),
    \end{aligned}
\end{equation}
where $\bbeta_\st = p_{\st}^{-1/2}\bF_\st^\sT \hat\ba_\st$ is the teacher coefficient vector.
Although $\widetilde{\cR}_{\ss}$ is random through $\bbeta_\st$, it is of order $O(1)$ with high probability under mild conditions.


We now establish an approximation guarantee between the student test error \eqref{eq:student-test-error-setting} and the deterministic equivalent \eqref{eq:det-equiv-def-student}.

\begin{theorem}
    \label{thm:additive_err}
There exist absolute constants $C_0,C_1>0$ such that the following holds. Under Assumption~\ref{ass:concentration-features}, for any $D,K>0$, there exist constants $\eta_*\in(0,1/2)$ and $C_*>0$ depending only on $D,K$ and the constant in the assumption, such that for all $n_\st,p_\st,n_\ss,p_\ss \ge C_*$, $\lambda_\st, \lambda_\ss>0$, and target $\|\bbeta_*\|_2<\infty$, if
    \vspace{-.3em}
    \begin{align*}
    \lambda_\st \geq \max(n_\st^{-K}, n_\st p_\st^{-1-K}), \;\;\; \Tilde \rho_{\st}^{C_0} \leq n_\st, \;\;\; \rho_\st^{C_0} \leq p_\st, \\
    \lambda_\ss \geq \max(n_\ss^{-K}, n_\ss p_\ss^{-1-K}), \;\;\; \Tilde \rho_{\ss}^{C_0} \leq n_\ss, \;\;\; \rho_\ss^{C_0} \leq p_\ss,
\end{align*}
then with probability at least $1 - \min(n_\st,p_\st,n_\ss,p_\ss)^{-D}$,
    \vspace{-.3em}
 \begin{equation*}
        \begin{split}
            |\cR_{\test} (\hat f _\ss, f_*) -    \mathsf{R}_{\ss}| \leq C_* \cE_{n_\ss, p_\ss, n_\st, p_\st} 
            \widetilde \cR_{\ss},
        \end{split}
    \end{equation*} where the approximation rate is given by
    \[
    \vspace{-.3em}
    \begin{aligned}
            \cE_{n_\ss, p_\ss, n_\st, p_\st} \coloneqq&~ \cE_{n_\st, p_\st} +  \frac{(\Tilde \rho_\ss \log n_\ss)^{C_1}}{\sqrt{n_\ss}} + \frac{(\Tilde \rho_\ss \rho_\ss \log p_\ss)^{C_1}}{\sqrt{p_\ss}}.
    \end{aligned}
    \]
\end{theorem}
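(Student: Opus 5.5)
The plan is to condition on the teacher, handle the student's randomness with one-stage deterministic equivalents, and then handle the teacher's randomness with the teacher equivalents of \citet{defilippis2024dimension}. Write $\bbeta_\ss := p_\ss^{-1/2}\bF_\ss^\sT\hat\ba_\ss$. Since the student labels are noiseless and equal to $\<\bbeta_\st,\bg\>$, the student is a one-stage RFRR with target $\bbeta_\st$ and zero noise. Expanding the test error gives
\[
\cR_{\test}(\hat f_\ss,f_*) = \|\bbeta_*\|_2^2 - 2\<\bbeta_*,\bbeta_\ss\> + \|\bbeta_\ss\|_2^2 .
\]
Conditionally on $(\bF_\st,\bG_\st,\beps)$, $\bbeta_\ss$ is linear in $\bbeta_\st$: $\bbeta_\ss = \bM_\ss\bbeta_\st$, with $\bM_\ss$ depending only on $(\bF_\ss,\bG_\ss)$.

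\textbf{Student step.} I will apply the dimension-free equivalents for linear and quadratic RFRR functionals (the analogues of Theorem~\ref{thm:deteq}, whose proof follows \citet{defilippis2024dimension}). With high probability over the student draw,
\[
\<\bbeta_*,\bM_\ss\bbeta_\st\> \approx \<\bbeta_*,\bSigma(\bSigma+\mu_{\ss,2})^{-1}\bbeta_\st\>, \qquad \|\bM_\ss\bbeta_\st\|^2 \approx \<\bbeta_\st,\bLambda_0\bbeta_\st\>.
\]
The operator $\bLambda_0$ is exactly the sum of the ``signal squared'' term and the $\Upsilon_\ss$ and $\chi_\ss$ correction terms from the one-stage analysis. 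Because $\bbeta_*$ and $\bbeta_\st$ are independent of the student randomness, the errors are additive. The bilinear error is of order $\cE\,\|\bbeta_*\|\|\bbeta_\st\|$, and the quadratic error is of order $\cE\,\|\bbeta_\st\|^2$. Writing $\|\bbeta_\st\|^2\le$ terms involving $\|\bbeta_*\|^2$ and $\|\bbeta_\st\|\|\bbeta_*-\bbeta_\st\|$ explains the first two summands of $\widetilde\cR_\ss$.

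\textbf{Teacher step.} It remains to find deterministic equivalents, over the teacher randomness, of the two quantities
\[
\<\bbeta_*,\bSigma(\bSigma+\mu_{\ss,2})^{-1}\bbeta_\st\> \quad\text{and}\quad \<\bbeta_\st,\bLambda_0\bbeta_\st\>,
\]
where $\bLambda_0$ is deterministic and p.s.d. I will write $\bbeta_\st = p_\st^{-1/2}\bF_\st^\sT(\bZ_\st^\sT\bZ_\st+\lambda_\st)^{-1}\bZ_\st^\sT(\bG_\st\bbeta_*+\beps)$.

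For the linear term, the first-order equivalent $\E\bbeta_\st\approx\bSigma(\bSigma+\mu_{\st,2})^{-1}\bbeta_*$ gives the cross term $-2\bSigma^2(\bSigma+\mu_{\ss,2})^{-1}(\bSigma+\mu_{\st,2})^{-1}$; the noise contributes only at fluctuation level.

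For the quadratic term, I need the \emph{weighted} generalization of the teacher bias/variance equivalent, with $\bI$ replaced by a general $\bA=\bLambda_0$. Rewriting $\<\bbeta_\st,\bA\bbeta_\st\>$ as a trace functional of resolvents of $\bF_\st$ and $\bG_\st$, I expect it to follow the same leave-one-out and resolvent-identity argument that produces $\Upsilon$ and $\chi$ in Theorem~\ref{thm:teacher-error}. The noise part should give $\tau_\st^2\Upsilon_\st(\bA)/(1-\Upsilon_\st)$. The signal part should give $\<\bbeta_*,[\bA\bSigma^2(\bSigma+\mu_{\st,2})^{-2}+\tfrac{\Upsilon_\st(\bA)}{1-\Upsilon_\st}\mu_{\st,2}^2(\bSigma+\mu_{\st,2})^{-2}+(\cdots)\chi\,\mu_{\st,2}^2\bSigma(\bSigma+\mu_{\st,2})^{-2}]\bbeta_*\>$.

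These are precisely the $\bLambda_0$-dependent pieces of $\bLambda$. The associated error is measured relative to $\|\bbeta_*\|^2\|\bLambda_0\|_{\op}$ and the variance scale $\tfrac{\mu_{\st,2}}{n_\st(1-\Upsilon_\st)}\Tr(\bLambda_0(\bSigma+\mu_{\st,2})^{-1})$. This accounts for the remaining summands of $\widetilde\cR_\ss$ and supplies the $\cE_{n_\st,p_\st}$ part of the rate.

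\textbf{Combining.} I will add the contributions, intersect the high-probability events (a union bound gives the stated $\min(\cdot)^{-D}$ probability), and collect the errors. The error is additive, not multiplicative, because the cross term can have either sign.

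\textbf{Main obstacle.} The hardest step is the weighted quadratic equivalent for the teacher with a general $\bA$. This requires four-fold resolvent products (Theorem~\ref{thm:deteq-new}-type functionals with $\bA$ sandwiched between two $\bF_\st^\sT(\cdots)^{-1}$ factors), which yields the extra term $\chi_\st(\bLambda_0)$. The plan is to first derive equivalents for $\bF$-resolvent functionals with $\bA$ conditional on $\bG_\st$, then average over $\bG_\st$. At each stage I will track that the errors scale with $\|\bA\|_{\op}$ and $\Tr(\bA(\bSigma+\mu_{\st,2})^{-1})$ rather than with $\Tr\bA$, since the latter would break dimension-freeness.
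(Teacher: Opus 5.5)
Your plan is essentially the paper's proof. You condition on $\bbeta_\st$ and use one-stage student equivalents to reduce the risk to $\|\bbeta_*\|_2^2-2\<\bbeta_*,\bSigma(\bSigma+\mu_{\ss,2})^{-1}\bbeta_\st\>+\<\bbeta_\st,\bLambda_0\bbeta_\st\>$; the paper expands around $\bbeta_\st$ rather than $0$, which is equivalent. You then take teacher-level equivalents of this linear term and of the $\bLambda_0$-weighted quadratic, which the paper does via the rank-one and weighted functionals of Theorem~\ref{thm:deteq-new} together with Lemma~\ref{lem:deteq-zero} for the out-of-span residual components.

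One detail to fix is the order of integration. The paper, following \citet{defilippis2024dimension}, first integrates out $\bG_\st$ conditionally on $\bF_\st$, which produces $\mu_{\st,1}$ and resolvents $(\bF_\st^\sT\bF_\st+p_\st\mu_{\st,1})^{-1}$. Only then does it integrate out $\bF_\st$ to reach $\mu_{\st,2}$ and $\chi_\st(\bLambda_0)$. This is the reverse of the order you state.
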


The proof (deferred to Appendix~\ref{app:proof-det-equiv-student}) proceeds in two stages: \emph{(i)} a deterministic equivalent for the student conditional on the teacher coefficients $\bbeta_\st$, and \emph{(ii)} a deterministic equivalent for $\bbeta_\st$ itself. These steps rely on approximation results for functionals of random features from \cite{misiakiewicz2024non,defilippis2024dimension}, together with new bounds for asymmetric functionals (stated in Appendix~\ref{app:basic-det-equiv} and proved in Appendix \ref{app:proof-new-det-equiv}).
\begin{figure*}[t!]
  \centering

  \begin{subfigure}[t]{0.47\textwidth}
    \centering
    \includegraphics[width=\textwidth]{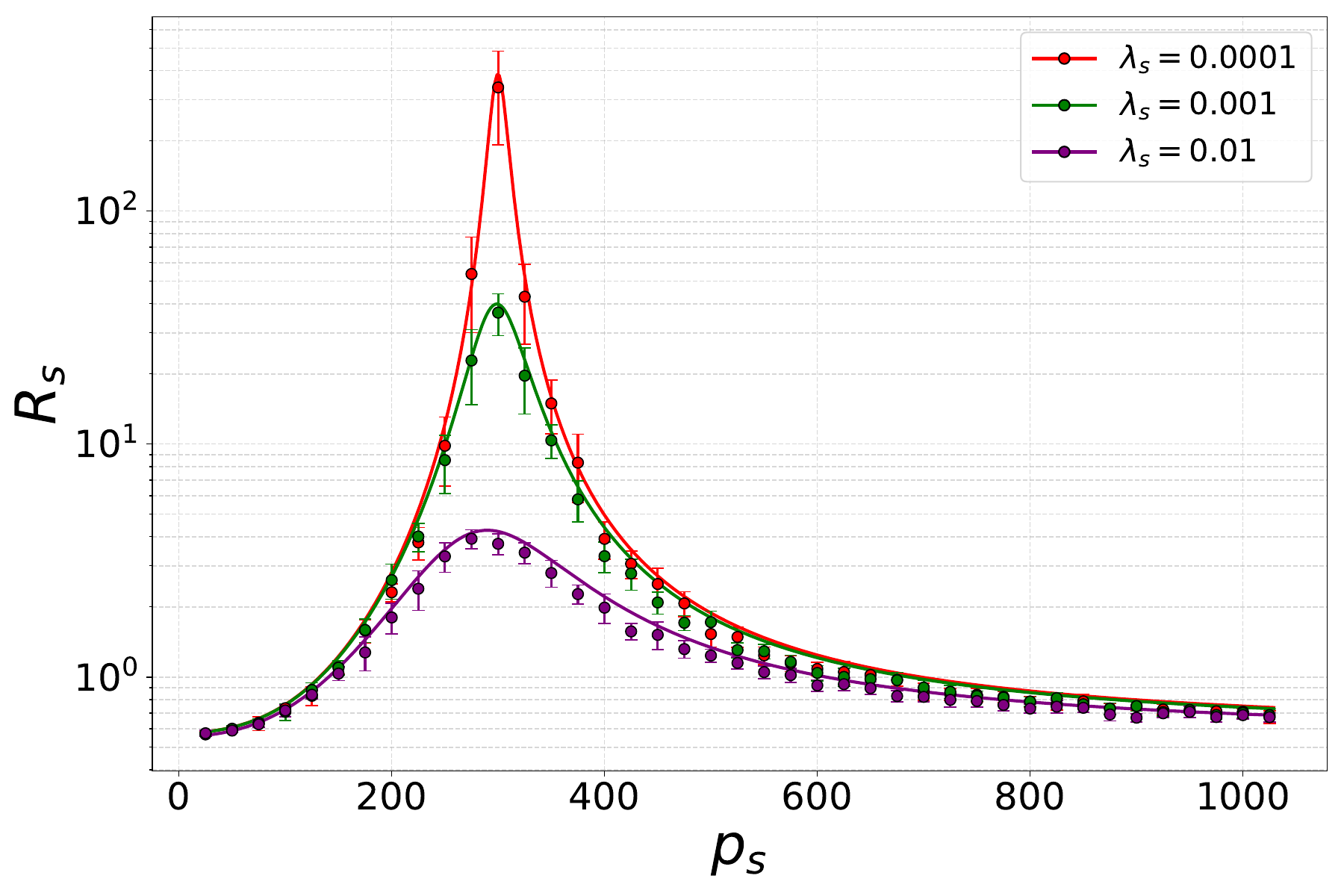}
  \vspace{-1em}
    \caption{Single-index target function, $n_\ss=300$.}
    \label{fig:erf_tanh}
  \vspace{2em}
  \end{subfigure}
  \begin{subfigure}[t]{0.47\textwidth}
    \centering
    \includegraphics[width=\textwidth]{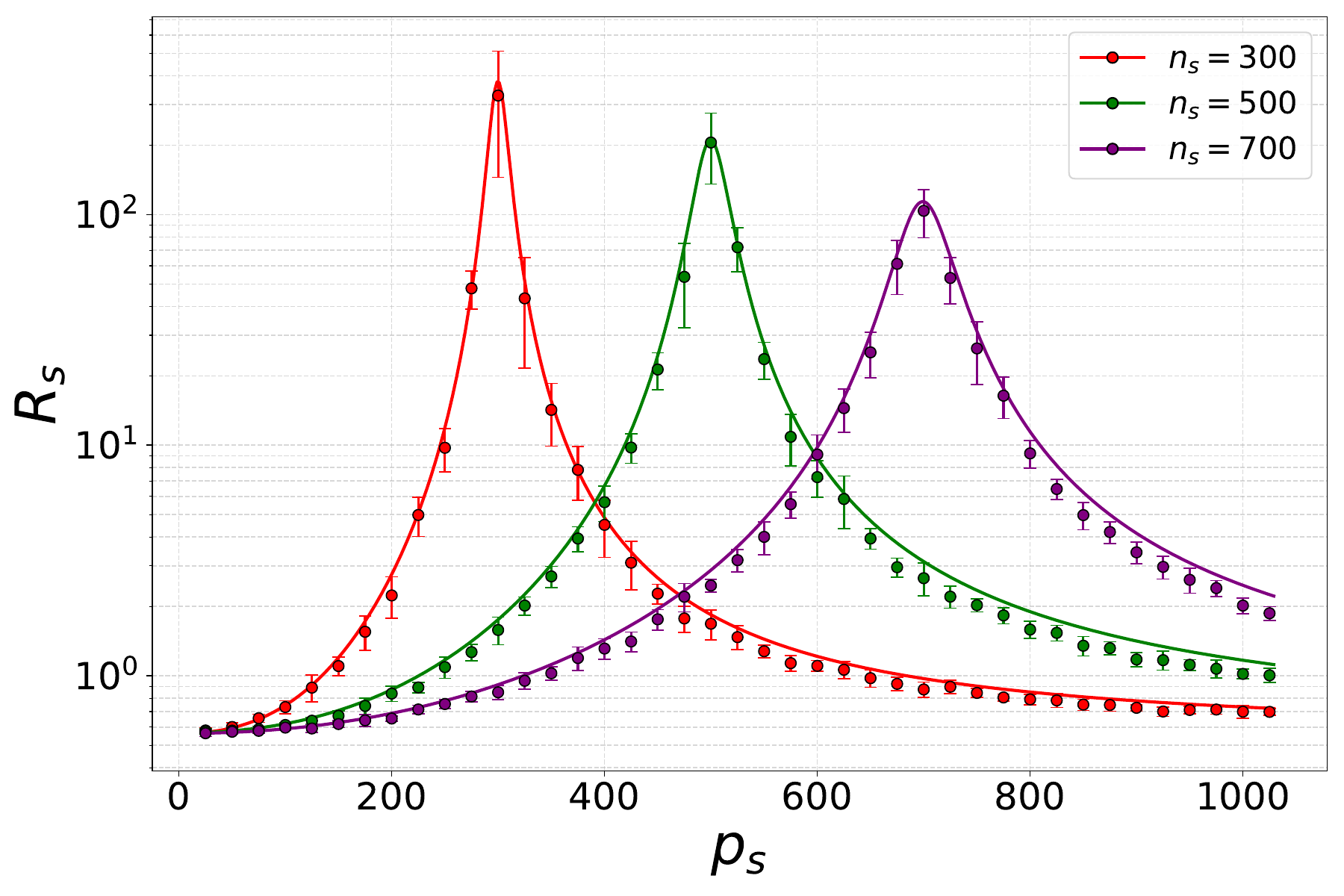}
  \vspace{-1em}
    \caption{Single-index target function, $\lambda_\ss=10^{-4}$.}
    \label{fig:erf_tanh_n}
  \vspace{2em}
  \end{subfigure}
  \begin{subfigure}[t]{0.47\textwidth}
    \centering
    \includegraphics[width=\textwidth]{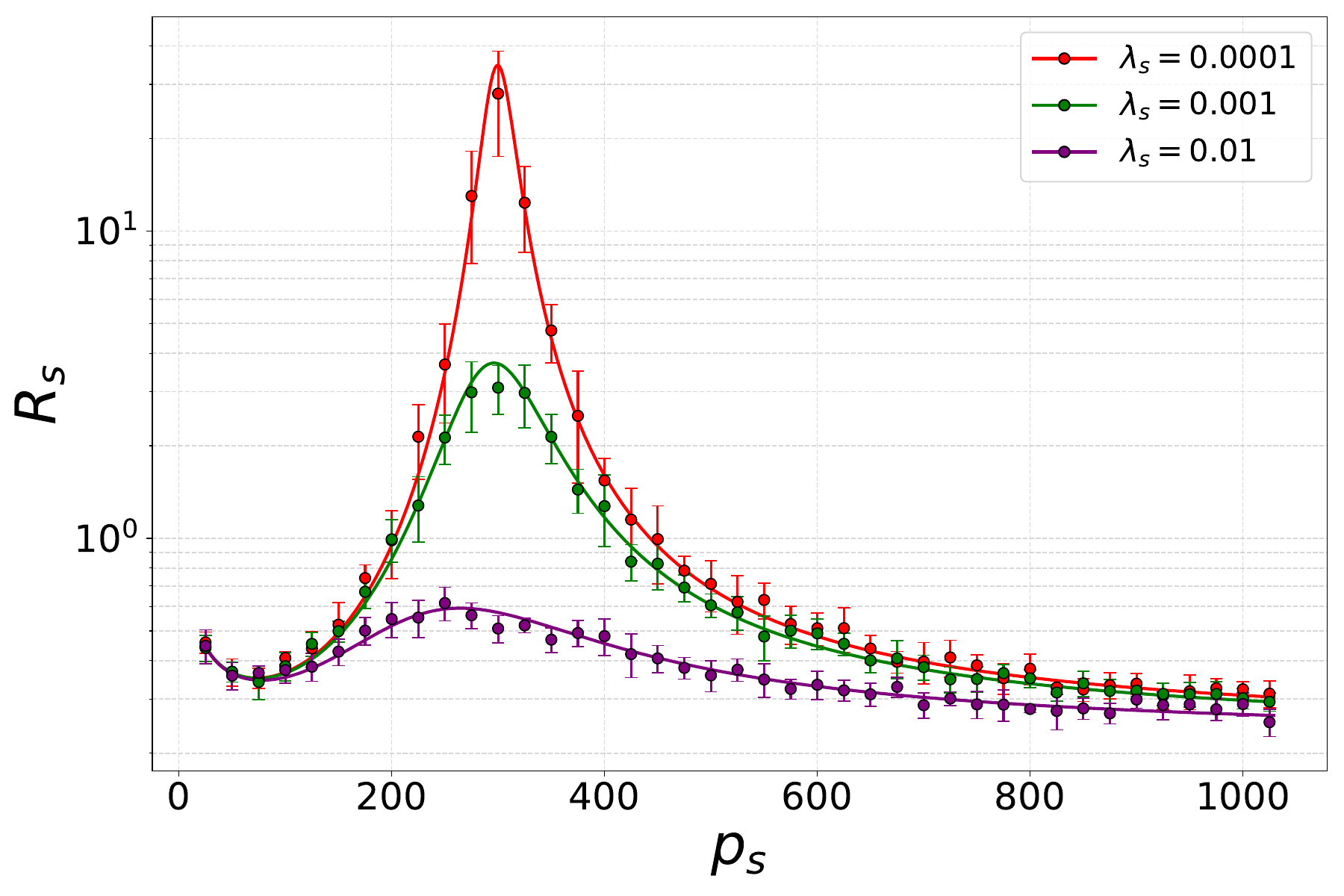}
  \vspace{-1em}
    \caption{MNIST data, $n_\ss=300$.}
    \label{fig:mnist_erf}
  \vspace{2em}
  \end{subfigure}
  \begin{subfigure}[t]{0.47\textwidth}
    \centering
    \includegraphics[width=\textwidth]{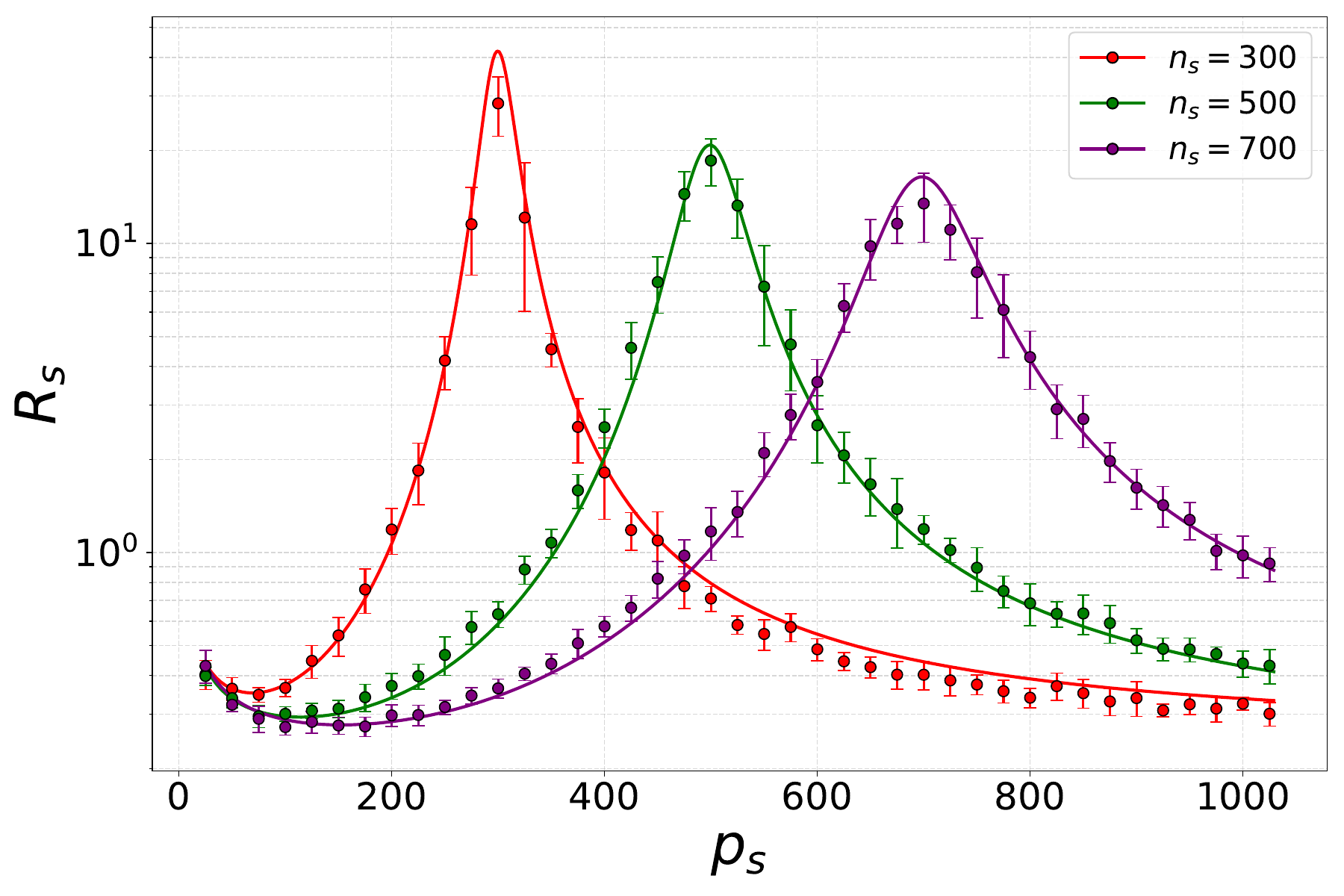}
  \vspace{-1em}
    \caption{MNIST data, $\lambda_\ss=10^{-4}$.}
    \label{fig:mnist_erf_n}
  \end{subfigure}
 \vspace{-2em}
 
  \caption{Excess test errors of various students
  trained on teacher labels, together with the corresponding deterministic equivalents (Theorem \ref{thm:additive_err}), as a function of the number of student features $p_\ss$. The teacher and the student are random feature models with the same number of features ($p_\st=p_\ss$), trained with the same sample size ($n_\st=n_\ss$) and with the teacher regularization being half of that of the student ($\lambda_\st = \lambda_\ss/2$). In the top two plots, the data is given by $\bx_i \sim \cN(0,I_d)$ and $y_i = \text{erf}(\langle \bx_i, \bw_*\rangle)+ \varepsilon_i$, with  $\bw_* \sim \Unif (\S^{d-1})$, and the random feature model is $\varphi(\bx;\bw)=\tanh(\langle \bx, \bw\rangle)$, $\bw \sim \Unif (\S^{d-1})$; 
  in the bottom two plots, the data is obtained from the MNIST dataset and the random feature model is $\varphi(\bx;\bw)=\text{erf}(\langle \bx, \bw\rangle)$, $\bw \sim \Unif (\S^{d-1})$. In the left figures, we fix the sample size $n_\ss$ and consider three values of the regularization $\lambda_\ss$ (in three different colors); in the right figures, we fix  the regularization $\lambda_\ss$ and consider three values of the sample size $n_\ss$ (in three different colors). We run 10 independent experiments, reporting the average and the confidence interval at $1$ standard deviation. The circles represent the test errors obtained experimentally and they match well the continuous curves which represent the deterministic equivalents of Definition \ref{def:st}. 
  }
  \label{fig:nonlinear}
\end{figure*}

\paragraph*{Discussion.} The key difference with Theorem~\ref{thm:teacher-error} is that the error bound of Theorem \ref{thm:additive_err} is \emph{additive} rather than multiplicative, with potentially
$\widetilde{\cR}_{\ss}\gg    \mathsf{R}_{\ss}$.
This is due to the more intricate test error induced by the two-stage learning pipeline, with asymmetric terms. We conjecture that a multiplicative bound of the form
\[
\big|
\cR_{\test}(\hat f_\ss,f_*) -    \mathsf{R}_{\ss}\big|\le
C_* \cE_{n_\ss,p_\ss,n_\st,p_\st}    \mathsf{R}_{\ss}
\]
holds for a broad class of feature covariances $\bSigma$ and targets $\bbeta_*$. We leave this as a technically challenging problem for future work.

Nevertheless, Theorem~\ref{thm:additive_err} already yields a sharp characterization of the student risk whenever $\cE_{n_\ss,p_\ss,n_\st,p_\st}\,\widetilde{\cR}_{\ss} \ll    \mathsf{R}_{\ss}$. Under mild conditions, $\widetilde{\cR}_{\ss}=O(1)$ with high probability, and
$\cE_{n_\ss,p_\ss,n_\st,p_\st}=\tilde O\big(\min(n_\st,n_\ss,p_\st,p_\ss)^{-1/2}\big)$ for a broad range of feature covariances $\bSigma$ and regularization parameters $\lambda_\ss,\lambda_\st$.  In particular, by taking\footnote{In this regime, the minimax rate is close to $0$ and one can choose the parameters in \eqref{eq:relative-scale} such that $(p_\st \mu_{\st,1},p_\ss \mu_{\ss,1},p_\st\lambda_\st/n_\st,p_\ss \lambda_\ss / n_\ss)$ appearing in the denominator of $\cE_{n_\ss,p_\ss,n_\st,p_\st}$ decay at rates close to $0$, slower than $\min(n_\st,n_\ss,p_\st,p_\ss)^{-1/2}$.} $r$ close to $0$ and $\alpha$ close to $1$, the result covers various W2SG regimes under the source--capacity conditions of Section~\ref{sec:scaling-laws}, including settings where the student improves upon the teacher’s scaling law and settings where the teacher risk does not even decay with the sample size while the student is minimax optimal. 
More broadly, we expect the deterministic equivalents to remain accurate well beyond the regimes covered by Theorem \ref{thm:additive_err} and beyond the assumptions made therein. To showcase this, Figure \ref{fig:nonlinear} 
shows that the deterministic equivalent in Definition \ref{def:st} closely tracks the behavior of the student test error for a non-linear feature map and data coming from  either a single-index target function (top plots) or from a standard dataset (MNIST, bottom plots). In both cases, the deterministic equivalents are evaluated by estimating $(\bbeta_*,\bSigma)$ from a large dataset (see Appendix C.3 in \citet{defilippis2024dimension}).

\section{Scaling laws for the test error}
\label{sec:scaling-laws}

Following prior work on scaling laws in linear and random-feature models
\citep{maloney2022solvable,bahri2024explaining,defilippis2024dimension,paquette20244+,lin2024scaling},
we adopt the classical \emph{source} and \emph{capacity} conditions
\citep{caponnetto2007optimal,rudi2017generalization}.
Specifically, we assume a power-law spectrum and target coefficients:
\begin{equation}
    \label{eq:source-capa}
    \begin{split}
        \bSigma_{k,k} = k^{-\alpha}, \qquad \bbeta_{*,k}  = k^{- \frac{(1+2\alpha r)}{2}},
    \end{split}
\end{equation}
where $\alpha >1$ and $r >0$.
For instance, $r>\tfrac12$ corresponds to $f_*$ belonging to the RKHS of the RF kernel.

\paragraph{Scaling of model widths and regularizations.}
We consider the two-stage teacher--student procedure with $n_\st$ ground-truth samples in the teacher stage.
We parameterize the remaining quantities as power laws in $n_\st$:
\begin{equation}
    \label{eq:relative-scale}
    \begin{split}
     & p_\st = n_\st^{ \gamma_{p_\st}}, \quad \lambda_\st =  n_\st^{- \gamma_{\lambda_\st}} ;\\
     & p_\ss = n_\st^{ \gamma_{p_\ss}}, \quad n_\ss = n_\st^{ \gamma_{n_\ss}},  \quad \lambda_\ss =  n_\st^{- \gamma_{\lambda_\ss}}. 
    \end{split}
\end{equation} 
Using the deterministic equivalents \eqref{eq:def-det-equiv-teacher} and \eqref{eq:det-equiv-def-student},
we derive sharp decay rates for the teacher and student test errors as $n_\st\to\infty$ under source--capacity conditions \eqref{eq:source-capa} and hyperparameter scaling \eqref{eq:relative-scale}. 

To state these rates, it will be convenient to introduce
\begin{equation*}
    z_\ss =\Big( \frac{\gamma_{n_\ss} + \gamma_{\lambda_\ss}}{\alpha} \wedge \gamma_{n_\ss} \wedge \gamma_{p_\ss} \Big), \; z_\st =  \Big(\frac{1 + \gamma_{\lambda_\st}}{\alpha} \wedge 1 \wedge \gamma_{p_\st}\Big).
\end{equation*} 

\paragraph*{Teacher scaling laws.} The scaling laws for the teacher model were derived in \citet{defilippis2024dimension}; we restate them here for completeness.

\begin{theorem}{\citep[Theorem 4.1]{defilippis2024dimension}}\label{thm:scaling-teacher}
    Under the scaling assumptions \eqref{eq:source-capa} and \eqref{eq:relative-scale}, the teacher deterministic equivalent $   \mathsf{R}_{\st}= \sB_{\st} +\sV_{\st} $ satisfies
    \[
    \sB_{\st} = \Theta ( n_\st^{-\gamma_{\st,\sB}} ) , \qquad \sV_{\st} = \Theta( \tau_\st^2 n_\st^{-\gamma_{\st,\sV}}),
    \]
    where 
    \[
    \begin{aligned}
        \gamma_{\st,\sB} \coloneqq&~ [2 \alpha (r \wedge 1) z_\st ]\wedge  [ \gamma_{p_\st} + (2 \alpha (r \wedge 1/2) - 1) z_\st] , \\
        \gamma_{\st,\sV} \coloneqq&~ 1 - z_\st.
    \end{aligned}
    \]
    Furthermore, if $\tau_\st^2 = \Theta(1),$  the optimal teacher exponent is 
    \begin{equation*}
    \begin{split}
         \gamma_{\st,*} =   \max_{p_\st, \lambda_\st} \{\gamma_{\st,\sB} \wedge \gamma_{\st,\sV}\} = \frac{2 \alpha(r \wedge 1)}{1 + 2 \alpha(r \wedge 1)},
    \end{split}
    \end{equation*} and it is achieved when $z_\st = \frac{1}{1+2 \alpha (r \wedge 1)}$ and $ \gamma_{p_\st} \geq 1 - \frac{2 \alpha (r \wedge \frac{1}{2})}{ 1+ 2 \alpha (r \wedge 1)}$.
\end{theorem}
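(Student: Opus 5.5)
Since this theorem is quoted from \citet{defilippis2024dimension}, the plan is to re-derive it directly from the deterministic equivalent \eqref{eq:def-det-equiv-teacher}, tracking orders of magnitude under \eqref{eq:source-capa} and \eqref{eq:relative-scale}.

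\textbf{Step 1: scaling of the fixed points.} We first solve \eqref{eq:fixed-points-setting} asymptotically. With $\bSigma_{kk}=k^{-\alpha}$ we have $\Tr(\bSigma(\bSigma+\mu)^{-1}) \asymp \mu^{-1/\alpha}$ and $\Tr(\bSigma^2(\bSigma+\mu)^{-2}) \asymp \mu^{-1/\alpha}$ for $\mu\to 0$. Eliminating $\mu_{\st,1}$, the equations reduce to a balance between the effective dimension $\mu_{\st,2}^{-1/\alpha}$, the sample size $n_\st$, the width $p_\st$, and the regularization $\lambda_\st$. This gives $\mu_{\st,2} \asymp n_\st^{-\alpha z_\st}$. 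Concretely, the effective number of learned directions $\mu_{\st,2}^{-1/\alpha}$ is capped by $n_\st$, by $p_\st$, and by the ridge scale $(n_\st/\lambda_\st)^{1/\alpha}$, whichever is smallest. The same computation gives $\mu_{\st,1}$: it is comparable to $\mu_{\st,2}$ when $p_\st \gg \mu_{\st,2}^{-1/\alpha}$, and of order $\lambda_\st/n_\st$-type quantities otherwise. The case analysis follows which of the three terms in $z_\st$ is active.

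\textbf{Step 2: the functionals.} Next we show that $\Upsilon_\st \asymp \mu_{\st,2}^{-1/\alpha}/n_\st = n_\st^{z_\st-1}$, which stays bounded away from $1$, and that $\chi_\st \asymp \mu_{\st,2}^{-1/\alpha}/(p_\st-\Theta(\mu_{\st,2}^{-1/\alpha}))$, which is $O(1)$ when $p_\st$ is comparable to the effective dimension and small otherwise. This immediately gives $\sV_\st \asymp \tau_\st^2 n_\st^{-(1-z_\st)}$. Bounding $\Upsilon_\st$ away from $1$ uses the fixed-point identity, through the denominators $1-\frac1p\Tr(\cdot)$.

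\textbf{Step 3: the bias.} We split $\sB_\st$ into two sums:
\[
\mu_{\st,2}^2\sum_k \frac{k^{-1-2\alpha r}}{(k^{-\alpha}+\mu_{\st,2})^2}
\qquad\text{and}\qquad
\chi_\st\,\mu_{\st,2}^2\sum_k \frac{k^{-1-2\alpha r-\alpha}}{(k^{-\alpha}+\mu_{\st,2})^2}.
\]
Each sum is estimated by splitting at $k\asymp \mu_{\st,2}^{-1/\alpha}$. The first sum is $\asymp \mu_{\st,2}^{2(r\wedge 1)}$, which gives the exponent $2\alpha(r\wedge1)z_\st$; there is a saturation at $r=1$, where the head $k\lesssim 1$ dominates. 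In the second sum the head and tail give $\mu_{\st,2}^{2(r\wedge 1/2)}\cdot\mu_{\st,2}^{-1/\alpha}$ up to constants, and multiplying by $\chi_\st\mu_{\st,2}\cdots$ together with $1/p_\st$ produces the exponent $\gamma_{p_\st}+(2\alpha(r\wedge\frac12)-1)z_\st$. The saturation at $r=1/2$ arises because the weight picks up an extra factor $\bSigma$. Borderline log factors at $r=1$ or $r=1/2$ need a remark; I expect them to be absorbed, or the theorem intends $\Theta$ up to them.

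\textbf{Step 4: optimization.} For $\tau_\st^2=\Theta(1)$ we maximize $\min\{\gamma_{\st,\sB},\gamma_{\st,\sV}\}$. Increasing $\gamma_{p_\st}$ only makes the second bias term larger and therefore harmless. In the first bias term, $2\alpha(r\wedge1)z_\st$ increases in $z_\st$ while $1-z_\st$ decreases, so the two balance at $z_\st=1/(1+2\alpha(r\wedge1))$. It remains to check that the second bias term is not binding, which is exactly the stated lower bound on $\gamma_{p_\st}$, and that this $z_\st$ is achievable by choosing $\gamma_{\lambda_\st}$.

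The main obstacle is Step 1: getting two-sided bounds on $\mu_{\st,1},\mu_{\st,2}$ uniformly across regimes, including the transitions between the $n$-, $p$-, and $\lambda$-limited cases. I would handle this by first expressing the fixed-point equations in terms of the effective dimension and then monotonicity-sandwiching each regime.
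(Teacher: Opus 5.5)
Your route matches the paper's. The paper takes the fixed-point and trace scalings from \citet{defilippis2024dimension} (summarized in Lemma \ref{lem:scale-ori}) and then balances $2\alpha(r\wedge1)z_\st$ against $1-z_\st$. However, Step 2 contains a concrete error that Step 3 then glosses over. The claim $\chi_\st\asymp\mu_{\st,2}^{-1/\alpha}/(p_\st-\Theta(\mu_{\st,2}^{-1/\alpha}))$ is wrong. The numerator $\Tr(\bSigma(\bSigma+\mu_{\st,2})^{-2})$ is of order $\mu_{\st,2}^{-1-1/\alpha}$, not $\mu_{\st,2}^{-1/\alpha}$: both the head $\sum_{k\lesssim\mu_{\st,2}^{-1/\alpha}}k^{\alpha}$ and the tail $\mu_{\st,2}^{-2}\sum_{k\gtrsim\mu_{\st,2}^{-1/\alpha}}k^{-\alpha}$ are of that order. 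The denominator is $\Theta(p_\st)$, so $\chi_\st\asymp p_\st^{-1}\mu_{\st,2}^{-1-1/\alpha}$, as in Lemma \ref{lem:scale-ori}. A quick check: inside $\bLambda_\st$, $\chi_\st$ multiplies $\bSigma(\bSigma+\mu_{\st,2})^{-2}$ alongside $(\bSigma+\mu_{\st,2})^{-2}$, so it must carry units of $\mu_{\st,2}^{-1}$. Your expression is a dimensionless ratio of effective dimensions. In particular, $\chi_\st$ is not $O(1)$ when $p_\st$ is comparable to the effective dimension; it is of order $\mu_{\st,2}^{-1}$ there.

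This factor is exactly what produces the second bias exponent. Your second sum is $\<\bbeta_*,\bSigma(\bSigma+\mu_{\st,2})^{-2}\bbeta_*\>\asymp\mu_{\st,2}^{(2r-1)\wedge 0}$ and contains no $\mu_{\st,2}^{-1/\alpha}$. The $\mu_{\st,2}^{-1/\alpha}$ you attribute to its head and tail actually comes from $\chi_\st$. Hence
\[
\chi_\st\,\mu_{\st,2}^{2}\,\<\bbeta_*,\bSigma(\bSigma+\mu_{\st,2})^{-2}\bbeta_*\>\asymp p_\st^{-1}\mu_{\st,2}^{-1-\frac{1}{\alpha}}\,\mu_{\st,2}^{2}\,\mu_{\st,2}^{(2r-1)\wedge 0}=p_\st^{-1}\mu_{\st,2}^{-\frac{1}{\alpha}}\,\mu_{\st,2}^{2(r\wedge\frac12)},
\]
which with $\mu_{\st,2}\asymp n_\st^{-\alpha z_\st}$ gives $\gamma_{p_\st}+(2\alpha(r\wedge\frac12)-1)z_\st$. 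With your $\chi_\st$ you would keep an extra factor $\mu_{\st,2}$. The exponent would then be $\gamma_{p_\st}+(2\alpha(r\wedge\frac12)+\alpha-1)z_\st$, and the threshold on $\gamma_{p_\st}$ in Step 4 would drop by $\alpha/(1+2\alpha(r\wedge1))$. So as written, Step 3 reaches the correct exponent only through inconsistent bookkeeping. Once $\chi_\st$ is corrected, the rest of your plan goes through as in the paper.

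Two smaller points. In Step 4, raising $\gamma_{p_\st}$ makes the second bias term smaller (its exponent larger), not larger. Also, your claim that $\Upsilon_\st$ stays bounded away from $1$ fails at the interpolation threshold $p_\st=n_\st$ with $\gamma_{\lambda_\st}>\alpha-1$. There $1-\Upsilon_\st\to0$ polynomially and the variance diverges, so that boundary case has to be set aside; the quoted statement does not address it either.
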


\paragraph*{Student scaling laws.} We now derive the scaling laws for the student model using the deterministic equivalent of
Theorem~\ref{thm:additive_err}.
\begin{theorem}[Student scaling laws]
    \label{thm:scaling-st}
    Under the scaling assumptions \eqref{eq:source-capa} and \eqref{eq:relative-scale}, the student deterministic equivalent $\mathsf{R}_\ss = \sB_{bias, \ss} + \sB_{var, \ss}$ satisfies \begin{equation*}
     \sB_{bias, \ss} = \Theta(n_\st^{-\gamma_{\ss,bias}}) , \quad \sB_{var, \ss} = \Theta(\tau_\st^2 n_\st^{-\gamma_{\ss,var}}),
    \end{equation*}
    where
    \[
    \begin{aligned}
        &\gamma_{\ss,bias} \hspace{-.3em}\coloneqq\hspace{-.3em} \begin{cases} 
            & \hspace{-.8em}[2 \alpha (r \wedge 1) z_\st] \wedge [\gamma_{p_\st} + (2 \alpha (r \wedge \frac{1}{2} ) - 1)z_\st]  \\
            & \hspace{-.8em} \wedge [ (2 \alpha (r \wedge \frac{1}{2} )\hspace{-.2em}-\hspace{-.2em}\alpha) z_\st \hspace{-.2em}+\hspace{-.2em}(\alpha\hspace{-.2em}-\hspace{-.2em}1) z_\ss \hspace{-.2em}+\hspace{-.2em} \gamma_{p_\ss}  ] , \, \text{for $z_\st \leq z_s$} \\
            &\hspace{-.8em}[2 \alpha (r \wedge 1) z_\ss] \wedge [ 2 \alpha (r \wedge \frac{1}{2} ) z_\ss - z_\ss + \gamma_{p_\ss}  ]  \\
            & \hspace{-.8em}\wedge [2 \alpha (r \wedge \frac{1}{2} ) z_\st + \alpha (z_\st -z_\ss) - z_\ss + \gamma_{p_\st}] \\
            &\hspace{-.8em} \wedge [2 \alpha (r \wedge \frac{1}{2} ) z_\st - z_\st +1 - z_\ss + \gamma_{p_\st}]\\
            &{\hspace{-.8em} \wedge  [2\alpha(r \hspace{-.2em}\wedge \hspace{-.2em} \frac{1}{2})z_\st+\gamma_{p_\st}\hspace{-.2em}+\hspace{-.2em}\gamma_{n_\ss}
        \hspace{-.2em}-\hspace{-.2em}z_\ss\hspace{-.2em}-\hspace{-.2em}z_\st
        ]}, \, \text{for $z_\st > z_\ss$}
        \end{cases}\\
        & \gamma_{\ss,var} \coloneqq {\begin{cases}
            &1 - z_\st, \, \text{for }  z_\st \leq z_\ss\\
            &[1 - z_\ss] \wedge [1+\gamma_{n_\ss}-z_\ss-z_\st], \quad  \text{for }  z_\st > z_\ss
        \end{cases}} 
    \end{aligned}
    \]
    Furthermore, if $\tau_\st^2 = \Theta(1),$  the optimal student exponent is \begin{equation*}
    \begin{split}
        \gamma_{\ss,*} := &  \max_{p_\st, \lambda_\st, n_\ss, p_\ss, \lambda_\ss} \{  \gamma_{\ss,bias}  \wedge \gamma_{\ss,var} \}= \frac{2 \alpha\left(r \wedge 1\right)}{1 + 2 \alpha\left(r \wedge 1\right)},
    \end{split}
    \end{equation*} and it is  achieved when $(z_\ss \wedge z_\st) = \frac{1}{1+2 \alpha (r \wedge 1)}$ and one of the following two conditions is satisfied:  \begin{enumerate}
        \item $z_\st \leq z_\ss$ and 
        \begin{equation}\label{eq:condzt}
        \begin{split}
         \gamma_{p_\ss} &\geq (1-\alpha)z_\ss  +\frac{2 \alpha (r \wedge 1) - 2 \alpha (r \wedge \frac{1}{2})+\alpha}{1 + 2 \alpha (r \wedge 1)},\\
         \gamma_{p_\st} &\geq 1 - \frac{2 \alpha (r \wedge \frac{1}{2})}{ 1+ 2 \alpha (r \wedge 1)}.
        \end{split}
        \end{equation} 
        \item { $z_\st > z_\ss$ and \begin{equation}\label{eq:W2S2}
            \begin{split}
                &\hspace{-1em}\gamma_{n_\ss}
        \geq
        z_\st, \gamma_{p_\ss}
        \geq
        1-\frac{2\alpha (r \wedge \frac{1}{2})}{1+2\alpha (r \wedge 1)},
        \\[3pt]
        &\hspace{-1em}\gamma_{p_\st}
        \geq
        \max\hspace{-.2em}\left\{\hspace{-.2em}
        \frac{2\alpha (r \hspace{-.2em}\wedge \hspace{-.2em} 1)\hspace{-.2em}+\hspace{-.2em}\alpha\hspace{-.2em}+\hspace{-.2em}1}{1+2\alpha (r \wedge 1)}
        \hspace{-.2em}-\hspace{-.2em}
        \left(\alpha\hspace{-.2em}+\hspace{-.2em}2\alpha \left(r \hspace{-.2em}\wedge\hspace{-.2em} \frac{1}{2}\right)\right)z_\st,  \right.\, \\
        &\hspace{-1em}\left. 1\hspace{-.2em}+\hspace{-.2em}\left(1\hspace{-.2em}-\hspace{-.2em}2\alpha \left(r\hspace{-.2em} \wedge \hspace{-.2em}\frac{1}{2}\right)\right)z_\st\hspace{-.2em}-\hspace{-.2em}\gamma_{n_\ss},\,
        \left(1\hspace{-.2em}-\hspace{-.2em}2\alpha \left(r \wedge \frac{1}{2}\right)\right)z_\st
        \right\}.
            \end{split}
        \end{equation} }
    \end{enumerate} 
\end{theorem}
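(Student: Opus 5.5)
The plan is to evaluate each term of $\mathsf{R}_\ss$ in Definition~\ref{def:st} under the power laws \eqref{eq:source-capa}--\eqref{eq:relative-scale}, reusing the single-stage asymptotics from \citet{defilippis2024dimension}. The first step is to recall the scalings of the fixed points: $\mu_{\st,2} \asymp n_\st^{-\alpha z_\st}$ and $\mu_{\ss,2} \asymp n_\st^{-\alpha z_\ss}$. From the second equation in \eqref{eq:fixed-points-setting} we also get the matching scaling of $\mu_{\cdot,1}$. Next, $\Upsilon_\st$ and $\Upsilon_\ss$ stay bounded away from $1$. Moreover $\Upsilon_\st \asymp n_\st^{z_\st-1}$, $\chi_\st \asymp n_\st^{z_\st\alpha - \gamma_{p_\st}}\cdot n_\st^{-\alpha z_\st}\cdot$(effective dimension), and analogously for the student with $n_\ss$ in place of $n_\st$. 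All traces reduce to sums $\sum_k k^{-a}(k^{-\alpha}+\mu)^{-b}$. These split at the threshold $k \asymp \mu^{-1/\alpha}$, and each sum is $\Theta$ of a power of $\mu$ determined by whether the head or the tail dominates; this gives the $r\wedge 1$ and $r\wedge\frac12$ saturations.

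Second, I will expand $\bLambda_0$ spectrally. Its diagonal entries behave like $\ind\{k\lesssim \mu_{\ss,2}^{-1/\alpha}\}$ plus tails of order $\frac{\Upsilon_\ss}{1-\Upsilon_\ss}\mu_{\ss,2}^2 k^{2\alpha}$ and $\chi_\ss\mu_{\ss,2}^2k^{\alpha}$ for large $k$, truncated at $1$ where appropriate. Then I compute $\Upsilon_\st(\bLambda_0)$ and $\chi_\st(\bLambda_0)$. Here the case split $z_\st\le z_\ss$ versus $z_\st>z_\ss$ appears naturally. If the student's effective cutoff $n_\st^{z_\ss}$ exceeds the teacher's cutoff $n_\st^{z_\st}$, then $\bLambda_0\approx\bI$ on the teacher's relevant spectrum. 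In that case $\Upsilon_\st(\bLambda_0)\asymp\Upsilon_\st$, which gives $\gamma_{\ss,var}=1-z_\st$. Otherwise $\bLambda_0$ truncates the teacher's trace at $n_\st^{z_\ss}$, producing $1-z_\ss$. The extra student-variance tail contributes $1+\gamma_{n_\ss}-z_\ss-z_\st$ through the $\Upsilon_\ss$ piece evaluated against the teacher's trace.

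Third, for the bias $\<\bbeta_*,\bLambda\bbeta_*\>$ I will rewrite $\bLambda$ diagonally. Combining the first three terms gives $(\bI-\bSigma(\bSigma+\mu_{\ss,2})^{-1}\bSigma(\bSigma+\mu_{\st,2})^{-1})^2$ plus nonnegative corrections. Concretely, the cross term has the form $\bSigma^2(\bSigma+\mu_{\st,2})^{-2}(\bLambda_0-\bSigma^2(\bSigma+\mu_{\ss,2})^{-2})$, and it is nonnegative. The remaining terms are the $\mu_{\st,2}^2$ terms weighted by $\Upsilon_\st(\bLambda_0)$, $\chi_\st$ and $\chi_\st(\bLambda_0)$. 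Hence $\bLambda$ is a sum of nonnegative diagonal pieces, so $\<\bbeta_*,\bLambda\bbeta_*\>$ is $\Theta$ of the maximum of the individual contributions. No cancellation can occur, and this is what makes the $\Theta$ statements and the ``min of exponents'' form valid. Each piece is then a sum $\sum_k k^{-1-2\alpha r}\,d_k$ with $d_k$ piecewise monomial in $k$, which I will evaluate by the head/tail rule. Matching terms in the two cases ($z_\st\le z_\ss$ and $z_\st> z_\ss$) gives the listed exponents: the first two in each case reproduce the one-stage bias of whichever model has the smaller cutoff, and the others come from the $\chi$ and $\bLambda_0$-tail cross terms. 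The main obstacle I expect is this bookkeeping. The pieces involve products of $\mu_{\st,2}$ and $\mu_{\ss,2}$ powers with traces cut at two different thresholds, and I must also verify that no term is dominated in a way that changes the boundary cases (for example $r=\frac12$, $r=1$, or equal cutoffs). I would handle these by checking positivity of each exponent gap separately, and accept logarithmic factors at boundaries as absorbed by $\Theta$ or excluded.

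Finally, for the optimal exponent, the upper bound $\gamma_{\ss,*}\le\frac{2\alpha(r\wedge1)}{1+2\alpha(r\wedge1)}$ follows because $\gamma_{\ss,bias}\le 2\alpha(r\wedge1)(z_\st\wedge z_\ss)$ and $\gamma_{\ss,var}\le 1-(z_\st\wedge z_\ss)$. Balancing these two bounds gives $z_\st\wedge z_\ss=\frac{1}{1+2\alpha(r\wedge1)}$. For achievability, I will plug this value in and check in each case which extra constraints on $\gamma_{p_\st},\gamma_{p_\ss},\gamma_{n_\ss}$ make every remaining bracket at least the target. Solving these linear inequalities yields \eqref{eq:condzt} and \eqref{eq:W2S2}.
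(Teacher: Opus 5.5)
Your outline follows the paper's route almost step for step: fixed-point scalings as in Lemma~\ref{lem:scale-ori}, two-threshold trace asymptotics, the split $z_\st\le z_\ss$ versus $z_\st>z_\ss$, writing $\bLambda$ as $(\bI-\bSigma^2(\bSigma+\mu_{\ss,2})^{-1}(\bSigma+\mu_{\st,2})^{-1})^2$ plus nonnegative terms, and balancing for the optimal exponent. The step that does not go through as written is the order of $\Upsilon_\st(\bLambda_0)$. By definition
\[
n_\st\Upsilon_\st(\bA)=\Tr\big(\bA\bSigma(\bSigma+\mu_{\st,2})^{-1}\big)-\frac{\mu_{\st,1}\Tr\big(\bA\bSigma(\bSigma+\mu_{\st,2})^{-2}\big)}{1-p_\st^{-1}\Tr\big(\bSigma^2(\bSigma+\mu_{\st,2})^{-2}\big)}.
\]
For $\bA=\bLambda_0$ the two terms are of the same order whenever $\mu_{\st,1}\asymp\mu_{\st,2}$; for instance, each is $\asymp\mu_{\st,2}^{-1/\alpha}$ when $z_\st\le z_\ss$. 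Head/tail evaluation of each trace therefore only gives the trivial upper bound $n_\st\Upsilon_\st(\bA)\le\Tr(\bA\bSigma(\bSigma+\mu_{\st,2})^{-1})$. Your ``no cancellation'' remark covers $\bLambda$ as a sum of nonnegative operators, but not this scalar coefficient, which is itself a difference. The missing lower bound matters. It is what makes $\sB_{var,\ss}$ a $\Theta$-statement, and the converse $\gamma_{\ss,var}\le 1-(z_\st\wedge z_\ss)$ in your optimality argument rests on it. Likewise, your step ``$\bLambda_0\approx\bI$ on the teacher's head, hence $\Upsilon_\st(\bLambda_0)\asymp\Upsilon_\st$'' presupposes that $\bA\mapsto\Upsilon_\st(\bA)$ is monotone and head-dominated, which is not visible from the definition.

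The fix is the rewriting the paper performs in Lemma~\ref{lemma:scalings}. Adding the two equations in \eqref{eq:fixed-points-setting} gives $\mu_{\st,1}/\mu_{\st,2}=1-p_\st^{-1}\Tr(\bSigma(\bSigma+\mu_{\st,2})^{-1})$. Combining this with $\mu_{\st,2}\Tr(\bA\bSigma(\bSigma+\mu_{\st,2})^{-2})=\Tr(\bA\bSigma(\bSigma+\mu_{\st,2})^{-1})-\Tr(\bA\bSigma^2(\bSigma+\mu_{\st,2})^{-2})$ yields
\[
n_\st\Upsilon_\st(\bA)=\theta\,\Tr\big(\bA\bSigma(\bSigma+\mu_{\st,2})^{-1}\big)+(1-\theta)\,\Tr\big(\bA\bSigma^2(\bSigma+\mu_{\st,2})^{-2}\big),
\]
\[
\theta=\frac{p_\st^{-1}\mu_{\st,2}\Tr\big(\bSigma(\bSigma+\mu_{\st,2})^{-2}\big)}{1-p_\st^{-1}\Tr\big(\bSigma^2(\bSigma+\mu_{\st,2})^{-2}\big)}\in[0,1].
\]
So $\Upsilon_\st(\cdot)$ is monotone in its p.s.d.\ argument and sandwiched between the two traces. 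You then only need to check that $\Tr(\bLambda_0\bSigma^2(\bSigma+\mu_{\st,2})^{-2})$ and $\Tr(\bLambda_0\bSigma(\bSigma+\mu_{\st,2})^{-1})$ have the same order in each case. Both are $\asymp\mu_{\st,2}^{-1/\alpha}$ if $z_\st\le z_\ss$, and both are $\asymp\mu_{\ss,2}^{-1/\alpha}+n_\ss^{-1}\mu_{\ss,2}^{-1/\alpha}\mu_{\st,2}^{-1/\alpha}$ if $z_\st>z_\ss$. With this inserted, the rest of your plan goes through as in the paper.

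A minor point: the $n_\st$-powers in your formula for $\chi_\st$ cancel. The correct scaling is $\chi_\st\asymp p_\st^{-1}\mu_{\st,2}^{-1-1/\alpha}$, i.e.\ $p_\st^{-1}\Tr(\bSigma(\bSigma+\mu_{\st,2})^{-2})$, not $p_\st^{-1}$ times the effective dimension $\mu_{\st,2}^{-1/\alpha}$.
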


The proof of Theorem~\ref{thm:scaling-st} is deferred to Appendix~\ref{sec:scaling_law_st}.

\paragraph{Stable regularization.}
For any fixed $(\gamma_{p_\st},\gamma_{p_\ss},\gamma_{n_\ss})$, we call
$(\gamma_{\lambda_\st},\gamma_{\lambda_\ss})$ \emph{stable} if $\gamma_{\lambda_\st} \geq -1$ and $\gamma_{\lambda_\ss} \geq -\gamma_{n_\ss}$.
This condition implies that $z_\st,z_\ss\ge 0$, ensuring that student and teacher risks $\mathsf{R}_\st$ and $\mathsf{R}_\ss$ remain $O(1)$ (i.e., do not diverge).
Without loss of generality, we restrict our attention to stable regularizations in the following.

\paragraph{Optimal and minimax rates.}
For any fixed $(\alpha,r)$, the best achievable exponent for the student and the teacher are equal $\gamma_{\ss,*} = \gamma_{\st,*}$.  In particular, a student trained \emph{only} on teacher-generated labels cannot outperform the \emph{optimally tuned} teacher trained on ground-truth labels in terms of asymptotic scaling. It is also useful to compare to the minimax rate under source--capacity conditions, which is $ n_\st^{- \frac{2 \alpha r}{1 + 2\alpha r}} $ \citep{caponnetto2007optimal}.  Thus, an optimally tuned student trained \emph{only} on teacher-generated labels achieves the minimax rate when $r\le 1$, same as the optimally tuned teacher trained on ground-truth labels.


\subsection{Weak-to-Strong Generalization}

We now leverage the teacher and student scaling laws in Theorems~\ref{thm:scaling-teacher} and \ref{thm:scaling-st} to address the central question of this paper:
\emph{When can a student achieve a better scaling law than its teacher in the RFRR setting?}

\paragraph{Necessary condition for weak-to-strong generalization.} Our first observation is a necessary condition for the improvement in scaling law under the source--capacity model, which immediately follows by inspecting the decay rates in Theorems \ref{thm:scaling-teacher} and \ref{thm:scaling-st}.

\begin{corollary}
    \label{coro:nece-cond}
     Take any $\alpha>1$, $r>0$, and any relative scaling coefficients $(\gamma_{p_\st},\gamma_{p_\ss},\gamma_{n_\ss},\gamma_{\lambda_\st},\gamma_{\lambda_\ss})$. On the one hand, if $z_\st \leq z_\ss,$ then
     \begin{equation*}
         \sB_{bias, \ss} = \Omega(\sB_\st), \qquad \sB_{var, \ss} =\Theta(  \sV_\st). 
     \end{equation*} 
     On the other hand, if $z_\st > z_\ss$, then $\sB_{var, \ss} =o(  \sV_\st)$.
\end{corollary}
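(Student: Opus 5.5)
The plan is to read off the corollary directly from the exponents in Theorems~\ref{thm:scaling-teacher} and \ref{thm:scaling-st}, splitting into the two cases $z_\st \le z_\ss$ and $z_\st > z_\ss$. No new random-matrix analysis is needed. Since all quantities are $\Theta(n_\st^{-\gamma})$, comparing rates reduces to comparing exponents: $\sB_{bias,\ss}=\Omega(\sB_\st)$ is equivalent to $\gamma_{\ss,bias}\le\gamma_{\st,\sB}$; $\sB_{var,\ss}=\Theta(\sV_\st)$ is equivalent to $\gamma_{\ss,var}=\gamma_{\st,\sV}$; and $\sB_{var,\ss}=o(\sV_\st)$ is equivalent to $\gamma_{\ss,var}>\gamma_{\st,\sV}$. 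In each case the noise factor $\tau_\st^2$ appears identically on both sides and cancels.

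\emph{Case $z_\st\le z_\ss$.} The student bias exponent in Theorem~\ref{thm:scaling-st} is a minimum of three terms. The first two, $[2\alpha(r\wedge1)z_\st]$ and $[\gamma_{p_\st}+(2\alpha(r\wedge\frac12)-1)z_\st]$, are exactly the two terms whose minimum defines $\gamma_{\st,\sB}$ in Theorem~\ref{thm:scaling-teacher}. A minimum over a superset of terms can only be smaller, so $\gamma_{\ss,bias}\le\gamma_{\st,\sB}$, which gives $\sB_{bias,\ss}=\Omega(\sB_\st)$. For the variance, in this case $\gamma_{\ss,var}=1-z_\st=\gamma_{\st,\sV}$, so $\sB_{var,\ss}=\Theta(\sV_\st)$.

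\emph{Case $z_\st> z_\ss$.} Here $\gamma_{\ss,var}=[1-z_\ss]\wedge[1+\gamma_{n_\ss}-z_\ss-z_\st]$, and we must show both terms strictly exceed $1-z_\st$.
\begin{itemize}
\item The first term satisfies $1-z_\ss>1-z_\st$ because $z_\ss<z_\st$.
\item The second term exceeds $1-z_\st$ if and only if $\gamma_{n_\ss}>z_\ss$. By definition, $z_\ss\le\gamma_{n_\ss}$, so this inequality is not automatic: strictness fails exactly when $z_\ss=\gamma_{n_\ss}$.
\end{itemize}
This boundary case is the one delicate point. In it, the second term equals $1-z_\st$ and the claimed strict improvement would fail. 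I would first check whether $z_\ss=\gamma_{n_\ss}$ can actually occur together with $z_\st>z_\ss$ under stable regularization. If it cannot be excluded, I would go back to the proof of Theorem~\ref{thm:scaling-st} to see whether that term genuinely enters the variance exponent with the stated constraints, and treat the equality $z_\ss=\gamma_{n_\ss}$ as a degenerate case that must be handled there or excluded by a genericity or strict-inequality convention on the exponents.

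Away from this boundary, $\gamma_{\ss,var}>\gamma_{\st,\sV}$, and hence $\sB_{var,\ss}=o(\sV_\st)$, completing the proof.
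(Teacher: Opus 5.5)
Your reduction to exponent comparison is exactly the paper's argument: the paper just says the corollary ``follows by inspecting'' Theorems~\ref{thm:scaling-teacher} and~\ref{thm:scaling-st}. Your case $z_\st\le z_\ss$ is complete. The teacher's two bias exponents appear verbatim inside the minimum defining $\gamma_{\ss,bias}$, and $\gamma_{\ss,var}=1-z_\st=\gamma_{\st,\sV}$.

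For $z_\st>z_\ss$, the point you flagged is not a removable degeneracy, and neither of your proposed fixes (stability, or a genericity convention) excludes it.

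\emph{It occurs on an open region.} Since $z_\ss=\frac{\gamma_{n_\ss}+\gamma_{\lambda_\ss}}{\alpha}\wedge\gamma_{n_\ss}\wedge\gamma_{p_\ss}$, the equality $z_\ss=\gamma_{n_\ss}$ holds on the whole region $\gamma_{\lambda_\ss}\ge(\alpha-1)\gamma_{n_\ss}$, $\gamma_{p_\ss}\ge\gamma_{n_\ss}$. This region has nonempty interior and consists of stable regularizations.

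\emph{A concrete counterexample.} Take $\alpha=2$ and a teacher with $\gamma_{p_\st}=1$, $\gamma_{\lambda_\st}=0$, so $z_\st=\tfrac12$. Take a student with $\gamma_{n_\ss}=\tfrac14$, $\gamma_{p_\ss}=1$, $\gamma_{\lambda_\ss}=1$, so $z_\ss=\tfrac58\wedge\tfrac14\wedge1=\tfrac14$. Theorem~\ref{thm:scaling-st} then gives $\gamma_{\ss,var}=\tfrac34\wedge\tfrac12=\tfrac12=\gamma_{\st,\sV}$. Hence $\sB_{var,\ss}=\Theta(\sV_\st)$, not $o(\sV_\st)$.

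\emph{The term is genuine.} It is the contribution $n_\st^{-1}n_\ss^{-1}\mu_{\ss,2}^{-1/\alpha}\mu_{\st,2}^{-1/\alpha}$ to $\Upsilon_\st(\bLambda_0)$ in Lemma~\ref{lemma:scalings}. It comes from the $\frac{\Upsilon_\ss}{1-\Upsilon_\ss}\mu_{\ss,2}^2(\bSigma+\mu_{\ss,2})^{-2}$ part of $\bLambda_0$. The condition $z_\ss=\gamma_{n_\ss}$ is exactly the regime $\Upsilon_\ss=\Theta(1)$. There the near-interpolating student passes the teacher's fitted noise, in the directions it cannot resolve, through with $\Theta(1)$ gain.

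\emph{What is actually true.} For $z_\st>z_\ss$ one always has $\gamma_{\ss,var}\ge 1-z_\st$, so $\sB_{var,\ss}=O(\sV_\st)$. Moreover $\sB_{var,\ss}=o(\sV_\st)$ holds if and only if $z_\ss<\gamma_{n_\ss}$. So your proof cannot be completed as stated: the corollary's second assertion needs the extra hypothesis $z_\ss<\gamma_{n_\ss}$.

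The paper's one-line justification overlooks this. It does, however, impose $\gamma_{n_\ss}>z_\ss$ in Corollary~\ref{coro:w2s-cond-var} and invokes ``$z_\ss<\gamma_{n_\ss}$'' explicitly in that corollary's proof. The conclusion the paper draws from this corollary is that improvement requires $z_\st>z_\ss$. That conclusion uses only the first half, which your argument fully establishes.
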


 Corollary \ref{coro:nece-cond} implies that the improvement in scaling law is possible only if $z_\st > z_\ss$. In fact, if $z_\st \le z_\ss$, then the bias/variance of the teacher induces error terms at least of the same order in the student (with $\sB_{bias, \ss}$ being potentially larger than $\sB_\st$).  
 Below, we discuss two mechanisms leading to weak-to-strong generalization: \emph{variance reduction} and \emph{bias reduction}. {  At a high level, $z_\st > z_\ss$
 suggests that the teacher is “richer” than the student. }

\paragraph*{Variance reduction.} When the teacher is variance-dominated, with $\sV_\st \gg \sB_\st$, a student can \textit{always} improve the scaling law of the teacher by reducing the variance term.
\begin{corollary}
    \label{coro:w2s-cond-var}
Assume $\tau_\st^2=\Theta(1)$. Take $\alpha>1$, $r>0$, and teacher scalings $(\gamma_{p_\st},\gamma_{\lambda_\st})$ such that $\sV_\st \gg \sB_\st$, i.e., \begin{equation}\label{eq:condcorvar}
    \begin{split}
        z_\st > \frac{1}{1 + 2 \alpha (r \wedge 1)} \vee \frac{1- \gamma_{p_\st}}{2 \alpha (r \wedge \frac{1}{2})},\;\;  \gamma_{p_\st} > \frac{1}{1 + 2 \alpha (r \wedge \frac{1}{2})}.
    \end{split}
\end{equation}
Then, the student achieves a better scaling law than its teacher (namely, $  \gamma_{\ss,bias}  \wedge \gamma_{\ss,var}>\gamma_{\st,\sB} \wedge \gamma_{\st,\sV}$) if and only if \begin{equation*}
        \begin{split}
            2 \alpha (r \hspace{-.2em}\wedge \hspace{-.2em} 1/2) z_\ss \hspace{-.2em}-\hspace{-.2em} z_\ss \hspace{-.2em}+\hspace{-.2em} \gamma_{p_\ss} \hspace{-.2em}>\hspace{-.2em} 1 \hspace{-.2em}-\hspace{-.2em} z_\st, \,\, { \gamma_{n_\ss}}\hspace{-.2em} \wedge \hspace{-.2em}z_\st\hspace{-.2em} > \hspace{-.2em} z_\ss \hspace{-.2em}> \hspace{-.2em}\frac{1 \hspace{-.2em}-\hspace{-.2em} z_\st }{2 \alpha (r \hspace{-.2em}\wedge\hspace{-.2em} 1)}. 
        \end{split}
    \end{equation*}
\end{corollary}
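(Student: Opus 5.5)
The argument is a direct exponent comparison using Theorem~\ref{thm:scaling-teacher} and Theorem~\ref{thm:scaling-st}.

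\textbf{Step 1: teacher exponent.} I first rewrite condition \eqref{eq:condcorvar} as $\gamma_{\st,\sV} < \gamma_{\st,\sB}$, so that the teacher exponent is $\gamma_{\st,\sV}=1-z_\st$.
\begin{itemize}
\item Since $z_\st > 1/(1+2\alpha(r\wedge 1))$, we get $1-z_\st < 2\alpha(r\wedge1)z_\st$.
\item The second bias term is $\gamma_{p_\st}+(2\alpha(r\wedge\frac12)-1)z_\st$. It exceeds $1-z_\st$ exactly when $2\alpha(r\wedge\frac12)z_\st > 1-\gamma_{p_\st}$, which is the second part of the first condition.
\item The constraint $\gamma_{p_\st} > 1/(1+2\alpha(r\wedge\frac12))$ makes the two lower bounds on $z_\st$ compatible with $z_\st\le \gamma_{p_\st}$ (I expect it is only needed for nonemptiness).
\end{itemize}
With $\tau_\st^2=\Theta(1)$, W2SG therefore means $\gamma_{\ss,bias}\wedge\gamma_{\ss,var} > 1-z_\st$.

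\textbf{Step 2: reduce to $z_\st > z_\ss$.} If $z_\st\le z_\ss$, Theorem~\ref{thm:scaling-st} gives $\gamma_{\ss,var}=1-z_\st$, so no strict improvement is possible (this is Corollary~\ref{coro:nece-cond}). Hence $z_\ss<z_\st$ is necessary, and from now on I work in the second case of Theorem~\ref{thm:scaling-st}. There I need each of the seven terms to be $>1-z_\st$.

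\textbf{Step 3: the variance terms.}
\begin{itemize}
\item $1-z_\ss>1-z_\st$ holds automatically.
\item $1+\gamma_{n_\ss}-z_\ss-z_\st > 1-z_\st$ is equivalent to $\gamma_{n_\ss}>z_\ss$.
\end{itemize}

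\textbf{Step 4: the bias terms.} I go through the five terms in order.
\begin{itemize}
\item \emph{First term.} $2\alpha(r\wedge1)z_\ss > 1-z_\st$ gives the lower bound $z_\ss > (1-z_\st)/(2\alpha(r\wedge1))$.
\item \emph{Second term.} This is exactly the stated condition $2\alpha(r\wedge\frac12)z_\ss - z_\ss+\gamma_{p_\ss} > 1-z_\st$.
\item \emph{Fourth term.} It equals $[\gamma_{p_\st}+(2\alpha(r\wedge\frac12)-1)z_\st] + (1-z_\ss)$. The bracket is $>1-z_\st$ by Step 1 and $1-z_\ss>0$ (using $z_\ss<z_\st\le 1$), so the term is automatically $>1-z_\st$.
\item \emph{Fifth term.} It equals $[\gamma_{p_\st}+(2\alpha(r\wedge\frac12)-1)z_\st] + (\gamma_{n_\ss}-z_\ss)$. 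The bracket is $>1-z_\st$ and $\gamma_{n_\ss}-z_\ss>0$ by Step 3, so this term is also automatic.
\item \emph{Third term.} It equals $[\gamma_{p_\st}+(2\alpha(r\wedge\frac12)-1)z_\st] + (\alpha+1)(z_\st-z_\ss)$. Since $z_\st>z_\ss$, the extra piece is positive, and again the term exceeds $1-z_\st$.
\end{itemize}

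\textbf{Step 5: combine.} Collecting Steps 2–4, the redundant terms drop out and the remaining constraints are exactly the two displayed inequalities. That gives both directions of the equivalence.

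\textbf{Expected obstacle.} The main care point is the careful algebraic regrouping of the third through fifth bias terms, so that each is visibly the teacher's bias exponent plus a nonnegative quantity. Stability ($z\ge 0$, $z_\st\le1$) must be invoked where needed. Beyond that, the proof is bookkeeping.
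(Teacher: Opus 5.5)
Your proposal is correct and follows essentially the same route as the paper's proof. Both reduce to $z_\ss<z_\st$ via Corollary~\ref{coro:nece-cond}, compare each of the seven exponents in the second case of Theorem~\ref{thm:scaling-st} with $1-z_\st$, and note that the third through fifth bias exponents equal the teacher's second bias exponent $\gamma_{p_\st}+(2\alpha(r\wedge\frac12)-1)z_\st$ plus the nonnegative quantities $(\alpha+1)(z_\st-z_\ss)$, $1-z_\ss$ and $\gamma_{n_\ss}-z_\ss$, so they are automatically dominated; your observation that the condition on $\gamma_{p_\st}$ only ensures nonemptiness given $z_\st\le\gamma_{p_\st}$ also matches the paper.
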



Corollary \ref{coro:w2s-cond-var} (proved in Appendix \ref{sec:pf-coro:w2s-cond-var}) characterizes the W2SG region when the teacher is variance-dominated, in terms of $(z_\st,z_\ss, \gamma_{p_\st},\gamma_{p_\ss}, \gamma_{n_\ss})$.  
In particular, by taking a student such that
\[
z_\ss=\frac{1}{1+2\alpha(r\wedge1)}, \hspace{2mm}\gamma_{p_\ss} \geq 1 - \frac{2 \alpha (r \wedge \frac{1}{2})}{ 1+ 2 \alpha (r \wedge 1)},\hspace{2mm}   \gamma_{n_\ss} \geq z_\st,
\]
we have that by Theorem  \ref{thm:scaling-st}, the student achieves the optimal rate $\gamma_{\ss,*},$ for any teacher model satisfying \eqref{eq:W2S2} and \eqref{eq:condcorvar}. A notable consequence is that when $0<r<1$, \emph{the student can achieve the minimax exponent under source--capacity conditions even if the teacher does not}  (e.g., when due to mis-regularization, $z_\st$ is too large and the teacher is variance-limited).
In fact, the student can achieve the minimax rate even in regimes where the teacher variance exponent satisfies $\gamma_{\st,\sV}=0$ (i.e.\ the teacher risk does not decay with $n_\st$).
This illustrates that W2SG improvements can be substantial in RFRR.

\paragraph{Bias reduction.} Even if the teacher is not variance-dominated, i.e., $\sV_\st \leq \sB_\st$, it is still possible to improve the scaling law, and the corollary below (proved in Appendix \ref{sec:pf-coro:w2s-cond-bias}) characterizes the W2SG region in this setting.

\begin{corollary}
\label{coro:w2s-cond-bias}
Take $\alpha>1$, $r>0$, teacher scalings $(\gamma_{p_\st},\gamma_{\lambda_\st})$ and $\tau_t$ such that $\sV_\st \leq \sB_\st$. Then, the student achieves a better scaling law than its teacher (namely, $  \gamma_{\ss,bias}  \wedge \gamma_{\ss,var}>\gamma_{\st,\sB} \wedge \gamma_{\st,\sV}$) if and only if 
{
    \begin{equation}\label{eq:condcoro3}
        \begin{split}
        &z_\ss>\max\left\{
        \frac{(\alpha-1)z_\st+\gamma_{p_\st}}
        {2\alpha R},
        \frac{(\alpha-1)z_\st+\gamma_{p_\st}-\gamma_{p_\ss}}
        {\alpha-1}
        \right\},
        \\
        &
z_\ss \hspace{-.2em}< \hspace{-.2em}       \min\left\{
        z_\st,
        1\hspace{-.2em}-\hspace{-.2em}(\alpha\hspace{-.2em}-\hspace{-.2em}1)z_\st\hspace{-.2em}-\hspace{-.2em}\gamma_{p_\st},
        1\hspace{-.2em}+\hspace{-.2em}\gamma_{n_\ss}\hspace{-.2em}-\hspace{-.2em}\alpha z_\st-\gamma_{p_\st},
        \gamma_{n_\ss}
        \right\}, \\
        & \frac{\gamma_{p_\st}}{1+2\alpha R-\alpha} < z_\st
    <
    \frac{1}{\alpha-1}
    \left(
    \frac{2\alpha R}{1+2\alpha R}
    -
    \gamma_{p_\st}
    \right), \\
        & \gamma_{p_\st}
    <
    1-
    \frac{\alpha}{1+2\alpha R}, \\
    &\gamma_{p_\ss}
>
\max\left\{
\gamma_{p_\st},
\,
\alpha\big((\alpha-1)z_\st+\gamma_{p_\st}\big)-(\alpha-1)
\right\},
\\
&\gamma_{n_\ss} \hspace{-.2em}>\hspace{-.2em}
\max\left\{\frac{(\alpha\hspace{-.2em}-\hspace{-.2em}1)z_\st\hspace{-.2em}+\hspace{-.2em}\gamma_{p_\st}}{2\alpha R},\alpha z_\st\hspace{-.2em}+\hspace{-.2em}\gamma_{p_\st}\hspace{-.2em}-\hspace{-.2em}1\hspace{-.2em}
+\hspace{-.2em}
\frac{(\alpha\hspace{-.2em}-\hspace{-.2em}1)z_\st\hspace{-.2em}+\hspace{-.2em}\gamma_{p_\st}}{2\alpha R}
\right\},
\\
&\gamma_{p_\ss}+(\alpha-1)\gamma_{n_\ss}>
\\
&
\max\left\{
(\alpha-1)z_\st+\gamma_{p_\st},
\,
(\alpha^2-1)z_\st+\alpha\gamma_{p_\st}-(\alpha-1)
\right\},
    \end{split}
\end{equation} where $R = (r \wedge 1).$

}
\end{corollary}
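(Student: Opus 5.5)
This corollary is an exercise in comparing exponents, so I will work entirely with the formulas of Theorems~\ref{thm:scaling-teacher} and \ref{thm:scaling-st}. Write $R = r\wedge 1$ and $R' = r\wedge \tfrac12$. First I will make the teacher side explicit. The assumption $\sV_\st \le \sB_\st$ means $\gamma_{\st,\sB} \le \gamma_{\st,\sV} = 1 - z_\st$. The teacher exponent is then $\gamma_{\st,\sB} = [2\alpha R z_\st]\wedge[\gamma_{p_\st} + (2\alpha R' - 1) z_\st]$, and the task is to find when $\gamma_{\ss,bias}\wedge\gamma_{\ss,var} > \gamma_{\st,\sB}$.

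By Corollary~\ref{coro:nece-cond}, if $z_\st\le z_\ss$ then $\sB_{bias,\ss} = \Omega(\sB_\st)$, so there is no improvement. Hence $z_\ss < z_\st$ is necessary, and this is the first constraint in the upper bound on $z_\ss$. From then on only the second branch of Theorem~\ref{thm:scaling-st} is relevant. There, $\gamma_{\ss,bias}$ is the minimum of five affine expressions and $\gamma_{\ss,var} = [1 - z_\ss]\wedge[1+\gamma_{n_\ss} - z_\ss - z_\st]$. Improvement holds if and only if each of these seven affine functions strictly exceeds $\gamma_{\st,\sB}$.

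Next I will identify which branch of $\gamma_{\st,\sB}$ is active in the region where improvement is possible. I expect the answer to be the random-features-limited term $\gamma_{p_\st} + (2\alpha R' - 1)z_\st$. To see this, note that $2\alpha R z_\ss < 2\alpha R z_\st$ because $z_\ss<z_\st$. So if the teacher bias were $2\alpha R z_\st$ (the ``$\lambda/n$-limited'' branch), the student term $2\alpha R z_\ss$ could not beat it. Therefore improvement forces $\gamma_{\st,\sB} = \gamma_{p_\st}+(2\alpha R'-1)z_\st < 2\alpha R z_\st$, with strict inequality since $z_\ss<z_\st$ is strict. This is where $\gamma_{p_\st}/(1+2\alpha R-\alpha) < z_\st$ should come from. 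That step requires reconciling $R$ and $R'$; I will handle the cases $r\le \tfrac12$ and $r>\tfrac12$ separately and check that they combine into the stated form. Combining this with $\sV_\st\le\sB_\st$, i.e. $\gamma_{p_\st}+(2\alpha R'-1)z_\st \le 1-z_\st$, and with the requirement that the interval for $z_\ss$ be nonempty, should give the upper bound on $z_\st$ and the bound $\gamma_{p_\st} < 1-\alpha/(1+2\alpha R)$.

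With $\gamma_{\st,\sB}$ fixed, I will write out the seven inequalities one at a time. From $2\alpha R z_\ss > \gamma_{\st,\sB}$ I get the lower bound on $z_\ss$ with denominator $2\alpha R$. The term $2\alpha R' z_\ss - z_\ss + \gamma_{p_\ss}$ gives the bound with denominator $\alpha-1$; I will need to check how $R'$ versus $\alpha$ arises there. The two $\gamma_{p_\st}$-containing terms cancel $\gamma_{p_\st}$ and $2\alpha R' z_\st$ against the teacher exponent, leaving $z_\ss<1-(\alpha-1)z_\st-\gamma_{p_\st}$-type constraints, and similarly for the $\gamma_{n_\ss}$ term. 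The variance terms give the remaining upper bounds, including $z_\ss<\gamma_{n_\ss}$, which also follows from the definition of $z_\ss$.

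Finally, I will eliminate $z_\ss$. The admissible interval is nonempty exactly when every lower bound is below every upper bound, and pairing them off should produce the stated conditions on $\gamma_{p_\ss}$, $\gamma_{n_\ss}$, and $\gamma_{p_\ss}+(\alpha-1)\gamma_{n_\ss}$. I must also use the definitional constraint $z_\ss\le\gamma_{p_\ss}\wedge\gamma_{n_\ss}$; the bound $\gamma_{p_\ss}>\gamma_{p_\st}$ presumably comes from this. The main obstacle is this bookkeeping. I need to confirm that the listed system of inequalities is both necessary and sufficient, with no redundant or missing pairings, and the case split in $r$ (through $R$ versus $R'$) is where I expect errors to creep in. To control this, I will verify each derived constraint with the active branch substituted explicitly, and test the boundary cases numerically.
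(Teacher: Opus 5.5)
Your outline is the paper's proof. Corollary~\ref{coro:nece-cond} forces $z_\ss<z_\st$. The student term $2\alpha R z_\ss$ rules out the teacher branch $\gamma_{\st,\sB}=2\alpha R z_\st$. Each of the seven affine exponents of the $z_\st>z_\ss$ branch must then exceed $\gamma_{\st,\sB}$, and $z_\ss$ is eliminated by pairing lower with upper bounds. The case split in $r$, which you leave open, is closed by one observation you do not state. By definition $z_\st\le\gamma_{p_\st}$, so for $r\le\frac12$ (where $R=R'=r$) the active-branch requirement $\gamma_{p_\st}<(1+2\alpha R-2\alpha R')z_\st$ reads $\gamma_{p_\st}<z_\st$, which is impossible. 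Hence W2SG forces $r>\frac12$, nothing has to be ``combined'', and you simply set $R'=\frac12$ (so $2\alpha R'-1=\alpha-1$) throughout. For the converse when $r\le\frac12$, check that the stated system is also infeasible. If $1+2\alpha r-\alpha>0$, then $\gamma_{p_\st}/(1+2\alpha r-\alpha)\ge\gamma_{p_\st}\ge z_\st$. Otherwise $1-\alpha/(1+2\alpha r)\le 0$, so the condition $\gamma_{p_\st}<1-\alpha/(1+2\alpha R)$ fails.

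Several of the attributions you commit to are also wrong. Set $T:=\gamma_{p_\st}+(\alpha-1)z_\st$. Three student-bias exponents contain $\gamma_{p_\st}$, not two, and subtracting $T$ leaves $(\alpha+1)(z_\st-z_\ss)$, $1-z_\ss$ and $\gamma_{n_\ss}-z_\ss$. The first two are automatically positive, and the third gives $z_\ss<\gamma_{n_\ss}$. Since $\gamma_{p_\st}$ cancels, these terms cannot produce $z_\ss<1-(\alpha-1)z_\st-\gamma_{p_\st}$. That bound and $z_\ss<1+\gamma_{n_\ss}-\alpha z_\st-\gamma_{p_\st}$ come from the two variance exponents.

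The strict bound $z_\ss<\gamma_{n_\ss}$ is a genuine condition; the definition of $z_\ss$ only gives $\le$. Likewise $\gamma_{p_\ss}>\gamma_{p_\st}$ does not come from $z_\ss\le\gamma_{p_\ss}$. It is the pairing of the lower bound $((\alpha-1)z_\st+\gamma_{p_\st}-\gamma_{p_\ss})/(\alpha-1)$ with the upper bound $z_\st$. The definitional constraints add nothing; for instance $\alpha\gamma_{p_\ss}>\alpha\gamma_{p_\st}\ge(\alpha-1)z_\st+\gamma_{p_\st}$.

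The eight pairings of the two lower bounds with the four upper bounds give exactly the two bounds on $z_\st$ and the six conditions on $\gamma_{p_\ss},\gamma_{n_\ss}$. The bound on $\gamma_{p_\st}$ is just nonemptiness of the $z_\st$-interval. Your route to the upper bound on $z_\st$, via the active-branch condition plus $\sV_\st\le\sB_\st$ (i.e.\ $\gamma_{p_\st}+\alpha z_\st\le 1$), is valid when $\tau_\st^2=\Theta(1)$. The paper instead gets it from pairing $((\alpha-1)z_\st+\gamma_{p_\st})/(2\alpha R)$ with $1-(\alpha-1)z_\st-\gamma_{p_\st}$, without using that hypothesis.
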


We note that  W2SG can only happen if the student width is larger than the teacher width $\gamma_{p_\ss} > \gamma_{p_\st}$ (see \eqref{eq:condcoro3}) and $r >1/2$ (see { the discussions below \eqref{eq:gamma-p-target-small}}). Notably, in this regime, the scaling law can be improved even when $\tau_\st^2 = 0,$ i.e., the teacher is trained with clean labels. 


\paragraph*{Discussion.} Corollaries \ref{coro:w2s-cond-var}-\ref{coro:w2s-cond-bias} above identify a set of conditions on the number of training samples, model size and regularization allowing the student to improve upon the scaling law of the teacher. To interpret such conditions, we focus on two special cases: \emph{(i)} the \emph{kernel limit}, in which both teacher and student have a very large number of features, i.e., $\gamma_{p_\st}=\gamma_{p_\ss}=\infty$, and \emph{(ii)} the \emph{approximation limit}, in which the student has a very large number of samples labeled by the teacher, i.e., $\gamma_{n_\ss}=\infty$. 

In the \emph{kernel limit}, W2SG improvements come only from variance reduction, since the upper bound on $\gamma_{p_{\st}}$ required by Corollary \ref{coro:w2s-cond-bias} does not hold. Furthermore, the student improves the scaling law only of teachers that are not already optimal, due to a sub-optimal choice of the regularization. Formally, this corresponds to $z_\st$ larger than its optimal value $\frac{1}{1+2\alpha(r\wedge 1)}$ (Theorem \ref{thm:scaling-teacher} shows that the optimal teacher exponent is reached when $z_\st=\frac{1}{1+2\alpha(r\wedge 1)}$, and \eqref{eq:condcorvar} requires $z_\st>\frac{1}{1+2\alpha(r\wedge 1)}$). Now, for any sub-optimal teacher, W2SG holds upon choosing properly sample size and regularization of the student:

\begin{itemize}
    \item[(a)] If the student has enough samples ($\gamma_{n_\ss}>z_\st$, implied when the student has more samples than the teacher), the scaling law improvement occurs for a range of regularizations ($\gamma_{\lambda_{\ss}}\in (\frac{1-z_\st}{2(r\wedge 1)}-\gamma_{n_\ss}, \alpha z_\st-\gamma_{n_\ss})$).

    \item[(b)] For small regularizations ($\gamma_{\lambda_\ss}\ge (\alpha-1)\gamma_{n_\ss}$), the scaling law improvement occurs for a range of sample sizes ($\gamma_{n_\ss}\in (\frac{1-z_\st}{2\alpha(r\wedge 1)}, z_\st)$); the improvement also occurs for large regularizations ($\gamma_{\lambda_\ss}< (\alpha-1)\gamma_{n_\ss}$), albeit in a different range of sample sizes ($\gamma_{n_\ss}\in (\frac{1-z_\st}{2(r\wedge 1)}-\gamma_{\lambda_\ss}, \alpha z_\st-\gamma_{\lambda_\ss})$).

\end{itemize}
    
In the \emph{approximation limit}, W2SG improvements { also come only from variance reduction. In fact, $\gamma_{n_\ss}=\infty$ implies $z_\ss=\gamma_{p_\ss}$ and, hence, the second and fifth line in \eqref{eq:condcoro3} imply that $\gamma_{p_\st} < z_\st$, which is impossible. 
} 
Given that the number of student samples is large, its regularization becomes irrelevant (as $z_\ss=\gamma_{p_\ss}$, $\gamma_{\lambda_\ss}$ does not appear in the conditions of Corollary \ref{coro:w2s-cond-var}). Thus, the only relevant student quantity is its number of features, which has to be chosen carefully in order to improve the scaling law. 
As mentioned earlier, 
when W2SG occurs due to a variance reduction, the student width has to be smaller than the teacher width.

\begin{figure}[p]
  \centering

  \begin{subfigure}[t]{0.43\textwidth}
    \centering
    \includegraphics[width=\textwidth]{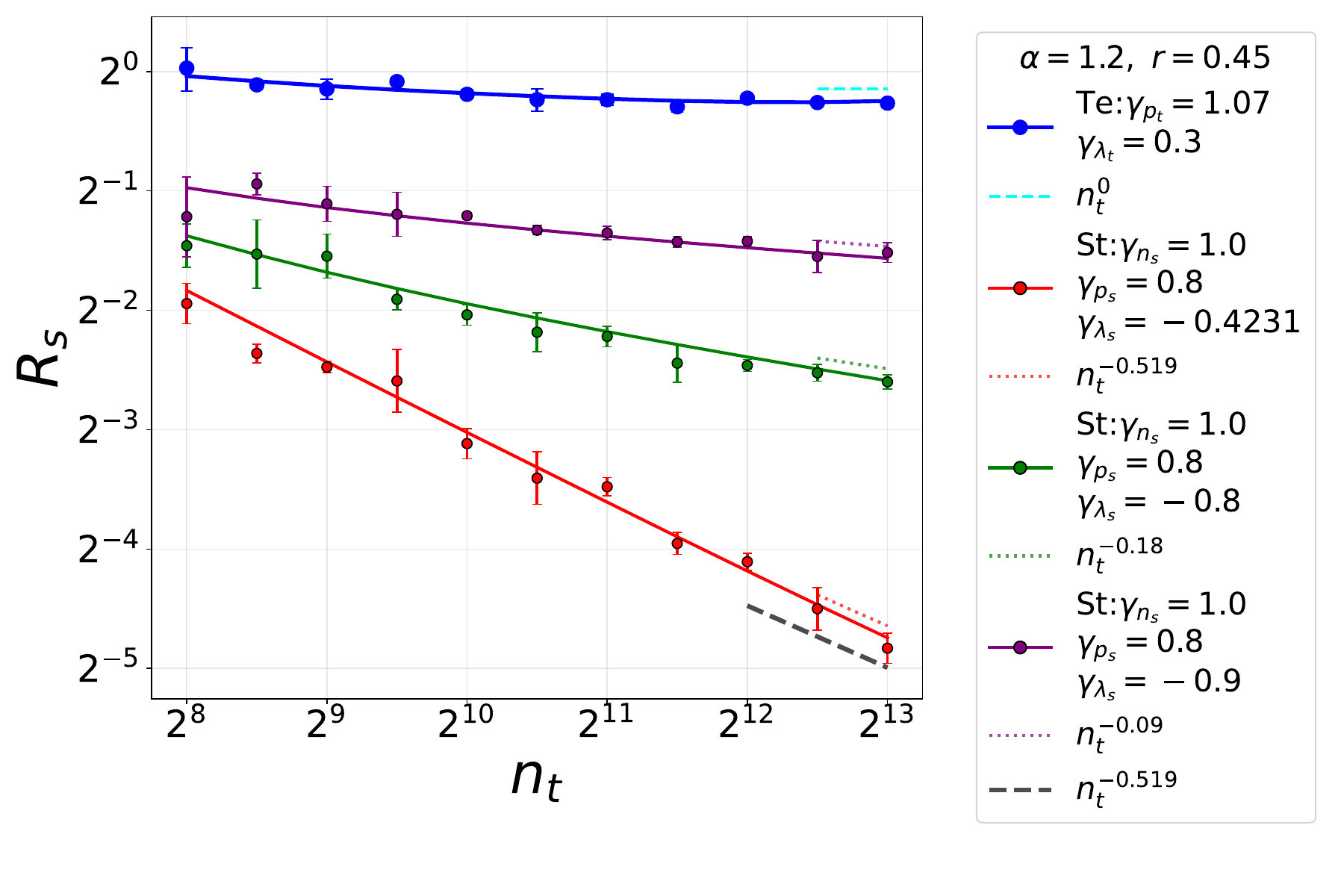}
  \vspace{-1em}
    \caption{Variance dominated: W2SG.}
    \label{fig:minimax}
  \vspace{2em}
  \end{subfigure}
  \begin{subfigure}[t]{0.43\textwidth}
    \centering
    \includegraphics[width=\textwidth]{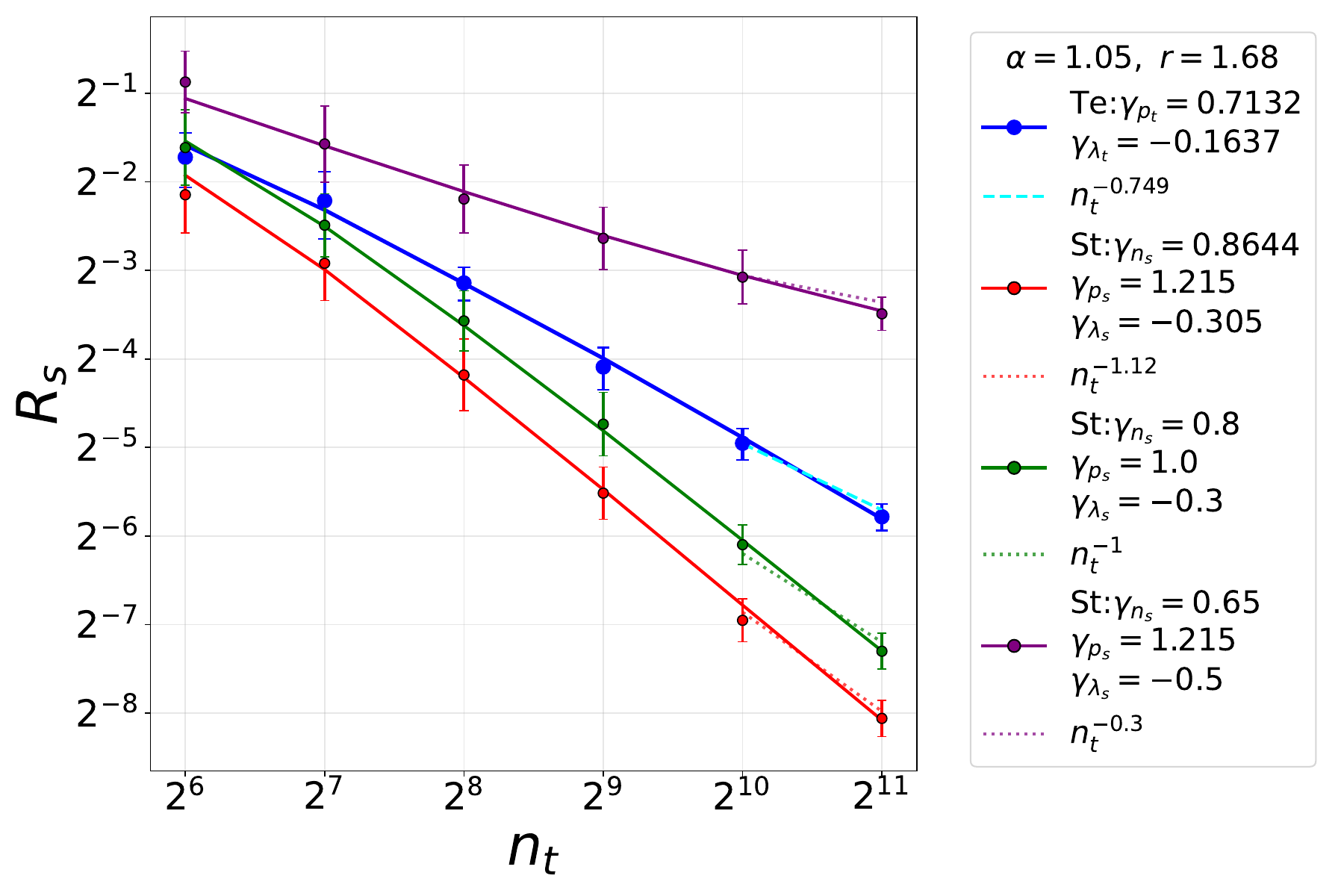}
  \vspace{-1em}
    \caption{Bias dominated: W2SG.}
    \label{fig:bias_slightly_better}
  \vspace{2em}
  \end{subfigure}
  \begin{subfigure}[t]{0.43\textwidth}
    \centering
    \includegraphics[width=\textwidth]{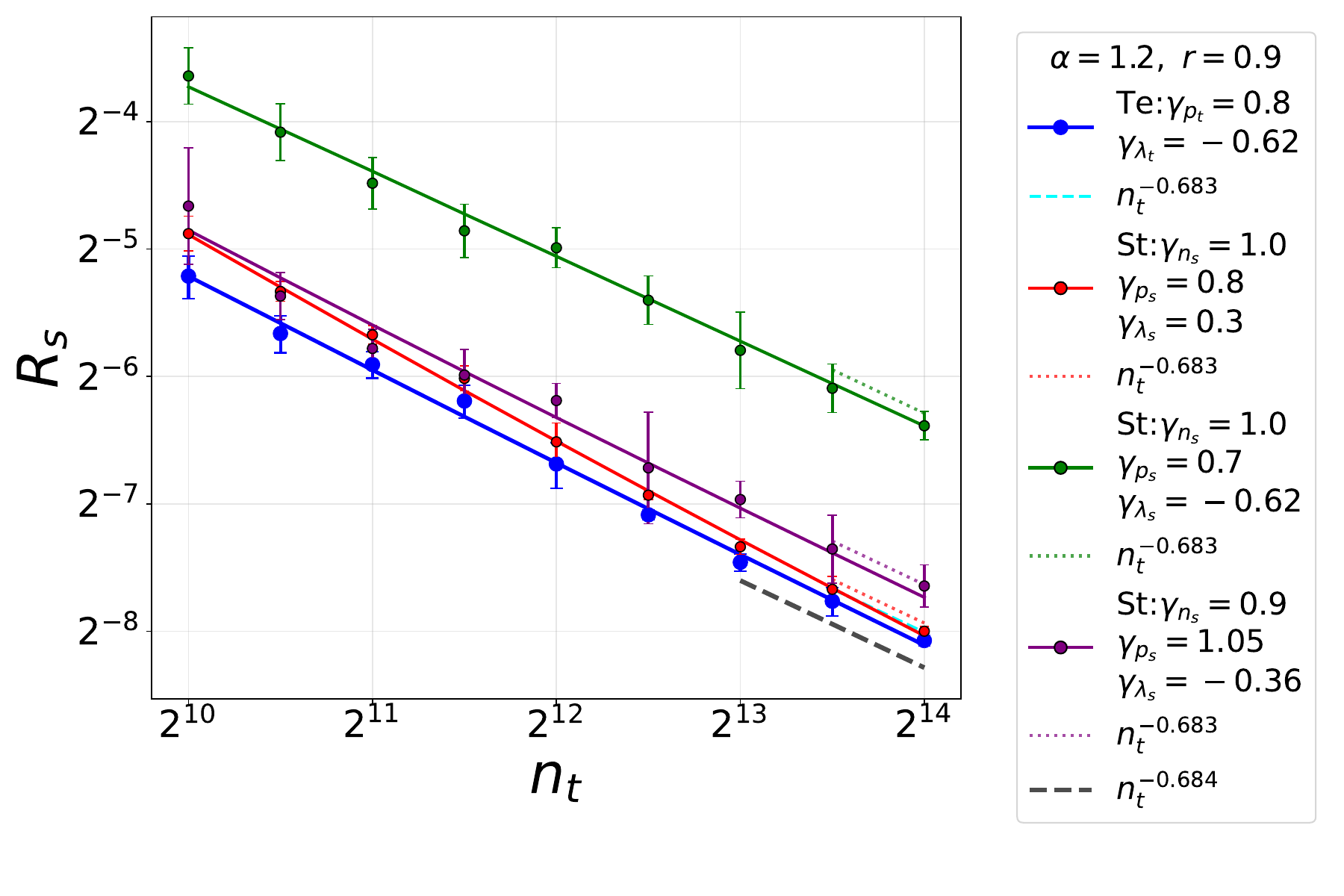}
  \vspace{-1em}
    \caption{Optimally tuned teacher: no W2SG. }
    \label{fig:no-w2s}
  \end{subfigure}

  \caption{Excess test errors of various students and teachers, as a function of the number of ground-truth samples $n_\st$. We consider random feature ridge regression with weak-to-strong training, under the source and capacity conditions in \eqref{eq:source-capa} and with the hyperparameter scaling in  \eqref{eq:relative-scale}. Each plot corresponds to a different choice of $(\alpha,r,\gamma_{p_\st},\gamma_{\lambda_\st})$. Points are empirical experiments, solid lines are deterministic equivalent predictions (Theorems \ref{thm:teacher-error} and \ref{thm:additive_err}), dotted lines are  theoretical decay rates (Theorems \ref{thm:scaling-teacher} and \ref{thm:scaling-st}), and the grey dashed line is the minimax rate. The teacher test error is in blue, while the other colors correspond to student test errors for different choices of $(\gamma_{n_\ss},\gamma_{p_\ss}, \gamma_{\lambda_\ss})$.}
  \label{fig:gaussian-linear}
\end{figure}

\paragraph*{Numerical results.} We illustrate our theory via the simulations of Figure \ref{fig:gaussian-linear}. We pick a Gaussian linear model, i.e., $\bg \sim \cN(0,\bI), \boldf \sim \cN(0, \bSigma),$ with $d= 20000.$  For different choices of   $(\alpha,r,\gamma_{p_\st},\gamma_{\lambda_\st})$, we compare the test error of the teacher (in blue), the test error of various students (in other colors), and the minimax rate (the grey dashed line with slope $-\frac{2 \alpha r}{1 + 2 \alpha r}$). For each teacher and student, we plot points for the empirical test error (Equations \eqref{eq:teacher-test-error-setting} and \eqref{eq:student-test-error-setting}), solid lines for deterministic equivalent predictions (Theorems \ref{thm:teacher-error} and \ref{thm:additive_err}), and dotted lines for theoretical decay rates (Theorems \ref{thm:scaling-teacher} and \ref{thm:scaling-st}). We run $25$ independent experiments, reporting the average and the confidence interval at $1$ standard deviation. We note that, in all the cases considered, the theoretical predictions are in excellent agreement with the empirical test errors. Each of the three subplots corresponds to a different scenario:
\begin{itemize}
    \item[(a)] \emph{Variance-dominated regime:} The teacher test error does not decay with $n_\st$ (zero decay rate), while a properly tuned student (in red) achieves the minimax rate.

    \item[(b)] \emph{Bias-dominated regime ($\tau_\st^2 = 0$):} Two of the three students reduce the bias, thus improving the scaling law.

    \item[(c)] \emph{Optimally regularized teacher:} The teacher achieves the minimax rate, and no student can improve the scaling law.
\end{itemize}









\section{Conclusion}

In this paper, we show that, in random feature ridge regression, a student trained on labels generated via a teacher can attain a scaling law which is more favorable than that of the teacher itself. This demonstrates the benefit of proper regularization and over-parameterization in the context of weak-to-strong generalization, since the same claim does not hold for ridgeless linear regression \cite{emrullah2025high}. We obtain this result through the derivation of a novel deterministic equivalent for the excess test error of the student after a two-stage learning procedure. In contrast with deterministic equivalents obtained in earlier work \cite{defilippis2024dimension,misiakiewicz2024non}, our derivation is made challenging by the necessity to handle two sources of randomness (from both the student and the teacher), and we expect the analysis carried out in this paper to be applicable to various problems in high-dimensional statistics involving multiple data sources, including distribution shift \cite{patil2024optimal,mallinar2024minimumnorm} and transfer learning \cite{yang2020precise, song2024generalization}. {  We conclude by noting that our framework is better suited to handle ridge regression, and it does not directly capture the effect of early stopping in W2SG, despite the high-level connection between early stopping and regularization \citep{raskutti2014early}. We consider this an interesting future direction. }


\section*{Impact Statement}


This paper presents work whose goal is to advance the field of Machine
Learning. There are many potential societal consequences of our work, none of
which we feel must be specifically highlighted here.

\section*{Acknowledgements}

DW and MM were funded in part by the Austrian Science Fund (FWF) 10.55776/COE12. For the purpose of open access, the authors have applied a CC BY public copyright license to any Author Accepted Manuscript version arising from this submission. MM was partially funded by the European Union (ERC, INF2, project number 101161364). Views and opinions expressed are however those of the author(s) only and do not necessarily reflect those of the European Union or the European Research Council Executive Agency. Neither the European Union nor the granting authority can be held responsible for them. { The authors thank Yalda Shabanzadeh for pointing out a mistake in an earlier version of Appendix \ref{apdx:trace}.}

\bibliography{icml26/icml26}
\bibliographystyle{icml2026}

\newpage
\appendix
\onecolumn

\section*{Notation}

Recall that we denote $\bF_\st \in \R^{p_\st \times d}$ the teacher random features with i.i.d.~rows with covariance $\bSigma$, $\bG_{\st} \in \R^{n_\st \times d}$ the teacher input data with i.i.d.~rows with covariance $\bI_d$, $\bbeta_*$ the target function coefficients, such that $y_i = \< \bbeta_*,\bg_{\st,i}\> + \eps_i$, and $\hat \ba_\st$ the solution of RFRR. We denote $\by_\st\in \R^{n_\st}$ the vector of labels from the true target, and $\beps\in \R^{n_\st}$ the vector of noise, such that 
\[
\by_\st = \bG_\st \bbeta_* + \beps.
\]
We further define
\[
 \bbeta_\st = \frac{1}{\sqrt{p_\st}} \bF_\st^\sT \hat \ba_\st,\qquad \bZ_\st =\frac{1}{\sqrt{p_\st}} \bG_\st \bF_\st^\sT.
\]
For the student model, we define the corresponding $\bF_\ss, \bG_\ss, \bZ_\ss, \hat \ba_\ss$ in the same way.

Recall that the student has solution
\[
\hat \ba_\ss = ( \bZ^\sT_\ss \bZ_\ss + \lambda_\ss)^{-1} \bZ_\ss^\sT \by_\ss,
\]
where $\by_\ss$ is the data coming from the teacher given by
\[
\by_\ss =  \bG_\ss  \bbeta_\st. 
\]
We denote 
\[
\bXi_\ss := ( \bZ_\ss^\sT \bZ_\ss + \lambda_\ss)^{-1}, \qquad \hbSigma_\ss := \frac{\bF_\ss \bF_\ss^\sT}{p_{\ss}}.
\]

The test error of the student model can be characterized as
\[
\begin{aligned}
\cR (f_* ; \bG_\ss, \bF_\ss , \lambda , \bbeta_\st) =&~ \E_\bg \left[ \left( \bg^\sT \bbeta_* - \frac{1}{\sqrt{p_{\ss}}} \bg^\sT \bF_\ss^\sT \hat \ba_\ss \right)^2 \right] 
=: \cB_\ss (\bbeta_*,\bbeta_\st ; \bG_\ss,\bF_\ss,\lambda_\ss) ,
\end{aligned}
\]
where we have 
\[
\begin{aligned}
     \cB_\ss (\bbeta_*,\bbeta_\st ; \bG_\ss,\bF_\ss,\lambda_\ss) :=&~ \| \bbeta_* - p_\ss^{-1/2} \bF_\ss^\sT \bXi_\ss \bZ_\ss^\sT\bG_\ss \bbeta_\st \|_2^2, 
\end{aligned}
\]
We also define the following error scaling term: \begin{equation*}
    \begin{split}
        &\cE_{n_\ss, p_\ss} :=\frac{(\Tilde \rho_\ss \log n_\ss)^{C_1}}{\sqrt{n_\ss}} + \frac{(\Tilde \rho_\ss \rho_\ss \log p_\ss)^{C_1}}{\sqrt{p_\ss}}    , \qquad \cE_{n_\st, p_\st} := \frac{(\Tilde \rho_\st \log n_\st)^{C_1}}{\sqrt{n_\st}} + \frac{(\Tilde \rho_\st \rho_\st \log p_\st)^{C_1}}{\sqrt{p_\st}};\\
        &\cE_{n_\ss, p_\ss} := \cE_{n_\ss, p_\ss} + \cE_{n_\st, p_\st}
    \end{split}
\end{equation*}

\section{Proof of Theorem \ref{thm:additive_err}: Deterministic equivalent for the student}
\label{app:proof-det-equiv-student}

In order to prove the deterministic equivalent of the student's test error, we first compute the deterministic equivalent of the test error conditioned on the teacher's weights $\bbeta_\st,$ and then compute the deterministic equivalent w.r.t $\bbeta_\st.$

In particular, we prove that, with probability at least  $1 - n^{-D}$, \begin{equation*}
    \begin{split}
        &| \cB_\ss (\bbeta_*,\bbeta_\st ; \bG_\ss,\bF_\ss,\lambda_\ss) - \cB_\ss(\bbeta_*, \bbeta_\st)| \leq C_* \cE_{n_\ss, p_\ss} \tilde \cB_\ss(\bbeta_*, \bbeta_\st) \\
        & | \cB_\ss(\bbeta_*, \bbeta_\st) - \sB_\ss(\bbeta_*)| \leq C_* \cE_{n_\st, p_\st} \tilde \sB_\ss(\bbeta_*) 
    \end{split}
\end{equation*} and then by triangle inequality, we have \begin{equation*}
    \begin{split}
        | \cB_\ss (\bbeta_*,\bbeta_\st ; \bG_\ss,\bF_\ss,\lambda_\ss) - \sB_\ss(\bbeta_*)| \leq C_* \cE_{n_\ss, p_\ss, n_\st, p_\st} ( \left\{ \cB_\ss(\bbeta_*, \bbeta_\st) + \tilde \sB_\ss(\bbeta_*) \right\} .
    \end{split}
\end{equation*}

The proof will follow a similar approach as in \citet{misiakiewicz2024non,defilippis2024dimension} and we will omit repetitive details. All inequality between random variables are understood to hold with probability at least $1 - n^{-D}$ for a fixed arbitrary $D>0$. We will allow the universal constants $C_0,C_1$ (appearing in $\cE_{n_\ss,p_\ss,n_\st,p_\st}$), and the constant $C_*$ to change from line to line (in particular, $C_*$ depends on $D$).

\subsection{Deterministic approximation conditioned on the teacher}

Conditioning on the teacher's weights $\bbeta_\st,$ we have \begin{equation*}
    \begin{split}
        \cR( \bG_\ss, \bF_\ss, \lambda_\ss; \bbeta_*, \bbeta_\st) =  \| \bbeta_* - p_\ss^{-1/2} \bF_\ss^\sT \bXi_\ss \bZ_\ss^\sT\bG_\ss \bbeta_\st \|_2^2 = \|\bbeta_* -\bbeta_\st\|_2^2 + \|\bbeta_\st - p_\ss^{-1/2} \bF_\ss^\sT \bXi_\ss \bZ_\ss^\sT\bG_\ss \bbeta_\st\|_2^2 \\
         +2 \< \bbeta_* -\bbeta_\st, \bbeta_\st - p_\ss^{-1/2} \bF_\ss^\sT \bXi_\ss \bZ_\ss^\sT\bG_\ss \bbeta_\st \>.
    \end{split}
\end{equation*} 

The first term is deterministic after conditioning on the teacher.
The deterministic equivalent of the second term is known from \citet{misiakiewicz2024non,defilippis2024dimension}, and we have
\begin{equation*}
    \begin{split}
        |\|\bbeta_\st - p_\ss^{-1/2} \bF_\ss^\sT \bXi_\ss \bZ_\ss^\sT\bG_\ss \bbeta_\st\|_2^2 - \bbeta_\st^\top \bLambda_\st \bbeta_\st| \leq C_* \cE_{n_\ss, p_\ss} \frac{\mu_{\ss,2} \bbeta_\st^\top (\bSigma + \mu_{\ss,2})^{-1} \bbeta_\st}{1 - \Upsilon_{n,p} ( \mu_{\ss,1},\mu_{\ss,2})},  
    \end{split}
\end{equation*}  where\begin{equation*}
    \begin{split}
         \bLambda_\st = \frac{\mu_{\ss, 2}^2}{1 - \Upsilon_{n,p} ( \mu_{\ss,1},\mu_{\ss,2})} \left[ (\bSigma + \mu_{\ss, 2})^{-2} +  \chi_{n,p} (\mu_{\ss,2})  \bSigma (\bSigma + \mu_{\ss,2})^{-2}\right].
    \end{split}
\end{equation*}

We note that the above bound is non-asymptotic compared to \citep[Theorem B.12]{defilippis2024dimension}, since we do not require that Assumption \ref{ass:tech-assumption} holds for $\bbeta_\st.$ The difference is that in \citep[Proof of Lemma B.13, step 3]{defilippis2024dimension}, we upper bound  \begin{equation*}
    \mu_{\ss,2} \bbeta_\st^\top (\bSigma + \mu_{\ss,2})^{-1} \bbeta_\st + (p_\ss \mu_{\ss,1} ) \frac{\bbeta_\st^\top \bSigma (\bSigma + \mu_{\ss,2})^{-2} \bbeta_\st }{p_\ss - \Tr(\bSigma^2 (\bSigma + \mu_{\ss,2})^{-2} ) } ,
\end{equation*} by $\mu_{\ss,2} \bbeta_\st^\top (\bSigma + \mu_{\ss,2})^{-1} \bbeta_\st$ instead of $\bbeta_\st^\top \bLambda_\st \bbeta_\st$ due to the missing Assumption \ref{ass:tech-assumption} on $\bbeta_\st.$ The rest follows the same argument as in \citep[Proof of Theorem B.12]{defilippis2024dimension}.

It remains to bound the cross term, and we first decompose \begin{equation*}
    \begin{split}
         \bbeta_\st - p_\ss^{-1/2} \bF_\ss^\sT \bXi_\ss \bZ_\ss^\sT\bG_\ss \bbeta_\st = \bbeta_\st - \frac{1}{\sqrt{p_\ss}} \bF_\ss^\top (\bZ_\ss^\top \bZ_\ss + \lambda_\ss)^{-1} \bZ_\ss^\top \bZ_\ss \bbeta_{\st, \bF_\ss} -  \frac{1}{\sqrt{p_\ss}} \bF_\ss^\top (\bZ_\ss^\top \bZ_\ss + \lambda_\ss)^{-1} \bZ_\ss^\top \bR_\ss \bbeta_{\st, \bF_\ss, \perp},
    \end{split}
\end{equation*} and compute the deterministic equivalent w.r.t.\ $\bF_\ss$. Here we introduced
\[
\bbeta_{\st, \bF_\ss} = \sqrt{p_\ss} \bF_{\ss } (\bF_\ss^\top \bF_\ss)^\dagger \bbeta_\st, \qquad \bbeta_{\st, \bF_\ss,\perp} = \proj_{\perp, \bF_\ss} \bbeta_\st, \qquad \proj_{\perp, \bF_\ss} =  \bI - (\bF_\ss^\top \bF_\ss)(\bF_\ss^\top \bF_\ss)^\dagger.
\]
We will also use $\proj_{\bF_\ss} = (\bF_\ss^\top \bF_\ss)(\bF_\ss^\top \bF_\ss)^\dagger$ to denote the projection onto the span of $\bF_\ss$.

For the deterministic equivalent of $\<\bbeta_* - \bbeta_\st, \frac{1}{\sqrt{p_\ss}} \bF_\ss^\top (\bZ_\ss^\top \bZ_\ss + \lambda_\ss)^{-1} \bZ_\ss^\top \bZ_\ss \bbeta_{t, \bF_\ss} \>,$ we note that \begin{equation*}
    \begin{split}
        &\<\bbeta_* - \bbeta_\st, \frac{1}{\sqrt{p_\ss}} \bF_\ss^\top (\bZ_\ss^\top \bZ_\ss + \lambda_\ss)^{-1} \bZ_\ss^\top \bZ_\ss \bbeta_{t, \bF_\ss} \> = \Phi_2(\bZ_\ss; \bv \bu^\top, \lambda_\ss),
    \end{split}
\end{equation*} where \begin{equation*}
    \begin{split}
        \bu = \hat\bSigma_{\bF_\ss}^{1/2} \bbeta_{t, \bF_\ss}, \quad \bv = \frac{1}{\sqrt{p_\ss}} \hat\bSigma_{\bF_\ss}^{-1/2} \bF_\ss^\top (\bbeta_* - \bbeta_\st).
    \end{split}
\end{equation*}
By Theorem \ref{thm:deteq-new}, we have \begin{equation*}
    \begin{split}
        |\Phi_2(\bZ_\ss; \bv \bu^\top, \lambda_\ss) -  \Psi_1(\mu_{\ss,1}; \bv \bu^\top)| \leq C_* \cE_{n_\ss, p_\ss} \tilde \Psi_1(\mu_{\ss,1}; \bv \bu^\top),
    \end{split}
\end{equation*} where \begin{equation*}
    \begin{split}
        & \Psi_1(\mu_{\ss,1}; \bv \bu^\top) = \<\bbeta_* - \bbeta_\st, \frac{1}{\sqrt{p_\ss}} \bF_\ss^\top (\hat\bSigma_\ss + \mu_{\ss,1})^{-1} \hat\bSigma_\ss \bbeta_{t, \bF_\ss} \>, \\
        & \tilde \Psi_1(\mu_{\ss,1}; \bv \bu^\top) = \left\{ \|\bu\|_2 \sqrt{\bv^\top (\hat\bSigma_\ss + \mu_{\ss,1})^{-2} \hat\bSigma_\ss^2\bv }\right\} \vee \sqrt{\bv^\top (\hat\bSigma_\ss + \mu_{\ss,1})^{-1} \hat\bSigma_\ss\bv  \bu^\top (\bSigma_\ss + \mu_{\ss,1})^{-1} \bSigma_\ss\bu}.
    \end{split}
\end{equation*}

For the determinisic equivalent of $\< \bbeta_* - \bbeta_\st, \frac{1}{\sqrt{p_\ss}} \bF_\ss^\top (\bZ_\ss^\top \bZ_\ss + \lambda_\ss)^{-1} \bZ_\ss^\top \bR_\ss \bbeta_{t, \bF_\ss, \perp} \>,$ by Lemma \ref{lem:deteq-zero} we have: \begin{equation*}
    \begin{split}
        |\< \bbeta_* - \bbeta_\st, \frac{1}{\sqrt{p_\ss}} \bF_\ss^\top (\bZ_\ss^\top \bZ_\ss + \lambda_\ss)^{-1} \bZ_\ss^\top \bR_\ss \bbeta_{t, \bF_\ss, \perp} \> | \leq C_* \cE_{n_\ss} \|\bbeta_{t, \bF_\ss, \perp}\|_2 \| \bbeta_* - \bbeta_\st\|_2,
    \end{split}
\end{equation*} with \begin{equation*}
    \begin{split}
        n_\ss^2\Psi_2(\mu_{\ss,1}; \bv \bv^\top ;\bI) = \frac{ \bv^\top (\hat\bSigma_\ss + \mu_{\ss,1})^{-2} \hat\bSigma_\ss^2\bv }{1 - \frac{1}{n_\ss}\Tr(\bF_\ss^\top \bF_\ss (\bF_\ss^\top \bF_\ss + p_\ss \mu_{\ss,1})^{-1} )}.
    \end{split}
\end{equation*}

Note that \begin{equation*}
    \begin{split}
         \<\bbeta_* - \bbeta_\st,& \frac{1}{\sqrt{p_\ss}} \bF_\ss^\top (\hat\bSigma_\ss + \mu_{\ss,1})^{-1} \hat\bSigma_\ss \bbeta_{t, \bF_\ss} \> \\
         = &\<\bbeta_* - \bbeta_\st,  \bF_\ss^\top \bF_\ss (\bF_\ss^\top \bF_\ss + p_\ss \mu_{\ss,1})^{-1}\bF_\ss^\top \bF_\ss  (\bF_\ss^\top \bF_\ss)^{\dagger} \bbeta_t \> \\
         = & \<\bbeta_* - \bbeta_\st,  \bF_\ss^\top \bF_\ss (\bF_\ss^\top \bF_\ss + p_\ss \mu_{\ss,1})^{-1}(\bI - \mathtt P_{\perp, \bF_\ss}) \bbeta_t \> \\
         = & \<\bbeta_* - \bbeta_\st,  \bF_\ss^\top \bF_\ss (\bF_\ss^\top \bF_\ss + p_\ss \mu_{\ss,1})^{-1} \bbeta_\st \> ,
    \end{split}
\end{equation*}which implies that  \begin{equation*}
    \begin{split}
        \< \bbeta_* - \bbeta_\st, \bbeta_\st\>& - \<\bbeta_* - \bbeta_\st, \frac{1}{\sqrt{p_\ss}} \bF_\ss^\top (\hat\bSigma_\ss + \mu_{\ss,1})^{-1} \hat\bSigma_\ss \bbeta_{t, \bF_\ss} \> \\
        =& (p_\ss \mu_\ss) \<\bbeta_* - \bbeta_\st, (\bF_\ss^\top \bF_\ss + p_\ss \mu_{\ss,1})^{-1} \bbeta_\st \>,
    \end{split}
\end{equation*} and by Theorem \ref{thm:deteq-new}, we have \begin{equation*}
    \begin{split}
       | (p_\ss \mu_\ss)& \<\bbeta_* - \bbeta_\st, (\bF_\ss^\top \bF_\ss + p_\ss \mu_{\ss,1})^{-1} \bbeta_\st \> -  \mu_{\ss,2}\<\bbeta_* - \bbeta_\st, (\bSigma + \mu_{\ss,2})^{-1} \bbeta_\st \> | \\
       \leq & C_* \cE_{n_\ss, p_\ss}\mu_{\ss,2} \sqrt{\<\bbeta_* - \bbeta_\st, (\bSigma + \mu_{\ss,2})^{-1} (\bbeta_* - \bbeta_\st) \>\< \bbeta_\st, (\bSigma + \mu_{\ss,2})^{-1} \bbeta_\st \>}.
    \end{split}
\end{equation*}

By combining the terms, we obtain: \begin{equation*}
    \begin{split}
        |\| \bbeta_* - p_\ss^{-1/2} \bF_\ss^\sT \bXi_\ss \bZ_\ss^\sT\bG_\ss \bbeta_\st \|_2^2 - \cB_\ss(\bbeta_*, \bbeta_\st)| \leq \tilde{\cB}_\ss(\bbeta_*, \bbeta_\st),
    \end{split}
\end{equation*} where \begin{equation*}
    \begin{split}
        \cB_\ss( \bbeta_\st) & = \|\bbeta_* - \bbeta_\st\|_2^2 +2 \mu_{\ss,2}\<\bbeta_* - \bbeta_\st, (\bSigma +  \mu_{\ss,2})^{-1} \bbeta_\st \> + \bbeta_\st^\top \bLambda_\st \bbeta_\st \\
        & = \|\bbeta_* - \bSigma(\bSigma +  \mu_{\ss,2})^{-1} \bbeta_\st\|_2^2 + \bbeta_\st^\top \overline \bLambda_0 \bbeta_\st,
    \end{split}
\end{equation*} and\begin{equation*}
    \begin{split}
        \overline \bLambda_0 = \frac{ \Upsilon_{n_\ss,p_\ss} (\mu_{\ss,1},\mu_{\ss,2})}{1 - \Upsilon_{n,p} (\mu_{\ss,1},\mu_{\ss,2})} \mu_{\ss,2}^2( \bSigma +\mu_{\ss,2})^{-2} + \frac{ \chi_{n,p} (\mu_{\ss,2})}{1 - \Upsilon_{n,p} (\mu_{\ss,1},\mu_{\ss,2})} \mu_{\ss,2}^2 \bSigma ( \bSigma +\mu_{\ss,2})^{-2}.
    \end{split}
\end{equation*}

Finally, we compute the deterministic equivalent of the upper bounds. We note that each term in the approximation functional is at most $\|\bbeta_\st\|_2 \|\bbeta_* -\bbeta_\st\|_2,$ thus we have \begin{equation*}
    \begin{split}
        \tilde{\cB}_\ss( \bbeta_*,\bbeta_\st) \leq \|\bbeta_\st\|_2 \|\bbeta_* -\bbeta_\st\|_2.
    \end{split}
\end{equation*}

\subsection{Deterministic equivalent on the teacher's model}

Next, we compute the deterministic equivalent of $\cB_\ss(\bbeta_*, \bbeta_\st).$ We first decompose $\cB_\ss( \bbeta_*, \bbeta_\st)$ as \begin{equation*}
    \begin{split}
         \cB_\ss(\bbeta_*, \bbeta_\st) &= \|\bbeta_* - \bSigma(\bSigma +  \mu_{\ss,2})^{-1} \bbeta_\st\|_2^2 + \bbeta_\st^\top  \bLambda_0 \bbeta_\st \\
        & = \|\bbeta_*\|_2^2 - 2 \< \bbeta_*, \bSigma(\bSigma +  \mu_{\ss,2})^{-1} \bbeta_\st \> + \< \bbeta_\st , \bLambda_0 \bbeta_\st \>,
    \end{split}
\end{equation*} where \begin{equation*}
    \begin{split}
          \bLambda_0 =\bSigma^2(\bSigma +  \mu_{\ss,2})^{-2} +  \frac{ \Upsilon_{n_\ss,p_\ss} (\mu_{\ss,1},\mu_{\ss,2})}{1 - \Upsilon_{n,p} (\mu_{\ss,1},\mu_{\ss,2})} \mu_{\ss,2}^2( \bSigma +\mu_{\ss,2})^{-2} + \frac{ \chi_{n,p} (\mu_{\ss,2})}{1 - \Upsilon_{n,p} (\mu_{\ss,1},\mu_{\ss,2})} \mu_{\ss,2}^2 \bSigma ( \bSigma +\mu_{\ss,2})^{-2}.
    \end{split}
\end{equation*}
We split $\bbeta_\st$ as \begin{equation*}
    \begin{split}
        \bbeta_\st = &\frac{1}{\sqrt{p_\st}} \bF_\st^\top (\bZ_\st^\top \bZ_\st + \lambda_\st)^{-1} \bZ_\st^\top \bZ_\st \bbeta_{\bF_\st} + \frac{1}{\sqrt{p_\st}} \bF_\st^\top (\bZ_\st^\top \bZ_\st + \lambda_\st)^{-1} \bZ_\st^\top \bR_\st \bbeta_{\perp, \bF_\st} \\
        &+\frac{1}{\sqrt{p_\st}} \bF_\st^\top (\bZ_\st^\top \bZ_\st + \lambda_\st)^{-1} \bZ_\st^\top \beps.
    \end{split}
\end{equation*}
We decompose the test error into the terms that depend on the teacher variance (i.e., terms that depend on $\eps$), and terms that depend on the teacher bias (i.e., terms that do not depends on $\eps$). We define \begin{equation*}
    \begin{split}
        \cB_{\ss,bias}(\bZ_\st) = & \bbeta_{\bF_\st}^\top (\bZ_\st^\top \bZ_\st)(\bZ_\st^\top \bZ_\st + \lambda_\st)^{-1} \frac{\bF_\st \bLambda_0 \bF_\st^\top}{p_\st}(\bZ_\st^\top \bZ_\st + \lambda_\st)^{-1} (\bZ_\st^\top \bZ_\st) \bbeta_{\bF_\st}\\
       & +\bbeta_{\perp,\st}^\top \bR_\st^\top \bZ_\st(\bZ_\st^\top \bZ_\st + \lambda_\st)^{-1} \frac{\bF_\st \bLambda_0 \bF_\st^\top}{p_\st}(\bZ_\st^\top \bZ_\st + \lambda_\st)^{-1} \bZ_\st^\top \bR_\st \bbeta_{\perp, \st}\\
       & + \bbeta_{\perp,\st}^\top \bR_\st^\top \bZ_\st(\bZ_\st^\top \bZ_\st + \lambda_\st)^{-1} \frac{\bF_\st \bLambda_0 \bF_\st^\top}{p_\st} (\bZ_\st^\top \bZ_\st + \lambda_\st)^{-1} (\bZ_\st^\top \bZ_\st) \bbeta_{\bF_\st} \\
       & +\frac{1}{\sqrt{p_\st}}\<\bbeta_*, \bLambda_1  \bF_\st^\top (\bZ_\st^\top \bZ_\st + \lambda_\st)^{-1} \bZ_\st^\top \bZ_\st \bbeta_{\bF_\st} \> \\
        &+ \frac{1}{\sqrt{p_\st}} \< \bbeta_*, \bLambda_1 \bF_\st^\top (\bZ_\st^\top \bZ_\st + \lambda_\st)^{-1} \bZ_\st^\top \bR_\st \bbeta_{\perp, \bF_\st}\>, \\
        \cB_{\ss,var}(\bZ_\st) =& \tau_\st^2 \frac{1}{p_\st} \Tr(\bF_\st \bLambda_0 \bF_\st^\top (\bZ_\st^{\top} \bZ_\st + \lambda_\st)^{-1} \bZ_\st^\top \bZ_\st (\bZ_\st^{\top} \bZ_\st + \lambda_\st)^{-1} ).
    \end{split}
\end{equation*}

\subsubsection{Terms depending on the teacher bias}

In this part, we show  the deterministic equivalent of $\cB_{\ss,bias}(\bZ_\st).$

\paragraph{Quadratic term.}
Let us consider the following quadratic term: \begin{equation*}
    \begin{split}
        &\bbeta_{\bF_\st}^\top (\bZ_\st^\top \bZ_\st)(\bZ_\st^\top \bZ_\st + \lambda_\st)^{-1} \frac{\bF_\st \bLambda_0 \bF_\st^\top}{p_\st}(\bZ_\st^\top \bZ_\st + \lambda_\st)^{-1} (\bZ_\st^\top \bZ_\st) \bbeta_{\bF_\st}\\
       & +\bbeta_{\perp,\st}^\top \bR_\st^\top \bZ_\st(\bZ_\st^\top \bZ_\st + \lambda_\st)^{-1} \frac{\bF_\st \bLambda_0 \bF_\st^\top}{p_\st}(\bZ_\st^\top \bZ_\st + \lambda_\st)^{-1} \bZ_\st^\top \bR_\st \bbeta_{\perp, \st}\\
       & + \bbeta_{\perp,\st}^\top \bR_\st^\top \bZ_\st(\bZ_\st^\top \bZ_\st + \lambda_\st)^{-1} \frac{\bF_\st \bLambda_0 \bF_\st^\top}{p_\st} (\bZ_\st^\top \bZ_\st + \lambda_\st)^{-1} (\bZ_\st^\top \bZ_\st) \bbeta_{\bF_\st}.
    \end{split}
\end{equation*}

For the first term, we note that it is equal to $\Phi_5(\bZ_\st; \bu \bu^\top, \bB)$ with \begin{equation*}
    \begin{split}
        \bu = \hat \bSigma_{\bF_\st}^{1/2} \bbeta_{\bF_\st}, \quad  \bB = \hat \bSigma_{\bF_\st}^{-1/2} \frac{\bF_\st \bLambda_0 \bF_\st^\top}{p_\st} \hat \bSigma_{\bF_\st}^{-1/2}.
    \end{split}
\end{equation*}

By Theorem \ref{thm:deteq-new}, we have: \begin{equation*}
    \begin{split}
        |\Phi_5(\bZ_\st; \bu \bu^\top, \bB) - \Psi_3(\mu_{\st,1}; \bu \bu^\top, \bB)| \leq & C_* \cE_{n_\st, p_\st} \tilde \Psi_3(\mu_{\st,1}; \bu \bu^\top, \bB) ,
    \end{split}
\end{equation*} where \begin{equation*}
    \begin{split}
        \Psi_3(\mu_{\st,1}; \bu \bu^\top, \bB) = & \bbeta_{\bF_\st}^\top \hat\bSigma_{\bF_\st}(\hat\bSigma_{\bF_\st} + \mu_{\st,1})^{-1} \frac{\bF_\st \bLambda_0 \bF_\st^\top}{p_\st}(\hat\bSigma_{\bF_\st} + \mu_{\st,1})^{-1} \hat\bSigma_{\bF_\st} \bbeta_{\bF_\st} \\
        & + \frac{(p_\st\mu_{\st,1})^2 \Tr(\bLambda_0 (\bF_\st^\top \bF_\st)^2(\bF_\st^\top \bF_\st + p_\st\mu_{\st,1})^{-2}) }{n_\st - \Tr(\bF_\st^\top \bF_\st(\bF_\st^\top \bF_\st + p_\st \mu_{\st,1})^{-1} )} \bbeta_{\bF_\st}^\top \hat\bSigma_{\bF_\st}(\bF_\st^\top \bF_\st+ p_\st \mu_{\st,1})^{-2} \bbeta_{\bF_\st}, \\
        \tilde \Psi_3(\mu_{\st,1}; \bu \bu^\top, \bB) = &\|\hat \bSigma_{\bF_\st}^{1/2} \bbeta_{\bF_\st}\|_2^2 \left\|\hat  \bSigma_{\bF_\st}^{-1/2} \frac{\bF_\st \bLambda_0 \bF_\st^\top}{p_\st} \hat \bSigma_{\bF_\st}^{-1/2} \right\|_{op} \\
        \leq & \|(\bI - \mathtt P_{\perp, \bF_\st} ) \bbeta_*\|_2^2 \|\bLambda_0 \|_{op} \leq \|\bbeta_*\|_2^2\|\bLambda_0\|_{op}.
    \end{split}
\end{equation*}

For the second term, by Lemma \ref{lem:deteq-zero} we have \begin{equation*}
    \begin{split}
        |\bbeta_{\perp,\st}^\top \bR_\st^\top \bZ_\st(\bZ_\st^\top \bZ_\st + \lambda_\st)^{-1} \frac{\bF_\st \bLambda_0 \bF_\st^\top}{p_\st}(\bZ_\st^\top \bZ_\st + \lambda_\st)^{-1} \bZ_\st^\top \bR_\st \bbeta_{\perp, \st} - \|\bbeta_{\perp, \st}\|_2^2 \Psi_2(\mu_{\st,1}; \bB, \bI) | \leq & C_* \cE_{n_\st,p_\st}  \|\bbeta_{\perp, \st}\|_2^2 \|\bB\|_{op}\\
        \leq & C_* \cE_{n_\st,p_\st}  \|\bbeta_*\|_2^2 \|\bB\|_{op},
    \end{split}
\end{equation*} where \begin{equation*}
    \begin{split}
        \Psi_2(\mu_{\st,1}; \bB, \bI) = & \frac{\Tr(\bLambda_0 \hat \bSigma_{\bF_\st}(\hat \bSigma_{\bF_\st} + \mu_{\st,1})^{-2})}{n_\st - \Tr( \hat \bSigma_{\bF_\st}^2(\hat \bSigma_{\bF_\st} + \mu_{\st,1})^{-2})} =  \frac{\Tr(\bLambda_0 (\bF_\st^\top \bF_\st)^2(\bF_\st^\top \bF_\st + \mu_{\st,1})^{-2})}{n_\st - \Tr( (\bF_\st^\top \bF_\st)^2(\bF_\st^\top \bF_\st + \mu_{\st,1})^{-2})}.
    \end{split}
\end{equation*}

Note that \begin{equation*}
    \begin{split}
        (p_\st\mu_{\st,1})^2\bbeta_{\bF_\st}^\top \hat\bSigma_{\bF_\st}(\bF_\st^\top \bF_\st+ p_\st \mu_{\st,1})^{-2} \bbeta_{\bF_\st} =& (p_\st\mu_{\st,1})^2 \bbeta_*^\top (\bI - \mathtt P_{\perp, \bF_\st}) (\bF_\st^\top \bF_\st+ p_\st \mu_{\st,1})^{-2}  (\bI - \mathtt P_{\perp, \bF_\st})\bbeta_* \\
        = & (p_\st\mu_{\st,1})^2 \bbeta_*^\top  (\bF_\st^\top \bF_\st+ p_\st \mu_{\st,1})^{-2}  \bbeta_* - \|\bbeta_{\perp, \bF_\st}\|_2^2.
    \end{split}
\end{equation*}
Thus, we have \begin{equation*}
    \begin{split}
        \Psi_3(\mu_{\st,1};& \bu\bu^\top , \bB) + \Psi_2(\mu_{\st,1}; \bB, \bI) \\
        =& \bbeta_{\bF_\st}^\top \hat\bSigma_{\bF_\st}(\hat\bSigma_{\bF_\st} + \mu_{\st,1})^{-1} \frac{\bF_\st \bLambda_0 \bF_\st^\top}{p_\st}(\hat\bSigma_{\bF_\st} + \mu_{\st,1})^{-1} \hat\bSigma_{\bF_\st} \bbeta_{\bF_\st} \\
        & + \frac{(p_\st\mu_{\st,1})^2 \Tr(\bLambda_0 (\bF_\st^\top \bF_\st)^2(\bF_\st^\top \bF_\st + p_\st\mu_{\st,1})^{-2}) }{n_\st - \Tr((\bF_\st^\top \bF_\st)^2(\bF_\st^\top \bF_\st + p_\st \mu_{\st,1})^{-2} )} \bbeta_{*}^\top (\bF_\st^\top \bF_\st+ p_\st \mu_{\st,1})^{-2} \bbeta_{*}. 
    \end{split}
\end{equation*}

The third term concentrates to zero, and by Lemma \ref{lem:deteq-zero}, we have \begin{equation*}
    \begin{split}
        |\bbeta_{\perp,\st}^\top \bR_\st^\top \bZ_\st(\bZ_\st^\top \bZ_\st + \lambda_\st)^{-1} \frac{\bF_\st \bLambda_0 \bF_\st^\top}{p_\st} (\bZ_\st^\top \bZ_\st + \lambda_\st)^{-1} (\bZ_\st^\top \bZ_\st) \bbeta_{\bF_\st} | \leq& C_* \cE_{n_\st, p_\st} \|\bbeta_{\perp, \st} \|_2 \| \bB \|_{op} \|\bu\|_2 \\
        \leq & C_* \cE_{n_\st, p_\st} \|\bbeta_*\|_2^2 \|\bLambda_0\|_{op}.
    \end{split}
\end{equation*}

Next, we do the deterministic equivalent w.r.t.\ $\bF_\st,$ and we have \begin{equation*}
    \begin{split}
        \bbeta_{\bF_\st}^\top \hat\bSigma_{\bF_\st}&(\hat\bSigma_{\bF_\st} + \mu_{\st,1})^{-1} \frac{\bF_\st \bLambda_0 \bF_\st^\top}{p_\st}(\hat\bSigma_{\bF_\st} + \mu_{\st,1})^{-1} \hat\bSigma_{\bF_\st} \bbeta_{\bF_\st} \\
        = & \bbeta_*^\top (\bI - \mathtt P_{\perp, \bF_\st}) (\bF_\st^\top \bF_\st)(\bF_\st^\top \bF_\st + p_\st \mu_{\st,1})^{-1} \bLambda_0 (\bF_\st^\top \bF_\st)(\bF_\st^\top \bF_\st + p_\st \mu_{\st,1})^{-1} (\bI - \mathtt P_{\perp, \bF_\st}) \bbeta_* \\
        = & \bbeta_*^\top  (\bF_\st^\top \bF_\st)(\bF_\st^\top \bF_\st + p_\st \mu_{\st,1})^{-1} \bLambda_0 (\bF_\st^\top \bF_\st)(\bF_\st^\top \bF_\st + p_\st \mu_{\st,1})^{-1}  \bbeta_* \\
        = &  \|\bbeta_*\|_2^2 - 2(p_\st \mu_\st)  \bbeta_*^\top (\bF_\st^\top \bF_\st + p_\st \mu_{\st,1})^{-1}  \bLambda_0\bbeta_* \\
        & + (p_\st \mu_{\st,1})^2  \bbeta_*^\top (\bF_\st^\top \bF_\st + p_\st \mu_{\st,1})^{-1}  \bLambda_0  (\bF_\st^\top \bF_\st + p_\st \mu_{\st,1})^{-1} \bbeta_*.
    \end{split}
\end{equation*}
The second term satisfies \begin{equation*}
    \begin{split}
        | (p_\st \mu_\st)  \bbeta_*^\top & (\bF_\st^\top \bF_\st + p_\st \mu_{\st,1})^{-1}  \bLambda_0\bbeta_* - \mu_{\st,2} \bbeta_*^\top (\bSigma + \mu_{\st,2})^{-1}  \bLambda_0\bbeta_*| \\
        \leq & C_* \cE_{n_\st, p_\st} \mu_{\st,2} \sqrt{ \bbeta_*^\top (\bSigma + \mu_{\st,2})^{-1} \bbeta_* \bbeta_*^\top \bLambda_0 (\bSigma + \mu_{\st,2})^{-1} \bLambda_0\bbeta_* }.
    \end{split}
\end{equation*}

The third term can be written as \begin{equation*}
    \begin{split}
          \bbeta_*^\top (\bF_\st^\top \bF_\st + p_\st \mu_{\st,1})^{-1}  \bLambda_0  (\bF_\st^\top \bF_\st + p_\st \mu_{\st,1})^{-1} \bbeta_* = \Phi_3(\bF_\st; \bA, \bB, p_\st \mu_{\st,1}) ,
    \end{split}
\end{equation*} with $\bA = \bSigma^{-1/2} \bbeta_* \bbeta_*^\top \bSigma^{-1/2}, \bB = \bSigma^{-1/2} \bLambda_0 \bSigma^{-1/2}.$

By Theorem \ref{thm:deteq-new}, we have \begin{equation*}
    \begin{split}
        |(p_\st \mu_{\st,1})^2 \Phi_3(\bF_\st; \bA, \bB) - \mu_{\st,2}^2 \Psi_2(\mu_{\st,2}; \bA, \bB)| \leq C_* \cE_{n_\st, p_\st} \mu_{\st,2}^2 \widetilde \Psi_2(\mu_{\st,2}; \bA, \bB),
    \end{split}
\end{equation*} where \begin{equation*}
    \begin{split}
         \mu_{\st,2}^2 \Psi_2(\mu_{\st,2}; \bA, \bB)  =& \mu_{\st,2}^2 \bbeta_*^\top  (\bSigma +\mu_{\st,2})^{-1}\bLambda_0 (\bSigma +\mu_{\st,2})^{-1} \bbeta_* + \mu_{\st,2}^2\frac{\Tr(\bLambda_0 \bSigma(\bSigma + \mu_{\st,2})^{-2})}{p_\st - \Tr(\bSigma^2 (\bSigma + \mu_{\st,2})^{-2}) } \bbeta_*^\top \bSigma(\bSigma + \mu_{\st,2})^{-2} \bbeta_*\\
         = &\mu_{\st,2}^2 \bbeta_*^\top  (\bSigma +\mu_{\st,2})^{-1}\bLambda_0 (\bSigma +\mu_{\st,2})^{-1} \bbeta_* + \mu_{\st,2}^2\chi_{n_\st, p_\st}(\bLambda_0; \mu_{\st,2}) \bbeta_*^\top \bSigma(\bSigma + \mu_{\st,2})^{-2} \bbeta_*,\\
        \mu_{\st,2}^2 \widetilde \Psi_2(\mu_{\st,2}; \bA, \bB) = &\mu_{\st,2}^2 \bbeta_*^\top  (\bSigma +\mu_{\st,2})^{-1} \bbeta_* \| \bLambda_0 (\bSigma +\mu_{\st,2})^{-1} \|_{op}  + \mu_{\st,2}^2\frac{\Tr(\bLambda_0 \bSigma(\bSigma + \mu_{\st,2})^{-2})}{p_\st - \Tr(\bSigma^2 (\bSigma + \mu_{\st,2})^{-2}) } \bbeta_*^\top \bSigma(\bSigma + \mu_{\st,2})^{-2} \bbeta_* \\
        = &\mu_{\st,2}^2 \bbeta_*^\top  (\bSigma +\mu_{\st,2})^{-1} \bbeta_* \| \bLambda_0 (\bSigma +\mu_{\st,2})^{-1} \|_{op}  + \mu_{\st,2}^2\chi_{n_\st, p_\st}(\bLambda_0; \mu_{\st,2})\bbeta_*^\top \bSigma(\bSigma + \mu_{\st,2})^{-2} \bbeta_*. 
    \end{split}
\end{equation*}

Combining all terms, we have \begin{equation*}
    \begin{split}
        &\left| \bbeta_{\bF_\st}^\top \hat\bSigma_{\bF_\st}(\hat\bSigma_{\bF_\st} + \mu_{\st,1})^{-1} \frac{\bF_\st \bLambda_0 \bF_\st^\top}{p_\st}(\hat\bSigma_{\bF_\st} + \mu_{\st,1})^{-1} \hat\bSigma_{\bF_\st} \bbeta_{\bF_\st} \right.\\
        & \left. -  \bbeta_*^\top \bSigma (\bSigma +\mu_{\st,2})^{-1}\bLambda_0 (\bSigma +\mu_{\st,2})^{-1} \bSigma \bbeta_* + \mu_{\st,2}^2\chi_{n_\st, p_\st}(\bLambda_0; \mu_{\st,2}) \bbeta_*^\top \bSigma(\bSigma + \mu_{\st,2})^{-2} \bbeta_*\right| \\
        \leq& C_* \cE_{n_\st, p_\st} \tilde{T}_{q,1},
    \end{split}
\end{equation*} where \begin{equation*}
    \begin{split}
        \tilde{T}_{q,1} = &\mu_{\st,2}^2 \bbeta_*^\top  (\bSigma +\mu_{\st,2})^{-1} \bbeta_* \| \bLambda_0 (\bSigma +\mu_{\st,2})^{-1} \|_{op}  + \mu_{\st,2}^2\chi_{n_\st, p_\st}(\bLambda_0; \mu_{\st,2})\bbeta_*^\top \bSigma(\bSigma + \mu_{\st,2})^{-2} \bbeta_*  \\
        &+ \mu_{\st,2} \sqrt{ \bbeta_*^\top (\bSigma + \mu_{\st,2})^{-1} \bbeta_* \bbeta_*^\top \bLambda_0 (\bSigma + \mu_{\st,2})^{-1} \bLambda_0\bbeta_* }.
    \end{split}
\end{equation*}

For the term $\frac{(p_\st\mu_{\st,1})^2 \Tr(\bLambda_0 (\bF_\st^\top \bF_\st)^2(\bF_\st^\top \bF_\st + p_\st\mu_{\st,1})^{-2}) }{n_\st - \Tr(\bF_\st^\top \bF_\st(\bF_\st^\top \bF_\st + p_\st \mu_{\st,1})^{-1} )} \bbeta_{\bF_\st}^\top (\bF_\st^\top \bF_\st+ p_\st \mu_{\st,1})^{-2} \bbeta_{\bF_\st},$ we first note that from \citep[Proof of Theorem B.12]{defilippis2024dimension}, we have: \begin{equation*}
    \begin{split}
        &\left| \frac{(p_\st\mu_{\st,1})^2 \bbeta_{*}^\top (\bF_\st^\top \bF_\st+ p_\st \mu_{\st,1})^{-2} \bbeta_{*}}{n_\st - \Tr(\bF_\st^\top \bF_\st(\bF_\st^\top \bF_\st + p_\st \mu_{\st,1})^{-1} )}  - \frac{1}{n_\st} \frac{ \tilde{\sB}_{n_\st,p_\st}}{1 - \Upsilon(\mu_{\st,1}, \mu_{\st,2})}\right| \\
        &\hspace{10em}\leq C_* \cE_{n_\st,p_\st}   \frac{1}{n_\st} \frac{ \tilde{\sB}_{n_\st,p_\st}}{1 - \Upsilon(\mu_{\st,1}, \mu_{\st,2})}
    \end{split}
\end{equation*} with \begin{equation*}
    \begin{split}
        \tilde{\sB}_{n_\st,p_\st} = \mu_{\st,2}^2 (\bbeta_*^\top (\bSigma + \mu_{\st,2})^{-2}\bbeta_* + \chi(\mu_{\st,2})\bbeta_*^\top \bSigma (\bSigma + \mu_{\st,2})^{-2} \bbeta_*).
    \end{split}
\end{equation*}
We then do the deterministic equivalent of $\Tr(\bLambda_0 (\bF_\st^\top \bF_\st)^2(\bF_\st^\top \bF_\st + p_\st\mu_{\st,1})^{-2}).$ We have \begin{equation*}
    \begin{split}
        \Tr(\bLambda_0 (\bF_\st^\top \bF_\st)^2(\bF_\st^\top \bF_\st + p_\st\mu_{\st,1})^{-2}) = &\Tr(\bLambda_0 (\bF_\st^\top \bF_\st)(\bF_\st^\top \bF_\st + p_\st\mu_{\st,1})^{-1}) - (p_\st \mu_{\st,1}) \Tr(\bLambda_0 (\bF_\st^\top \bF_\st)(\bF_\st^\top \bF_\st + p_\st\mu_{\st,1})^{-2})
    \end{split}
\end{equation*}

For the first term, we decompose \begin{equation*}
    \begin{split}
        \Tr(\bLambda_0 (\bF_\st^\top \bF_\st)(\bF_\st^\top \bF_\st + p_\st\mu_{\st,1})^{-1}) = &\Tr(\bLambda_0) - (p_\st\mu_{\st,1})\Tr(\bLambda_0(\bF_\st^\top \bF_\st + p_\st\mu_{\st,1})^{-1}),
    \end{split}
\end{equation*} and by Theorem \ref{thm:deteq} we have \begin{equation*}
    \begin{split}
        |(p_\st\mu_{\st,1})\Tr(\bLambda_0(\bF_\st^\top \bF_\st + p_\st\mu_{\st,1})^{-1}) - \mu_{\st,2} \Tr(\bLambda_0 (\bSigma + \mu_{\st,2})^{-1}) | \leq C_* \cE_{n_\st, p_\st} \mu_{\st,2} \Tr(\bLambda_0 (\bSigma + \mu_{\st,2})^{-1}).
    \end{split}
\end{equation*} 

For the second term, from Theorem \ref{thm:deteq} we have \begin{equation*}
    \begin{split}
        &\left|(p_\st \mu_{\st,1}) \Tr(\bLambda_0 (\bF_\st^\top \bF_\st)(\bF_\st^\top \bF_\st + p_\st\mu_{\st,1})^{-2}) - (p_\st \mu_{\st,1}) \frac{\Tr(\bLambda_0 \bSigma(\bSigma+\mu_{\st,2})^{-2})}{p_\st - \Tr(\bSigma^2(\bSigma+\mu_{\st,2})^{-2})} \right| \\
        &\hspace{10em}\leq C_* \cE_{n_\st, p_\st}  (p_\st \mu_{\st,1}) \frac{\Tr(\bLambda_0 \bSigma(\bSigma+\mu_{\st,2})^{-2})}{p_\st - \Tr(\bSigma^2(\bSigma+\mu_{\st,2})^{-2})} . 
    \end{split}
\end{equation*}

Using the same arguments in \citep[ Proof of Lemma B.14]{defilippis2024dimension}, we have \begin{equation*}
    \begin{split}
        (p_\st \mu_{\st,1}) \frac{\Tr(\bLambda_0 \bSigma(\bSigma+\mu_{\st,2})^{-2})}{p_\st - \Tr(\bSigma^2(\bSigma+\mu_{\st,2})^{-2})} \leq \mu_{\st,2}\Tr(\bLambda_0 \bSigma(\bSigma+\mu_{\st,2})^{-2})) \leq   \mu_{\st,2} \Tr(\bLambda_0 (\bSigma + \mu_{\st,2})^{-1}).
    \end{split}
\end{equation*}

Thus, combining all terms, we obtain \begin{equation}
\label{eq:deteq-Tr(AFF(FF+mu)^-1)}
    \begin{split}
        &\left| \Tr(\bLambda_0 (\bF_\st^\top \bF_\st)^2(\bF_\st^\top \bF_\st + p_\st\mu_{\st,1})^{-2}) - n_\st \Upsilon_{n_\st, p_\st }(\bLambda_0; \mu_{\st,1}, \mu_{\st,1})\right| \\
        \leq &  C_* \cE_{n_\st, p_\st} \mu_{\st,2} \Tr(\bLambda_0 (\bSigma + \mu_{\st,2})^{-1}),
    \end{split}
\end{equation} which implies that \begin{equation*}
    \begin{split}
        & \left|\frac{(p_\st\mu_{\st,1})^2 \Tr(\bLambda_0 (\bF_\st^\top \bF_\st)^2(\bF_\st^\top \bF_\st + p_\st\mu_{\st,1})^{-2}) }{n_\st - \Tr(\bF_\st^\top \bF_\st(\bF_\st^\top \bF_\st + p_\st \mu_{\st,1})^{-1} )} \bbeta_{\bF_\st}^\top (\bF_\st^\top \bF_\st+ p_\st \mu_{\st,1})^{-2} \bbeta_{\bF_\st} -  \frac{ \tilde{\sB}_{n_\st,p_\st}}{1 - \Upsilon(\mu_{\st,1}, \mu_{\st,2})}\Upsilon_{n_\st, p_\st }(\bLambda_0; \mu_{\st,1}, \mu_{\st,1}) \right| \\
         &\hspace{10em}\leq C_* \cE_{n_\st, p_\st } \frac{ \tilde{\sB}_{n_\st,p_\st}}{1 - \Upsilon(\mu_{\st,1}, \mu_{\st,2})} \frac{1}{n_\st} \mu_{\st,2} \Tr(\bLambda_0 (\bSigma + \mu_{\st,2})^{-1}).
    \end{split}
\end{equation*}

\paragraph{Cross term.}
Let us now consider the following cross term: \begin{equation*}
    \begin{split}
        & \frac{1}{\sqrt{p_\st}}\<\bbeta_*, \bLambda_1  \bF_\st^\top (\bZ_\st^\top \bZ_\st + \lambda_\st)^{-1} \bZ_\st^\top \bZ_\st \bbeta_{\bF_\st} \> \\
        &+ \frac{1}{\sqrt{p_\st}} \< \bbeta_*, \bLambda_1 \bF_\st^\top (\bZ_\st^\top \bZ_\st + \lambda_\st)^{-1} \bZ_\st^\top \bR_\st \bbeta_{\perp, \bF_\st}, 
    \end{split}
\end{equation*} 
with $\bLambda_1 =  \bSigma(\bSigma +  \mu_{\ss,2})^{-1} $.

For the first term, we write \begin{equation*}
    \begin{split}
        \frac{1}{\sqrt{p_\st}}\<\bbeta_*, \bLambda_1  \bF_\st^\top (\bZ_\st^\top \bZ_\st + \lambda_\st)^{-1} \bZ_\st^\top \bZ_\st \bbeta_{\bF_\st} \> =  \frac{1}{\sqrt{p_\st}}\Phi_2(\bZ_\st; \bv \bu^\top),
    \end{split}
\end{equation*} with \begin{equation*}
    \begin{split}
        \bu = \hat \bSigma_{\bF_\st}^{1/2} \bbeta_{\bF_\st} ,\quad \bv = \hat \bSigma_{\bF_\st}^{-1/2} \bF_\st \bLambda_1 \bbeta_*,
    \end{split}
\end{equation*} and by Theorem \ref{thm:deteq-new} we obtain that \begin{equation*}
    \begin{split}
        | \frac{1}{\sqrt{p_\st}}\Phi_2(\bZ_\st; \bv \bu^\top) -  \frac{1}{\sqrt{p_\st}}\Psi_1(\mu_{\st,1}; \bv \bu^\top)| \leq C_* \cE_{n_\st, p_\st}  \frac{1}{\sqrt{p_\st}}\tilde  \Psi_1(\bZ_\st; \bv \bu^\top),
    \end{split}
\end{equation*} with \begin{equation*}
    \begin{split}
          \frac{1}{\sqrt{p_\st}}\Psi_1(\mu_{\st,1}; \bv \bu^\top) =& \<\bbeta_*, \bLambda_1  \bF_\st^\top (\hat \bSigma_\st + \mu_{\ss,1})^{-1} \hat \bSigma_\st \bbeta_{\bF_\st} \> , \\
         \frac{1}{\sqrt{p_\st}}\tilde \Psi_1(\mu_{\st,1}; \bv \bu^\top) =&  \|(\bI - \mathtt P_{\perp, \bF_\st})\|_2 \sqrt{ \< \bbeta_*, \bLambda_1 (\bF_\st^\top \bF_\st)^2  ( \bF_\st^\top \bF_\st + p_\st\mu_{\st,1})^{-2}  \bLambda_1 \bbeta_{*}  \> } \\
        &\vee \sqrt{ \< \bbeta_*, (\bF_\st^\top \bF_\st)  ( \bF_\st^\top \bF_\st + p_\st \mu_{\ss,1})^{-1} \bbeta_* \>  \< \bbeta_*, \bLambda_1 (\bF_\st^\top \bF_\st)  ( \bF_\st^\top \bF_\st + p_\st \mu_{\ss,1})^{-1}  \bLambda_1 \bbeta_*  \> } .
    \end{split}
\end{equation*}

Then, note that \begin{equation*}
    \begin{split}
        \<\bbeta_*, \bLambda_1  \bF_\st^\top (\hat \bSigma_\st + \mu_{\ss,1})^{-1} \hat \bSigma_\st \bbeta_{\bF_\st} \> = & \<\bbeta_*, \bLambda_1   (\hat \bSigma_\st + p_\st \mu_{\st,1})^{-1} (\bF_\st^\top \bF_\st) (\bI - \mathtt P_{\perp, \bF_\st}) \bbeta_{*} \> \\
        = & \<\bbeta_*, \bLambda_1   (\bF_\st^\top \bF_\st + p_\st \mu_{\st,1})^{-1} (\bF_\st^\top \bF_\st) \bbeta_{*} \>.
    \end{split}
\end{equation*}

Again by Theorem \ref{thm:deteq}, we have \begin{equation*}
    \begin{split}
        |\<\bbeta_*, \bLambda_1   (\bF_\st^\top& \bF_\st + p_\st \mu_{\st,1})^{-1} (\bF_\st^\top \bF_\st) \bbeta_{*} \> - \<\bbeta_*, \bLambda_1   (\bSigma +  \mu_{\st,2})^{-1} \bSigma \bbeta_{*} \>| \\
        \leq & C_* \cE_{n_\st, p_\st} \mu_{\st,2} \sqrt{\bbeta_*^\top \bLambda_1     (\bSigma +  \mu_{\st,2})^{-1} \bLambda_1 \bbeta_{*} \bbeta_*^\top   (\bSigma +  \mu_{\st,2})^{-1} \bbeta_{*}   }.
    \end{split}
\end{equation*}

\paragraph{Combining all the terms.}

We define \begin{equation*}
    \begin{split}
         \sB_{\ss, bias} =&\|\bbeta_*\|_2^2 +  \bbeta_*^\top  \bSigma (\bSigma +\mu_{\st,2})^{-1}\bLambda_0 (\bSigma +\mu_{\st,2})^{-1} \bSigma\bbeta_* + \mu_{\st,2}^2\chi_{n_\st, p_\st}(\bLambda_0; \mu_{\st,2}) \bbeta_*^\top \bSigma(\bSigma + \mu_{\st,2})^{-2} \bbeta_* \\
        & + \frac{ \tilde{\sB}_{n_\st,p_\st}}{1 - \Upsilon(\mu_{\st,1}, \mu_{\st,2})}\Upsilon_{n_\st, p_\st }(\bLambda_0; \mu_{\st,1}, \mu_{\st,2}) \\
        & -2 \bbeta_*^\top \bLambda_1   (\bSigma +  \mu_{\st,2})^{-1} \bSigma \bbeta_{*} \\
        & = \bbeta_*^\top \bLambda \bbeta_* ,
    \end{split}
\end{equation*} with \begin{equation*}
    \begin{split}
        \bLambda = &\bI - 2 \bSigma^2(\bSigma + \mu_{\ss,2})^{-1} (\bSigma + \mu_{\st,2})^{-1} + \bLambda_0 \bSigma^2 (\bSigma + \mu_{\st,2})^{-2} \\
        &+ \frac{ \Upsilon_{n_\st, p_\st }(\bLambda_0; \mu_{\st,1}, \mu_{\st,2})}{1 - \Upsilon(\mu_{\st,1}  \mu_{\st,2})}\mu_{\st,2}^2 (\bSigma + \mu_{\st,2})^{-2} \\
        & + \left[\frac{ \Upsilon_{n_\st, p_\st }(\bLambda_0; \mu_{\st,1}, \mu_{\st,2})}{1 - \Upsilon(\mu_{\st,1}  \mu_{\st,2})} + \chi_{n_\st, p_\st}(\bLambda_0; \mu_{\st,2})  \right]\mu_{\st,2}^2  \bSigma(\bSigma + \mu_{\st,2})^{-2},
    \end{split}
\end{equation*}
and the approximation error is given by  \begin{equation*}
    \begin{split}
        \tilde {\sB}_{\ss, bias}(\bZ_\st) = & \|\bbeta_*\|_2^2\|\bLambda_0\|_{op} \\
        & + \frac{\Tr(\bLambda_0 (\bF_\st^\top \bF_\st)^2(\bF_\st^\top \bF_\st + \mu_{\st,1})^{-2})}{n_\st - \Tr( (\bF_\st^\top \bF_\st)^2(\bF_\st^\top \bF_\st + \mu_{\st,1})^{-2})}  \\
        & +\mu_{\st,2}^2 \bbeta_*^\top  (\bSigma +\mu_{\st,2})^{-1} \bbeta_* \| \bLambda_0 (\bSigma +\mu_{\st,2})^{-1} \|_{op}  + \mu_{\st,2}^2\chi_{n_\st, p_\st}(\bLambda_0; \mu_{\st,2})\bbeta_*^\top \bSigma(\bSigma + \mu_{\st,2})^{-2} \bbeta_*  \\
        &+ \mu_{\st,2} \sqrt{ \bbeta_*^\top (\bSigma + \mu_{\st,2})^{-1} \bbeta_* \bbeta_*^\top \bLambda_0 (\bSigma + \mu_{\st,2})^{-1} \bLambda_0\bbeta_* } \\
        & +\frac{ \tilde{\sB}_{n_\st,p_\st}}{1 - \Upsilon(\mu_{\st,1}, \mu_{\st,2})} \frac{1}{n_\st} \mu_{\st,2} \Tr(\bLambda_0 (\bSigma + \mu_{\st,2})^{-1})\\
        & + \|(\bI - \mathtt P_{\perp, \bF_\st})\|_2 \sqrt{ \< \bbeta_*, \bLambda_1 (\bF_\st^\top \bF_\st)^2  ( \bF_\st^\top \bF_\st + p_\st\mu_{\st,1})^{-2}  \bLambda_1 \bbeta_{*}  \> } \\
        &\vee \sqrt{ \< \bbeta_*, (\bF_\st^\top \bF_\st)  ( \bF_\st^\top \bF_\st + p_\st \mu_{\ss,1})^{-1} \bbeta_* \>  \< \bbeta_*, \bLambda_1 (\bF_\st^\top \bF_\st)  ( \bF_\st^\top \bF_\st + p_\st \mu_{\ss,1})^{-1}  \bLambda_1 \bbeta_*  \> }  \\
        &+ \mu_{\st,2} \sqrt{\bbeta_*^\top \bLambda_1     (\bSigma +  \mu_{\st,2})^{-1} \bLambda_1 \bbeta_{*} \bbeta_*^\top   (\bSigma +  \mu_{\st,2})^{-1} \bbeta_{*}   }.
    \end{split}
\end{equation*} 

Thus, we have \begin{equation*}
    \begin{split}
        | \cB_{\ss, bias}(\bZ_\st) - \overline \sB_{\ss, bias}| \leq C_* \cE_{n_\st, p_\st} \tilde {\sB}_{\ss, bias}(\bZ_\st).
    \end{split}
\end{equation*}

\paragraph{Bounding the approximation error.}

It remains to bound $\tilde {\sB}_{\ss, bias}(\bZ_\st).$ We first note that from the previous computation we have \begin{equation*}
    \begin{split}
        \frac{\Tr(\bLambda_0 (\bF_\st^\top \bF_\st)^2(\bF_\st^\top \bF_\st + \mu_{\st,1})^{-2})}{n_\st - \Tr( (\bF_\st^\top \bF_\st)^2(\bF_\st^\top \bF_\st + \mu_{\st,1})^{-2})}  \leq C_* \frac{1}{n_\st} \frac{ \mu_{\st,2} \Tr(\bLambda_0 (\bSigma + \mu_{\st,2})^{-1}) }{1 - \Upsilon(\mu_{\st,1}, \mu_{\st,2})} ,
    \end{split}
\end{equation*} and this is the only term that is not necessarily upper bounded by $C_* \|\bbeta_*\|_2^2 (\|\bLambda_0\|_{op} \vee 1)$. Thus, we write \begin{equation*}
    \begin{split}
        \tilde {\sB}_{\ss, bias}(\bZ_\st) \leq \tilde {\sB}_{\ss, bias} := \|\bbeta_*\|_2^2 (\|\bLambda_0\|_{op} \vee 1) + \frac{1}{n_\st} \frac{ \mu_{\st,2} \Tr(\bLambda_0 (\bSigma + \mu_{\st,2})^{-1}) }{1 - \Upsilon(\mu_{\st,1}, \mu_{\st,2})} .
    \end{split}
\end{equation*}

\subsubsection{Terms depending on the teacher variance}

In this part, we show the deterministic equivalent of $ \cB_{\st,var}(\bZ_\st).$

From Theorem \ref{thm:deteq}, we immediately have \begin{equation*}
    \begin{split}
       | \cB_{\st,var}(\bZ_\st) - \tau_\st^2 \frac{\Tr(\bLambda_0 (\bF_\st^\top \bF_\st)^2(\bF_\st^\top \bF_\st + p_\st \mu_{\st,1})^{-2})}{n_\st - \Tr( (\bF_\st^\top \bF_\st)^2(\bF_\st^\top \bF_\st + p_\st \mu_{\st,1})^{-2})} | \leq C_* \cE_{n_\ss, p_\ss}\tau_\st^2 \frac{\Tr(\bLambda_0 (\bF_\st^\top \bF_\st)^2(\bF_\st^\top \bF_\st + p_\st \mu_{\st,1})^{-2})}{n_\st - \Tr( (\bF_\st^\top \bF_\st)^2(\bF_\st^\top \bF_\st + p_\st \mu_{\st,1})^{-2})} .
    \end{split}
\end{equation*}

Using \eqref{eq:deteq-Tr(AFF(FF+mu)^-1)}, we obtain that \begin{equation*}
    \begin{split}
        &\left|\tau_\st^2 \frac{\Tr(\bLambda_0 (\bF_\st^\top \bF_\st)^2(\bF_\st^\top \bF_\st + p_\st \mu_{\st,1})^{-2})}{n_\st - \Tr( (\bF_\st^\top \bF_\st)^2(\bF_\st^\top \bF_\st + p_\st \mu_{\st,1})^{-2})} - \tau_\st^2\frac{  \Upsilon(\bLambda_0;\mu_{\st,1}, \mu_{\st,2}) }{1 - \Upsilon(\mu_{\st,1}, \mu_{\st,2})}  \right| \\
        &\hspace{10em}\leq C_* \cE_{n_\st, p_\st} \frac{1}{n_\st} \frac{ \mu_{\st,2} \Tr(\bLambda_0 (\bSigma + \mu_{\st,2})^{-1}) }{1 - \Upsilon(\mu_{\st,1}, \mu_{\st,2})} ,
    \end{split}
\end{equation*}
which concludes the proof of Theorem \ref{thm:additive_err}.

\section{Basic deterministic equivalents}
\label{app:basic-det-equiv}

We consider a feature vector $\bx \in \R^d$, with $d \in \naturals \cup \{ \infty\}$. Denote by $\bSigma = \E[\bx \bx^\sT]$ the covariance of $\bx$ and by $ \xi_1^2 \geq \xi_2^2 \geq \xi_3^2 \geq \cdots$ the eigenvalues of $\bSigma$ in non-increasing order. When $d = \infty$, we assume that $\Tr(\bSigma) <\infty$.

We assume that the feature $\bx$ satisfies the following Hanson-Wright-type inequality:

\begin{assumption}[Feature concentration] \label{ass:concentrated}
    There exist $\sfc_x,\sfC_x >0$ such that for any p.s.d.~matrix $\bA \in \R^{d\times d}$ with $\Tr(\bSigma \bA) <\infty$, we have
\begin{equation}\label{eq:ass_concentrated_features}
        \P \left( \left| \bx^\sT \bA \bx - \Tr(\bSigma \bA) \right| \geq t \cdot \| \bSigma^{1/2} \bA \bSigma^{1/2} \|_F \right) \leq \sfC_x \exp \left\{ -\sfc_x t \right\} .
    \end{equation}
\end{assumption}


\begin{definition}[Intrinsic dimension]\label{def:int_dimension}
    For a covariance matrix $\bSigma \in \R^{d \times d}$ with eigenvalues in nonincreasing order $\xi_1^2 \geq \xi_2^2 \geq \xi_3^2 \geq \cdots$, we define the \emph{intrinsic dimension} $r_{\bSigma} (k)$ at level $k\in \naturals$ of $\bSigma$ to be the intrinsic dimension of the covariance matrix $\bSigma_{\geq k} = \diag ( \xi_k^2, \xi_{k+1}^2, \ldots)$, i.e., the covariance matrix projected orthogonally to the top $k-1$ eigenspaces, which is given by
        \[
        r_{\bSigma} (k) := \frac{\Tr( \bSigma_{\geq k})}{\| \bSigma_{\geq k} \|_{\rm op}} = \frac{\sum_{j =k}^d \xi^2_j}{\xi^2_{k}}.
        \]
\end{definition}

Let $\lambda_* >0$ be the unique  positive solution of the following self-consistency equation
\[
n - \frac{\lambda}{\lambda_*} = \Tr(\bSigma (\bSigma + \lambda_* )^{-1} ).
\] We further introduce $\mu_* = \frac{\lambda}{\lambda_*}$.

The rate of approximation will depend on the following quantity:
\[
 \rho_\lambda (k) \coloneqq 1 + \frac{k \cdot \xi^2_{ \lfloor \eta_*\cdot k \rfloor}}{\lambda} \left\{  1 + \frac{r_{\bSigma} ( \lfloor \eta_* \cdot k \rfloor) \vee k}{k} \log \left( r_{\bSigma} ( \lfloor \eta_* \cdot k \rfloor) \vee k \right) \right\},
\]
where $\eta_* \in (0,1/2)$ is a constant that will only depend on the constants in Assumption \ref{ass:concentrated}.

We will consider the following functionals of the feature matrix:
\begin{align}
    \Phi_1 ( \bX;\bA,\lambda) := &~ \Tr \left(\bA \bSigma^{1/2} (\bX^\sT \bX + \lambda )^{-1} \bSigma^{1/2} \right), \\
    \Phi_2 ( \bX;\bA,\lambda) := &~ \Tr\left( \bA \bSigma^{-1/2}\bX^\sT \bX (\bX^\sT \bX + \lambda )^{-1} \bSigma^{1/2} \right), \\
    \Phi_3 ( \bX ; \bA,\bB,\lambda) := &~ \Tr\left(\bA \bSigma^{1/2} (\bX^\sT \bX + \lambda )^{-1} \bSigma^{1/2} \bB  \bSigma^{1/2} (\bX^\sT \bX + \lambda )^{-1} \bSigma^{1/2}\right), \\
    \Phi_4 ( \bX ; \bA,\lambda ) :=&~ \Tr \left(\bA \bSigma^{1/2} (\bX^\sT \bX + \lambda )^{-1} \frac{\bX^\sT \bX}{n} (\bX^\sT \bX + \lambda )^{-1} \bSigma^{1/2}\right), \\
    \Phi_5 ( \bX ; \bA,\bB, \lambda ) :=&~ \Tr \left(\bA \bSigma^{-1/2} \bX^\sT \bX (\bX^\sT \bX + \lambda )^{-1}  \bSigma^{1/2}\bB \bSigma^{1/2}  (\bX^\sT \bX + \lambda )^{-1} \bX^\sT \bX \bSigma^{-1/2}\right). 
\end{align}

The deterministic equivalents of these functionals can be written in terms of:
\begin{align}
    \Psi_1 (\lambda_*;\bA) :=&~ \Tr \left(\bA \bSigma (\bSigma + \lambda_* )^{-1} \right),\\
    \Psi_2 ( \lambda_*;\bA,\bB) := &~  \frac{1}{n^2} \cdot  \Tr \left(\bA \bSigma (\bSigma + \lambda_* )^{-1} \bB \bSigma (\bSigma + \lambda_* )^{-1} \right) \\
    &~ + \frac{1}{n^2} \cdot \frac{\Tr \left(\bA \bSigma^2 (\bSigma + \lambda_* )^{-2}  \right)\Tr \left(\bB \bSigma^2 (\bSigma + \lambda_* )^{-2}  \right) }{n - \Tr( \bSigma^2 ( \bSigma +\lambda_*)^{-2}) }, \\
        \Psi_3(\lambda_*; \bA) :=&~ \Tr(\bA \bSigma (\bSigma + \lambda_*)^{-1} \bB \bSigma (\bSigma + \lambda_*)^{-1}) \\
        &~+  \frac{\lambda_*^2 \Tr(\bA \bSigma^{2}(\bSigma + \lambda_*)^{-2}) \Tr(\bB (\bSigma + \lambda_*)^{-2})}{n - \Tr(\bSigma^2 (\bSigma + \lambda_*)^{-2})}.
\end{align}
    


Note that when $\bB = \bI$, the second functional simplifies to
\[
   \Psi_2 ( \lambda_*;\bA,\bI) = \frac{1}{n} \cdot \frac{\Tr \left(\bA \bSigma^2 (\bSigma + \lambda_* )^{-2}  \right)}{ n -  \Tr( \bSigma^2 ( \bSigma +\lambda_*)^{-2}) }.
\]

We also define the following approximation functionals: \begin{equation*}
    \begin{split}
    \widetilde \Psi_1(\lambda_* ; \bv \bu^\top ) :=&~    \sqrt{ \|\bu\|_2^2 \bv^\top \bSigma^2 (\bSigma + \lambda_*)^{-2}\bv} \vee \sqrt{\bu^\top \bSigma (\bSigma + \lambda_*)^{-} \bu \bv^\top \bSigma (\bSigma + \lambda_*)^{-1}\bv}, \\
    \widetilde \Psi_2(\lambda_*; \bv \bu^\top, \bB) :=&~  \frac{1}{n^2} \cdot  \sqrt{\bu^\top \bSigma (\bSigma + \lambda_* )^{-1} \bu \bv^\top \bSigma (\bSigma + \lambda_* )^{-1} \bv}  \|  \bB^{1/2} \bSigma (\bSigma + \lambda_*)^{-1} \bB^{1/2} \|_{op} \\
    &~ + \frac{1}{n^2} \cdot \frac{\Tr \left(\bA \bSigma^2 (\bSigma + \lambda_* )^{-2}  \right)\Tr \left(\bB \bSigma^2 (\bSigma + \lambda_* )^{-2}  \right) }{n - \Tr( \bSigma^2 ( \bSigma +\lambda_*)^{-2}) } ,\\
     \widetilde \Psi_3(\lambda_*; \bv \bu^\sT, \bB) := &~ \|\bu\|_2 \|\bv\|_2 \|\bB\|_{op}. 
    \end{split}
\end{equation*}

\begin{theorem}
\citep[Theorem 6,8,9]{misiakiewicz2024non}
\label{thm:deteq}
    Assume the features $(\bx_i)_{i \in [n]}$ satisfy Assumption \ref{ass:concentrated} with some constants $\sfc_x,\sfC_x >0$. For any $D,K>0$, there exist constants $\eta := \eta_x \in (0,1/2)$ (only depending on $\sfc_x,\sfC_x$), $C_{D,K}>0$ (only depending on $D,K$), and $C_{x,K,D} >0$ (only depending on $\sfc_x,\sfC_x,D,K$) such that the following holds. For all $n \geq C_{D,K}$ and $\lambda>0$, if it holds that 
\begin{equation}\label{eq:conditions_det_equiv_main}
    \lambda \cdot \rho_\lambda (n) \geq \| \bSigma \|_{\rm op} \cdot n^{-K}, \qquad  \rho_\lambda (n)^{5/2} \log^{3/2} (n) \leq K \sqrt{n} ,
    \end{equation}
      then for any p.s.d.~matrices $\bA$, we have with probability at least $1 - n^{-D}$ that
          \begin{align} 
         \big\vert  \Phi_1(\bX;\bA,\lambda) - \frac{\lambda_*}{\lambda} \Psi_1 (\lambda_* ; \bA) \big\vert \leq&~ C_{x,D,K} \frac{ \rho_\lambda (n)^{5/2} \log^{3/2} (n) }{\sqrt{n}}    \cdot \frac{\lambda_*}{\lambda} \Psi_1 (\lambda_* ; \bA) , \label{eq:det_equiv_phi1_main} \\
         \big\vert \Phi_3(\bX;\bA,\bI,\lambda) -  \left( \frac{n\lambda_*}{\lambda}\right)^2\Psi_2 (\lambda_* ; \bA,\bI) \big\vert \leq&~ C_{x,D,K} \frac{\rho_\lambda (n)^{6} \log^{5/2} (n) }{\sqrt{n}} \left( \frac{n\lambda_*}{\lambda}\right)^2\Psi_2 (\lambda_* ; \bA,\bI) , \label{eq:det_equiv_phi3_main-bis} \\
         \big\vert \Phi_4(\bX;\bA,\lambda) - \Psi_2 (\lambda_* ; \bA,\bI) \big\vert \leq&~ C_{x,D,K} \frac{ \rho_\lambda (n)^{6} \log^{3/2} (n) }{\sqrt{n}} \Psi_2 (\lambda_* ; \bA,\bI) . \label{eq:det_equiv_phi4_main} 
    \end{align}
\end{theorem}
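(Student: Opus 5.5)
The statement immediately above is Theorem~\ref{thm:deteq}, cited from \citet{misiakiewicz2024non}. Since it is imported from that work, the plan is to reconstruct the standard leave-one-out / resolvent argument behind it rather than derive anything new.

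\emph{Step 1: the first-order functional $\Phi_1$.} Write $\bX^\sT\bX = \sum_i \bx_i\bx_i^\sT$ and set $\bR = (\bX^\sT\bX+\lambda)^{-1}$, with $\bR_{-i}$ its leave-one-out version. The identity $\bI = \lambda\bR + \sum_i \bx_i\bx_i^\sT\bR$, combined with Sherman--Morrison,
\[
\bx_i^\sT\bR = \frac{\bx_i^\sT\bR_{-i}}{1+\bx_i^\sT\bR_{-i}\bx_i},
\]
turns $\Phi_1$ into an expression involving the quadratic forms $\bx_i^\sT\bR_{-i}\bx_i$. By Assumption~\ref{ass:concentrated}, each quadratic form concentrates around $\Tr(\bSigma\bR_{-i})$. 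Replacing it by a common deterministic value $\kappa$ gives $\E\bR \approx (\lambda + n\bSigma/(1+\kappa))^{-1}$. Consistency then forces $\kappa$ to satisfy the fixed-point equation defining $\lambda_*$, with $\lambda_* = \lambda(1+\kappa)/n$, which yields \eqref{eq:det_equiv_phi1_main}.

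The fluctuation $\Phi_1 - \E\Phi_1$ will be controlled by a martingale (Efron--Stein type) bound over the rows. The multiplicative form of the error requires lower-bounding $\Psi_1$. Here I would follow their truncation device: split $\bSigma$ into the top $\lfloor\eta_* n\rfloor$ eigendirections and the tail, bound the tail operator norm through the intrinsic dimension $r_\bSigma$, and track the conditioning of $\bR$ via $\rho_\lambda(n)$. This is exactly where the factor $\rho_\lambda(n)^{5/2}\log^{3/2}n$ comes from.

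\emph{Step 2: the second-order functionals.} For $\Phi_3(\cdot;\bA,\bI)$ and $\Phi_4$, I would differentiate. The first is $-\partial_\lambda\Phi_1$ applied to $\bSigma^{1/2}\bA\bSigma^{1/2}$, and the second is a linear combination of $\Phi_1$ and its $\lambda$-derivative. To make this rigorous, I would not differentiate the approximation itself. Instead I would rerun the leave-one-out expansion on $\bR\bSigma\bR$, which produces a closed linear equation. Its solution carries the denominator $n-\Tr(\bSigma^2(\bSigma+\lambda_*)^{-2})$ appearing in $\Psi_2$; this is the derivative of the fixed-point map, $\partial\lambda_*/\partial\lambda$.

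\emph{Main obstacle.} The hardest part is the non-asymptotic multiplicative control, uniform over p.s.d.~$\bA$, in the dimension-free regime. The delicate piece is showing that the denominator $n-\Tr(\bSigma^2(\bSigma+\lambda_*)^{-2})$ stays of order $n/\rho_\lambda(n)$. I would handle this as in \citet{misiakiewicz2024non}: establish a lower bound on the smallest relevant eigenvalue of $\bX^\sT\bX$ restricted to the tail with high probability, then run a union bound over the $n$ leave-one-out events, which costs the polylogarithmic factors. Since the statement is directly cited, the paper can simply refer to Theorems 6, 8, and 9 there.
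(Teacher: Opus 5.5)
The paper gives no proof of this statement beyond citing Theorems~6, 8, 9 of \citet{misiakiewicz2024non}. Your Step~1 (Sherman--Morrison, concentration of $S_i=\bz_i^\sT\bM_{-i}\bz_i$ around $\kappa$, $\mu_*=n/(1+\kappa)=\lambda/\lambda_*$, truncated-martingale control of the fluctuations) is the architecture used there and reproduced in Appendix~\ref{sec:general_proof_deteq}.

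The gap is in Step~2. First, $\Phi_3(\bX;\bA,\bI,\lambda)=\Tr(\bA\bSigma^{1/2}\bR\bSigma\bR\bSigma^{1/2})$ has $\bSigma$ sandwiched between the two resolvents. By contrast, $-\partial_\lambda\Phi_1(\bX;\bA',\lambda)=\Tr(\bA'\bSigma^{1/2}\bR^2\bSigma^{1/2})$. Since $\bSigma$ and $\bR$ do not commute, no choice of $\bA'$ relates the two; the correct heuristic is $\bR\bSigma\bR=-\partial_t(\bX^\sT\bX+\lambda+t\bSigma)^{-1}|_{t=0}$. So your leave-one-out analysis of $\bR\bSigma\bR$ gives \eqref{eq:det_equiv_phi3_main-bis}, but not the $\lambda$-derivative you invoke for $\Phi_4$.

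Second, and more seriously, suppose you had a deterministic equivalent for $\Tr(\bA\bSigma^{1/2}\bR^2\bSigma^{1/2})$. The (correct) identity $n\Phi_4=\Phi_1-\lambda\Tr(\bA\bSigma^{1/2}\bR^2\bSigma^{1/2})$ still cannot give a multiplicative bound, because of cancellation. Let $\lambda\to\infty$; this is permitted, since \eqref{eq:conditions_det_equiv_main} only bounds $\lambda$ from below and $\rho_\lambda(n)\to 1$. Once $\lambda\ge n\|\bSigma\|_{\op}$, you have $\lambda_*\ge\lambda/n\ge\|\bSigma\|_{\op}$. Hence $\frac{\lambda_*}{\lambda}\Psi_1(\lambda_*;\bA)\ge\Tr(\bA\bSigma)/(2\lambda)$, while $n\Psi_2(\lambda_*;\bA,\bI)\le n\Tr(\bA\bSigma^2)/\lambda^2\le(n\|\bSigma\|_{\op}/\lambda)\,\Tr(\bA\bSigma)/\lambda$. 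A relative error of order $n^{-1/2}$ on $\Phi_1$ alone therefore exceeds the error allowed by \eqref{eq:det_equiv_phi4_main} by a factor of order $\lambda/(n\|\bSigma\|_{\op})$.

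The route that works is to apply Sherman--Morrison to both resolvents. With $\bz_i=\bSigma^{-1/2}\bx_i$ and $\bM=\bSigma^{1/2}\bR\bSigma^{1/2}$,
\[
\Phi_4(\bX;\bA,\lambda)=\frac1n\sum_{i=1}^n\bz_i^\sT\bM\bA\bM\bz_i=\frac1n\sum_{i=1}^n\frac{\bz_i^\sT\bM_{-i}\bA\bM_{-i}\bz_i}{(1+S_i)^2}\approx\frac{\Tr(\bA\bM_{-}^2)}{(1+\kappa)^2}.
\]
Since $\Tr(\bA\bM^2)=\Phi_3(\bX;\bA,\bI,\lambda)$ and $(1+\kappa)^{-2}\approx(\lambda/(n\lambda_*))^2$, \eqref{eq:det_equiv_phi4_main} follows from \eqref{eq:det_equiv_phi3_main-bis} with only relative errors. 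The same kind of reduction is mirrored in the third bound of Lemma~\ref{lem:deteq-zero}.

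Your ``main obstacle'' is also mislocated. The denominator is deterministic and is controlled by the fixed-point equation alone. Since $\bSigma^2(\bSigma+\lambda_*)^{-2}\preceq\bSigma(\bSigma+\lambda_*)^{-1}$,
\[
n-\Tr\big(\bSigma^2(\bSigma+\lambda_*)^{-2}\big)\ \ge\ n-\Tr\big(\bSigma(\bSigma+\lambda_*)^{-1}\big)=\frac{\lambda}{\lambda_*}=\frac{n}{1+\Tr(\overline{\bM})}\ \ge\ \frac{n}{2\rho_\lambda(n)}.
\]
The last step follows by splitting $\Tr(\overline{\bM})$ at $\lfloor\eta_* n\rfloor$ (cf.\ Lemma~\ref{lem:approx-mu}); no eigenvalue bound or union bound enters.

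The high-probability spectral input from the head/tail split is needed elsewhere. It gives $\|\bM\|_{\op}\lesssim\rho_\lambda(n)/n$ and moment comparisons of the type $\E[\Tr(\bA\bM_{-})^q]^{1/q}\lesssim\rho_\lambda(n)\Tr(\bA\overline{\bM})$. These make every leave-one-out and martingale error term proportional to $\Tr(\bA\overline{\bM})=\frac{\lambda_*}{\lambda}\Psi_1(\lambda_*;\bA)$. That, together with $\bA\succeq 0$, is what produces the multiplicative form; no lower bound on $\Psi_1$ is needed. The resulting guarantee holds for each fixed p.s.d.\ $\bA$ with $\bA$-independent constants, not simultaneously over all $\bA$.
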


The deterministic equivalents above assume that $\bA$ is a p.s.d.\ matrix. In next theorem, instead we take $\bA = \bv \bu^\top$, which is a non-p.s.d.\ rank-$1$ matrix. 

\begin{theorem}
    \label{thm:deteq-new}
    Under the same assumptions of Theorem \ref{thm:deteq}, let $\bA = \bv \bu^\top$ and $\bB$ be a p.s.d.\ matrix. Then, with probability at least $1 - n^{-D}$, we have \begin{align}
    \left|\Phi_1(\bX; \bA, \lambda) - \frac{\lambda_*}{\lambda}\Psi_1(\lambda_*; \bA) \right| \leq&~ C_* \frac{ \rho_\lambda(n)^{5/2} \log^{3/2}(n)}{\sqrt{n}} \frac{\lambda_*}{\lambda} \sqrt{ \Psi_1(\lambda_*;  \bu \bu^\top) \Psi_1(\lambda_*;  \bv \bv^\top) }, \\
    \big\vert \Phi_2(\bX;\bA,\lambda) - \Psi_1 (\lambda_*;\bA) \big\vert \leq&~ C_{x,D,K} \frac{ \rho_\lambda(n)^{5/2} \log^{3/2}(n)}{\sqrt{n} }    \tilde \Psi_1 (\lambda_*;\bv \bu^\top)   , \label{eq:det_equiv_phi2_main} \\
    \big\vert \Phi_3(\bX;\bA,\bB,\lambda) -  \left( \frac{n\lambda_*}{\lambda}\right)^2\Psi_2 (\lambda_* ; \bA,\bB) \big\vert \leq&~  C_* \frac{ \rho_\lambda^6(n) \log^{5/2}(n)}{\sqrt{n}}    \cdot \left( \frac{n\lambda_*}{\lambda}\right)^2  \widetilde \Psi_2 (\lambda_* ; \bA,\bB) , \label{eq:det_equiv_phi3_main} \\
        \big\vert \Phi_5(\bX;\bA,\bB,\lambda) - \Psi_3 (\lambda_* ; \bA,\bB) \big\vert \leq&~  C_* \frac{ \rho_\lambda^6(n) \log^{5/2}(n)}{\sqrt{n}} \tilde \Psi_3 (\lambda_* ; \bA,\bB)  . \label{eq:det_equiv_phi5_main}  
    \end{align}
\end{theorem}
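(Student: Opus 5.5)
The plan is to reduce each asymmetric (rank-one, non-p.s.d.) functional to symmetric p.s.d. functionals already controlled by Theorem \ref{thm:deteq}. The reduction is polarization. For $\bA = \bv\bu^\top$, each of $\Phi_1,\Phi_2,\Phi_3,\Phi_5$ is linear in $\bA$. Write
\[
\bv\bu^\top+\bu\bv^\top = \tfrac12\big[(\bu+\bv)(\bu+\bv)^\top-(\bu-\bv)(\bu-\bv)^\top\big].
\]
For the functionals that are symmetric under transposition of $\bA$ (namely $\Phi_1$, $\Phi_3$ and $\Phi_5$ with $\bB$ p.s.d.), the trace against $\bv\bu^\top$ equals the trace against its symmetrization. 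So it suffices to control them on $\bw\bw^\top$ with $\bw=t\bu\pm t^{-1}\bv$.

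The free scaling $t>0$ is what turns the additive errors into the geometric-mean form of the stated bounds. Applying the p.s.d. estimates \eqref{eq:det_equiv_phi1_main}, \eqref{eq:det_equiv_phi3_main-bis} (with a general $\bB$ handled as below) on the events for the two choices of sign, the error is at most
\[
C\,\epsilon_n\big[t^2\Psi(\bu\bu^\top)+t^{-2}\Psi(\bv\bv^\top)\big].
\]
Optimizing over $t$ gives $2\epsilon_n\sqrt{\Psi(\bu\bu^\top)\Psi(\bv\bv^\top)}$. Because $t$ is chosen deterministically, as a function of $\bu,\bv$ only, a union bound over two events suffices. This gives the $\Phi_1$ statement immediately.

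For $\Phi_2$, which is not transpose-symmetric, I would write
\[
\bX^\sT\bX(\bX^\sT\bX+\lambda)^{-1} = \bI-\lambda(\bX^\sT\bX+\lambda)^{-1}.
\]
Then $\Phi_2(\bX;\bv\bu^\top,\lambda)=\bu^\top\bv-\lambda\,\bu^\top\bSigma^{-1/2}(\bX^\sT\bX+\lambda)^{-1}\bSigma^{1/2}\bv$, and the last term is a bilinear form of the resolvent. This is treated like $\Phi_1$, but with the two vectors $\bSigma^{-1/2}\bu$ and $\bSigma^{1/2}\bv$ on opposite sides. Polarization of a non-symmetric bilinear form $\<\ba,\bR\bb\>$ with $\bR$ symmetric still works, since $\<\ba,\bR\bb\>=\tfrac14[\<\ba+\bb,\bR(\ba+\bb)\>-\<\ba-\bb,\bR(\ba-\bb)\>]$. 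The resulting error is the geometric mean of $\lambda_*\,\bu^\top(\bSigma+\lambda_*)^{-1}\bu$ and $\lambda_*\,\bv^\top\bSigma(\bSigma+\lambda_*)^{-1}\bv$-type quantities. A subtraction then matches the constant $\bu^\top\bv$ with $\Psi_1$. To produce exactly $\tilde\Psi_1$, I would additionally use $\lambda_*(\bSigma+\lambda_*)^{-1}\preceq\bI$ and bound the cross term by the two alternatives appearing in the maximum.

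For $\Phi_3$ and $\Phi_5$ with general p.s.d. $\bB$, Theorem \ref{thm:deteq} is only stated for $\bB=\bI$. So the step I expect to be the main obstacle is redoing the leave-one-out/martingale argument of \citet{misiakiewicz2024non} for general $\bB$. I would follow their proof of the $\Phi_3$ equivalent, replacing the middle $\bSigma$ by $\bSigma^{1/2}\bB\bSigma^{1/2}$. The Sherman–Morrison rank-one updates produce the same two terms, a deterministic-resolvent term and the $\Tr(\bA\cdot)\Tr(\bB\cdot)/(n-\Tr)$ correction. The error terms carry an extra factor $\|\bB\|_{op}$, or the operator norm in $\widetilde\Psi_2$, from bounding the middle matrix. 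For $\Phi_5$, I would expand $\bX^\sT\bX(\bX^\sT\bX+\lambda)^{-1}=\bI-\lambda(\bX^\sT\bX+\lambda)^{-1}$ on both sides. This gives $\Tr(\bA\bB)$ minus two $\Phi_2$-type cross terms plus $\lambda^2\Phi_3$. Plugging in the already-established equivalents and simplifying with the fixed-point identity $n-\lambda/\lambda_*=\Tr(\bSigma(\bSigma+\lambda_*)^{-1})$ should yield $\Psi_3$, with its $\lambda_*^2\Tr(\bB(\bSigma+\lambda_*)^{-2})$ correction.

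Finally, each error, after polarization, can be bounded crudely by $\|\bu\|_2\|\bv\|_2\|\bB\|_{op}$, which is $\widetilde\Psi_3$. Throughout, the constants and the rates $\rho_\lambda(n)^{a}\log^b(n)/\sqrt n$ are inherited from the p.s.d. case.
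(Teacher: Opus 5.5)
Your treatment of $\Phi_1$ is correct and takes a shorter route than the paper's. The paper re-runs the leave-one-out/martingale argument of \citet{misiakiewicz2024non} for $\bA=\bv\bu^\top$, using a Hanson--Wright bound for non-p.s.d.\ matrices (Lemma~\ref{lem:hanson-wright,nonPSD}). Your rescaled polarization $\bw_\pm=t\bu\pm t^{-1}\bv$ instead gets the same geometric-mean bound, at the same rate, directly from \eqref{eq:det_equiv_phi1_main}. For $\Phi_3$ your plan is the paper's: redo the argument for a general p.s.d.\ middle matrix $\bB$. Polarizing afterwards costs at most an extra factor of $\rho_\lambda(n)$.

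The gap is in $\Phi_2$ and $\Phi_5$. After writing $\bX^\top\bX(\bX^\top\bX+\lambda)^{-1}=\bI-\lambda(\bX^\top\bX+\lambda)^{-1}$, the resolvent bilinear form has vectors $\bSigma^{-1/2}\bu$ and $\bSigma^{1/2}\bv$. The p.s.d.\ estimate controls $\bw^\top(\bX^\top\bX+\lambda)^{-1}\bw$ only up to $\eps_n\frac{\lambda_*}{\lambda}\bw^\top(\bSigma+\lambda_*)^{-1}\bw$, where $\eps_n$ is the rate in \eqref{eq:det_equiv_phi1_main}. So what you actually obtain is
\[
\eps_n\sqrt{\lambda_*\,\bu^\top\bSigma^{-1}(\bSigma+\lambda_*)^{-1}\bu\;\cdot\;\lambda_*\,\bv^\top\bSigma(\bSigma+\lambda_*)^{-1}\bv},
\]
not the quantity you wrote: you dropped a $\bSigma^{-1}$ on the $\bu$ side. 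The first factor behaves like $\sum_k u_k^2/\xi_k^2$ over the tail $\xi_k^2\ll\lambda_*$, and can be infinite when $d=\infty$. So this is not bounded by $\tilde\Psi_1(\lambda_*;\bv\bu^\top)$, and neither the choice of $t$ nor $\lambda_*(\bSigma+\lambda_*)^{-1}\preceq\bI$ repairs it.

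Concretely, take $\bu=\bv=\boldsymbol{e}_k$ with $\xi_k^2\ll\lambda_*$. Here $\bSigma^{-1/2}\bu$ and $\bSigma^{1/2}\bv$ are parallel, so $t$ plays no role. Your error is $\eps_n\lambda_*/(\xi_k^2+\lambda_*)\approx\eps_n$, while $\tilde\Psi_1=\xi_k^2/(\xi_k^2+\lambda_*)\approx\xi_k^2/\lambda_*$. The identity writes a small quantity as a difference of two $O(1)$ terms, so the relative error on the resolvent term becomes an absolute error far above the target. The asymmetric term $\|\bu\|_2\sqrt{\bv^\top\bSigma^2(\bSigma+\lambda_*)^{-2}\bv}$ cannot come out of this route.

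$\Phi_5$ fails the same way. Your expansion produces cross terms of the form $\lambda\,\bu^\top\bSigma^{-1}\bM\bB\bv$ and the term $\lambda^2\Phi_3(\bX;\bSigma^{-1}\bv\bu^\top\bSigma^{-1},\bB,\lambda)$, with $\bM=\bSigma^{1/2}(\bX^\top\bX+\lambda)^{-1}\bSigma^{1/2}$. Their errors carry the same weight $\lambda_*\bu^\top\bSigma^{-1}(\bSigma+\lambda_*)^{-1}\bu$. So your closing claim that every error is $\lesssim\|\bu\|_2\|\bv\|_2\|\bB\|_{\rm op}$ is false. For example, take $\bu=\bv=\boldsymbol{e}_k$ with $\xi_k^2\ll\lambda_*$, and a unit-norm rank-one $\bB$ coupling $\boldsymbol{e}_k$ to a direction with $\xi_j^2\gtrsim\lambda_*$. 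Then the cross-term error is of order $\eps_n(\lambda_*/\xi_k^2)^{1/2}$, whereas $\widetilde\Psi_3=1$.

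The missing idea is the paper's direct treatment of $\bZ^\top\bZ\bM$ (with $\bZ=\bX\bSigma^{-1/2}$), in which $\bu$ enters only through the scalars $\bu^\top\bz_i$ and is never multiplied by $\bSigma^{-1}$. By exchangeability and Sherman--Morrison,
\[
\E[\bu^\top\bZ^\top\bZ\bM\bv]=n\,\E\Big[\frac{\bu^\top\bz_1\,\bz_1^\top\bM_-\bv}{1+S_1}\Big],\qquad S_1=\bz_1^\top\bM_-\bz_1 .
\]
Replacing $S_1$ by $\kappa=\E\Tr(\bM_-)$ gives the main part $\frac{n}{1+\kappa}\E[\bu^\top\bM_-\bv]$, a $\Phi_1$ bilinear form where your polarization can be reused. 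The remainder is bounded via $\E[(\bu^\top\bz_1\,\bz_1^\top\bM_-\bv)^2]^{1/2}\lesssim\|\bu\|_2\|\bM_-\bv\|_2$, which is exactly where the first alternative in $\tilde\Psi_1$ comes from; a martingale step then handles the fluctuations. $\Phi_5$ is handled the same way. One splits $\E[\bu^\top\bZ^\top\bZ\bM\bB\bM\bZ^\top\bZ\bv]$ into the diagonal part $n\E[\bu^\top\bz_1\bz_1^\top\bM\bB\bM\bz_1\bz_1^\top\bv]$ and the off-diagonal part $n(n-1)\E[\bu^\top\bz_1\bz_1^\top\bM\bB\bM\bz_2\bz_2^\top\bv]$, and applies Sherman--Morrison to each.
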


\begin{lemma}
\label{lem:deteq-zero}
Under the same assumptions of Theorem \ref{thm:deteq-new}, consider a deterministic vector $\bv$ and a random vector $\bu$ with i.i.d.\ entries $u_i$ such that $\E[u_i] = 0, \E[u_i^2] =1,$ and $\E[\bx_i u_i ] = 0.$ Then, with probability at least $1 - n^{-D},$ we have \begin{align}
    & |\< \bv, (\bX^\top \bX + \lambda)^{-1} \bX^\top \bu \>  | \leq  C_* \frac{\log(n) \rho_\lambda(n)}{\sqrt{n}} \|\bSigma^{-\frac{1}{2}}  \bv\|_2 \label{eq:deteq-zero-1} \\
    & |\< \bv, \bX^\top \bX (\bX^\top \bX + \lambda)^{-1}  \bB (\bX^\top \bX + \lambda)^{-1} \bX^\top \bu \>  | \leq \frac{C_* \log(n) \rho_\lambda^2(n) }{n} \|\bSigma^{-\frac{1}{2}} \bB \bSigma^{-\frac{1}{2}}\|_{op} \|\bSigma^{\frac{1}{2}} \bv\|_2\label{eq:deteq-zero-2}\\ 
    & |\< \bu, \bX (\bX^\top \bX + \lambda)^{-1}  \bB (\bX^\top \bX + \lambda)^{-1} \bX^\top \bu \> -  n\Psi_2(\lambda_*; \bSigma^{-1/2} \bB \bSigma^{-1/2}) | \leq C_* \frac{ \log(n) \rho^6_\lambda(n) }{\sqrt{n}} \|\bSigma^{-1/2} \bB \bSigma^{-1/2} \|_{op} .\label{eq:deteq-zero-3}
\end{align}
\end{lemma}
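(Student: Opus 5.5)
All three bounds in Lemma~\ref{lem:deteq-zero} follow by conditioning on $\bX$. Since $\bu$ has i.i.d.\ centered unit-variance entries that are uncorrelated with the features, I treat $\bu$ as an independent noise vector given $\bX$. Each quantity is then linear or quadratic in $\bu$, with a coefficient vector or matrix that depends only on $\bX$.

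\textbf{Step 1: the linear forms \eqref{eq:deteq-zero-1} and \eqref{eq:deteq-zero-2}.} I write each as $\<\bw,\bu\>$, with
\[
\bw=\bX(\bX^\top\bX+\lambda)^{-1}\bv
\quad\text{and}\quad
\bw=\bX(\bX^\top\bX+\lambda)^{-1}\bB(\bX^\top\bX+\lambda)^{-1}\bX^\top\bX\bv,
\]
respectively. Conditionally on $\bX$, a sub-Gaussian/Hoeffding-type tail bound for $\<\bw,\bu\>$, obtained from the concentration assumption applied to rank-one $\bA$, gives
\[
|\<\bw,\bu\>|\le C\log(n)\,\|\bw\|_2
\]
with probability $1-n^{-D}$. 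It then remains to bound $\|\bw\|_2^2$, which is a quadratic functional of the resolvent.

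For \eqref{eq:deteq-zero-1}, $\|\bw\|_2^2=\bv^\top(\bX^\top\bX+\lambda)^{-1}\bX^\top\bX(\bX^\top\bX+\lambda)^{-1}\bv$. This is $n\,\Phi_4$ evaluated at $\bA=\bSigma^{-1/2}\bv\bv^\top\bSigma^{-1/2}$. Theorem~\ref{thm:deteq} gives $\Phi_4\approx\Psi_2(\lambda_*;\bA,\bI)$. I then bound
\[
n\Psi_2\lesssim \frac{\rho_\lambda(n)^2}{n}\,\|\bSigma^{-1/2}\bv\|_2^2,
\]
using $\Tr(\bA\bSigma^2(\bSigma+\lambda_*)^{-2})\le\|\bSigma^{-1/2}\bv\|^2$ together with the lower bound $n-\Tr(\bSigma^2(\bSigma+\lambda_*)^{-2})\gtrsim n/\rho_\lambda(n)$ from \citet{misiakiewicz2024non}.

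For \eqref{eq:deteq-zero-2}, I sandwich $\bB\preceq\|\bSigma^{-1/2}\bB\bSigma^{-1/2}\|_{\op}\,\bSigma$. This gives
\[
\|\bw\|_2^2\le \|\bSigma^{-1/2}\bB\bSigma^{-1/2}\|_{\op}^2\;\big\|\bX(\bX^\top\bX+\lambda)^{-1}\bSigma^{1/2}\big\|_{\op}^4\,\|\bSigma^{1/2}\bv\|_2^2 .
\]
I control $\|\bSigma^{1/2}(\bX^\top\bX+\lambda)^{-1}\bX^\top\|_{\op}^2\lesssim\rho_\lambda(n)/n$ via the operator-norm bounds on $\bSigma^{1/2}(\bX^\top\bX+\lambda)^{-1}\bSigma^{1/2}$ used in the proof of Theorem~\ref{thm:deteq}. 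Note that the cruder bound $\|\bX(\bX^\top\bX+\lambda)^{-1}\bX^\top\|_{\op}\le 1$ absorbs the $\bX^\top\bX$ factor.

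\textbf{Step 2: the quadratic form \eqref{eq:deteq-zero-3}.} I set $\bM=\bX(\bX^\top\bX+\lambda)^{-1}\bB(\bX^\top\bX+\lambda)^{-1}\bX^\top$. A Hanson--Wright bound conditional on $\bX$ gives
\[
|\bu^\top\bM\bu-\Tr\bM|\lesssim \log(n)\,\|\bM\|_F .
\]
Since $\bM\succeq0$, $\|\bM\|_F\le\sqrt{\|\bM\|_{\op}\Tr\bM}$. I bound $\|\bM\|_{\op}\lesssim\|\bSigma^{-1/2}\bB\bSigma^{-1/2}\|_{\op}\rho_\lambda/n$ as in Step 1, and $\Tr\bM=O(\rho_\lambda\|\cdot\|_{\op})$, which yields the $n^{-1/2}$ rate. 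Next, $\Tr\bM=n\,\Phi_4(\bX;\bSigma^{-1/2}\bB\bSigma^{-1/2},\lambda)$, so Theorem~\ref{thm:deteq} \eqref{eq:det_equiv_phi4_main} gives $\Tr\bM\approx n\Psi_2(\lambda_*;\bSigma^{-1/2}\bB\bSigma^{-1/2})$, with relative error $\rho_\lambda^6\log^{3/2}(n)/\sqrt n$. I convert this relative error into the stated additive form using $n\Psi_2\lesssim\rho_\lambda\|\bSigma^{-1/2}\bB\bSigma^{-1/2}\|_{\op}$.

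\textbf{Main obstacle.} The delicate point is the \emph{conditioning} step. $\bu$ is only assumed uncorrelated with the features, not independent of them, so conditioning on $\bX$ is not literally available. In the applications, $\bu$ is the residual $\bR\bbeta_\perp$ (or the noise $\beps$). Here, for Gaussian-type or concentrated features, the projected component is independent of the in-span features after rotating so that $\bX$ is expressed in the $\bF$-span. I would make this rigorous by assuming, as in these applications, that $\bu$ is independent of $\bX$ with sub-Gaussian entries. The remaining work, the operator-norm resolvent bounds with the $\rho_\lambda$ factors, is borrowed from \citet{misiakiewicz2024non,defilippis2024dimension}.
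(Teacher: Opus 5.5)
\textbf{Independence is not available, and conditioning cannot replace it.} The lemma assumes only $\E[\bx_i u_i]=0$. That is exactly the situation for the residual vectors $\bR\bbeta_\perp$ to which it is applied in Appendix~\ref{app:proof-det-equiv-student}. There $u_i\propto\<\bg_i,\proj_{\perp,\bF}\bbeta\>$ and $\bx_i\propto\bF\bg_i$, which are uncorrelated because $\bF\proj_{\perp,\bF}=0$. They are independent only for Gaussian $\bg$; e.g.\ for $\bg$ with independent Rademacher coordinates they are dependent. So your rotation argument does not cover the features allowed by Assumption~\ref{ass:concentration-features}, and adding independence proves a strictly weaker lemma.

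Conditioning on $\bX$ does not rescue this. Given $\bX$, the $u_i$ have conditional means $m(\bx_i)=\E[u_i\mid\bx_i]$, generally nonzero. The conditional bias $\bv^\top(\bX^\top\bX+\lambda)^{-1}\bX^\top(m(\bx_i))_{i\le n}$ is again an instance of \eqref{eq:deteq-zero-1}, since $\E[\bx_i m(\bx_i)]=0$. Your Hoeffding/Hanson--Wright step only controls fluctuations around this bias (or around a conditional mean that is no longer $\Tr$ of the matrix, in \eqref{eq:deteq-zero-3}).

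The missing idea is the paper's: do not condition at all.
\begin{itemize}
\item Run the Doob martingale over the i.i.d.\ pairs $(\bx_i,u_i)$ to control the fluctuations.
\item Compute the mean by Sherman--Morrison with the leave-one-out resolvent $\bM_{-i}$. This gives $\E\<\bv,(\bX^\top\bX+\lambda)^{-1}\bX^\top\bu\>=n\,\E\big[\tilde\bv^\top\bM_{-i}\bz_iu_i/(1+\bz_i^\top\bM_{-i}\bz_i)\big]$ with $\tilde\bv=\bSigma^{-1/2}\bv$.
\item Replacing the denominator by $1+\Tr(\bM_{-i})$ costs $O(\rho_\lambda(n)^2/\sqrt n)\,\|\tilde\bv\|_2$, by concentration of $\bz_i^\top\bM_{-i}\bz_i$.
\item The remaining term vanishes exactly, because $\bM_{-i}$ is independent of $(\bz_i,u_i)$ and $\E[\bz_iu_i]=0$.
\end{itemize}
The diagonal terms $\E[u_i^2\,\bz_i^\top\bQ\bz_i]$ in \eqref{eq:deteq-zero-3} are handled the same way, via Cauchy--Schwarz and Hanson--Wright in $\bz_i$ alone, with no independence needed.

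\textbf{Your bound for \eqref{eq:deteq-zero-2} is wrong, and the stated $1/n$ rate is unattainable.} Write $\tilde\bB=\bSigma^{-1/2}\bB\bSigma^{-1/2}$. Your vector factors as $\bw=[\bX(\bX^\top\bX+\lambda)^{-1}\bSigma^{1/2}]\,\tilde\bB\,[\bSigma^{1/2}(\bX^\top\bX+\lambda)^{-1}\bX^\top]\,\bX\bv$. The sandwich therefore gives only $\|\bw\|_2\lesssim\frac{\rho_\lambda(n)}{n}\|\tilde\bB\|_{\op}\|\bX\bv\|_2$, and $\|\bX\bv\|_2\approx\sqrt n\,\|\bSigma^{1/2}\bv\|_2$. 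You dropped a factor $\sqrt n$. The contraction $\|\bX(\bX^\top\bX+\lambda)^{-1}\bX^\top\|_{\op}\le 1$ cannot absorb $\bX^\top\bX$ here, because $\bB$ sits between the two resolvents.

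This is not slack in your estimate. Take $\bSigma=\bI_d$ with $d$ fixed and $n\to\infty$. Then $\bX^\top\bX\approx n\bI$, and the left-hand side of \eqref{eq:deteq-zero-2} is $\approx n^{-1}\<\bB\bv,\bX^\top\bu\>$, which is of order $n^{-1/2}\|\bB\bv\|_2$ already for independent noise. So no argument can give the $1/n$ rate. The paper's own proof of this item in fact only delivers a bound of order $n^{-1/2}$ (up to $\log n$ and powers of $\rho_\lambda(n)$), from its martingale part. That is all the applications in Appendix~\ref{app:proof-det-equiv-student} use, since their error rates are of order $n^{-1/2}$.

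With the $\sqrt n$ restored, your Step~1 gives this correct rate under independence. Also under independence (e.g.\ Gaussian features, or the label noise $\beps$), your treatment of \eqref{eq:deteq-zero-1} and \eqref{eq:deteq-zero-3} is correct up to powers of $\rho_\lambda(n)$ and $\log n$.
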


The proof of Lemma \ref{lem:deteq-zero} is deferred to Appendix \ref{sec:pf-deteq-zero}.

\section{Proof of Theorem \ref{thm:deteq-new}: Proof of new deterministic equivalent}
\label{app:proof-new-det-equiv}

We introduce the same notations as in \citep[Appendix A]{misiakiewicz2024non}. We first define \begin{equation*}
    \begin{split}
        &\bM = \bSigma^{1/2} (\bX^\top \bX + \lambda)^{-1} \bSigma^{1/2}, \quad \overline{\bM} =  \bSigma^{1/2} (\mu_* \bSigma + \lambda)^{-1} \bSigma^{1/2}. 
    \end{split}
\end{equation*}
Then, we define some notation for the leave-one-out arguments. Recalling  $\bX  = [\bx_1, \dots, \bx_n]^\top \in \R^{n \times d}$, we define  $\bX_i = [\bx_1, \dots, \bx_{i-1}, \bx_{i+1}, \dots,  \bx_n]^\top$ and \begin{equation*}
    \begin{split}
        \bM_{-i} = \bSigma^{1/2} (\bX_i^\top \bX_i + \lambda)^{-1} \bSigma^{1/2}, \quad \overline{\bM} =  \bSigma^{1/2} \left(\frac{n}{1+\kappa} \bSigma + \lambda\right)^{-1} \bSigma^{1/2}, \quad \kappa = \E[\Tr(\bM_{-i})].
    \end{split}
\end{equation*} When the subscript $i$ is not important, we write $\bM_{-i}$ as $\bM_-$ for short.

\subsection{General procedure of proving deterministic equivalent}
\label{sec:general_proof_deteq}

We follow the same approach  as in \citep[Theorem 6]{misiakiewicz2024non}. We take the proof of $\Phi_1(\bX; \bu \bv^\top, \lambda)$ as an illustrative example for the proof techniques, and the argument for the other functionals is analogous.

We first decompose the difference between the functional and its deterministic equivalent into two parts: \begin{equation}
    \label{eq:deteq-phi1-nonpsd-decomp}
        \begin{split}
            |\Phi_1(\bX; \bu \bv^\top, \lambda) &- \Psi_1(\lambda_*; \bu \bv^\top) | \\
            \leq & \underbrace{|\Phi_1(\bX; \bu \bv^\top, \lambda) - \E_{\bX}[\Phi_1(\bX; \bu \bv^\top, \lambda)] |}_{\text{Martingale part}}  + \underbrace{| \E_{\bX}[\Phi_1(\bX; \bu \bv^\top, \lambda)] -  \Psi_1(\lambda_*; \bu \bv^\top) |}_{\text{Deterministic part}}.
        \end{split}
    \end{equation}

Then, we bound the two parts separately. For the deterministic part, we directly prove \begin{equation*}
    \begin{split}
        | \E_{\bX}[\Phi_1(\bX; \bu \bv^\top, \lambda)] -  \Psi_1(\lambda_*; \bu \bv^\top) | \leq C_* \cE_{n} \tilde \Psi_1(\lambda_*; \bu \bv^\top),
    \end{split}
\end{equation*} with $\cE_n = \tilde {\mathcal{O}}_n\left( \frac{1}{\sqrt{n}}\right).$

For the martingale part, we decompose the difference into a martingale difference sequence, and show high probability bounds for the martingale difference term, as described in \citep[Proof of Proposition 3]{misiakiewicz2024non}. We briefly summarize the steps for completeness. In particular, we define \begin{equation*}
    \begin{split}
        S_n := \Phi_1(\bX; \bu \bv^\top, \lambda) - \E_{\bX}[\Phi_1(\bX; \bu \bv^\top, \lambda)]  = \sum_{i=1}^n (\E_i - \E_{i-1})\Phi_1(\bX; \bu \bv^\top, \lambda) = \sum_{i=1}^n \Delta_i,
    \end{split}
\end{equation*} where $\E_i$ denote the partial expectation w.r.t.\ $\{\bx_{i+1}, \dots \bx_n\}.$ For any constant $R,$ we define the truncated martingale sequence and the remainder as \begin{equation*}
    \begin{split}
        &\tilde S_n : = \sum_{i=1}^n \Delta_i \ones_{\Delta_i \in [-R, R]} - \E_{i-1}[\Delta_i \ones_{\Delta_i \in [-R, R]}], \\
        & R_n:= S_n - \tilde S_n = \sum_{i=1}^n \Delta_i \ones_{ \Delta_i \notin [-R, R]} - \E_{i-1}[\Delta_i \ones_{ \Delta_i \notin [-R, R]}].
    \end{split}
\end{equation*} By Azuma-Hoeffding inequality, we have \begin{equation*}
    \begin{split}
        \Pr[|\tilde S_n| \geq t] \leq 2 \exp\left( - \frac{t^2}{2 n R^2}\right).
    \end{split}
\end{equation*}

Thus, there exists a constant $C_*$ such that $|\tilde S_n| \leq C_* \sqrt{n \log(n)} R$ with probability at least $1 - n^{-D}.$ Next, we show that, with probability at least $1 - n^{D}$, \begin{equation*}
    \begin{split}
        |\Delta_i| =\mathcal O\left(\frac{1}{n}\right),\quad \E_{i-1}[\Delta_i ^2]^{1/2} = \mathcal O\left(\frac{1}{n}\right),
    \end{split}
\end{equation*} which implies $R_n = 0$ with probability at least $1 - n^{-D},$ and finishes the proof.

\subsection{Deterministic equivalent of $\Tr(\bA \bM)$}



\subsubsection{   Bound the Deterministic part} 
\label{sec:AM-det}

Consider $\bA = \bu \bv^\sT$. 
By following the same steps as in \citep[Proof of Proposition 2]{misiakiewicz2024non}, we obtain that 
    \begin{equation}
    \label{eq:deteq-phi1-nonpsd-det-decomp}
        \begin{split}
            | \E_{\bX}[\Tr(\bA \bM)] -  \Tr(\bA \overline{\bM}) | &= |\Tr(\bA ( \E[\bM] - \overline{\bM}))| \\
            & \leq |\Tr(\bA ( \E[\bM] - \overline{\bM}_-))| +|\Tr(\bA ( \overline{\bM}_- - \overline{\bM}))|.  
        \end{split}
    \end{equation}

    \emph{Bounding the term $|\Tr(\bA ( \E[\bM] - \overline{\bM}_-))|$.} 
    We decompose the term as  
    \begin{equation*}
        \begin{split}
            |\E[ \Tr(\bA ( \bM - \overline{\bM}_{-})) ]| \leq n |\E[\Tr(\bA \Delta_2)] | + n |\E[\Tr(\bA \Delta_3)]|,
        \end{split}
    \end{equation*} where \begin{equation*}
        \begin{split}
            \Delta_2 &:= \frac{\bM_- \bz \bz^\top \overline{\bM}_-}{(1+\kappa)( 1+ \bz^\top \bM_- \bz)} (\bz^\top \bM_- \bz - \kappa), \\
            \Delta_3 &:= -\frac{\bM_- \bz \bz^\top \bM_-\overline{\bM}_-}{ 1+ \bz^\top \bM_- \bz} (\bz^\top \bM_- \bz - \kappa).
        \end{split}
    \end{equation*}

    To bound $ n |\E[\Tr(\bA \Delta_2)] |,$ we bound the numerator as \begin{equation*}
        \begin{split}
            &\E[|\bz^\top \overline{\bM}_- \bA \bM_- \bz (\bz^\top \bM_- \bz - \kappa) |] \leq  \E_\bz[(\bu^\top \overline{\bM}_- \bz)^3]^{1/3} \E_{\bM_-} \left[ \E_\bz\left[ (\bv^\top \bM_- \bz)^3 \right]^\frac{2}{3} \right]^{\frac{1}{2}} \E_{\bM_-} \left[ \E_\bz\left[ (\bz^\top \bM_- \bz - \kappa)^3 \right]^\frac{2}{3} \right]^{\frac{1}{2}} .            
         \end{split}
    \end{equation*} We can bound each term using \citep[Lemma 2, Lemma 4]{misiakiewicz2024non}  as in \citep[Proof of Proposition 2]{misiakiewicz2024non}: \begin{equation*}
        \begin{split}
            &\E_\bz[(\bu^\top \overline{\bM}_- \bz)^3]^{1/3} \leq C_* \sqrt{\frac{(1+ \kappa) \rho_\lambda(n)}{n} \Tr(\bu \bu^\top \overline{\bM})}, \\
            &\E_{\bM_-} \left[ \E_\bz\left[ (\bv^\top \bM_- \bz)^3 \right]^\frac{2}{3} \right]^{\frac{1}{2}} \leq C_* \frac{\rho_\lambda(n)}{\sqrt{n}} \sqrt{\Tr(\bv \bv^\top \overline{\bM})}, \\
            & \E_{\bM_-} \left[ \E_\bz\left[ (\bz^\top \bM_- \bz - \kappa)^3 \right]^\frac{2}{3} \right]^{\frac{1}{2}} \leq C_* \frac{ \rho_\lambda(n)}{\sqrt{n}}.
        \end{split}
    \end{equation*}

    Combining all the terms, we have that \begin{equation*}
        \begin{split}
             n |\E[\Tr(\bA \Delta_2)] | \leq C_* \frac{ \rho_\lambda(n)^{\frac{5}{2}}}{\sqrt{n}} \sqrt{\Tr(\bu \bu^\top \overline{\bM}) \Tr(\bv \bv^\top \overline{\bM})}.
        \end{split}
    \end{equation*}

    To bound $n |\E[\Tr(\bA \Delta_3)]|,$ we follow the procedure in \citep[eq (79)]{misiakiewicz2024non}: \begin{equation*}
        \begin{split}
            n |\E[\Tr(\bA \Delta_3)]| \leq & n \E[ (\bv^\top \bM_- \bz)^2]^{1/2} \E[ (\bu^\top \overline{\bM}_- \bM_- \bz)^2] ^{1/2} \\
            \leq & C_* \frac{\rho_\lambda(n)^3}{n} \sqrt{\Tr(\bu \bu^\top \overline{\bM}) \Tr(\bv \bv^\top \overline{\bM})}
        \end{split}.
    \end{equation*}

    Combining the two above display, we have \begin{equation}
    \label{eq:deteq-phi1-nonpsd-det-term1}
        \begin{split}
             |\E[ \Tr(\bA ( \bM - \overline{\bM}_{-})) ]| \leq C_* \frac{ \rho_\lambda(n)^{5/2}}{\sqrt{n}} \sqrt{\Tr(\bu \bu^\top \overline{\bM}) \Tr(\bv \bv^\top \overline{\bM})}.
        \end{split}
    \end{equation}

    \emph{Bounding the term $ |\Tr(\bA (\overline{\bM}_- - \overline{\bM}))|$.} We note that \begin{equation*}
        \begin{split}
            |\Tr(\bA (\overline{\bM}_- - \overline{\bM}))| =& n |\Tr(\bA \overline{\bM}_- \bSigma \overline{\bM} )| \cdot \frac{|\kappa - \Tr(\overline{\bM})|}{(1 + \kappa) (1 + \Tr(\overline{\bM}))} \\ 
            \leq & n \sqrt{ \Tr(\bu \bu^\top \overline{\bM}_- \bSigma \overline{\bM}) \Tr(\bv \bv^\top \overline{\bM}_- \bSigma \overline{\bM})} \cdot \frac{|\kappa - \Tr(\overline{\bM})|}{(1 + \kappa) (1 + \Tr(\overline{\bM}))} \\ 
            \leq &  \sqrt{ \Tr(\bu \bu^\top  \overline{\bM}) \Tr(\bv \bv^\top  \overline{\bM})} \cdot \frac{|\kappa - \Tr(\overline{\bM})|}{ 1 + \Tr(\overline{\bM})},
        \end{split}
    \end{equation*} where  in the first inequality we use Cauchy-Schwartz inequality and that $\overline{\bM}_- ,\bSigma ,\overline{\bM}$ commute, and in the second inequality we use that  $\|  \overline{\bM}_- \bSigma\|_{op} \leq \frac{1 + \kappa }{n}.$

    Then we follow the same steps as  in \citep[Proof of Proposition 2]{misiakiewicz2024non}, and obtain that \begin{equation}
    \label{eq:deteq-phi1-nonpsd-det-term2}
        \begin{split}
            |\Tr(\bA (\overline{\bM}_- - \overline{\bM}))| \leq C_* \frac{ \rho_\lambda(n)^{5/2}}{\sqrt{n}}  \sqrt{ \Tr(\bu \bu^\top  \overline{\bM}) \Tr(\bv \bv^\top  \overline{\bM})}.
        \end{split}
    \end{equation} 

    Combining \eqref{eq:deteq-phi1-nonpsd-det-decomp}, \eqref{eq:deteq-phi1-nonpsd-det-term1}, \eqref{eq:deteq-phi1-nonpsd-det-term2} we have \begin{equation}
    \label{eq:deteq-phi1-nonpsd-det-main}
        | \E_{\bX}[\Tr(\bu \bv^\top \bM)] - \Tr(\bu \bv^\top \overline{\bM})    | \leq C_* \frac{ \rho_\lambda(n)^{5/2}}{\sqrt{n}}  \sqrt{ \Tr(\bu \bu^\top  \overline{\bM}) \Tr(\bv \bv^\top  \overline{\bM})}.
    \end{equation}

    \subsubsection{ Bounding the martingale part} 
    \label{sec:AM-mat}
    Following the proof in \citep[Proof of Proposition 3]{misiakiewicz2024non}, we decompose the martingale parts as follows: \begin{equation*}
        \begin{split}
            \Tr(\bu \bv^\top \bM) - \E_{\bX}[ \Tr(\bu \bv^\top \bM)] = \sum_{i=1}^n \Delta_i,
         \end{split}
    \end{equation*} where \begin{equation*}
        \Delta_i = (\E_i - \E_{i-1}) \Tr(\bA (\bM - \bM_{-i})).
    \end{equation*}

    \emph{Bounding $|\Delta_i|$ with high probability.} We decompose $\Tr(\bA (\bM - \bM_{-i}))$ in the same way as \citep[eq (86)]{misiakiewicz2024non}, and we get \begin{equation*}
        \begin{split}
            \Tr(\bA (\bM - \bM_{-i})) = \frac{\bz_i^\top \bM_{-i} \bA \bM_{-i} \bz_i}{1 + \Tr(\bM_{-i})} - \frac{\bz_i^\top \bM_{-i} \bA \bM_{-i} \bz_i(\bz_i \bM_{-i} \bz_i - \Tr(\bM_{-i}))}{(1+ \Tr(\bM_{-i}))(1 + \bz_i^\top \bM_{-i} \bz_i)}.
        \end{split}
    \end{equation*}

    For the first term, we have \begin{equation*}
    \begin{split}
        (\E_{i} - \E_{i-1}) \bz_i^\top \bM_{-i} \bA \bM_{-i} \bz_i &= \E_{i} [ \bz_i^\top \bM_{-i} \bA \bM_{-i} \bz_i - \E_{\bz_i}[ \bz_i^\top \bM_{-i} \bA \bM_{-i} \bz_i ]]\\ 
        & = \E_{i} [ \bz_i^\top \bM_{-i} \bA \bM_{-i} \bz_i - \Tr( \bM_{-i} \bA \bM_{-i}) ].
    \end{split}
    \end{equation*}

    By Lemma \ref{lem:hanson-wright,nonPSD}, we have that, with probability at least $1- n^{-D}$, \begin{equation*}
    \begin{split}
        \E_i[ |\bz_i^\top \bM_{-i} \bA \bM_{-i} \bz_i - \Tr( \bM_{-i} \bA \bM_{-i}) |] \leq& C_* \log(n) \varphi_1(p) \E_i[ \|\bM_{-i} \left(\frac{ \bA + \bA^\top}{2} \right) \bM_{-i})\|_F] \\
        \leq &  C_* \log(n) \varphi_1(d) \E_i[ (\|\bM_{-i} \bu \bv^\top\bM_{-i}\|_F ])]  .
    \end{split}
    \end{equation*}

    Then,  we write \begin{equation*}
        \begin{split}
            \E_i[ (\|\bM_{-i} \bu \bv^\top\bM_{-i})\|_F]  \leq &  \E_i[ (\|\bM_{-i} \bu \bv^\top\bM_{-i})\|_F^2]^{\frac{1}{2}} \\ 
            \leq & \E_i[ \Tr(\bu \bu^\top \bM_{-i}^2) \Tr(\bv \bv^\top \bM_{-i}^2)]^{\frac{1}{2}} \\
            \leq & \E_i[ \Tr(\bu \bu^\top \bM_{-i}^2)^2]^{\frac{1}{4}} \E_i[  \Tr(\bv \bv^\top \bM_{-i}^2)^2]^{\frac{1}{4}} \\
            \leq & C_* \frac{\rho_\lambda(n)^2}{n} \sqrt{\Tr(\bu \bu^\top \overline{\bM}) \Tr(\bv \bv^\top \overline{\bM}) },
        \end{split}
    \end{equation*} where in the last step we use \citep[Lemma 4.(b)]{misiakiewicz2024non}, and the same bound holds for $\|\bM_{-i} \bv \bu^\top\bM_{-i}\|_F.$ Thus we obtain for the first term that \begin{equation*}
        \begin{split}
            (\E_{i} - \E_{i-1}) \bz_i^\top \bM_{-i} \bA \bM_{-i} \bz_i \leq C_* \frac{\log(n) \rho_\lambda(n)^2}{n} \sqrt{\Tr(\bu \bu^\top \overline{\bM}) \Tr(\bv \bv^\top \overline{\bM}) }.
        \end{split}
    \end{equation*}

    For the second term, for $j \in \{i, i-1\}$ we similarly have \begin{equation*}
        \begin{split}
            \E_{j}[|\bz_i^\top \bM_{-i}& \bA \bM_{-i} \bz_i(\bz_i \bM_{-i} \bZ_i - \Tr(\bM_{-i}))|] \\
            \leq & \E_{j}[|\bz_i^\top \bM_{-i} \bA \bM_{-i} \bz_i |^2]^\frac{1}{2}   \E_{i}[|\bz_i \bM_{-i} \bz_i - \Tr(\bM_{-i})|^2]^\frac{1}{2}. 
        \end{split}
    \end{equation*}

    Again by Lemma \ref{lem:hanson-wright,nonPSD}, we have\begin{equation*}
        \begin{split}
             \E_{i}[|\bz_i^\top \bM_{-i} \bA \bM_{-i} \bz_i |^2]^\frac{1}{2} & \leq  C_* \log(n) \E_i[ \Tr(\bu \bu^\top \bM_{-i}^2) \Tr(\bv \bv^\top \bM_{-i}^2) ]^{1/2} \\
            & \leq C_* \frac{\log(n) \rho_\lambda(n)^2}{n} \sqrt{\Tr(\bu \bu^\top \overline{\bM}) \Tr(\bv \bv^\top \overline{\bM}) },
        \end{split}
    \end{equation*} where in the first inequality we use  \citep[Lemma 4.(b)]{misiakiewicz2024non}. 

    Following the same steps as in \citep[Proof of Proposition 3]{misiakiewicz2024non}, we have \begin{equation*}
        \begin{split}
            \E_{i}[|\bz_i \bM_{-i} \bz_i - \Tr(\bM_{-i})|^2]^\frac{1}{2} \leq C_* \frac{ \rho_\lambda(n) \log(n) }{n},
        \end{split}
    \end{equation*} and the same bounds hold for $\E_{i-1}$ without the factor $\log(n).$ By combining the above bounds and apply a union bound to $\Delta_i, i\in [n],$ we have that \begin{equation*}
        |\Delta_i| \leq C_* \frac{ \rho_\lambda^2(n) \log(n) }{n} \sqrt{\Tr(\bu \bu^\top \overline{\bM}) \Tr(\bv \bv^\top \overline{\bM}) }.
    \end{equation*}

    The rest of the proof is exactly the same as Steps 3, 4 in the Proof of Proposition 3 of \citep{misiakiewicz2024non}, and the only difference is the following bound: \begin{equation*}
        \begin{split}
            \E_{i-1}[ \Delta_i^2]^{1/2} & \leq 2 \E_{i-1} [\Tr(\bA \bM)^2]^{1/2} \\
            & \leq 2 \E_{i-1} \left[\Tr(\bu \bu^\top \bM_{-i}) \Tr(\bv \bv^\top \bM_{-i})\right]^{1/2} \\
            & \leq 2 \E_{i-1}\left[\Tr(\bu \bu^\top \bM_{-i})^2\right]^{\frac{1}{4}} \E_{i-1} \left[\Tr(\bv \bv^\top \bM_{-i})^2\right]^{\frac{1}{4}} \\
            &\leq C_* \rho_\lambda(n) \sqrt{\Tr(\bu \bu^\top \overline{\bM}) \Tr(\bv \bv^\top \overline{\bM}) }. 
        \end{split}
    \end{equation*}

    Combining all the bounds and following the same steps as in \citep[Steps 3, 4 in the Proof of Proposition 3]{misiakiewicz2024non}, we have \begin{equation}
    \label{eq:deteq-phi1-nonpsd-mat-main}
        \begin{split}
            |\Tr(\bu \bv^\top \bM) - \E_{\bX}[ \Tr(\bu \bv^\top \bM)] | \leq C_* \frac{ \rho_\lambda(n)^2 \log^{ \frac{3}{2}}(n)}{\sqrt{n}}   \sqrt{\Tr(\bu \bu^\top \overline{\bM}) \Tr(\bv \bv^\top \overline{\bM}) }. \end{split}
    \end{equation}

   {\bf Combining all the bounds.} Combining \eqref{eq:deteq-phi1-nonpsd-decomp}, \eqref{eq:deteq-phi1-nonpsd-det-main}, \eqref{eq:deteq-phi1-nonpsd-mat-main}, we obtain  \begin{equation}
        \left|\Phi_1(\bX; \bA, \lambda) - \frac{\lambda_*}{\lambda}\Psi_1(\lambda_*; \bA) \right| \leq C_* \frac{\rho_\lambda(n)^{5/2} \log^{ \frac{3}{2}}(n)}{\sqrt{n}} \frac{\lambda_*}{\lambda} \sqrt{ \Psi_1(\lambda_*;  \bu \bu^\top) \Psi_1(\lambda_*;  \bv \bv^\top) }.
   \end{equation}

\subsection{Deterministic equivalent of $\Tr(\bA \bZ^\top \bZ \bM)$}

\subsubsection{Bounding the deterministic part}

By using exchangeability and  Sherman-Morrison formula, we have \begin{equation*}
    \begin{split}
        \E[\bu^\top \bZ^\top \bZ \bM \bv] &=  n \E[\bu^\top \bz_1 \bz_1^\top] \\
        & = n \E\left[ \frac{\bu^\top \bz_1 \bz_1 \bM_- \bv}{1+\kappa} \right] + n \E\left[ \frac{(\kappa - S_1) \bu^\top \bz_1\bz_1^\top \bM_-\bv}{(1+\kappa)(1+S_1)}\right] \\
        & =  n \E\left[ \frac{\bu^\top \bM_- \bv}{1+\kappa} \right] + n \E\left[ \frac{(\kappa - S_1) \bu^\top \bz_1\bz_1^\top \bM_-\bv}{(1+\kappa)(1+S_1)}\right].
    \end{split}
\end{equation*}

For the first term, we note that (denoting $\tilde \mu_- = n/(1+\kappa)$)
\begin{equation*}
    \begin{split}
        n \E\left[ \frac{\bu^\top \bM_- \bv}{1+\kappa} \right] = \tilde \mu_- \E[\bu^\top \bM_- \bv],
    \end{split}
\end{equation*} and we have \begin{equation}
\label{eq:AZZM-det-eq1}
    \begin{split}
      & | \tilde \mu_- \E[\bu^\top \bM_- \bv] - \mu_* \E[\bu^\top \overline \bM \bv] | \\
      & \leq \frac{\tilde \mu_- }{\mu_*} \mu_* | \E[\bu^\top \bM_- \bv] - \E[\bu^\top \overline \bM \bv]| + \frac{|\tilde \mu_- - \mu_*|}{\mu_*} \mu_*\E[\bu^\top \overline \bM \bv] \\
       &\leq C_*  \frac{\rho_\lambda(n)^{5/2} \log^{3/2}(n)}{\sqrt{n} }  \mu_* \sqrt{\bu^\top \overline \bM \bu \bv^\top \overline \bM \bv} +  \frac{\rho_\lambda(n)^{5/2}}{\sqrt{n} }\mu_*\E[\bu^\top \overline \bM \bv],
    \end{split}
\end{equation} where we use the fact that \begin{equation*}
    \begin{split}
        \frac{|\tilde \mu_- - \mu_*|}{\mu_*} \leq \frac{\rho_\lambda(n)^{5/2}}{\sqrt{n} }
    \end{split}
\end{equation*} proved in \citep[Proof of Proposition 6]{misiakiewicz2024non}.

For the second term, we have \begin{equation}
\label{eq:AZZM-det-eq2}
    \begin{split}
        n \E\left[ \frac{(\kappa - S_1) \bu^\top \bz_1\bz_1^\top \bM_-\bv}{(1+\kappa)(1+S_1)}\right] & \leq \tilde \mu_- \E[(\kappa - S_1)^2]^{1/2} \E[ (\bz_1 \bM_- \bv \bu^\top \bz_1 )^2]^{1/2} \\
        &\leq  \mu_* \E[(\kappa - S_1)^2]^{1/2} \E[ (\bz_1 \bM_- \bv \bu^\top \bz_1 )^2]^{1/2} \\
        &\qquad + \frac{|\tilde \mu_- - \mu_*|}{\mu_*} \tilde \mu_- \E[(\kappa - S_1)^2]^{1/2} \E[ (\bz_1 \bM_- \bv \bu^\top \bz_1 )^2]^{1/2} \\
        & \leq C_* \frac{\rho_\lambda(n)}{\sqrt{n}} \mu_* \E[ \| \bM_- \bv \bu^\top \|_F] \\
        & \leq C_* \frac{\rho_\lambda(n)}{\sqrt{n}} \mu_* \|\bu\|_2\E[ \sqrt{ \bv^\top \bM_-^2 \bv} ] \\
        & \leq C_* \frac{\rho_\lambda(n)}{\sqrt{n}} \mu_* \|\bu\|_2\E[ \bv^\top \bM_-^2 \bv ]^{1/2} \\
        & \leq C_* \frac{\rho_\lambda(n)^{3/2}}{\sqrt{n}} \mu_* \|\bu\|_2\sqrt{\bv^\top \overline \bM^2 \bv}.
    \end{split}
\end{equation}

Combining \eqref{eq:AZZM-det-eq1}, \eqref{eq:AZZM-det-eq2}, we have \begin{equation}
\label{eq:AZZM-det-main}
    \begin{split}
       |\E[ \bA \bZ^\top \bZ \bM] - \mu_* \bu^\top \overline \bM \bv | C_*  \frac{\rho_\lambda(n)^{5/2} \log^{3/2}(n)}{\sqrt{n} } \tilde  \Psi_1(\lambda_*; \bA, \bI).
    \end{split}
\end{equation}
 
\subsubsection{Bounding the martingale part}

Following the same methods, we have \begin{equation*}
    \begin{split}
        \Tr(\bA \bZ^\top \bZ \bM) - \E[\Tr(\bA \bZ^\top \bZ \bM)]  = \Sigma_{i=1}^n \Delta_i , 
    \end{split}
\end{equation*} with $\Delta_i = (\E_i - \E_{i-1}) \Tr(\bA \bZ^\top \bZ \bM).$

We first have \begin{equation*}
    \begin{split}
        \E_{i-1}[\Delta_i^2]^{1/2} \leq & 2 n \E_{i-1}\left[ \frac{(\bz_1 \bM_- \bv \bu^\top \bz_1)^2}{(1+S_1)^2} \right]^{1/2} \\
        & \leq 4 \frac{n}{(1+\kappa)} \E_{i-1}\left[ (\bz_1 \bM_- \bv \bu^\top \bz_1)^2\right]^{1/2} +  \E_{i-1}\left[ \frac{(\kappa - S_1) (2 + \kappa +S_1)(\bz_1 \bM_- \bv \bu^\top \bz_1)^2}{(1+S_1)^2} \right]^{1/2}  \\
        & \leq C_* \tilde \mu_- \E[\|\bM_- \bv \bu^\top\|_F^2]^{1/2} \quad \text{ (note that the second term has lower order due to the extra $\kappa - S_1$)} \\
        & \leq  C_* \rho_\lambda^{3/2}(n) \mu_*\|\bu\|_2 \sqrt{\bv^\top \overline \bM^2 \bv}.
    \end{split}
\end{equation*}

Next, we have \begin{equation*}
    \begin{split}
        \Tr(\bA \bZ^\top \bZ \bM) - \Tr(\bA \bZ_{-i}^\top \bZ_{-i} \bM_{-i}) = &\frac{1}{(1+S_i)} \bz_i^\top \bM_{-i} \bv \bu^\top (\bI - \bZ_{-i}^\top \bZ_{-i} \bM_{-i}) \bz_i \\
        = & \lambda \frac{1}{(1+S_i)}\bz_i^\top \bM_{-i} \bv \bu^\top \bSigma^{-1} \bM_{-i} \bz_i, 
    \end{split}
\end{equation*} which implies that \begin{equation*}
    \begin{split}
        \E_i[\Tr(\bA \bZ^\top \bZ \bM) &- \Tr(\bA \bZ_{-i}^\top \bZ_{-i} \bM_{-i})] \\
         = &\frac{1}{n} \E_i\left[\lambda \frac{n}{(1+S_i)}\bz_i^\top \bM_{-i} \bv \bu^\top \bSigma^{-1} \bM_{-i} \bz_i \right] \\
        \leq & C_* \frac{1}{n}\left( \lambda \frac{n}{1+ \kappa} \E[ \bz_i^\top \bM_{-i} \bv \bu^\top \bSigma^{-1} \bM_{-i} \bz_i  ] \right) \quad \text{(we ignore the term with the extra $\kappa - S_1$ factor as it is of lower order)} \\
        \leq & C_* \frac{1}{n}\left( \lambda \tilde \mu_-  \E[ (\bz_i^\top \bM_{-i} \bv \bv \bM_{-i} \bz_i) ]^{1/2} \|\bu\|_2 \E[ \bz_i \bM_{-1} \bSigma^{-2} \bM_{-i} \bz_i  ] ^{1/2}\right) \\
        \leq & C_*\frac{\rho_\lambda(n)}{n} \mu_* \|\bu\|_2 \sqrt{\bv^\top \overline \bM^2 \bv} ,
    \end{split}
\end{equation*} where we use $\bM_i \preceq \lambda^{-1} \bSigma.$ 

Combining all the bounds and following the same steps described in Appendix \ref{sec:general_proof_deteq}, we have \begin{equation}
    \label{eq:AZZM-mat-main}
        \begin{split}
            |\Tr(\bA^\top \bZ^\top \bZ \bM) - \E_{\bX}[ \Tr(\bA^\top \bZ^\top \bZ \bM)] | \leq C_* \frac{ \rho_\lambda(n)}{\sqrt{n}}   \mu_* \|\bu\|_2 \sqrt{\bv^\top \overline \bM^2 \bv}. \end{split}
    \end{equation}

Combining \eqref{eq:AZZM-det-main}, \eqref{eq:AZZM-mat-main}, we have that, with probability at least $1 - n^{-D}$, \begin{equation*}
    \begin{split}
        |\Tr(\bA^\top \bZ^\top \bZ \bM) - \mu_* \bu^\top \overline \bM \bv | \leq \frac{\rho_\lambda(n)^{5/2} \log^{3/2}(n)}{\sqrt{n} } \tilde  \Psi_1(\lambda_*; \bA, \bI) .
    \end{split}
\end{equation*}

\subsection{Deterministic equivalent of $\Tr(\bA \bM \bB \bM)$}




\subsubsection{Bounding the deterministic part}

We  write $\Phi_3(\bX;\bA, \bB, \lambda) = \Tr(\bA \bM  \bB \bM)$, and recall that $\bA = \bv \bu^\top.$ 

  We first decompose  $\E_{\bX} [\Tr(\bv \bu^\top \bM  \bB \bM) - \Tr(\bv \bu^\top \overline{\bM}_-  \bB \overline{\bM}_-)]$ as \begin{equation}
\label{eq:deteq-phi3-det-decomp}
    \begin{split}
        \E_{\bX} [\Tr(\bv \bu^\top & \bM  \bB \bM) - \Tr(\bv \bu^\top \overline{\bM}_-  \bB \overline{\bM}_-)] \\
        = & \underbrace{ \E_{\bX} [ \bv  \bu^\top (\bM - \overline{\bM}_-) \bB \overline{\bM}_- ] + \E_{\bX} [ \bv  \bu^\top \overline{\bM}_-   \bB (\bM - \overline{\bM}_-) ] }_{T_1(\bA,\bB )} + \underbrace{\E_\bX [ \bv \bu^\top (\bM - \overline{\bM}_-) \bB (\bM - \overline{\bM}_-)  ] }_{T_2(\bA, \bB)}.
    \end{split}
\end{equation}

We have the following claims, which we will prove later.
\begin{claim}
\label{claim:1}
    $|T_1(\bA,\bB)  |  \leq  C_* \frac{\rho_\lambda(n)^{7/2}}{\sqrt{n}}\| \overline \bM^{1/2} \bB \overline \bM^{1/2}\|_{op}\sqrt{\bu^\top \overline \bM \bu \bv^\top \overline{\bM} \bv}$ 
\end{claim}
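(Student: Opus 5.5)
Since $T_1(\bA,\bB)$ is linear in $\bM$, we reduce each of its two pieces to the rank-one estimate already established in Section~\ref{sec:AM-det}. For the first piece,
\[
\E_{\bX}\big[\bu^\top (\bM - \overline{\bM}_-) \bB \overline{\bM}_- \bv\big] = \Tr\big(\bv' \bu^\top (\E[\bM] - \overline{\bM}_-)\big), \qquad \bv' := \overline{\bM}_- \bB^\top \bv,
\]
where $\overline{\bM}_-$ and $\bB$ are deterministic. This is exactly the quantity $|\Tr(\bA(\E[\bM]-\overline{\bM}_-))|$ bounded in \eqref{eq:deteq-phi1-nonpsd-det-term1}, with the rank-one matrix $\bA=\bv'\bu^\top$. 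Applying that bound gives
\[
C_* \frac{\rho_\lambda(n)^{5/2}}{\sqrt n}\sqrt{\bu^\top\overline{\bM}\bu\;\, \bv'^\top \overline{\bM} \bv'}.
\]
The second piece is handled symmetrically, with the roles of $\bu,\bv$ swapped and $\bu' := \overline{\bM}_-\bB\bu$.

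The remaining task is to control $\bv'^\top\overline{\bM}\bv' = \bv^\top \bB\overline{\bM}_-\overline{\bM}\,\overline{\bM}_-\bB^\top\bv$ by $\|\overline{\bM}^{1/2}\bB\overline{\bM}^{1/2}\|_{op}^2\,\bv^\top\overline{\bM}\bv$ up to $\rho_\lambda(n)$ factors. All of $\overline{\bM}_-,\overline{\bM},\bSigma$ are diagonal in the same basis. Moreover $\overline{\bM}_-\preceq C\rho_\lambda(n)\,\overline{\bM}$, using $|\tilde\mu_--\mu_*|/\mu_*\lesssim\rho_\lambda(n)^{5/2}/\sqrt n$ together with the bound $(1+\kappa)/ (1+\Tr\overline{\bM})\lesssim \rho_\lambda(n)$ from \citep{misiakiewicz2024non}. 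Writing $\overline{\bM}_- = \overline{\bM}^{1/2} \bD \overline{\bM}^{1/2}$ with $\|\bD\|_{op}\lesssim\rho_\lambda(n)$, we get
\[
\overline{\bM}^{1/2}\overline{\bM}_-\bB^\top\bv = \overline{\bM}^{1/2}\bD\,(\overline{\bM}^{1/2}\bB^\top\overline{\bM}^{1/2})\,\overline{\bM}^{-1/2}\bv.
\]
This form is problematic because of $\overline{\bM}^{-1/2}\bv$. The better route is to bound $\overline{\bM}_-\overline{\bM}\,\overline{\bM}_- \preceq \rho^2\,\overline{\bM}^{3}$, and then use $\overline{\bM}\preceq \lambda^{-1}\bSigma$ or simply $\|\overline{\bM}\|_{op}\lesssim 1$ after normalization. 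Concretely, one has $\bB\overline{\bM}^3\bB^\top\preceq \|\overline{\bM}\|_{op}\,\bB\overline{\bM}^{2}\bB^\top$ and $\overline{\bM}^{1/2}\bB\overline{\bM}\bB^\top \overline{\bM}^{1/2}\preceq \|\overline{\bM}^{1/2}\bB\overline{\bM}^{1/2}\|_{op}^2$. This sandwiching requires reorganizing so that $\bv$ appears as $\overline{\bM}^{1/2}\bv$: I would write $\bv'^\top\overline{\bM}\bv' = \|\overline{\bM}^{1/2}\overline{\bM}_-\bB^\top\bv\|^2$ and handle it with the convention (as in the paper's applications) that $\bB$ enters sandwiched between $\bSigma^{1/2}$ factors. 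The extra powers of $\rho_\lambda(n)$ accumulated here (one from each comparison of $\overline{\bM}_-$ with $\overline{\bM}$) raise the exponent from $5/2$ to $7/2$.

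The main obstacle is this last step: making the operator-norm factor land in the form $\|\overline{\bM}^{1/2}\bB\overline{\bM}^{1/2}\|_{op}$ without generating an inverse of $\overline{\bM}$ acting on $\bv$. If a direct commuting argument fails, I would instead redo the leave-one-out decomposition $\bM-\overline{\bM}_- = n(\Delta_2+\Delta_3)$ with $\bA$ replaced by $\bB\overline{\bM}_-\bv\bu^\top$. I would then bound the three Hölder factors as in Section~\ref{sec:AM-det}, using $\E_\bz[(\bu^\top\overline{\bM}_-\bB\overline{\bM}_-\bz)^3]^{1/3}\lesssim \sqrt{\rho(1+\kappa)/n}\,\|\overline{\bM}^{1/2}\bB\overline{\bM}^{1/2}\|_{op}\sqrt{\rho\,\bu^\top\overline{\bM}\bu}$, which is where the additional $\rho_\lambda(n)$ appears.
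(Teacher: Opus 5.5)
\textbf{Gap: the reduction uses the wrong vector.} Your first step matches the paper's: reduce each half of $T_1$ to the rank-one bound \eqref{eq:deteq-phi1-nonpsd-det-term1}. But your vector is transposed. Since $\bu^\top X\,\bB\overline{\bM}_-\bv=\Tr\big(X\,\bw\bu^\top\big)$ with $\bw=\bB\overline{\bM}_-\bv$, that bound produces
\[
\bw^\top\overline{\bM}\bw=\bv^\top\overline{\bM}_-\bB\overline{\bM}\bB\overline{\bM}_-\bv ,
\]
not $\bv^\top\bB\overline{\bM}_-\overline{\bM}\,\overline{\bM}_-\bB\bv$. Your $\bv'=\overline{\bM}_-\bB^\top\bv=(\bB\overline{\bM}_-)^\top\bv$ differs from $\bw$ because $\bB$ does not commute with $\overline{\bM}_-$. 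The same slip occurs in your $\bu'$, which should be $\bB\overline{\bM}_-\bu$.

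This is not cosmetic: the form you obtained is genuinely not bounded by the right-hand side. Take $\overline{\bM}_-=\overline{\bM}=\diag(1,\varepsilon)$, $\bB=\ones\ones^\top$ and $\bv=(0,1)^\top$. Then $\bv^\top\bB\overline{\bM}^3\bB\bv=1+\varepsilon^3$, while $\|\overline{\bM}^{1/2}\bB\overline{\bM}^{1/2}\|_{\rm op}^2\,\bv^\top\overline{\bM}\bv=(1+\varepsilon)^2\varepsilon$. Both sides are homogeneous of degree three in $\overline{\bM}$, so no rescaling or power of $\rho_\lambda(n)$ absorbs the ratio $1/\varepsilon$.

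So the obstacle you flag, an $\overline{\bM}^{-1/2}$ acting on $\bv$, is an artifact of the transposition, and your workaround cannot succeed. The inequalities $\bB\overline{\bM}^3\bB\preceq\|\overline{\bM}\|_{\rm op}\bB\overline{\bM}^2\bB$ and $\overline{\bM}^{1/2}\bB\overline{\bM}\bB\overline{\bM}^{1/2}\preceq\|\overline{\bM}^{1/2}\bB\overline{\bM}^{1/2}\|_{\rm op}^2\bI$ never place $\bv$ between $\overline{\bM}^{1/2}$ factors. Invoking a ``convention'' on how $\bB$ enters is not an argument. Your fallback does use the correct matrix $\bB\overline{\bM}_-\bv\bu^\top$, but its H\"older-factor estimate is only asserted. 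Unpacked via $\overline{\bM}_-^2\preceq\frac{1+\kappa}{n}\overline{\bM}_-$, it reduces to bounding the same form $\bw^\top\overline{\bM}\bw$.

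\textbf{The fix.} With the correct $\bw$ the missing step is one line, using the $\bD$ you already introduced. Write $\overline{\bM}_-=\bD\overline{\bM}$ with $\bD$ diagonal, commuting with $\overline{\bM}$, and $0\preceq\bD\preceq c\,\bI$, where $c=\max(1,\mu_*/\tilde\mu_-)$. Then
\[
\bw^\top\overline{\bM}\bw=\big\|\overline{\bM}^{1/2}\bB\overline{\bM}^{1/2}\,\bD^{1/2}\,\overline{\bM}_-^{1/2}\bv\big\|_2^2\le c\,\big\|\overline{\bM}^{1/2}\bB\overline{\bM}^{1/2}\big\|_{\rm op}^2\,\bv^\top\overline{\bM}_-\bv\le c^2\,\big\|\overline{\bM}^{1/2}\bB\overline{\bM}^{1/2}\big\|_{\rm op}^2\,\bv^\top\overline{\bM}\bv .
\]
With your bound $c\lesssim\rho_\lambda(n)$, the rank-one estimate then gives exactly $C_*\rho_\lambda(n)^{7/2}n^{-1/2}\|\overline{\bM}^{1/2}\bB\overline{\bM}^{1/2}\|_{\rm op}\sqrt{\bu^\top\overline{\bM}\bu\,\bv^\top\overline{\bM}\bv}$. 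The second half of $T_1$ is identical with $\bu$ and $\bv$ exchanged.

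The paper makes the same reduction with $\bw=\bB\overline{\bM}_-\bv$. It then bounds $\bw^\top\overline{\bM}\bw$ through $\bv^\top\overline{\bM}_-\bv\,\|\bB\overline{\bM}\|_{\rm op}\|\bB\overline{\bM}_-\|_{\rm op}$, and its final line reads $\rho_\lambda(n)^{9/2}$ rather than the stated $7/2$. The diagonal comparison above is tighter. It also avoids passing through $\|\bB\overline{\bM}\|_{\rm op}$, which is not in general controlled by $\|\overline{\bM}^{1/2}\bB\overline{\bM}^{1/2}\|_{\rm op}$.
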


\begin{claim}
\label{claim:2}
    $|T_2(\bu,\bv) - \frac{n \E[\Tr(\bB \bM^2)]}{(1+ \kappa)^2} \Tr(\overline{\bQ}_-) |  \leq C_* \frac{  \rho_\lambda^5(n)}{\sqrt{n}} \| \overline \bM^{1/2} \bB \overline \bM^{1/2}\|_{op} \sqrt{\bu^\top \overline{\bM} \bu \bv^\top \overline{\bM \bv} }$
\end{claim}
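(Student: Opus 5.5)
The plan is to follow the treatment of the analogous term in the proof of the $\Phi_3$ deterministic equivalent in \citep[Proof of Proposition 4]{misiakiewicz2024non}, adapted to the rank-one, non-p.s.d.\ matrix $\bA=\bv\bu^\top$. I read $\overline{\bQ}_-$ as the rank-one matrix $\bv\bu^\top\overline{\bM}_-\bSigma\overline{\bM}_-$ (with $\bz$ whitened so that $\E[\bz\bz^\top]=\bSigma$ in the $\bM$-coordinates). Then $\Tr(\overline{\bQ}_-)=\bu^\top\overline{\bM}_-\bSigma\overline{\bM}_-\bv$.

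\emph{Step 1: expand $\bM-\overline{\bM}_-$ as a sum.} I will use the resolvent identity $\bM-\overline{\bM}_-=\bM(\overline{\bM}_-^{-1}-\bM^{-1})\overline{\bM}_-$ together with Sherman--Morrison $\bM\bz_i=\bM_{-i}\bz_i/(1+S_i)$, where $S_i=\bz_i^\top\bM_{-i}\bz_i$. This gives
\[
\bM-\overline{\bM}_-=\sum_{i=1}^n\Big(\frac{\bM_{-i}\bSigma\overline{\bM}_-}{1+\kappa}-\frac{\bM_{-i}\bz_i\bz_i^\top\overline{\bM}_-}{1+S_i}\Big)+\sum_{i=1}^n(\bM_{-i}-\bM)\frac{\bSigma\overline{\bM}_-}{1+\kappa}.
\]
Replacing $1/(1+S_i)$ by $1/(1+\kappa)$ produces the factor $(S_i-\kappa)$ that already appears in $\Delta_2$ and $\Delta_3$. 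So $\bM-\overline{\bM}_-$ is a sum of $n$ centered rank-one fluctuations $\frac{1}{1+\kappa}\bM_{-i}(\bSigma-\bz_i\bz_i^\top)\overline{\bM}_-$ plus remainders carrying $(S_i-\kappa)$ or $\bM_{-i}-\bM$.

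\emph{Step 2: insert into $T_2$ and split into diagonal and off-diagonal pairs.} Substituting this into $T_2(\bu,\bv)=\E[\bu^\top(\bM-\overline{\bM}_-)\bB(\bM-\overline{\bM}_-)\bv]$ gives a double sum over $(i,j)$.

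For the $n$ diagonal terms, I would use exchangeability and compute
\[
\frac{n}{(1+\kappa)^2}\E\big[\bu^\top\overline{\bM}_-(\bz\bz^\top-\bSigma)\bM_-\bB\bM_-(\bz\bz^\top-\bSigma)\overline{\bM}_-\bv\big].
\]
Taking $\E_\bz$ first, the fourth-moment term is $\bz\,(\bz^\top\bM_-\bB\bM_-\bz)\,\bz^\top$. By Assumption \ref{ass:concentrated}, $\bz^\top\bM_-\bB\bM_-\bz$ concentrates on $\Tr(\bB\bM_-^2)$ with fluctuations of size $\|\bM_-\bB\bM_-\|_F$. This yields $\Tr(\bB\bM_-^2)\,\bu^\top\overline{\bM}_-\bSigma\overline{\bM}_-\bv$ plus lower-order pieces. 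I would then replace $\E[\Tr(\bB\bM_-^2)]$ by $\E[\Tr(\bB\bM^2)]$ at a cost of $O(\rho_\lambda/n)$ relative error, which gives exactly the leading term of the claim.

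For the $n(n-1)$ off-diagonal terms ($i\neq j$), I would use a leave-two-out argument: remove $\bz_j$ from $\bM_{-i}$ via Sherman--Morrison. Each term then contains an isolated factor $(\bz_i\bz_i^\top-\bSigma)$ or $(\bz_j\bz_j^\top-\bSigma)$ that averages to zero at leading order. The surviving corrections are $O(n^{-3/2})$ each, and summing them gives $O(\rho_\lambda^{c}/\sqrt n)$.

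\emph{Step 3: control all remainders.} Every error term has the form $\E[(\bv^\top X\bz)(\bu^\top Y\bz)\cdots]$ with $X,Y\in\{\bM_-,\overline{\bM}_-\}$. I would bound these by Cauchy--Schwarz and Hölder exactly as for $\Delta_2$ and $\Delta_3$ above, using the moment bounds of \citep[Lemmas 2, 4]{misiakiewicz2024non}:
\begin{itemize}
\item $\E[(\bu^\top\overline{\bM}_-\bz)^2]\le\frac{(1+\kappa)}{n}\bu^\top\overline{\bM}\bu$, and the analogous bound for $\bv$;
\item $\|\bM_-\bB\bM_-\|$ bounded by $\rho_\lambda^2\|\overline{\bM}^{1/2}\bB\overline{\bM}^{1/2}\|_{op}\,\|\overline{\bM}\|/n$-type quantities;
\item $\E[(S-\kappa)^k]^{1/k}\lesssim\rho_\lambda/\sqrt n$.
\end{itemize}
The asymmetry in $\bu,\bv$ is always separated by Cauchy--Schwarz, which produces the factor $\sqrt{\bu^\top\overline{\bM}\bu\,\bv^\top\overline{\bM}\bv}$. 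Counting the powers of $\rho_\lambda$ coming from the two resolvent factors, the fourth-moment concentration, and the $\kappa$-fluctuation gives at most $\rho_\lambda^5$.

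\emph{Main obstacle.} I expect the off-diagonal sum to be the hardest step. It has $n^2$ terms, so I need an $O(n^{-5/2})$ cancellation per pair beyond the naive $O(1/n^2)$ scale, and this cancellation must hold uniformly with $\bB$ only controlled in operator norm. If the leave-two-out bookkeeping becomes too heavy, my fallback is to first condition on $\bM_-$. I would then write $(\bM-\overline{\bM}_-)\bv$ as $\bM(\overline{\bM}_-^{-1}-\bM^{-1})\overline{\bM}_-\bv$ and reuse the already-proved linear bounds \eqref{eq:deteq-phi1-nonpsd-det-main} and \eqref{eq:deteq-phi1-nonpsd-mat-main} with $\bu,\bv$ replaced by $\bB$-weighted vectors, isolating the diagonal contribution directly.
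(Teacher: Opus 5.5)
\textbf{The approach matches the paper's.} Your overall strategy is the one the paper uses, reusing the decomposition from Claim 2 of \citet{misiakiewicz2024non}:
\begin{itemize}
\item write $T_2=\E[\Tr(\overline{\bQ}_-(\tfrac{n}{1+\kappa}\bI-\bZ^\top\bZ)\bM\bB\bM(\tfrac{n}{1+\kappa}\bI-\bZ^\top\bZ))]$;
\item expand each summand by Sherman--Morrison into $\Delta_{i1}=\frac{\bI-\bz_i\bz_i^\top}{1+\kappa}$, a piece carrying $S_i-\kappa$, and a piece carrying an extra $\bM_{-i}$;
\item split into $n$ diagonal and $n(n-1)$ off-diagonal terms;
\item read off the leading term from the diagonal $(\Delta_{11},\Delta_{11})$ term by decoupling $\bz_1^\top\overline{\bQ}_-\bz_1$ from $\bz_1^\top\bM_{-1}\bB\bM_{-1}\bz_1$, then replacing $\bM_{-1}$ by $\bM$;
\item treat the off-diagonal terms by leave-two-out through $\bM_{12}$.
\end{itemize}

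\textbf{The normalization is wrong, and as written this breaks Step 1 and the target of the claim.} In the coordinates where $\bM$, $\overline{\bM}_-$ and $\kappa$ are defined, $\bz_i=\bSigma^{-1/2}\bx_i$ is isotropic. There $\bM=(\bZ^\top\bZ+\lambda\bSigma^{-1})^{-1}$ and $\overline{\bM}_-=(\tfrac{n}{1+\kappa}\bI+\lambda\bSigma^{-1})^{-1}$, so $\overline{\bM}_-^{-1}-\bM^{-1}=\tfrac{n}{1+\kappa}\bI-\bZ^\top\bZ$. Your identity with $\tfrac{n}{1+\kappa}\bSigma$ and fluctuations $\bSigma-\bz_i\bz_i^\top$ does not hold for these objects. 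Separately, its last sum should carry $\bM-\bM_{-i}$, not $\bM_{-i}-\bM$.

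Likewise $\overline{\bQ}_-=\overline{\bM}_-\bA\overline{\bM}_-$, so $\Tr(\overline{\bQ}_-)=\bu^\top\overline{\bM}_-^2\bv$, not $\bu^\top\overline{\bM}_-\bSigma\overline{\bM}_-\bv$. Since $\overline{\bM}=\mu_*^{-1}\bSigma(\bSigma+\lambda_*)^{-1}$, your reading would eventually produce $\Tr(\bA\bSigma^3(\bSigma+\lambda_*)^{-2})$ instead of the $\Tr(\bA\bSigma^2(\bSigma+\lambda_*)^{-2})$ needed in $\Psi_2$.

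Your diagonal computation is also internally inconsistent. You center with $\bSigma$, but concentrate $\bz^\top\bM_-\bB\bM_-\bz$ around $\Tr(\bB\bM_-^2)$, which presupposes $\E[\bz\bz^\top]=\bI$. With covariance $\bSigma$ you would instead get $\Tr(\bSigma\bM_-\bB\bM_-)\,\bu^\top\overline{\bM}_-\bSigma\overline{\bM}_-\bv$, which is not the claimed term.

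Setting $\bSigma=\bI$ throughout Steps 1--2 repairs all of this. It gives exactly the paper's computation, $\E_{\bz}[(\bI-\bz\bz^\top)\bM_-\bB\bM_-(\bI-\bz\bz^\top)]=-\bM_-\bB\bM_-+\E_{\bz}[\bz\,(\bz^\top\bM_-\bB\bM_-\bz)\,\bz^\top]$.

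\textbf{Off-diagonal count.} In Step 2 you say the off-diagonal corrections are $O(n^{-3/2})$ each and sum to $O(n^{-1/2})$. With $n(n-1)$ pairs you need $O(n^{-5/2})$ per pair, up to powers of $\rho_\lambda(n)$ and in units of $\|\overline{\bM}^{1/2}\bB\overline{\bM}^{1/2}\|_{op}\sqrt{\bu^\top\overline{\bM}\bu\,\bv^\top\overline{\bM}\bv}$. This is what the paper obtains for the $(\Delta_{11},\Delta_{21})$ pair. It removes $\bz_2$ from $\bM_{-1}$ and $\bz_1$ from $\bM_{-2}$, then applies a third-moment bound to the centered quantity $\bz_2^\top\bM_{12}\bz_1\bz_1^\top\overline{\bQ}_-\bz_2-\Tr(\bM_{12}\bz_1\bz_1^\top\overline{\bQ}_-)$. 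Your ``main obstacle'' paragraph has the right count.

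\textbf{The fallback would not work.} The linear bounds for $\Tr(\bu\bv^\top\bM)$ require test vectors independent of $\bX$, whereas $\bB(\bM-\overline{\bM}_-)\bv$ depends on $\bX$. More fundamentally, $T_2$ is quadratic in $\bM-\overline{\bM}_-$. Its leading part $\frac{n}{(1+\kappa)^2}\E[\Tr(\bB\bM^2)]\Tr(\overline{\bQ}_-)$ can be of order one in the above units, so no first-order estimate can isolate it. The linear bound is usable only for $T_1$ (Claim~\ref{claim:1}), where the test vector $\bB\overline{\bM}_-\bv$ is deterministic.
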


We first prove the main deterministic bounds using the above claims. Combining Claim \ref{claim:1}, Claim \ref{claim:2} and \eqref{eq:deteq-phi3-det-decomp}, we have \begin{align}
     &\left|\E_\bX[\Tr(\bA \bM  \bB \bM)] - \Tr(\bA \overline{\bM}  \bB \overline{\bM}) - \frac{n}{(1 + \Tr(\overline{\bM}))^2} \frac{\Tr(\bB \overline{\bM}^2)\Tr(\bA \overline{\bM}^2)}{1 - \frac{\mu_*^2}{n} \Tr(\overline{M}^2)} \right| \notag \\
     &\qquad\leq \left|\E_\bX[\Tr(\bA \bM  \bB \bM)] - \Tr(\bA \overline{\bM}_-  \bB \overline{\bM}_-) - \frac{n}{(1 + \kappa)^2}\E[\Tr(\bB \bM^2) ]\Tr(\bA \overline{\bM}_-^2)\right| \label{eq:deteq-phi3-det-decomp-1}\\
    &\qquad\qquad + |\Tr(\bA \overline{\bM}  \bB \overline{\bM}) - \Tr(\bA \overline{\bM}_-  \bB \overline{\bM}_-) | \label{eq:deteq-phi3-det-decomp-2} \\
    & \qquad\qquad+ \left| \frac{n \Tr(\bA \overline{\bM}_-^2)}{(1+\kappa)^2} - \frac{n \Tr(\bA \overline{M}^2)}{(1+ \Tr(\overline{\bM}))^2}\right| \E[\Tr(\bB \bM^2)] \label{eq:deteq-phi3-det-decomp-3} \\
    &\qquad\qquad + \left|\frac{n \Tr(\bA \overline{M}^2)}{(1+ \Tr(\overline{\bM}))^2} \right| \left|\E[\Tr( \bB\bM^2)] - \frac{\Tr(\bB\overline{\bM}^2)}{1 - \frac{\mu_*^2}{n} \Tr(\overline{M}^2)}\right|. \label{eq:deteq-phi3-det-decomp-4}
\end{align}

For \eqref{eq:deteq-phi3-det-decomp-1}, we use Claim \ref{claim:1} and Claim \ref{claim:2}, to obtain that \begin{equation*}
    \begin{split}
        \eqref{eq:deteq-phi3-det-decomp-1} \leq C_* \frac{   \rho_\lambda^5(n)}{\sqrt{n}} \| \overline \bM^{1/2} \bB \overline \bM^{1/2}\|_{op} \sqrt{\bu^\top \overline \bM \bu \bv^\top \overline{\bM} \bv}.
    \end{split}
\end{equation*}

For \eqref{eq:deteq-phi3-det-decomp-2}, we first note that \begin{equation*}
    \begin{split}
        \overline \bM - \overline \bM_-  &= \overline \bM (\tilde \mu_- \bSigma + \lambda)^{-1} (\tilde \mu_- - \mu_*) \bSigma \\
        & = \overline \bM^{1/2} (\tilde \mu_- \bSigma + \lambda)^{-1} (\tilde \mu_- - \mu_*) \bSigma \overline \bM^{1/2}.
    \end{split}
\end{equation*} 

Thus, we have\begin{equation*}
    \begin{split}
        \eqref{eq:deteq-phi3-det-decomp-2} =&| 2 \bu^\top (\overline \bM - \overline \bM_- )  \bB \overline \bM \bv +  \bu^\top (\overline \bM - \overline \bM_- )  \bB (\overline \bM - \overline \bM_- ) \bv |\\
        \leq & 2 \|(\tilde \mu_- \bSigma + \lambda)^{-1} (\tilde \mu_- - \mu_*)\|_{op} \sqrt{\bu^\top \overline \bM \bu \bv \overline \bM \bB \overline{\bM} \bB \overline{\bM} \bv} \\
        & + \|(\tilde \mu_- \bSigma + \lambda)^{-1} (\tilde \mu_- - \mu_*)\|_{op}^2 \|\overline \bM^{1/2} \bB \overline \bM^{1/2}\|_{op} \sqrt{bu^\top \overline \bM \bu \bv^\top \overline{\bM} \bv} \\
        \leq & C_* \frac{ \rho_\lambda(n)^{5/2}}{\sqrt{n}} \|\overline{\bM}^{1/2} \bB  \overline{\bM}^{1/2}\|_{op}\sqrt{\bu^\top \overline \bM \bu \bv^\top \overline{\bM} \bv}.
        \end{split}
\end{equation*}

For \eqref{eq:deteq-phi3-det-decomp-3}, we first note that \begin{equation*}
    \begin{split}
        \overline \bM^2 - \overline \bM_-^2 = \overline \bM  (\tilde \mu_- \bSigma + \lambda)^{-2} (\tilde \mu_- - \mu_*) \bSigma ((\tilde \mu_- + \mu_* )\bSigma + \lambda) \overline \bM ,
    \end{split}
\end{equation*} which implies that \begin{equation*}
    \begin{split}
         \left| \frac{n \Tr(\bA^\top \overline{\bM}_-^2)}{(1+\kappa)^2} - \frac{n \Tr(\bA \overline{M}^2)}{(1+ \Tr(\overline{\bM}))^2}\right| =& \left| \frac{n \bu^\top \overline{\bM}_-^2 \bv }{(1+\kappa)^2} - \frac{n \bu^\top \overline{\bM}^2 \bv}{(1+ \Tr(\overline{\bM}))^2}\right|  \\
        \leq& \left| \frac{\tilde \mu_-^2 \bu^\top \overline{\bM}_-^2 \bv }{n}  - \frac{\mu_*^2 \bu^\top \overline{\bM}^2 \bv }{n}\right|, \\
        \leq & C_* \frac{\rho_\lambda(n)^{5/2}}{\sqrt{n}}n  \sqrt{\bu^\top \overline \bM^2 \bu \bv^\top \overline{\bM}^2 \bv} \\
        \leq & C_* \frac{\rho_\lambda(n)^{7/2}}{\sqrt{n}}  \sqrt{\bu^\top \overline \bM \bu \bv^\top \overline{\bM} \bv}.
    \end{split}
\end{equation*} Thus, we have \begin{equation*}
    \begin{split}
        \eqref{eq:deteq-phi3-det-decomp-3} \leq & C_* \frac{\rho_\lambda(n)^{9/2}}{\sqrt{n}}  \|\overline \bM^{1/2} \bB \overline \bM^{1/2} \|_{op}  \sqrt{\bu^\top \overline \bM \bu \bv^\top \overline{\bM} \bv}, 
    \end{split}
\end{equation*} by using that $ \E[\Tr(\bB\bM^2)] \leq \E[\|\bM^{1/2} \bB \bM^{1/2} \|_{op} \Tr(\bB\bM) ] \leq \E[\|\bM^{1/2} \bB \bM^{1/2} \|_{op}^2 ]^{1/2} \E[\Tr(\bB\bM)^2 ]^{1/2},$ and then using Lemma \ref{lem:oper_norm} and \citep[Lemma 4.(b)]{misiakiewicz2024non}.

For \eqref{eq:deteq-phi3-det-decomp-4}, we simply use \citep[Proposition 4]{misiakiewicz2024non}, and we have \begin{equation*}
    \begin{split}
        \eqref{eq:deteq-phi3-det-decomp-4} \leq C_* \frac{\rho_\lambda^6(n) \log^{5/2}(n)}{\sqrt{n}} \frac{\mu_*^2}{n} \frac{\Tr(\bA \overline \bM^2 ) \Tr(\bB \overline \bM^2)}{1 - \frac{\mu_*}{n} \Tr(\overline{\bM}^2)}.
    \end{split}
\end{equation*}

By combining the upper bounds for \eqref{eq:deteq-phi3-det-decomp-1}-\eqref{eq:deteq-phi3-det-decomp-4}, we have \begin{equation}
\label{eq:AMBM-det-main}
    \begin{split}
       & \left|\E_\bX[\Tr(\bA \bM  \bB \bM)] - \Tr(\bA \overline{\bM}  \bB \overline{\bM}) - \frac{\mu_*^2}{n} \frac{\Tr(\bB \overline{\bM}^2)\Tr(\bA \overline{\bM}^2)}{1 - \frac{\mu_*^2}{n} \Tr(\overline{M}^2)} \right| \\
       &\qquad \leq  C_* \frac{ \rho_\lambda^6(n) \log^{5/2}(n)}{\sqrt{n}} \frac{\mu_*^2}{n} \tilde \Psi_2(\lambda_*, \bv \bu^\top, \bB).
    \end{split}
\end{equation}

It remains to prove Claim \ref{claim:1} and Claim \ref{claim:2}.

\begin{proof}[Proof of Claim \ref{claim:1}]
    It is sufficient to bound $\E_{\bX} [ \Tr( \bv  \bu^\top   (\bM - \overline{\bM}_-)  \bB \overline{\bM}_-) ],$ and the other one holds by symmetry. We note that \begin{equation*}
    \begin{split}
        \Tr(\bv  \bu^\top   (\bM - \overline{\bM}_-)  \bB \overline{\bM}_- )= \Tr(\bw^\top \bu (\bM - \overline{\bM}_-)),
    \end{split}
\end{equation*} with $\bw = \bB \overline{\bM}_- \bv.$ Following the same steps as in \ref{sec:AM-det}, we have \begin{equation*}
    \begin{split}
        |T_1(\bA,\bB)| \leq &\frac{ \rho_\lambda^{5/2}(n)}{\sqrt{n}} \sqrt{\bu^\top \overline{\bM} \bu \bv^\top \overline{\bM}_- \bB \overline{\bM} \bB \overline{\bM}_- \bv} \\
        \leq & \frac{ \rho_\lambda^{5/2}(n)}{\sqrt{n}} \sqrt{\bu^\top \overline{\bM} \bu \bv^\top \overline{\bM}_- \bv \|\overline{\bM}_-^{1/2} \bB \overline{\bM} \bB \overline{\bM}_-^{1/2}\|_{op}} \\
        \leq & \frac{ \rho_\lambda^{3}(n)}{\sqrt{n}} \sqrt{\bu^\top \overline{\bM} \bu \bv^\top \overline{\bM} \bv \| \bB \overline \bM\|_{op} \|\bB \overline \bM_-\|_{op} } \\
        \leq & \frac{ \rho_\lambda^{9/2}(n)}{\sqrt{n}} \sqrt{\bu^\top \overline{\bM} \bu \bv^\top \overline{\bM} \bv  } \|\overline \bM^{1/2} \bB \overline \bM^{1/2}\|_{op}, 
    \end{split}
\end{equation*}  where in the last step we use \citep[Lemma 4.(b)]{misiakiewicz2024non}.
\end{proof}

\begin{proof}[Proof of Claim \ref{claim:2}]
    For the second term, we use the same decomposition as in \citep[Proof of Claim 2] {misiakiewicz2024non}.  In particular, we have: \begin{equation*}
    \begin{split}
        T_2(\bA,\bB) &= \E_\bX\left[ \Tr\left( \overline{\bM}_- \bv \bu^\top  \overline{\bM}_- \left( \frac{n \bI}{1 + \kappa} - \bZ^\top \bZ \right) \bM \bB \bM \left( \frac{n \bI}{1 + \kappa} - \bZ^\top \bZ \right) \right) \right]\\
        & = (I) + (II),
    \end{split}
\end{equation*} where \begin{equation*}
    \begin{split}
        &(I) = n \E \sum_{i,j \in [3]} \Tr(\overline{\bQ}_- \Delta^\top_{1i} \bM_{-1} \bB \bM_{-1} \Delta_{1j}), \\
        & (II) = n (n -1) \E \sum_{i,j \in [3]} \Tr(\overline{\bQ}_- \Delta^\top_{1i} \bM_{-1} \bB \bM_{-2} \Delta_{2j}), 
    \end{split}
\end{equation*}  and using the same notation as in \citep[Proof of Claim 2] {misiakiewicz2024non} we define \begin{equation*}
    \begin{split}
        &\overline{\bQ}_- = \overline{\bM}_- \bA \overline{\bM}_-, \qquad \bQ = \overline \bM \bA \overline \bM, \\
        &\Delta_{i1} = \frac{\bI - \bz_i \bz_i^\top}{1+\kappa} , \quad \Delta_{i2} \frac{\bz_i \bz_i^\top (S_i - \kappa)}{(1+S_i) (1+ \kappa)}, \quad \Delta_{i3} = - \frac{-\bz_i \bz_i^\top \bM_i}{1+S_i}.
    \end{split}
\end{equation*}

{\bf Bounding $ \E[\Tr(\overline{\bQ}_- \Delta^\top_{11} \bM_{-1} \bB \bM_{-1} \Delta_{11})]$}

We expand the term in the same way as in \citep[Proof of Claim 2]{misiakiewicz2024non} and we have \begin{equation*}
    \begin{split}
        (1 + \kappa)^2  \E[\Tr(\overline{\bQ}_- \Delta^\top_{11} \bM_{-1} \bB \bM_{-1} \Delta_{11})] = - \E[\Tr(\overline{\bQ}_- \bM_{-1} \bB \bM_{-1})] + \E[\bz_1^\top \overline{\bQ}_- \bz_1 \bz_1^\top \bM_{-1} \bB \bM_{-1} \bz_1 ] .
    \end{split}
\end{equation*} Furthermore, we have \begin{equation*}
    \begin{split}
        |\E[\Tr(\overline{\bQ}_- \bM_{-1} \bB \bM_{-1})] |  \leq &  \E[ \sqrt{\bu^\top \overline{\bM}_- \bM_- \bB \bM_- \overline{\bM}_-   \bu \bv^\top \overline{\bM}_- \bM_- \bB \bM_- \overline{\bM}_-   \bv } ] \\
        \leq & \E[\|\bM_- \bB \bM_-\|_{op} ] \sqrt{\bu^\top \overline{\bM}_-^2  \bu \bv^\top \overline{\bM}_-^2 \bv } \\
        \leq & C_* \frac{\rho_{\lambda}^4(n)}{n^2} \|\overline \bM^{1/2} \bB \overline \bM^{1/2}\|_{op}  \sqrt{\bu^\top \overline{\bM}  \bu \bv^\top \overline{\bM} \bv },
    \end{split}
\end{equation*} where in the last step we use Lemma \ref{lem:oper_norm} and  \citep[Lemma 4.(a)]{misiakiewicz2024non}.

For the second term, we use \begin{equation*}
    \begin{split}
        |\E[\bz_1^\top \overline{\bQ}_-& \bz_1 \bz_1^\top \bM_{-1} \bB \bM_{-1} \bz_1 ] - \Tr(\overline{\bQ}_-) \E[\Tr(\bB\bM_{-1}^2)] |\\
        =& \E[ (\bz_1^\top \overline{\bQ}_- \bz_1 - \Tr(\overline{\bQ}_-)) (\bz_1^\top \bM_{-1} \bB \bM_{-1} \bz_1 - \E[\Tr(\bB\bM_{-1}^2)]) ] \\
        \leq & \E[ (\bz_1^\top \overline{\bQ}_- \bz_1 - \Tr(\overline{\bQ}_-))^2 ]^{\frac{1}{2}} \E[ (\bz_1^\top \bM_{-1} \bB \bM_{-1} \bz_1 - \E[\Tr(\bB\bM_{-1}^2)])^2 ]^{\frac{1}{2}} \\
        \leq& C_* \log(n) \|\overline{\bQ}_-\|_F \E[\| \bM_{-1} \bB\bM_{-1})\|_F] \\
        \leq & C_*  \log(n) \|\overline{\bQ}_-\|_F \E[\| \bM_{-1}^{1/2} \bB\bM_{-1}^{1/2}\|_{op}^2]^{1/2} \E[\| \bM_{-1}\|_F^2]^{1/2}\\
        \leq & C_* \frac{C_*  \log(n) \rho^2_\lambda(n)}{\sqrt{n}} \| \overline \bM^{1/2} \bB \overline \bM^{1/2}\|_{op}  \sqrt{\bu^\top \overline{\bM}^2  \bu \bv^\top \overline{\bM}^2 \bv } \\
        \leq & \frac{C_* \log(n) \rho^{5/2}_\lambda(n)}{n^{3/2}} \| \overline \bM^{1/2} \bB \overline \bM^{1/2}\|_{op}   \sqrt{\bu^\top \overline{\bM}  \bu \bv^\top \overline{\bM}\bv }.
    \end{split}
\end{equation*}

Furthermore, we have \begin{equation*}
    \begin{split}
        \Tr(\overline \bQ_- ) | \E[\Tr( \bB\bM_-^2)] - \E[\Tr(\bB\bM^2)]| \leq & C_* \sqrt{\bu^\top \overline{\bM}_-^2  \bu \bv^\top \overline{\bM}_-^2 \bv } |\E[ \Tr(\bB( \bM^2 - \bM_-^2)]| \\
        \leq & C_* \frac{\rho_\lambda(n)^2}{n} \sqrt{\bu^\top \overline{\bM}  \bu \bv^\top \overline{\bM} \bv } |\E[ \Tr(\bB( \bM^2 - \bM_-^2)]|.
    \end{split}
\end{equation*}

We bound $|\E[ \Tr(\bB( \bM^2 - \bM_-^2)]|$ by first noting that \begin{equation*}
    \begin{split}
        \E[\Tr( \bB( \bM^2 - \bM_-^2))] = & \E[ \bz^\top\bM_-^2 \bz \bz^\top \bM_- \bB \bM_- \bz  - 2 \bz^\top \bM_- \bB \bM_-^2 \bz]  \\
        \leq & \E[(\bz^\top\bM_-^2 \bz)^2]^{1/2} \E[(\bz^\top \bM_- \bB \bM_- \bz)^2]^{1/2} + 2 \E[|\bz^\top \bM_- \bB \bM_-^2 \bz |] \\
        \leq & C_* (\E[\Tr(\bM_-^2)  ] \E[\Tr(\bM_- \bB \bM_-)] +  \E[\Tr(\bM_- \bB \bM_-^2 ) ] ) \\
        \leq & \frac{C_*  \rho_\lambda^4(n)}{n} \| \overline \bM^{1/2} \bB \overline \bM^{1/2}\|_{op} ,
    \end{split}
\end{equation*} where we use $$\Tr(\bM_- \bB \bM_-) \leq \|\bM_-^{1/2} \bB \bM_-^{1/2}\|_{op} \| \bM_-\|_F,\qquad \Tr(\bM_- \bB \bM_-^2 ) \leq \|\bM_-\|_{op} \|\bM_-^{1/2} \bB \bM_-^{1/2}\|_{op}  \|\bM_-\|_F,$$ and then apply Lemma \ref{lem:oper_norm} and \citep[Lemma 4]{misiakiewicz2024non}.

Thus, combining the two parts, we have: \begin{equation*}
    \begin{split}
       &|n\E[\Tr(\overline{\bQ}_- \Delta^\top_{11} \bM_{-1} \bB \bM_{-1} \Delta_{11})] -  \frac{n}{(1+\kappa)^2} \Tr(\overline Q_-\E[\Tr(\bB \bM^2)]) | \\   & \leq  C_* \frac{ \log(n) \rho^{5/2}_\lambda(n)}{\sqrt{n}} \| \overline \bM^{1/2} \bB \overline \bM^{1/2}\|_{op}   \sqrt{\bu^\top \overline{\bM}  \bu \bv^\top \overline{\bM} \bv }.
    \end{split}
\end{equation*}

{\bf Bounding $ \E[\Tr(\overline{\bQ}_- \Delta^\top_{11} \bM_{-1} \bB \bM_{-1} \Delta_{12})]$}

We expand the term as follows: \begin{equation*}
    \begin{split}
        (1+\kappa)^2\E[\Tr(\overline{\bQ}_- \Delta^\top_{11} \bM_{-1} \bB \bM_{-1} \Delta_{11})] = \E \left[  \frac{\Tr(\overline \bQ_- \bM_1 \bB \bM_1 \bz_1 \bz_1^\top) (S_1 - \kappa)}{(1+ S_1)}\right]  - \E \left[ \frac{\Tr(\overline \bQ_-  \bz_1 \bz_1^\top\bM_1 \bB \bM_1 \bz_1 \bz_1^\top) (S_1 - \kappa)}{(1+ S_1)}\right] .
    \end{split}
\end{equation*}

For the first term, we have \begin{equation*}
    \begin{split}
        &\E \left[  \frac{\Tr(\overline \bQ_- \bM_1 \bB \bM_1 \bz_1 \bz_1^\top) (S_1 - \kappa)}{(1+ S_1)}\right] \\
        &\qquad \le\E[ (S_1 - \kappa)^3]^{\frac{1}{3}} \E[ (\bz_1^\top \overline{\bM}_-  \bv)^3]^{\frac{1}{3}} \E[ (\bu \overline{\bM}_- \bM_- \bB \bM_- \bz_1)^3]^{1/3}  \\
        &\qquad \le C_* \frac{\rho_\lambda(n)}{\sqrt{n}} \E[\sqrt{ \bv^\top \overline \bM_-^2 \bv  \bu \overline{\bM}_- (\bM_- \bB \bM_-)^2 \overline{\bM}_- \bu }]  \\
         &\qquad \le C_* \frac{ \rho_\lambda(n)}{\sqrt{n}} \E[\|\bM_- \bB \bM_-\|_{op}]  \sqrt{ \bv^\top \overline \bM_-^2 \bv  \bu \overline{\bM}_-^2 \bu }  \\
         &\qquad \le C_* \frac{ \rho_\lambda(n)^4}{n^{\frac{5}{2}}}\| \overline \bM^{1/2} \bB \overline \bM^{1/2}\|_{op} \sqrt{ \bv^\top \overline \bM \bv  \bu \overline{\bM} \bu } .
    \end{split}
\end{equation*}

For the second term, we have \begin{equation*}
    \begin{split}
        &\E \left[ \frac{\Tr(\overline \bQ_-  \bz_1 \bz_1^\top\bM_1 \bB \bM_1 \bz_1 \bz_1^\top) (S_1 - \kappa)}{(1+ S_1)}\right] \\ 
        &\qquad \le \E[(S_1 - \kappa)^3]^{\frac{1}{3}} \E[(\bz_1^\top \bM_{-1} \bB \bM_{-1} \bz_1)^3 ]^{\frac{1}{3}}  \E[(\bz_1^\top \overline \bQ_- \bz_1)^3 ]^{\frac{1}{3}}  \\
         &\qquad \le  C_* \frac{ \rho_\lambda(n)}{\sqrt{n}}  \E[ \Tr(\bB \bM_-^2)]  \|\overline \bQ_- \|_F \\
         & \qquad \le C_* \frac{ \rho_\lambda(n)^4}{n^{\frac{3}{2}}} \| \overline \bM^{1/2} \bB \overline \bM^{1/2}\|_{op} \sqrt{ \bv^\top \overline \bM \bv  \bu \overline{\bM} \bu } .
    \end{split}
\end{equation*}

Thus, by combining all the terms, we have\begin{equation*}
    \begin{split}
        |n\E[\Tr(\overline{\bQ}_- \Delta^\top_{11} \bM_{-1} \bB \bM_{-1} \Delta_{12})] | \leq  C_* \frac{  \rho_\lambda(n)^4}{\sqrt{n}}\| \overline \bM^{1/2} \bB \overline \bM^{1/2}\|_{op} \sqrt{ \bv^\top \overline \bM^2 \bv  \bu \overline{\bM}^2 \bu }. 
    \end{split}
\end{equation*}

{\bf Bounding $ \E[\Tr(\overline{\bQ}_- \Delta^\top_{12} \bM_{-1} \bB \bM_{-1} \Delta_{13})]$}

We have \begin{equation*}
    \begin{split}
       |(1+\kappa)^2\E[\Tr(\overline{\bQ}_- \Delta^\top_{12} \bM_{-1} \bB \bM_{-1} \Delta_{13})]| \leq \E[\Tr(\overline \bQ_-  \bz_1 \bz_1^\top\bM_1 \bB \bM_1 \bz_1 \bz_1^\top \bM_1) (S_1 - \kappa) ], 
    \end{split}
\end{equation*} which is of lower order compared to $\E[\Tr(\overline{\bQ}_- \Delta^\top_{11} \bM_{-1} \bB \bM_{-1} \Delta_{12})]$, due to the extra $\bM_1$ term.

{\bf Bounding $ \E[\Tr(\overline{\bQ}_- \Delta^\top_{11} \bM_{-1} \bB \bM_{-2} \Delta_{21})]$}

Following the same steps as in \citep[Proof of Claim 2]{defilippis2024dimension}, we have \begin{equation*}
    \begin{split}
        \E[\Tr(\overline{\bQ}_- \Delta^\top_{11} \bM_{-1} \bB \bM_{-2} \Delta_{21})] = \E\left[ \frac{ \Tr(\overline \bQ_- \Delta_{11}^\top \bM_{12} \bz_2 \bz_2^\top \bM_{12} \bB \bM_{12} \bz_1 \bz_1^\top \bM_{12} \Delta_{12}) }{(1+\tilde S_1) (1+ \tilde S_2)}\right],
    \end{split}
\end{equation*} and \begin{equation*}
    \begin{split}
        (1+\kappa)^2 \Tr(\overline{\bQ}_-& \Delta^\top_{11} \bM_{-1} \bB \bM_{-2} \Delta_{21}) \\
        =& \bz_2^\top \bM_{12} \bB \bM_{12} \bz_1 \bz_1^\top \bM_{12} \bz_2 ( \bz_2^\top \bM_{12} \bz_1 \bz_1^\top \overline{\bQ}_- \bz_2 - \Tr(\bM_{12} \bz_1 \bz_1^\top \overline{\bQ}_-)) \\
        & + \bz_2^\top \bM_{12} \bB \bM_{12} \bz_1 \bz_2^\top \overline \bQ_- \bM_{12} \bz_2 \bz_1^\top \bM_{12} \bz_2 -\bz_2^\top \bM_{12}^2 \bz_1 \bz_1^\top \bM_{12} \overline{\bQ}_- \bM_{12} \bz_2.
    \end{split}
\end{equation*}

For the first term, we have\begin{equation*}
    \begin{split}
        |\E[\bz_2^\top& \bM_{12} \bB \bM_{12} \bz_1 \bz_1^\top \bM_{12} \bz_2 ( \bz_2^\top \bM_{12} \bz_1 \bz_1^\top \overline{\bQ}_- \bz_2 - \Tr(\bM_{12} \bz_1 \bz_1^\top \overline{\bQ}_-))]| \\
        \leq & \E[(\bz_2^\top \bM_{12} \bB \bM_{12} \bz_1)^3]^{1/3} \E[( \bz_1^\top \bM_{12} \bz_2)^3]^{1/2} \E[( \bz_2^\top \bM_{12} \bz_1 \bz_1^\top \overline{\bQ}_- \bz_2 - \Tr(\bM_{12} \bz_1 \bz_1^\top \overline{\bQ}_-)^3]^{1/3}.
    \end{split}
\end{equation*}
 We first have \begin{equation*}
     \begin{split}
         \E[(\bz_2^\top \bM_{12} \bB \bM_{12} \bz_1)^3]^{1/3} \leq & C_* (\E[ \|\bM_{12} \bB \bM_{12}\|_F^3]^{1/3} + \E[ \|\bM_{12} \bB \bM_{12}^2\bB \bM_{12}\|_F^{3/2}]^{1/3} ) \\
        \leq &  C_* \| \overline \bM^{1/2} \bB \overline \bM^{1/2}\|_{op} ( \E[ \|\bM_{12}\|_F^3]^{1/3}  + \E[\|\bM_{12}\|_{\op}^{3/2} \|\bM_{12}\|_F^{3/2} ]^{1/3}). 
     \end{split}
 \end{equation*}

 The rest of the proof follows the same steps in the corresponding part in \citep[Proof of Claim 2]{defilippis2024dimension}, and we just note that \begin{equation*}
     \begin{split}
         \|\overline \bQ_-\|_F = \sqrt{\bu^\top \overline{\bM}_-^2 \bu \bv^\top \overline{\bM}_-^2 \bv}, 
     \end{split}
 \end{equation*} which implies that \begin{equation*}
     \begin{split}
         |\E[\bz_2^\top \bM_{12}& \bB \bM_{12} \bz_1 \bz_1^\top \bM_{12} \bz_2 ( \bz_2^\top \bM_{12} \bz_1 \bz_1^\top \overline{\bQ}_- \bz_2 - \Tr(\bM_{12} \bz_1 \bz_1^\top \overline{\bQ}_-))]| \\
          &\leq C_* \frac{\rho_\lambda^6(n) }{n^{3/2}}  \| \overline \bM^{1/2} \bB \overline \bM^{1/2}\|_{op}\sqrt{\bu^\top \overline{\bM}^2 \bu \bv^\top \overline{\bM}^2 \bv} \\
          &\leq C_* \frac{\rho_\lambda^8(n) }{n^{5/2}}  \| \overline \bM^{1/2} \bB \overline \bM^{1/2}\|_{op}\sqrt{\bu^\top \overline{\bM} \bu \bv^\top \overline{\bM} \bv}.
     \end{split}
 \end{equation*}

The second and the third terms are of lower order compared to the first, term following the same steps as in \citep[Proof of Claim 2]{defilippis2024dimension}.

{\bf Bounding $ \E[\Tr(\overline{\bQ}_- \Delta^\top_{12} \bM_{-1} \bB \bM_{-2} \Delta_{22})]$}

We follow the same steps as in the corresponding parts in \citep[Proof of Claim 2]{defilippis2024dimension}, where we define \begin{equation*}
    \bD_i = \frac{S_i - \kappa}{1+S_i} \bz_i \bz_i^\top, \quad \tilde \bD_i = \frac{\tilde S_i - \kappa}{1+\tilde S_i} \bz_i \bz_i^\top.
\end{equation*}

Then, the term can be expanded as\begin{equation*}
    \begin{split}
        (1+\kappa)^2 \E[\Tr(\overline{\bQ}_-& \Delta^\top_{12} \bM_{12} \bB \bM_{12} \Delta_{22})] \\
        \leq  & \Tr(\overline{\bQ} \tilde \bD_1 \bM_{12} \bB \bM_{12} \tilde \bD_2) - 2 \E[ \delta_1 (\bz_1 \bM_{12} \bz_2)^2 \bz_1 \bM_{12} \bB \bM_{12} \tilde \bD_{22} \tilde \bQ_- \bz_1] \\
        & +\E[ \delta_1 \delta_2 (\bz_1^\top \bM_{12} \bz_2)^4 \bz_1^\top \bM_{12} \bz_2 \bz_2^\top \overline{\bQ}_- \bz_1 ] .
     \end{split}
\end{equation*}

Following the same steps as in \citep[Proof of Claim 2]{defilippis2024dimension}, we have \begin{equation*}
    \begin{split}
        |\E[\Tr(\overline{\bQ} \tilde \bD_1 \bM_{12} \bB \bM_{12} \tilde \bD_2) ]| \leq& \|\overline \bQ \|_F \E[\|\bM_{12} \bB \bM_{12}\|_{op}^2]^{1/2} \E[\|\tilde \bD\|_{op}^4]^{1/2} \\
        \leq & \E[\| \bM^{1/2} \bB  \bM^{1/2}\|_{op}^4]^{1/4}\E[\|\bM_{12}\|_{op}^4]^{1/4} \E[\|\tilde \bD\|_{op}^4]^{1/2} \sqrt{\bu^\top \overline{\bM}_-^2 \bu \bv^\top \overline{\bM}_-^2 \bv}\\
        \leq & \rho_\lambda(n)^2 \| \overline \bM^{1/2} \bB \overline \bM^{1/2}\|_{op} \E[\|\bM_{12}\|_{op}^4]^{1/4} \E[\|\tilde \bD\|_{op}^4]^{1/2} \sqrt{\bu^\top \overline{\bM} \bu \bv^\top \overline{\bM} \bv},
    \end{split}
\end{equation*} and the remaining steps are the same.

For the rest of the term, we just show that $\E[(\bz_1^\top \bM_{12} \bB \bM_{12} \bz_2)^q]^{1/q}$ can be upper bound similarly to $\E[(\bz_1^\top \bM_{12} \bz_2)^q]^{1/q}$ with an extra factor of $\| \overline \bM^{1/2} \bB \overline \bM^{1/2}\|_{op}.$ 

To see this, we have: \begin{equation*}
    \begin{split}
        \E[(\bz_1^\top \bM_{12} \bB \bM_{12} \bz_2)^q]^{1/q} & \leq \E[ \|\bM_{12} \bB \bM_{12}\|_F^q]^{1/q} + \E[ \|\bM_{12} \bB \bM_{12}^2  \bB \bM_{12}\|_F^{q/2} ]^{1/q} \\
        & \leq \E[\|\bM_{12}^{1/2}\|_{op}^{3q}]^{1/3q}\E[\| \bM_{12}^{1/2} \bB  \bM_{12}^{1/2}\|_{op}^{3q}]^{1/3q}  \E[ \|\bM_{12}^{1/2}\|_F^{3q}]^{1/3q} \\
        & =  \E[\|\bM_{12}\|_{op}^{3q/2}]^{1/3q}\E[\| \bM_{12}^{1/2} \bB  \bM_{12}^{1/2}\|_{op}^{3q}]^{1/3q}  \E[ \Tr(\bM_{12})^{3q/2}]^{1/3q} \\
        &\leq  C_* \frac{\rho_\lambda^2(n)}{\sqrt{n}}\| \overline \bM^{1/2} \bB \overline \bM^{1/2}\|_{op} , 
    \end{split}
\end{equation*} where we use Lemma \ref{lem:oper_norm} and \citep[Lemma 4.(b)]{misiakiewicz2024non}.

\paragraph{Combining all terms.} The rest of the terms, as discussed in \citep[Proof of Proposition 4]{misiakiewicz2024non}, are of lower order compared to the terms upper bounded in the previous section. Combining the upper bounds finishes the proof of Claim \ref{claim:2}.

\subsubsection{Bounding the martingale part}

Define \begin{equation*}
    \begin{split}
        \Delta_i = (\E_i - \E_{i-1}) (\Tr(\bA \bM \bB \bM)  - \Tr(\bA \bM_i \bB \bM_i)).
    \end{split}
\end{equation*}

First, we have \begin{equation*}
    \begin{split}
        \E_i[|\Delta_i|^2]^{1/2} \leq & \E[\|\bM^{1/2}\bB \bM^{1/2}\|_{op}^2]^{1/2}\E[ ( \bu^\top \bM \bu \bv^\top \bM \bv) ]^{1/2} \\
        \leq & C_* \rho_\lambda(n)^2 \| \overline \bM^{1/2} \bB \overline \bM^{1/2}\|_{op} \sqrt{\bu^\top \overline \bM \bu \bv^\top \overline \bM \bv}.
     \end{split}
\end{equation*}

Next, we show a high probability bound on $|\Delta_i|.$ We note that\begin{equation*}
    \begin{split}
        \bu^\top \bM \bB \bM \bv - \bu^\top \bM_i \bB \bM_i \bv = & - \frac{\bu^\top \bM_i \bz_i \bz_i^\top \bM_i \bB \bM_i \bv}{1+S_i}  - \frac{\bu^\top  \bM_i \bB \bM_i  \bz_i \bz_i^\top \bM_i \bv}{1+S_i} \\
        & + \frac{\bu^\top \bM_i \bz_i \bz_i \bM_i \bB \bM_i \bz_i \bz_i^\top \bM_i \bv}{(1+S_i)^2} .
    \end{split}
\end{equation*}

We have that, with probability at least $1- n^{-D}$, \begin{equation*}
    \begin{split}
        \left|\E_i \left[ \frac{\bu^\top \bM_i \bz_i \bz_i^\top \bM_i \bB \bM_i \bv}{1+S_i} \right] \right| \leq & \E_i[ (\bu^\top \bM_i \bz_i)^2]^{1/2} \E_i[ (\bz_i^\top \bM_i \bB \bM_i \bv)^2]^{1/2} \\
        \leq & C_* \log(n)   \E_i[ \bu^\top \bM_i^2 \bu]^{1/2} \E_i[ \bv^\top \bM_i \bB \bM_i^2 \bB \bM_i \bv]^{1/2} \\
        \leq & C_* \log(n)  \sqrt{\E_i[\|\bM_i^{1/2} \bB \bM^{1/2}\|_{op}^2 \bu^\top \bM_i^2 \bu \bv^\top \bM_i^2 \bv]}  \\
        \leq & C_* \frac{\log(n)  \rho_\lambda^2(n)}{n}  \| \overline \bM^{1/2} \bB \overline \bM^{1/2}\|_{op} \sqrt{\bu^\top \overline \bM \bu \bv^\top \overline \bM \bv},
    \end{split}
\end{equation*} where we use \citep[Lemma 1]{misiakiewicz2024non}, that $\|\bM_i\|_{op} \leq C_*\ \frac{\rho_\lambda(n)}{n}$, \citep[Lemma 4]{misiakiewicz2024non}, and Lemma \ref{lem:oper_norm}. The same bounds hold for $\E_i \left[ \frac{\bu^\top  \bM_i \bB \bM_i  \bz_i \bz_i^\top \bM_i \bv}{1+S_i}\right]$ by symmetry of $\bu, \bv.$

Next, we have \begin{equation*}
    \begin{split}
        \left| \E_i \frac{\bu^\top \bM_i \bz_i \bz_i \bM_i \bB \bM_i \bz_i \bz_i^\top \bM_i \bv}{(1+S_i)^2} \right| \leq & \E_i[ (\bz_i \bM_i \bu \bu^\top \bM_i \bz_i)^{3/2}]^{1/3} \E_i[ (\bz_i \bM_i \bv \bv^\top \bM_i \bz_i)^{3/2}]^{1/3} \E_i[(\bz_i^\top \bM_i \bB \bM_i \bz_i )^3]^{1/3} \\
        \leq &C_* \log(n )  \E_i[ (\bu^\top \bM_i^2 \bu ]^{1/2} \E_i[ \bv^{\top} \bM_i^2 \bv]^{1/2 }\E_i[  \Tr( \bM_i  \bB \bM_i )]\\
        \leq &  C_* \frac{ \log(n )  \rho_\lambda^4(n)}{n} \| \overline \bM^{1/2} \bB \overline \bM^{1/2}\|_{op}\Tr(\bB \overline{\bM} ) \sqrt{\bu^\top \overline{\bM} \bu \bv^\top \overline{\bM \bv} } \\
        \leq & C_* \frac{ \log(n )  \rho_\lambda^5(n)}{n} \| \overline \bM^{1/2} \bB \overline \bM^{1/2}\|_{op} \sqrt{\bu^\top \overline{\bM} \bu \bv^\top \overline{\bM \bv} },
    \end{split}
\end{equation*} where in the last step we use $\Tr(\overline \bM) \leq C_* \rho_\lambda(n).$

Thus, we have that, with probability at least $1-n^{-D},$ \begin{equation*}
    \begin{split}
        \Delta_i \leq C_* \frac{ \log(n ) \rho_\lambda^5(n)}{n} \| \overline \bM^{1/2} \bB \overline \bM^{1/2}\|_{op} \sqrt{\bu^\top \overline{\bM} \bu \bv^\top \overline{\bM \bv} }.
    \end{split}
\end{equation*}

Then, by following the same steps as in Appendix \ref{sec:general_proof_deteq}, we obtain that, with probability at least $1- n^{-D}$, \begin{equation}
\label{eq:AMBM-mat-main}
    \begin{split}
        \Tr(\bA \bM \bB \bM) - \E[ \Tr(\bA \bM \bB \bM) ] \leq& \frac{ \log(n ) \rho_\lambda^5(n)}{n} \| \overline \bM^{1/2} \bB \overline \bM^{1/2}\|_{op} \sqrt{\bu^\top \overline{\bM} \bu \bv^\top \overline{\bM } \bv} .
    \end{split}
\end{equation}

Finally, combining \eqref{eq:AMBM-mat-main} and \eqref{eq:AMBM-det-main}, we obtain that, with probability at least $1- n^{-D}$, \begin{equation*}
    \begin{split}
        |\Tr(\bA \bM \bB \bM) - \Psi_2(\lambda_*; \bv \bu^\top, \bB)|\leq  C_* \frac{\rho_\lambda^6(n) \log^{5/2}(n)}{\sqrt{n}}\tilde \Psi_2(\lambda_*; \bv \bu^\top, \bB). 
    \end{split}
\end{equation*}
\end{proof}

\subsection{Deterministic equivalent of $\Tr(\bA (\bZ^\top \bZ)\bM  \bB   \bM(\bZ^\top \bZ) )$}

For simplicity, we define $\bM_{12}$ to be $\bM$ removing $\bx_1, \bx_2$. We also define  \begin{equation*}
    \begin{split}
        &\kappa = \E[\Tr(\bM_-) ],\\
        & S_i =  \bz_i^\top \bM_{-i} \bz_i,  \qquad \tilde S_i = \bz_i^\top \bM_{12} \bz_i, \qquad i \in \{1,2\}.
    \end{split}
\end{equation*}

\subsubsection{Bounding the deterministic part}

By exchangeability, we decompose the term as follows: \begin{equation*}
    \begin{split}
      \E[\Tr(\bA(\bZ^\top \bZ) \bM  \bB \bM (\bZ^\top \bZ) )] & =  n \E[\bu^\top \bz_1 \bz_1^\top \bM  \bB  \bM \bz_1 \bz_1^\top \bv] \\
        & \quad +  n(n-1) \E[\Tr( \bu^\top \bz_1 \bz_1^\top \bM  \bB \bM \bz_2 \bz_2^\top  \bv)].
    \end{split}
\end{equation*}

\paragraph{Deterministic equivalent of $n \E[\bu^\top \bz_1 \bz_1^\top \bM  \bB  \bM \bz_1 \bz_1^\top \bv]$.} 

By using Sherman-Morrison formula, we have \begin{equation*}
    \begin{split}
         \E[\bu^\top \bz_1 \bz_1^\top \bM  \bB  \bM \bz_1 \bz_1^\top \bv] =& \frac{\E[ (\bz_1^\top \bv \bu^\top \bz_1) (\bz_1 \bM_{-1} \bB \bM_{-1} \bz_1)]}{(1 +S_1)^2 } \\
        =& \frac{\E[ (\bz_1^\top \bv \bu^\top \bz_1) (\bz_1 \bM_{-1} \bB \bM_{-1} \bz_1)]}{(1 +\kappa)^2 } \\
        & + \E\left[\frac{ (\kappa - S_1) (2 + S_1 + \kappa) }{(1 +\kappa)^2 (1 + S_1)^2}  (\bz_1^\top \bv \bu^\top \bz_1) (\bz_1 \bM_{-1} \bB \bM_{-1} \bz_1) \right].
    \end{split}
\end{equation*}

For the first term, we have \begin{equation*}
    \begin{split}
       n&\left| \frac{\E[ (\bz_1^\top \bv \bu^\top \bz_1) (\bz_1 \bM_{-1} \bB \bM_{-1} \bz_1)]}{(1 +\kappa)^2 } - \frac{\bv^\top \bu \E[\Tr(\bM_{-1} \bB \bM_{-1}) ]}{(1 +\kappa)^2 } \right| \\
        &= n \left| \frac{\E[ (\bz_1^\top \bv \bu^\top \bz_1 - \bu^\top \bv) (\bz_1 \bM_{-1} \bB \bM_{-1} \bz_1 - \Tr(\bM_{-1} \bB \bM_{-1}))]}{(1 +\kappa)^2 } \right|\\
        &\leq  \frac{1}{n} \tilde \mu_-^2 \E[ (\bz_1^\top \bv \bu^\top \bz_1 - \bu^\top \bv)^2]^{1/2}  \E[ (\bz_1 \bM_{-1} \bB \bM_{-1} \bz_1 - \Tr(\bM_{-1} \bB \bM_{-1}))^2]^{1/2} \\
        &\leq  C_* \frac{ \tilde \mu_-^2}{n}  \|\bv \bu^\top\|_F \E[ \| \bM_{-1} \bB \bM_{-1}\|_F]  \\
        &\leq    C_* \frac{ \rho_\lambda(n)^{2}}{\sqrt{n}} \frac{1}{n^2} \mu_*^2 \|\bu\|_2 \|\bv\|_2 \|\bB\|_{op}^{1/2} \Tr(\bB \overline {\bM})^{1/2} \\
        &\leq   C_* \frac{  \rho_\lambda(n)^{2}}{\sqrt{n}} \|\bu\|_2 \|\bv\|_2 \|\bB\|_{op}.
     \end{split}
\end{equation*}

Also, from \citep[Proposition 6]{misiakiewicz2024non}, we obtain that \begin{equation*}
    \begin{split}
        \left| \frac{n\bv^\top \bu \E[\Tr(\bM_{-1} \bB \bM_{-1}) ]}{(1 +\kappa)^2} - \frac{\mu_*^2}{n} \frac{\bv^\top \bu \Tr(\bB \overline \bM^2)}{1 - \frac{\mu_*^2}{n} \Tr(\overline{\bM}^2)} \right| \leq C_* \frac{  \rho_\lambda(n)^{6}}{\sqrt{n}} \frac{\mu_*^2}{n} \frac{\bv^\top \bu \Tr(\bB \overline \bM^2)}{1 - \frac{\mu_*^2}{n} \Tr(\overline{\bM}^2)}.
    \end{split}
\end{equation*}

Thus, we have \begin{equation*}
    \begin{split}
        \left| \frac{n\E[ (\bz_1^\top \bv \bu^\top \bz_1) (\bz_1 \bM_{-1} \bB \bM_{-1} \bz_1)]}{(1 +\kappa)^2 } -  \frac{\mu_*^2}{n} \frac{\bv^\top \bu \Tr(\bB \overline \bM^2)}{1 - \frac{\mu_*^2}{n} \Tr(\overline{\bM}^2)} \right| \leq & C_* \frac{ \rho_\lambda(n)^{6}}{\sqrt{n}}  \biggl( \|\bu\|_2 \|\bv\|_2 \|\bB\|_{op}  \\
        & +   \frac{\mu_*^2}{n} \frac{\bv^\top \bu \Tr(\bB \overline \bM^2)}{1 - \frac{\mu_*^2}{n} \Tr(\overline{\bM}^2)} \biggr).
    \end{split}
\end{equation*}

For the second term, we have\begin{equation*}
    \begin{split}
        &\left|\E\left[\frac{ (\kappa - S_1) (2 + S_1 + \kappa) }{(1 +\kappa)^2 (1 + S_1)^2}  (\bz_1^\top \bv \bu^\top \bz_1) (\bz_1 \bM_{-1} \bB \bM_{-1} \bz_1) \right] \right| \\
        &\qquad\leq  \E[(\kappa - S_1)^3]^{1/3} \E[(\bz_1^\top \bv \bu^\top \bz_1 )^{3}]^{1/3} \E[(\bz_1 \bM_{-1} \bB \bM_{-1} \bz_1)^3 ]^{1/3} \\
        &\qquad\leq  C_*\frac{ \rho_\lambda(n) }{\sqrt{n}} \|\bu\|_2 \|\bv\|_2 \E[\Tr(\bM_{-1} \bB \bM_{-1})] \\
        &\qquad\leq  C_*\frac{ \rho_\lambda(n) }{\sqrt{n}} \|\bu\|_2 \|\bv\|_2 \|\bB\|_{op},
    \end{split}
\end{equation*} where in the last step we use Lemma \ref{lem:approx-mu}.

\paragraph{Deterministic equivalent of $n(n-1) \E[\Tr( \bu^\top  \bz_1 \bz_1^\top \bM \bB \bM \bz_2 \bz_2^\top )]$.}

We decompose the term as follows: \begin{align}
        \E[\Tr( \bu^\top  \bz_1 \bz_1^\top \bM \bB \bM \bz_2 \bz_2^\top)] = & \E\left[ \frac{\bu^\top  \bz_1 \bz_1^\top \bM_{12} \bB \bM_{12} \bz_2 \bz_2^\top \bv}{(1+ S_1) (1+S_2)} \right] \label{eq:Phi5-det-part2-1}\\
        & -  \E\left[ \frac{\bz_1^\top \bv \bu^\top \bz_2 \bz_2^\top \bM_{12} \bz_1 \bz_1^\top\bM_{12} \bB \bM_{12} \bz_1}{(1+ S_1)(1+S_2) (1 + \tilde S_1)} \right]  \label{eq:Phi5-det-part2-2}\\
        & - \E\left[ \frac{\bz_1^\top \bv \bu^\top \bz_2 \bz_2^\top \bM_{12} \bB \bM_{12} \bz_2 \bz_2^\top \bM_{12} \bz_1}{(1+ S_1) (1+S_2) (1 + \tilde S_2)} \right] \label{eq:Phi5-det-part2-3}\\
        & + \E\left[ \frac{\bz_1^\top \bv \bu^\top \bz_2 \bz_2^\top \bM_{12} \bz_1 \bz_1^\top \bM_{12} \bB \bM_{12} \bz_2 \bz_2^\top \bM_{12} \bz_1}{(1+ S_1) (1+S_2) (1 + \tilde S_1) (1 + \tilde S_2)}\right]. \label{eq:Phi5-det-part2-4}
\end{align}

For  \eqref{eq:Phi5-det-part2-1}, we have \begin{equation*}
    \begin{split}
        \eqref{eq:Phi5-det-part2-1} & =\E\left[ \frac{\bu^\top  \bz_1 \bz_1^\top \bM_{12} \bB \bM_{12} \bz_2 \bz_2^\top \bv}{(1+ S_1) (1+S_2)} \right] \\
        & = \E\left[ \frac{\bz_1^\top \bB  \bM_{12} \bA \bM_{12} \bz_1}{(1+ \kappa)^2}  \right] + \E\left[\frac{(\kappa - S_1)(1+\kappa)+ (\kappa - S_2) (1+S_1)}{(1+ S_1)(1+S_2) (1+\kappa)^2} \bu^\top  \bz_1 \bz_1^\top \bM_{12} \bB \bM_{12} \bz_2 \bz_2^\top \bv\right] \\
        & =  \E\left[ \frac{\Tr( \bA \bM_{12} \bB  \bM_{12})  }{(1+ \kappa)^2}  \right] +\E\left[\frac{(\kappa - S_1)(1+\kappa)+ (\kappa - S_2) (1+S_1)}{(1+ S_1)(1+S_2) (1+\kappa)^2} \bu^\top  \bz_1 \bz_1^\top \bM_{12} \bB \bM_{12} \bz_2 \bz_2^\top \bv\right].
    \end{split}
\end{equation*}

For the first term, we have \begin{equation*}
    \begin{split}
        &\left| n(n-1)\E\left[ \frac{\Tr( \bA \bM_{12} \bB  \bM_{12})  }{(1+ \kappa)^2}  \right] - \mu_*^2\Tr(\bA \overline \bM \bB \overline 
        \bM)  - \frac{\overline \mu_*^4}{n} \frac{\bu^\top \overline \bM^2 \bv \Tr(\bB \overline \bM^2 )}{1 - \frac{\overline \mu_*^2}{n} \Tr(\overline{\bM}^2)}   \right| \\
        &\qquad\leq   C_* \frac{ \rho_\lambda(n)^6}{\sqrt{n}} \left(\mu_*^2\sqrt{\bu^\top \overline{\bM} \bu \bv^\top \overline{\bM} \bv} \frac{\|\bB\|_{op}}{n}  + \frac{\mu_*^4}{n} \frac{|\bu^\top  \bM^2 \bv| \Tr(\bB \bM^2)}{1 - \frac{\tilde\mu_*^2}{n} \Tr(\overline \bM^2)} \right).
    \end{split}
\end{equation*}

For the second term, we have \begin{equation*}
    \begin{split}
        n (n-1)&\left|\E\left[\frac{(\kappa - S_1)(1+\kappa)+ (\kappa - S_2) (1+S_1)}{(1+ S_1)(1+S_2) (1+\kappa)^2} \bu^\top  \bz_1 \bz_1^\top \bM_{12} \bB \bM_{12} \bz_2 \bz_2^\top \bv\right] \right| \\
        \leq & C_* \frac{ \log(n)  \rho_\lambda(n) }{\sqrt{n}} n^2\E[ | \bz_1^\top \bM_{12} \bB \bM_{12} \bz_2 \bz_2^\top \bv \bu^\top  \bz_1|^2 ]^{1/2} \\
        \leq & C_* \frac{\log(n) \rho_\lambda(n) }{\sqrt{n}} n^2 \E[\|\bM_{12} \bB \bM_{12} \bz_2 \bz_2^\top \bv \bu^\top\|_F^2]^{1/2} \\
        \leq & C_* \frac{\log(n) \rho_\lambda(n) }{\sqrt{n}} n^2 \E[(\bz_2^\top \bv \bv \bz_2 )^2]^{1/4}  \E[(\bz_2^\top \bM_{12} \bB \bM_{12}\bu \bu^\top \bM_{12} \bB \bM_{12}  \bz_2 )^2]^{1/4} \\
        \leq & C_* \frac{ \log(n) \rho_\lambda(n) }{\sqrt{n}} n^2 \| \bv\|_2 \E[ \bu^\top \bM_{12} \bB \bM_{12}^2 \bB \bM_{12} \bu]^{1/2} \\
        \leq & C_* \frac{ \log(n) \rho_\lambda(n) }{\sqrt{n}} n^2 \|\bu\|_2 \|\bv\|_2 \|\bB\|_{op} \E[\|\bM_{12}^2\|_{op}^2]^{1/2} \\
        \leq & C_* \frac{ \log(n) \rho_\lambda(n) }{\sqrt{n}} \|\bu\|_2 \|\bv\|_2 \|\bB\|_{op},
    \end{split}
\end{equation*} where in the last step we use \citep[Lemma 4.(b)]{misiakiewicz2024non}.

For \eqref{eq:Phi5-det-part2-2} and \eqref{eq:Phi5-det-part2-3}, note that $\eqref{eq:Phi5-det-part2-2} = \eqref{eq:Phi5-det-part2-3}$ by exchangability of $\bz_1, \bz_2,$ and we have \begin{equation*}
    \begin{split}
        \eqref{eq:Phi5-det-part2-2} = &  \E\left[ \frac{\bz_1^\top \bv \bu^\top \bz_2 \bz_2^\top \bM_{12} \bz_1 \bz_1^\top\bM_{12} \bB \bM_{12} \bz_1}{(1+ S_1)(1+S_2) (1 + \tilde S_1)} \right] \\
        = & \E\left[ \frac{\bz_1^\top \bv \bu^\top \bz_2 \bz_2^\top \bM_{12} \bz_1 \bz_1^\top\bM_{12} \bB \bM_{12} \bz_1}{(1+\kappa)^3} \right] \\
        & + \E\left[\frac{(\kappa - S_1)(1+\kappa)^2 + (\kappa - S_2)(1+S_1) (1+\kappa) + (\kappa - \tilde S_1) (1+S_1) (1+S_2) }{(1+\kappa)^3 (1+ S_1) (1+S_2) (1 + \tilde S_1)} \bz_1^\top \bv \bu^\top \bz_2 \bz_2^\top \bM_{12} \bz_1 \bz_1^\top\bM_{12} \bB \bM_{12} \bz_1 \right].
    \end{split}
\end{equation*}

For the first term, we have \begin{equation*}
    \begin{split}
        &\left| n^2\E\left[ \frac{\bz_1^\top \bv \bu^\top \bz_2 \bz_2^\top \bM_{12} \bz_1 \bz_1^\top\bM_{12} \bB \bM_{12} \bz_1}{(1+\kappa)^3} \right]  - \frac{n^2}{(1+\kappa)^3} \E[\Tr(\bv\bu^\top \bM_{12}) ] \E[\Tr(\bB \bM_{12}^2) ]\right| \\
        &\qquad=   \left| \E\left[ \frac{\tilde \mu_-^3}{n} \bz_1^\top \bv \bu^\top  \bM_{12} \bz_1 \bz_1^\top\bM_{12} \bB \bM_{12} \bz_1 \right] - \frac{\tilde \mu_-^3}{n} \E[\Tr(\bv\bu^\top \bM_{12}) ] \E[\Tr(\bB \bM_{12}^2) ]  \right| \\
        &\qquad\leq  n^2 \E\left[ (\bz_1^\top \bv \bu^\top  \bM_{12} \bz_1 
 - \Tr(\bv\bu^\top \bM_{12}))^2 \right]^{1/2} \E\left[ (\bz_1^\top\bM_{12} \bB \bM_{12} \bz_1  - \E[\Tr(\bB \bM_{12}^2)^2 \right]^{1/2} \\
 &\qquad\leq  C_*  n^2 \E[ \| \bv \bu^\top \bM_{12}\|_F^2]^{1/2} \E[\|\bM_{12} \bB \bM_{12}\|_F^2]^{1/2}  \\
 &\qquad\leq C_* \frac{ \mu_{\lambda}^3(n) }{\sqrt{n}}  \|\bu\|_2 \|\bv\|_2 \|\bB\|_{op}.
    \end{split}
\end{equation*}

Next, we derive \begin{equation*}
    \begin{split}
        &\left|\frac{\tilde \mu_-^3}{n} \E[\Tr(\bv\bu^\top \bM_{12}) ] \E[\Tr(\bB \bM_{12}^2) ] - \frac{\mu_*^3}{n}  \frac{\bu^\top \overline \bM  \bv \Tr(\bB \overline\bM^2 )}{1 - \frac{\mu_*^2}{n} \Tr(\overline{\bM}^2)}\right| \\
        &\qquad\leq  n^2 \left|\E[\Tr(\bv\bu^\top \bM_{12}) - \bu^\top \overline \bM  \bv]   \right|\E[\Tr(\bB \bM_{12}^2) ] + n^2 |\E[\bu^\top \bM_{12} \bv] | \left|\E[\Tr(\bB \bM_{12}^2) ] -  \frac{ \Tr(\bB \overline\bM^2 )}{1 - \frac{\mu_*^2}{n} \Tr(\overline{\bM}^2)} \right| \\
        &\qquad\leq  C_*\frac{ \rho_\lambda^6(n)}{\sqrt{n}} \|\bu\|_2 \|\bv\|_2 \|B\|_{op},
    \end{split}
\end{equation*} where we use \citep[Proposition 4]{misiakiewicz2024non}.

For \eqref{eq:Phi5-det-part2-4}, we can easily see that it has lower order compared to \eqref{eq:Phi5-det-part2-4}, due to an extra $\bz_2^\top \bM_{12} \bz_1$ factor.

Finally, by combining the upper bounds for  \eqref{eq:Phi5-det-part2-1}--\eqref{eq:Phi5-det-part2-4}  and noting that \begin{equation*}
    \begin{split}
       \Psi_3(\lambda_*; \bA, \bB) = \mu_*^2\Tr(\bA \overline \bM \bB \overline )+  \frac{\mu_*^2}{n} \frac{\bv^\top \bu \Tr(\bB \overline \bM^2)}{1 - \frac{\mu_*^2}{n} \Tr(\overline{\bM}^2)} - 2\frac{\mu_*^3}{n}  \frac{\bu^\top \overline \bM  \bv \Tr(\bB \overline\bM^2 )}{1 - \frac{\mu_*^2}{n} \Tr(\overline{\bM}^2)} + \frac{\overline \mu_*^4}{n} \frac{\bu^\top \overline \bM^2 \bv \Tr(\bB \overline \bM^2 )}{1 - \frac{\overline \mu_*^2}{n} \Tr(\overline{\bM}^2)} ,
    \end{split}
\end{equation*} we conclude that \begin{equation*}
    \begin{split}
        |\E[\Tr(\bA \bZ^\top \bZ \bM \bB \bM \bZ^\top \bZ)] -  \Psi_3(\lambda_*; \bA, \bB)| \leq  C_*\frac{\rho_\lambda^6(n)}{\sqrt{n}} \tilde \Psi_3(\lambda_*; \bA, \bB).
    \end{split}
\end{equation*}

\subsubsection{Bounding the martingale part}

For the martingale part, we first define \begin{equation*}
    \begin{split}
        \Delta_i = (\E_i - \E_{i-1}) \bu^\top (\bZ^\top \bZ) \bM \bB \bM (\bZ^\top \bZ) \bv .
    \end{split}
\end{equation*}

We recall that $\bM = \bSigma^{1/2}(\bX^\top \bX + \lambda)^{-1} \bSigma^{1/2} = (\bZ^\top \bZ + \lambda \bSigma^{-1})^{-1}$ and thus $\|\bZ^\top \bZ \bM\|_{op} \leq 1.$ It is then easy to see that \begin{equation*}
    \begin{split}
        \E_i[\Delta_i^2]^{1/2} \leq C_* \|\bu\|_2 \|\bv\|_2 \|\bB\|_{op}.
    \end{split}
\end{equation*}

It remains to obtain a high probability bounds on $\Delta_i.$

Using Sherman-Morrison formula and the fact that $\bI - \bZ^\top \bZ \bM = \lambda \bSigma^{-1} \bM,$  we first compute \begin{equation*}
    \begin{split}
        \bu^\top (\bZ^\top \bZ)& \bM \bB \bM (\bZ^\top \bZ) \bv - \bu^\top (\bZ_i^\top \bZ_i) \bM_i \bB \bM_i (\bZ_i^\top \bZ_i) \bv \\ 
        =& \frac{\lambda}{1+S_i} [\bu^\top  \bZ_i^\top\bZ_i  \bM_i \bB \bM_i\bz_i \bz_i^\top \bM_i \bSigma^{-1} \bv +  \bv^\top  \bZ_i^\top\bZ_i  \bM_i \bB \bM_i \bz_i \bz_i^\top \bM_i \bSigma^{-1} \bu ] \\
        & + \frac{\lambda^2}{(1 + S_i)^2} \bz_i^\top \bM_i \bB \bM_i \bz_i  \bz_i^\top \bM_i \bSigma^{-1} \bv \bu^\top \bSigma^{-1} \bM_i \bz_i.
    \end{split}
\end{equation*}

The first term can be bounded as follows with probability at least $1-n^{-D}$: \begin{equation*}
    \begin{split}
         \lambda \E_i[ &|\bu^\top  \bZ_i^\top\bZ_i  \bM_i \bB \bM_i\bz_i \bz_i^\top \bM_i \bSigma^{-1} \bv | ] \\
        \leq & C_*  \|\bu\|_{2} \|\bv\|_2 \E_i[\|\bZ_i^\top \bZ_i \bM_i \bB \bM_i\|_{op}] \\
        \leq& C_* \frac{ \log(n)  \rho_\lambda^3(n) }{n}   \|\bu\|_{2} \|\bv\|_2 \|\bB\|_{op},
    \end{split} 
\end{equation*} where we use the fact that $\bM_i \bSigma^{-1} \preceq \lambda^{-1}, \bZ_i^\top\bZ_i  \bM_i \preceq 1$ and  where we use \citep[Lemma 1]{misiakiewicz2024non}. The bound for $\E_{i-1}$ holds in the same way without the factor $\log(n).$

For the second term, we have that, with probability at least $1- n^{-D}$, \begin{equation*}
    \begin{split}
        \lambda^2 \E_i[|\bz_i^\top &\bM_i \bB \bM_i \bz_i  \bz_i^\top \bM_i \bSigma^{-1} \bv \bu^\top \bSigma^{-1} \bM_i \bz_i|] \\
        \leq & \lambda^2 \E_i[|\bz_i^\top \bM_i \bB \bM_i \bz_i |^2]^{1/2} \E_i[|\bz_i^\top \bM_i \bSigma^{-1} \bv \bu^\top \bSigma^{-1} \bM_i \bz_i|^2]^{1/2} \\
        \leq & C_*   \log(n) \E_i[\Tr(\bM_i \bB \bM_i)] \|\bu\|_2 \|\bv\|_2 \\
        \leq & C_* \E_i[\|\bM_i\|_{op}\Tr(\bM_i )]\|\bu\|_2 \|\bv\|_2 \\
        \leq & C_*   \frac{\log(n) \rho_\lambda^2(n)}{n}\|\bu\|_2 \|\bv\|_2 \|\bB\|_{op},
     \end{split}
\end{equation*} where in the last step we use \citep[Lemma 1]{misiakiewicz2024non}.  The bound for $\E_{i-1}$ holds in the same way without the factor $\log(n).$ Following the same steps described in  Appendix \ref{sec:general_proof_deteq}, we obtain that, with probability $1 - n^{-D},$  \begin{equation*}
    \begin{split}
        |\Tr(\bA \bZ^\top \bZ \bM \bB \bM \bZ^\top \bZ )- \E[\Tr(\bA \bZ^\top \bZ \bM \bB \bM \bZ^\top \bZ)]| \leq  C_*   \frac{ \log(n) \rho_\lambda^2(n)}{n} \tilde \Psi_3(\lambda_*; \bA, \bB).
    \end{split}
\end{equation*}



\section{Proof of Lemma \ref{lem:deteq-zero}}
\label{sec:pf-deteq-zero}

\begin{proof}[Proof of Lemma \ref{lem:deteq-zero}]

{\bf Bound the term $\< \bv, (\bX^\top \bX + \lambda)^{-1} \bX^\top \bu \>$} 

We first upper bound: \begin{equation*}
    \begin{split}
        |\< \bv, (\bX^\top \bX + \lambda)^{-1} \bX^\top \bu \>| \leq & |\< \bv, (\bX^\top \bX + \lambda)^{-1} \bX^\top \bu \> - \E[\< \bv, (\bX^\top \bX + \lambda)^{-1} \bX^\top \bu \>] | \\
        & + |\E[\< \bv, (\bX^\top \bX + \lambda)^{-1} \bX^\top \bu \>]|,
    \end{split} 
\end{equation*} and bound the martingale part and the deterministic part separately.

For the martingale part, we follow the same procedure described in Section \ref{sec:general_proof_deteq}, and define: \begin{equation*}
    \begin{split}
        \Delta_i = (\E_{i} - \E_{i-1}) \< \bv, (\bX^\top \bX + \lambda)^{-1} \bX^\top \bu \>,
    \end{split}
\end{equation*} where $\E_{i}$ is the expectation w.r.t $\{\bx_{i+1}, \dots, \bx_{n}\}, \{u_{i+1}, \dots, u_{n}\}.$

We have with probability at least $1 - n^{-D}$ that : \begin{equation*}
    \begin{split}
        \E_{i-1}[\Delta_i^2]^{1/2} & \leq 2 \E_{i-1}[(\< \bv, (\bX^\top \bX + \lambda)^{-1} \bX^\top \bu \>)^2 ]^{1/2} \\
        & = 2 \E_{i-1}[(\< \tilde \bv, \bM \bZ^\top \bu \>)^2 ]^{1/2} \\
        & \leq 2 \E_{i-1}[(\< \tilde \bv, \bM \bZ^\top \bZ \bM \tilde \bv \>)^2 ]^{1/4} \E_{i-1}[\|\bu_{-i}\|_2^4]^{1/4} \\
        & \leq 2 \E_{i-1}[\|\bM \bZ^\top \bZ \bM\|_{op}^2 \|\tilde \bv\|_2^4 ]^{1/4} \E_{i-1}[\|\bu_{-i}\|_2^4]^{1/4} \\
        & \leq C_* \frac{\rho_\lambda(n) \log^{1/2}(n)}{\sqrt{n}} \sqrt{n} \|\tilde \bv\|_2 \\
        & \leq  C_* \log^{1/2}(n) \rho_\lambda(n) \|\tilde \bv\|_2,
    \end{split}
\end{equation*} where  we use $\E[\|\bu\|^q_2]^{1/q} \leq C_* \sqrt{n}.$

Next, we prove high-probability bound on $\Delta_i.$ Recall that \begin{equation*}
    \Delta_i = (\E_{i} - \E_{i-1})(\< \bv, (\bX^\top \bX + \lambda)^{-1} \bX^\top \bu \> - \< \bv, (\bX_{-i}^\top \bX_{-i} + \lambda)^{-1} \bX_{-i}^\top \bu_{-i} \>)
\end{equation*} 

We compute the martingale difference below: \begin{equation*}
    \begin{split}
        &\< \bv, (\bX^\top \bX + \lambda)^{-1} \bX^\top \bu \> - \< \bv, (\bX_{-i}^\top \bX_{-i} + \lambda)^{-1} \bX_{-i}^\top \bu_{-i} \> \\
        = & \frac{ \bv^\top(\bX_{-i}^\top \bX_{-i}  + \lambda)^{-1} \bx_i  }{1 + \bx_i^\top (\bX_{-i}^\top \bX_{-i} + \lambda)^{-1} \bx_i} (u_i - \bx_i^\top (\bX_{-i}^\top \bX_{-i}  + \lambda)^{-1} \bX_{-i}^\top \bu_{-i}) \\
        = & \frac{ \tilde \bv^\top  \bM_i \bz_i}{1 + \bz_i \bM_i \bz_i} (u_i - \bz_i \bM_i \bZ_{-i} \bu_{-i}),
    \end{split}
\end{equation*} and upper bound with probability $1 - n^{-D}$: \begin{equation*}
    \begin{split}
        \E_i[|\tilde \bv \bM_i \bz_i u_i|^2]^{1/2} \leq & \E_i[|\tilde \bv \bM_i \bz_i|^4]^{1/4} \E[|u_i|^4]^{1/4} \\
        \leq &  C_* \log(n)\E_i[ \Tr( \tilde \bv \tilde \bv^\top \bM_i^2 )^2]^{\frac{1}{4}} \\
        \leq & C_* \frac{\log(n) \rho_\lambda(n)}{n} \|\tilde \bv\|_2
    \end{split}
\end{equation*}

Further, we note that: \begin{equation*}
    \begin{split}
        &\E_i[|u_i|^2]^{1/2} \leq C_* \log(n)  \\
        & \E_i[|\bz_i \bM_i \bZ_{-i} \bu_{-i}|^2]^{1/2} \leq C_* \log(n) \E_i[\|\bZ_{-i}^\top \bM_i^2 \bZ_{-i}\|_{op} \|\bu_i\|_2^2 ]^{1/2} \leq  C_* \log(n) \rho_\lambda(n),
    \end{split}
\end{equation*} which implies with probability at least $1 - n^{-D},$ \begin{equation*}
    |\Delta_i| \leq \E_i[|\bz_i \bM_i \bZ_{-i} \bu_{-i}|^2]^{1/2} \leq C_* \frac{\log(n) \rho_\lambda(n)}{n^{3/2}} \|\tilde \bv\|_2,
\end{equation*} and thus implies the bounds for the martingale part: \begin{equation*}
    \begin{split}
        |\< \bv, (\bX^\top \bX + \lambda)^{-1} \bX^\top \bu \> - \E[\< \bv, (\bX^\top \bX + \lambda)^{-1} \bX^\top \bu \>] | \leq C_* \frac{\log(n) \rho_\lambda(n)}{\sqrt{n}} \|\tilde \bv\|_2
    \end{split}
\end{equation*}

To bound the deterministic part, we note by symmetry that: \begin{equation*}
    \E[ \< \bv, (\bX^\top \bX + \lambda)^{-1} \bX^\top \bu \>] = n \E[ \frac{\tilde \bv^\top \bM_i \bz_i u_i}{1 + \bz_i^\top \bM_i \bz_i} ].
\end{equation*} Next we note that: \begin{equation*}
    \begin{split}
       &\big| n \E[ \frac{\tilde \bv^\top \bM_i \bz_i u_i}{1 + \bz_i^\top \bM_i \bz_i} ] - n \E[ \frac{\tilde \bv^\top \bM_i \bz_i u_i}{1 + \Tr(\bM_i)} ] \big| \\
       \leq &  C_* n \E[ (\bz_i^\top \bM_i \bz_i - \Tr(\bM_i) )^3]^{1/3} \E[\bu_i^3]^{1/3} \E[(\tilde \bv^\top \bM_i \bz_i)^3]^{1/3} \\
       \leq & C_* n \cdot \frac{\rho_\lambda(n)}{\sqrt{n}}  \cdot \frac{\rho_\lambda(n)}{n} \|\tilde \bv  \|_2 \\
       = &  C_*  \cdot \frac{\rho_\lambda(n)}{\sqrt{n}}  \|\tilde \bv  \|_2
    \end{split}
\end{equation*}

Finally, note that $\E[ \frac{\tilde \bv^\top \bM_i \bz_i u_i}{1 + \Tr(\bM_i)} ] = 0,$ we have: \begin{equation*}
    |\E[ \< \bv, (\bX^\top \bX + \lambda)^{-1} \bX^\top \bu \>] | \leq C_*  \cdot \frac{\rho_\lambda(n)}{\sqrt{n}}  \|\tilde \bv  \|_2
\end{equation*}




{\bf Bound the term $ |\< \bv, \bX^\top \bX (\bX^\top \bX + \lambda)^{-1}  \bB (\bX^\top \bX + \lambda)^{-1} \bX^\top \bu \>$}

We bound the martingale and the deterministic part similarly, and we only state the differences.  First note that $\< \bv, \bX^\top \bX (\bX^\top \bX + \lambda)^{-1}  \bB (\bX^\top \bX + \lambda)^{-1} \bX^\top \bu \> = \hat \bv^\top \bZ^\top \bZ \bM \tilde \bB \bM \bZ^\top \bu,$ and we compute the leave-one-out different as \begin{equation*}
    \begin{split}
        & \hat \bv^\top \bZ^\top \bZ \bM \tilde \bB \bM \bZ^\top \bu - \hat \bv^\top \bZ_i^\top \bZ_i \bM_i \tilde \bB \bM_i \bZ_i^\top \bu_{i} \\
        = & - \frac{1}{1 + \kappa_i} \hat \bv^\top \bZ_i^\top \bZ_i \bM_i \bz_i \bz_i^\top \bM_i \tilde \bB \bM_i \bZ_i^\top \bu_i \\
        & - \frac{1}{1 + \kappa_i}  \hat \bv^\top \bZ_i^\top \bZ_i \bM_i \tilde \bB \bM_i \bz_i \bz_i^\top \bM_i \bZ_i^\top \bu_i \\
        & + \frac{1}{(1 + \kappa_i)^2} \hat\bv^\top \bZ_i^\top \bZ_i \bM_i \bz_i \bz_i^\top \bM_i \tilde \bB \bM_i \bz_i \bz_i^\top \bM_i \bZ_i^\top \bu_i \\
        & + \frac{1}{1 + \kappa_i} \hat \bv^\top \bz_i \bz_i^\top \bM_i \tilde \bB \bM_i \bZ_i^\top \bu_i \\
        & - \frac{1}{(1 + \kappa_i)^2} \hat \bv^\top \bz_i \bz_i^\top \bM_i \tilde \bB \bM_i \bz_i \bz_i^\top \bM_i \bZ_i^\top \bu_i \\
        & + \frac{1}{(1 + \kappa_i)^2} \hat \bv^\top \bz_i \bz_i^\top \bM_i \tilde \bB \bM_i \bz_i u_i
    \end{split}
\end{equation*} where we define: \begin{equation*}
    \begin{split}
        \hat \bv = \bSigma^{1/2} \bv, \quad \tilde \bB = \bSigma^{-1/2} \bB \bSigma^{-1/2} , \quad \kappa_i = \bz_i^\top \bM_i \bz_i
    \end{split}
\end{equation*}

For the martingale part, we upper bound the  martingale difference $\Delta_i \coloneqq (\E_i - \E_{i-1})(\hat \bv^\top \bZ^\top \bZ \bM \tilde \bB \bM \bZ^\top \bu - \hat \bv^\top \bZ_i^\top \bZ_i \bM_i \tilde \bB \bM_i \bZ_i^\top \bu_{i}) .$ We have the following upper bound with probability at least $1- n^{-D}$: \begin{equation*}
    \begin{split}
        \E_i[|\hat \bv^\top \bZ_i^\top \bZ_i \bM_i \bz_i|^2]^{1/2} & \leq C_* \log(n)\E_i[ (\hat \bv^\top \bZ_i^\top \bZ_i \bM_i^2 \bZ_i^\top \bZ_i \hat \bv)]^{1/2} \\
        &\leq  C_* \log(n) \| \hat \bv\|_2 \\
        \E[|\bz_i^\top \bM_i \tilde \bB \bM_i \bZ_i^\top \bu_i|^2]^{1/2} & \leq C_* \log(n) \E[ \bu_i^\top \bZ_i \bM_i \tilde \bB \bM_i^2 \tilde \bB \bM_i \bZ_i^\top \bu_i]^{1/2} \\
        & \leq C_* \log(n) \|\tilde \bB\|_{op} \E_i[ \|\bu_i\|_2^2 \|\bM_i\|_{op}^3 ]^{1/2}, \\
        & \leq \frac{C_* \log(n) \rho_\lambda(n)^{3/2}}{n^{3/2}}  \|\tilde \bB\|_{op} \E_i[ \|\bu_i\|_2^4  ]^{1/4} \\
        & \leq \frac{C_* \log(n) \rho_\lambda(n)^{3/2}}{n } \|\tilde \bB\|_{op}, \\
        \E_i[|\hat \bv^\top \bZ_i^\top \bZ_i \bM_i \tilde \bB \bM_i \bz_i|^2]^{1/2} & \leq C_* \log(n)\E[ \hat \bv^\top \bZ_i^\top \bZ_i \bM_i \tilde \bB \bM_i^2 \tilde \bB \bM_i \bZ_i^\top \bZ_i   \hat \bv]^{1/2} \\
        &\leq \frac{C_* \log(n) \rho_\lambda^2(n)}{n} \|\tilde \bB\|_{op} \|\bv\|_2 \\
        \E_i[ | \bz_i^\top \bM_i \bZ_i^\top \bu_i|^2 ]^{1/2} & \leq C_* \log(n)\E_i[ \bu_i^\top \bZ_i \bM_i^2 \bZ_i^\top \bu_i ]^{1/2} \\
        & \leq C_* \log(n) \rho_\lambda(n) \\
        \E_i[|\bz_i^\top \bM_i \tilde \bB \bM_i \bz_i|^{2} ]^{1/2} \leq & \frac{C_* \log(n) \rho_\lambda^2(n) }{n} \|\tilde \bB\|_{op}
    \end{split}
\end{equation*} where we use $\E[\|\bu_i\|_2^q]^{1/q} \leq \sqrt{n},$ and \citep[Lemma 4]{misiakiewicz2024non}.

Combining the above bounds, we have with probability at least $1 - n^{-D}$ \begin{equation*}
    \begin{split}
       |\Delta_i| \leq  \frac{C_* \log(n) \rho_\lambda^2(n) }{n} \|\tilde \bB\|_{op} \|\hat \bv\|_2
    \end{split}
\end{equation*}

Further, using the same arguments, it is easy to see: \begin{equation*}
    \begin{split}
        \E_i[ |\Delta_i|^2]^{1/2} \leq C_* \log_\beta(n) \rho_\lambda^2(n) \|\tilde \bB\|_{op} \|\hat \bv \|_2,
    \end{split}
\end{equation*} which implies that: \begin{equation*}
    \begin{split}
        |\hat \bv^\top \bZ^\top \bZ \bM \tilde \bB \bM \bZ^\top \bu - \E[\hat \bv^\top \bZ^\top \bZ \bM \tilde \bB \bM \bZ^\top \bu] | \leq \frac{C_* \log(n) \rho_\lambda^2(n) }{\sqrt{n}} \|\tilde \bB\|_{op} \|\hat \bv\|_2
    \end{split}
\end{equation*}

For the deterministic part, we first write: \begin{equation*}
    \begin{split}
        \E[\hat \bv^\top \bZ^\top \bZ \bM \tilde \bB \bM \bZ^\top \bu ]  =& n \E[\hat \bv^\top \bZ^\top \bZ \bM \tilde \bB \bM \bz_i u_i ] \\
        = &  n \E[ \frac{1}{1 + \kappa_i} \hat \bv^\top \bZ_i^\top \bZ_i \bM_i \tilde \bB \bM_i \bz_i u_i] \\
        & - n  \E[ \frac{1}{(1 + \kappa_i)^2} \hat \bv^\top \bZ_i^\top \bZ_i \bM_i \bz_i \bz_i^\top \bM_i \tilde \bB \bM_i \bz_i u_i] \\
        & + n \E[\frac{1}{(1 + \kappa_i)^2} \hat \bv^\top \bz_i \bz_i^\top \bM_i \tilde \bB \bM_i \bz_i u_i],
    \end{split}
\end{equation*} and further bound: \begin{equation*}
    \begin{split}
        &n \big| \E[ \frac{1}{1 + \kappa_i} \hat \bv^\top \bZ_i^\top \bZ_i \bM_i \tilde \bB \bM_i \bz_i u_i] -  \E[\frac{1}{1 + \Tr(\bM_i)} \hat \bv^\top \bZ_i^\top \bZ_i \bM_i \tilde \bB \bM_i \bz_i u_i] \big| \\
        \leq & C_* n \E[ ( \bz_i^\top \bM_i \bz_i - \Tr(\bM_i))^{3} ]^{1/3} \E[ (\hat \bv^\top \bZ_i^\top \bZ_i \bM_i \tilde \bB \bM_i \bz_i)^3 ]^{1/3} \E[|u_i|^3]^{1/3} \\
        \leq & C_* n \cdot \frac{\rho_\lambda(n)}{\sqrt{n}} \cdot \frac{\rho_\lambda^2(n)}{n} \|\tilde \bB\|_{op} \|\bv\|_2  = \frac{C_* \rho_\lambda^3(n)}{\sqrt{n}} \|\tilde \bB\|_{op} \|\bv\|_2\\
        &n  \big|\E[ \frac{1}{(1 + \kappa_i)^2} \hat \bv^\top \bZ_i^\top \bZ_i \bM_i \bz_i \bz_i^\top \bM_i \tilde \bB \bM_i \bz_i u_i] \big| \\
        \leq & n  \big|\E[  \hat \bv^\top \bZ_i^\top \bZ_i \bM_i \bz_i \bz_i^\top \bM_i \tilde \bB \bM_i \bz_i u_i] -  \E[  \hat \bv^\top \bZ_i^\top \bZ_i \bM_i \bz_i \Tr(\bM_i \tilde \bB \bM_i) u_i] \big| + n| \E[  \hat \bv^\top \bZ_i^\top \bZ_i \bM_i \bz_i \Tr(\bM_i \tilde \bB \bM_i) u_i]| \\
        = & n  \big|\E[  \hat \bv^\top \bZ_i^\top \bZ_i \bM_i \bz_i \bz_i^\top \bM_i \tilde \bB \bM_i \bz_i u_i] -  \E[  \hat \bv^\top \bZ_i^\top \bZ_i \bM_i \bz_i \Tr(\bM_i \tilde \bB \bM_i) u_i] \big| \\
        \leq & n \E[ |\hat \bv^\top \bZ_i^\top \bZ_i \bM_i \bz_i |^3]^{1/3} \E[(\bz_i^\top \bM_i \tilde \bB \bM_i \bz_i - \Tr(\bM_i \tilde \bB \bM_i))^3]^{1/3} \E[u_i^3]^{1/3} \\
        \leq & C_*  n \cdot \|\hat \bv\|_2 \frac{\rho_\lambda^2(n)}{n^{3/2}} \|\tilde \bB\|_{op} = C_*   \frac{\rho_\lambda^2(n)}{n^{1/2}} \|\tilde \bB\|_{op} \|\hat \bv\|_2 \\
        &n \big| \E[\frac{1}{(1 + \kappa_i)^2} \hat \bv^\top \bz_i \bz_i^\top \bM_i \tilde \bB \bM_i \bz_i u_i] \big| \\
        \leq & n \big| \E[ \hat \bv^\top \bz_i \bz_i^\top \bM_i \tilde \bB \bM_i \bz_i u_i] - \E[ \hat \bv^\top \bz_i \Tr(\bM_i \tilde \bB \bM_i) u_i] \big| + n \big|\E[ \hat \bv^\top \bz_i \Tr(\bM_i \tilde \bB \bM_i) u_i] \big| \\
        \leq & n \big| \E[ \hat \bv^\top \bz_i \bz_i^\top \bM_i \tilde \bB \bM_i \bz_i u_i] - \E[ \hat \bv^\top \bz_i \Tr(\bM_i \tilde \bB \bM_i) u_i] \big| \\
        \leq & C_*   \frac{\rho_\lambda^2(n)}{n^{1/2}} \|\tilde \bB\|_{op} \|\hat \bv\|_2,
    \end{split}
\end{equation*} where we use: \begin{equation*}
    \begin{split}
        \E[ \hat \bv^\top \bz_i \Tr(\bM_i \tilde \bB \bM_i) u_i] =0, \quad \E[ \hat \bv^\top \bz_i \Tr(\bM_i \tilde \bB \bM_i) u_i] = 0.
    \end{split}
\end{equation*}

Finally, by using that $\E[\frac{1}{1 + \Tr(\bM_i)} \hat \bv^\top \bZ_i^\top \bZ_i \bM_i \tilde \bB \bM_i \bz_i u_i] = 0,$ we have: \begin{equation*}
    \begin{split}
        |\E[\hat \bv^\top \bZ^\top \bZ \bM \tilde \bB \bM \bZ^\top \bu] | \leq C_*   \frac{\rho_\lambda^2(n)}{n^{1/2}} \|\tilde \bB\|_{op} \|\hat \bv\|_2
    \end{split}
\end{equation*}

{\bf Bound the term $\< \bu, \bX (\bX^\top \bX + \lambda)^{-1}  \bB (\bX^\top \bX + \lambda)^{-1} \bX^\top \bu \>$} 

We first note that $\< \bu, \bX (\bX^\top \bX + \lambda)^{-1}  \bB (\bX^\top \bX + \lambda)^{-1} \bX^\top \bu \> = \bu^\top \bZ \bM \tilde \bB \bM \bZ^\top \bu.$ Then we follow the same steps as in \citep[Proof Lemma 9 eq.(170), Proof Lemma 10 eq.(177)]{defilippis2024dimension}, and we only state the difference here. 

For the martingale part, we first compute the leave-one-out difference as: \begin{equation*}
    \begin{split}
        &\bu^\top \bZ \bM \tilde \bB \bM \bZ^\top \bu - \bu_i^\top \bZ_i \bM_i \tilde \bB \bM_i \bZ_i^\top \bu_i \\
        =& \frac{\bz_i^\top \bM_i \tilde \bB \bM_i \bz_i}{(1 + \kappa_i)^2} (u_i - \bz_i \bM_i \bZ_i^\top \bu_i)^2 -\frac{2}{1+ \kappa_i} \bz_i^\top \bM_i \tilde \bB \bM_i \bZ_i^\top \bu_i
    \end{split}
\end{equation*}

For terms involving $\tilde \bB,$ recall that we have: \begin{equation*}
    \begin{split}
        \E[|\bz_i^\top \bM_i \tilde \bB \bM_i \bz_i|^2]^{1/2} \leq \frac{C_* \log(n) \rho^2_\lambda(n)}{n} \|\tilde \bB\|_{op} \\
        \E[|\bz_i^\top \bM_i \tilde \bB \bM_i \bZ_i^\top \bu_i|^2]^{1/2} \leq \frac{C_* \log(n) \rho_\lambda^{3/2}(n)}{n} \|\tilde \bB\|_{op},
    \end{split}
\end{equation*} and the rest follows exactly the same as \citep[Proof Lemma 9 eq.(170)]{defilippis2024dimension}, and we obtained that: \begin{equation*}
    \begin{split}
        \bu^\top \bZ \bM \tilde \bB \bM \bZ^\top \bu - \E[\bu^\top \bZ \bM \tilde \bB \bM \bZ^\top \bu] \leq C_* \frac{\rho_\lambda(n)^3 \log^{7/2}(n)}{\sqrt{n}} \|\tilde \bB\|_{op}
    \end{split}
\end{equation*}

For the deterministic part, we have: \begin{equation*}
    \begin{split}
        \E[\bu^\top \bZ \bM \tilde \bB \bM \bZ^\top \bu ] = n \E[ \frac{\bz_i^\top \bM_i \tilde \bB \bM_i \bz_i}{(1 + \kappa_i)^2} u_i (u_i - \bz_i \bM_i \bZ_i^\top \bu_i) -  \frac{1}{1 + \kappa_i} u_i \bz_i^\top \bM_i \tilde \bB \bM_i \bZ_i^\top \bu_i].
    \end{split}
\end{equation*}

We use \begin{equation*}
    \begin{split}
        \E[ | \bz_i^\top \bM_i \bB \bM_i \bz_i - \Tr( \bM_i \bB \bM_i)| ] \leq C_* \frac{\rho_\lambda^3(n)}{n^{3/2}} \|\bB\|_{op} \\
        \E[|\bz_i^\top \bM_i \tilde \bB \bM_i \bZ_i^\top \bu_i|^2]^{1/2} \leq \frac{C_* \log \rho_\lambda^{3/2}(n)}{n} \|\tilde \bB\|_{op}
    \end{split}
\end{equation*} and following the same steps as in \citep[Proof Lemma 9 eq.(177)]{defilippis2024dimension}, we obtain that: \begin{equation*}
    \begin{split}
         n  \left| \E[ \frac{\bz_i^\top \bM_i \tilde \bB \bM_i \bz_i}{(1 + \kappa_i)^2} u_i (u_i - \bz_i \bM_i \bZ_i^\top \bu_i) -  \frac{1}{1 + \kappa_i} u_i \bz_i^\top \bM_i \tilde \bB \bM_i \bZ_i^\top \bu_i] -  \E[ \frac{\Tr(\tilde\bB \bM_i^2)}{(1 + \Tr(\bM_i^2))^2}] \right| \leq \frac{C_* \log \rho_\lambda^{5}(n)}{\sqrt{n}} \|\tilde \bB\|_{op}
    \end{split}
\end{equation*}

Finally, by \citep[Proof of Proposition 6]{misiakiewicz2024non}, we have: \begin{equation*}
    \begin{split}
       | \E[ \frac{\Tr(\tilde\bB \bM_i^2)}{(1 + \Tr(\bM_i^2))^2}] - \psi_2(\lambda_*; \bA) | \leq C_* \frac{\rho_\lambda^6(n)}{n^{3/2}} 
    \end{split}
\end{equation*}


\end{proof}

\section{Technical lemmas}

We first  prove an auxiliary lemma, which is a generalization of \citep[Lemma 2]{misiakiewicz2024non} to non-PSD matrix.

\begin{lemma}
    \label{lem:hanson-wright,nonPSD}
    Assume $\bx_1, \bx_2 \in \R^d$ are independent and satisfy the same assumptions in Theorem \ref{thm:deteq}. Then, for any constant $D > 0,$ there exist $C_{x,D}$ such that for all matrix $\bB \in \R^{d \times d}$  independent of $\bx_1, \bx_2$ we have with probability at least $1 - n^{-D}$\begin{equation*}
        \begin{split}
            &|\bx_1^\top \bB \bx_1 - \Tr(\bSigma \bB ) | \leq  C_{x,D} \log(n)\left\| \bSigma^{\frac{1}{2}} \bB\bSigma^{\frac{1}{2}}\right\|_F.
        \end{split}
    \end{equation*}

    Moreover, for all integer $q,$ there exist $C_{x,q}$ such that for all $\bB$ independent of $\bx_1, \bx_1,$ we have \begin{equation*}
        \begin{split}
           \E[ |\bx_1^\top \bB \bx_1 - \Tr(\bSigma \bB ) |^q]^{1/q} \leq  C_{x,q}  \left\| \bSigma^{\frac{1}{2}} \bB \bSigma^{\frac{1}{2}}\right\|_F.
        \end{split}
    \end{equation*}

\end{lemma}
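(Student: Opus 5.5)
The plan is to reduce to the PSD case of Assumption~\ref{ass:concentrated} by symmetrization and splitting into positive and negative parts. Since $\bx_1^\top \bB \bx_1 = \bx_1^\top \bB_s \bx_1$ and $\Tr(\bSigma\bB)=\Tr(\bSigma\bB_s)$ with $\bB_s = (\bB+\bB^\top)/2$, we may assume $\bB$ is symmetric. Set $\bC := \bSigma^{1/2}\bB\bSigma^{1/2}$ and write its spectral decomposition as $\bC = \bC_+ - \bC_-$, with $\bC_\pm \succeq 0$, orthogonal ranges, and $\|\bC_\pm\|_F \le \|\bC\|_F$. Define the p.s.d.\ operators $\bB_\pm := \bSigma^{-1/2}\bC_\pm\bSigma^{-1/2}$, working on the range of $\bSigma$, which is where $\bx_1$ lives almost surely. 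Then $\bx_1^\top\bB\bx_1 = \bx_1^\top\bB_+\bx_1 - \bx_1^\top\bB_-\bx_1$ and $\Tr(\bSigma\bB)=\Tr(\bC_+)-\Tr(\bC_-)$. We can assume $\|\bC\|_F<\infty$, since otherwise the bound is trivial.

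The main obstacle is that $\Tr(\bC_\pm)$ may be infinite when $d=\infty$. When it is finite, Assumption~\ref{ass:concentrated} applies directly to each $\bB_\pm$. In general, first truncate to finite-rank spectral projections $\bC_\pm^{(k)}$. The concentration bound in Assumption~\ref{ass:concentrated} depends only on $\|\bC^{(k)}_\pm\|_F \le \|\bC\|_F$. Apply it to the difference of the two truncated centered quadratic forms, and pass $k\to\infty$. This is legitimate because the centered difference converges in probability for Hilbert–Schmidt $\bC$: its variance-type control is again given by Assumption~\ref{ass:concentrated} applied to the tail blocks, whose Frobenius norm tends to zero.

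\textbf{High-probability bound.} By the triangle inequality,
\[
|\bx_1^\top\bB\bx_1-\Tr(\bSigma\bB)| \le |\bx_1^\top\bB_+\bx_1-\Tr(\bC_+)| + |\bx_1^\top\bB_-\bx_1-\Tr(\bC_-)|.
\]
Each term exceeds $t\|\bC\|_F$ with probability at most $\sfC_x e^{-\sfc_x t}$. Choosing $t = (D\log n + \log(2\sfC_x))/\sfc_x$ and taking a union bound gives the first claim, with $C_{x,D}\log(n)\|\bC\|_F$. Because $\bB$ is independent of $\bx_1$, we can first condition on $\bB$ and apply the argument pointwise.

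\textbf{Moment bound.} Integrate the tail bound: $\E|Y|^q = \int_0^\infty q s^{q-1}\P(|Y|>s)\,ds$. Each of the two pieces has sub-exponential tails at scale $\|\bC\|_F$, so its $q$-th moment is at most $C_{x,q}\|\bC\|_F$. Minkowski's inequality then combines the two pieces, again conditionally on $\bB$.
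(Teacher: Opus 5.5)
Your proposal is correct and follows the paper's proof essentially step for step: symmetrize $\bB$, split $\bSigma^{1/2}\bB_s\bSigma^{1/2}$ into its positive and negative parts, apply the p.s.d.\ concentration assumption to each part using $\|\boldsymbol{C}_\pm\|_F\le\|\boldsymbol{C}\|_F$, and then use a union bound for the high-probability claim and tail integration for the moment claim. Two differences from the paper: your truncation remark for $\Tr(\boldsymbol{C}_\pm)=\infty$ goes beyond what the paper addresses and is unnecessary whenever $\Tr(\bSigma\bB)$ is absolutely convergent; and you should state explicitly, as the paper does, that $\|\bSigma^{1/2}\bB_s\bSigma^{1/2}\|_F\le\|\bSigma^{1/2}\bB\bSigma^{1/2}\|_F$, which is what carries the bound back to the original non-symmetric $\bB$.
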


\begin{proof}[Proof of Lemma \ref{lem:hanson-wright,nonPSD}]
    For any $\bB,$ we denote $\bA = \bSigma^{\frac{1}{2}} \left(\frac{\bB + \bB^\top}{2}\right) \bSigma^{\frac{1}{2}},$ and we have $\bx_1^\top \bB \bx_1 - \Tr(\bSigma \bB)  = \bz_1^\top \bA \bz_1 - \Tr(\bA ),$ where we define $\bz_1 = \bSigma^{-\frac{1}{2}} \bx_1$ and  use $\bx_1^\top \bB \bx_1 = \bx_1^\top \bB \bx_1$ and $\Tr(\bSigma \bB) = \Tr(\bSigma \bB^\top)$ as $\bSigma$ is symmetric.
    
    Thus, it is sufficient to show the above bound for $\bA,$ and  we note that $\bA$ is symmetric. Define the eigen-decomposition of $\bA$ as $\bA = \bU \bLambda \bU^\top,$ with $\bLambda = \Diag(\lambda_i, \dots, \lambda_d)$. We define the positive part and the negative part of $\bA$ as \begin{equation*}
        \begin{split}
            &\bLambda_+ = \Diag( (\lambda_i \vee 0)_{i=1}^d ), \qquad \bA_+ = \bU \bLambda_+ \bU^\top, \\
            &\bLambda_- = -\Diag( (\lambda_i \wedge 0)_{i=1}^d ), \qquad \bA_- = \bU \bLambda_- \bU^\top,
        \end{split}
    \end{equation*} where we have\begin{equation*}
        \begin{split}
            \bA = \bA_+ - \bA_-, \qquad \bA_+, \bA_- \succeq 0.
        \end{split}
    \end{equation*}

    We have that, with probability at least $1 - n^{-D}$, \begin{equation*}
        \begin{split}
            |\bz_1^\top \bA \bz_1 - \Tr(\bA) | &\leq |\bz_1^\top \bA_+ \bz_1 - \Tr(\bA_+)| + |\bz_1^\top \bA_- \bz_1 - \Tr(\bA_-)| \\
            &\leq C_{x,D} \log(n) (\| \bA_+ \|_F+\|  \bA_- \|_F) \\
            & \leq C_{x,D} \log(n) \|\bA \|_F,
        \end{split}
    \end{equation*} where we use $\| \bA_+ \|_F,\|  \bA_- \|_F \leq \|\bA\|_F.$

    For the bound in expectation, we integrate the tail bound in \citep[Proof of Lemma 2]{misiakiewicz2024non} and we get \begin{equation*}
        \begin{split}
            \E_\bx[ |\bz^\top \bA \bx - \Tr(\bA)|^q ] =& q \int_{0}^\infty t^{q-1} \P\left( | \bz^\top \bA \bz - \Tr(\bB) | \geq t \right) \, \dd t \\
            \leq & q\int_{0}^\infty t^{q-1} \left(\P\left( | \bz^\top \bA_+ \bz - \Tr(\bA_+) | \geq 
            \frac{t}{2}\right) + \P\left( | \bz^\top \bA_- \bz - \Tr(\bA_-) | \geq \frac{t}{2} \right)  \right)\, \dd t \\
            \leq & C_{x,q}  ( \| \bA_+\|_F^q + \|\bA_-\|_F^q ) \\
            \leq & C_{x,q} \|\bA\|_F^q,
        \end{split}
    \end{equation*} where again in the last inequality we use $\| \bA_+\|_F ,\|\bA_-\|_F \leq \|\bA\|_F $.

    Finally, we note that $\| \bSigma^{\frac{1}{2}} \bB \bSigma^{\frac{1}{2}} \|_F = \|\bSigma^{\frac{1}{2}} \bB^\top \bSigma^{\frac{1}{2}}\|_F,$ which implies that $\|\bA\|_F \leq \| \bSigma^{\frac{1}{2}} \bB \bSigma^{\frac{1}{2}} \|_F$ and concludes the proof.
    
\end{proof}

\begin{lemma}
    \label{lem:approx-mu}
    Under the same assumptions in Theorem  \ref{thm:deteq-new}, recall \begin{equation*}
        \mu_* = \frac{n}{1 + \Tr(\overline \bM)}, \quad \tilde \mu_- = \frac{n}{1 + \E[\Tr(\bM_-)]}.
    \end{equation*} Then, we have the following properties\begin{equation*}
        \begin{split}
           \frac{n}{2 \rho_\lambda(n)} \leq  \mu_* \leq n, \quad \tilde \mu_- \leq \left( 1 + C_* \rho_{\lambda}(n)^{5/2} \varphi_1(d)/n\right) \mu_*.
        \end{split}
    \end{equation*}
\end{lemma}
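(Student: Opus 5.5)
The two bounds on $\mu_*$ follow from the fixed-point identity alone. The bound on $\tilde\mu_-$ is a one-sided comparison of $\kappa=\E[\Tr(\bM_-)]$ with $\Tr(\overline\bM)$, which I would take from the deterministic-part estimates of \citet{misiakiewicz2024non}.

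\emph{Bounds on $\mu_*$.} Since $\overline\bM=\bSigma^{1/2}(\mu_*\bSigma+\lambda)^{-1}\bSigma^{1/2}\succeq 0$, we have $\Tr(\overline\bM)\ge 0$, and hence $\mu_*=n/(1+\Tr(\overline\bM))\le n$. For the lower bound I would write the fixed-point relation in the form
\[
n=\mu_*\bigl(1+\Tr(\overline\bM)\bigr)=\mu_*+\sum_{j}\frac{\mu_*\xi_j^2}{\mu_*\xi_j^2+\lambda}.
\]
This agrees with $n-\lambda/\lambda_*=\Tr(\bSigma(\bSigma+\lambda_*)^{-1})$ once we set $\mu_*=\lambda/\lambda_*$. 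Next I split the sum at $k=\lfloor\eta_* n\rfloor\le n/2$:
\begin{itemize}
\item each of the first $k$ summands is at most $1$;
\item each remaining summand is at most $\mu_*\xi_j^2/\lambda$.
\end{itemize}
This gives
\[
n\le k+\mu_*\Bigl(1+\frac{\xi_k^2\, r_{\bSigma}(k)}{\lambda}\Bigr),
\qquad\text{so}\qquad
\frac n2\le \mu_*\Bigl(1+\frac{\xi_k^2 r_{\bSigma}(k)}{\lambda}\Bigr).
\]
It remains to check from the definition of $\rho_\lambda(n)$ that $1+\xi_k^2 r_{\bSigma}(k)/\lambda\le\rho_\lambda(n)$. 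The term $\frac{n\xi_k^2}{\lambda}\cdot\frac{r_{\bSigma}(k)\vee n}{n}$ dominates $\xi_k^2 r_{\bSigma}(k)/\lambda$, up to the logarithmic factor, which is $\ge 1$ after adjusting constants. Hence $\mu_*\ge n/(2\rho_\lambda(n))$.

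\emph{Bound on $\tilde\mu_-$.} Here $\tilde\mu_-/\mu_*=(1+\Tr(\overline\bM))/(1+\kappa)$, so it suffices to show
\[
\Tr(\overline\bM)-\kappa\le C_*\,\rho_\lambda(n)^{5/2}\varphi_1(d)\,\frac{1+\Tr(\overline\bM)}{n}.
\]
\begin{enumerate}
\item Compare $\kappa$ with $\E[\Tr(\bM)]$ by Sherman--Morrison. Since $\Tr(\bM_-)-\Tr(\bM)=\bz^\sT\bM_-^2\bz/(1+\bz^\sT\bM_-\bz)\ge 0$, we get $\kappa\ge\E[\Tr(\bM)]$. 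The sign goes the right way, so no error term is needed here.
\item Apply the deterministic-part bound for $\Phi_1$ with $\bA=\bI$ (Theorem~\ref{thm:deteq}, via the proof of Proposition~2 in \citet{misiakiewicz2024non}). This gives $|\E\Tr(\bM)-\Tr(\overline\bM)|\le C_*\rho_\lambda(n)^{5/2}\varphi_1(d)\,\Tr(\overline\bM)/n$ in the form used in \citet[Proof of Proposition 6]{misiakiewicz2024non}.
\item Combine the two steps with $\Tr(\overline\bM)\le 1+\Tr(\overline\bM)$.
\end{enumerate}

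The main obstacle is step 2: getting the $1/n$ rate (rather than the $1/\sqrt n$ rate of the high-probability statement) for the deterministic part. I would import it directly from the cited proof of Proposition~6, where the bias of $\E\Tr(\bM)$ is controlled through the $\Delta_2,\Delta_3$ decomposition already reproduced in Appendix~\ref{sec:AM-det} with $\bu=\bv$ ranging over an orthonormal basis. If only a $\rho_\lambda(n)^{5/2}/\sqrt n$ rate is available, the weaker form $|\tilde\mu_--\mu_*|/\mu_*\le\rho_\lambda(n)^{5/2}/\sqrt n$ still suffices for every use of the lemma above.
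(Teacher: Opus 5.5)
Your bounds on $\mu_*$ are correct, and they are more self-contained than the paper, which just cites Lemma~2 of \citet{misiakiewicz2024non}. Splitting $n=\mu_*+\sum_j \mu_*\xi_j^2/(\mu_*\xi_j^2+\lambda)$ at $m=\lfloor \eta_* n\rfloor$ gives $n/2\le \mu_*\bigl(1+\lambda^{-1}\sum_{j\ge m}\xi_j^2\bigr)$. Moreover $\lambda^{-1}\sum_{j\ge m}\xi_j^2=\xi_m^2 r_{\bSigma}(m)/\lambda\le \rho_\lambda(n)-1$ as soon as $\log n\ge 1$, so no constants need adjusting. This yields exactly $1+\Tr(\overline\bM)\le 2\rho_\lambda(n)$, which is the form the stated lower bound needs; the paper's side remark $1+\Tr(\overline\bM)\le\rho_\lambda(n)^2$ would by itself only give $\mu_*\ge n/\rho_\lambda(n)^2$. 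Your Step~1 ($\bM\preceq\bM_-$, hence $\kappa\ge\E[\Tr(\bM)]$) is also fine.

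The gap is Step~2. We have $0\le \kappa-\E[\Tr(\bM)]\le \E[\Tr(\bM_-^2)]\le \E\bigl[\|\bM_-\|_{\rm op}\Tr(\bM_-)\bigr]$, which is smaller than $\kappa$ by a factor $\rho_\lambda(n)^{O(1)}/n$. So a relative $O(1/n)$ bound on $|\E[\Tr(\bM)]-\Tr(\overline\bM)|$ is, up to powers of $\rho_\lambda(n)$, the two-sided version of the inequality you are proving. Step~2 presupposes the conclusion rather than reducing it to something available.

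The mechanism you point to cannot supply it. Summing the rank-one bound of Appendix~\ref{sec:AM-det} over $\bu=\bv$ ranging in an orthonormal basis gives only $C\rho_\lambda(n)^{5/2}\Tr(\overline\bM)/\sqrt n$. Worse, that bound passes through $\overline\bM_-$ and controls $\Tr(\bA(\overline\bM_--\overline\bM))$ via $|\kappa-\Tr(\overline\bM)|/(1+\Tr(\overline\bM))$. That is the very comparison of $\tilde\mu_-$ and $\mu_*$ at stake, so invoking it here is circular. Your fallback proves the $1/\sqrt n$ statement, which is not the one claimed.

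The paper does not prove this part either; it imports it from the proof of Claim~3 in \citet{misiakiewicz2024non}. If you want an argument, the missing idea is a fixed-point stability step. The leave-one-out decomposition compares $\E[\Tr(\bM)]$ with $\Tr(\overline\bM_-)$, where $\overline\bM_-$ is built from $\tilde\mu_-$, not $\mu_*$.
\begin{itemize}
\item Let $g(\mu):=\frac1n\bigl(\mu+\Tr(\bSigma(\bSigma+\lambda/\mu)^{-1})\bigr)$. Then $g(\mu_*)=1$ and $g(\mu)-g(\mu_*)\ge(\mu-\mu_*)/n$ for $\mu\ge\mu_*$.
\item Suppose you have a one-sided estimate $\E[\Tr(\bM)]\ge\Tr(\overline\bM_-)-\epsilon_n(1+\kappa)$. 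Together with your Step~1 and $1+\kappa=n/\tilde\mu_-$, it gives $g(\tilde\mu_-)\le 1+\epsilon_n$.
\item Hence $\tilde\mu_-\le\mu_*+n\epsilon_n\le(1+2\rho_\lambda(n)\epsilon_n)\mu_*$, using your lower bound on $\mu_*$.
\end{itemize}
To get $\epsilon_n$ of order $1/n$ (up to powers of $\rho_\lambda(n)$) rather than $1/\sqrt n$, you must treat the leading term $\frac{n}{(1+\kappa)^2}\E\bigl[\bz^\sT\overline\bM_-\bM_-\bz\,(\bz^\sT\bM_-\bz-\kappa)\bigr]$ as a covariance. It splits into a covariance of two quadratic forms in $\bz$, controlled by Frobenius norms, plus the covariance of $\Tr(\overline\bM_-\bM_-)$ with $\Tr(\bM_-)$. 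Applying H\"older's inequality against moments of $\bz^\sT\bM_-\bz-\kappa$, which are only $O(\rho_\lambda(n)/\sqrt n)$, is exactly what produces the $1/\sqrt n$ rate in Appendix~\ref{sec:AM-det}.
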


\begin{proof}
    The first inequality directly follows from \citep[Lemma 2]{misiakiewicz2024non} using also that $1+ \Tr(\overline{\bM}) \leq \rho_\lambda^2(n), $ and the second one follows from \citep[Proof of Claim 3]{misiakiewicz2024non}.
\end{proof}

\begin{lemma}
    \label{lem:oper_norm} 
 Under the same assumptions in \citep[Lemma 4]{misiakiewicz2024non}, for any p.s.d.\ matrix $\bB$, $\ell \geq 1$ and $ q>1$, we have that, with probability at least $1 - n^{-D}$, \begin{equation*}
     \begin{split}
         \E_i[ \| \bM^{\ell/2} \bB \bM^{\ell/2} \|_{op}^q  ]^{1/q} \leq C_* \frac{\rho_{\lambda}(n)^{\ell}}{n^{\ell - 1}} \|\overline \bM^{1/2} \bB \overline \bM^{1/2}\|_{op}.  
     \end{split}
 \end{equation*}
\end{lemma}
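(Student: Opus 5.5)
The statement to prove is Lemma~\ref{lem:oper_norm}: for p.s.d.\ $\bB$, the moments of $\|\bM^{\ell/2}\bB\bM^{\ell/2}\|_{op}$ are controlled by $\rho_\lambda(n)^\ell n^{1-\ell}\|\overline\bM^{1/2}\bB\overline\bM^{1/2}\|_{op}$. The plan is to reduce everything to one operator comparison. We bound the random resolvent $\bM=\bSigma^{1/2}(\bX^\sT\bX+\lambda)^{-1}\bSigma^{1/2}$ from above, in Loewner order, by a multiple of the deterministic $\overline\bM$. We then pay for the remaining powers of $\bM$ using its operator norm.

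\textbf{Step 1: peel off the extra powers.} For $\ell\ge1$ write
\[
\bM^{\ell/2}\bB\bM^{\ell/2}=\bM^{(\ell-1)/2}\bigl(\bM^{1/2}\bB\bM^{1/2}\bigr)\bM^{(\ell-1)/2}.
\]
This gives $\|\bM^{\ell/2}\bB\bM^{\ell/2}\|_{op}\le\|\bM\|_{op}^{\ell-1}\|\bM^{1/2}\bB\bM^{1/2}\|_{op}$. By \citep[Lemma 4]{misiakiewicz2024non}, as already used in the text, every moment of $\|\bM\|_{op}$ (and of $\|\bM_{-i}\|_{op}$) is at most $C_*\rho_\lambda(n)/n$. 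Hölder then reduces the claim to the case $\ell=1$ with one factor of $\rho_\lambda(n)$ to spare.

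\textbf{Step 2: the case $\ell=1$.} We have $\|\bM^{1/2}\bB\bM^{1/2}\|_{op}=\|\bB^{1/2}\bM\bB^{1/2}\|_{op}$. It therefore suffices to show $\bM\preceq c\,\overline\bM$ on a high-probability event, with $c$ having bounded moments of order $\rho_\lambda(n)$; this gives $\|\bB^{1/2}\bM\bB^{1/2}\|_{op}\le c\,\|\overline\bM^{1/2}\bB\overline\bM^{1/2}\|_{op}$.

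To get the comparison, split along the spectrum of $\bSigma$ at index $n_*=\lfloor\eta_*n\rfloor$.
\begin{itemize}
\item On the tail directions, use the trivial bound $\bM\preceq\lambda^{-1}\bSigma$. Compare it with $\overline\bM=\bSigma(\mu_*\bSigma+\lambda)^{-1}$. Since $\mu_*\xi_k^2\lesssim n\xi_{n_*}^2$ there, the ratio is bounded by $1+n\xi^2_{n_*}/\lambda\le\rho_\lambda(n)$.
\item On the head directions, use the lower bound $\bX_{\le n_*}^\sT\bX_{\le n_*}\succeq c\,n\bSigma_{\le n_*}$ from the concentration argument behind $\rho_\lambda$ in \citep{misiakiewicz2024non}. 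Together with $\mu_*\le n$ (Lemma~\ref{lem:approx-mu}), this gives $\bM\preceq C(n\bSigma+\lambda)^{-1}\bSigma\preceq C\rho_\lambda(n)\,\overline\bM$ there.
\end{itemize}

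\textbf{Main obstacle.} The head and tail blocks are coupled through $\bX^\sT\bX$, so a blockwise bound does not immediately give a global Loewner inequality. I would handle this through a Schur-complement bound, $(\bX^\sT\bX+\lambda)^{-1}\preceq 2\,\mathrm{diag}\bigl((\text{head block})^{-1},\,\lambda^{-1}\bI\bigr)$ up to the constants absorbed in $\rho_\lambda$, mirroring the proof of \citep[Lemma 4(a)]{misiakiewicz2024non}. The required head lower bound holds with probability $1-n^{-D}$.

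\textbf{Conditional expectation and probability.} The statement concerns $\E_i$ with probability $1-n^{-D}$. Rows $i+1,\dots,n$ are integrated out, but the bound $\bM\preceq C\rho_\lambda(n)\,\overline\bM$ holds uniformly on an event of probability $1-n^{-D'}$, and $\bM\preceq\lambda^{-1}\bSigma$ always holds. Taking $D'$ large, Markov's inequality gives that the conditional moment is dominated as claimed with probability $1-n^{-D}$. The polynomial lower bound on $\lambda$ makes the contribution of the bad event negligible.
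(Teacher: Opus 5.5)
Your outline is the paper's proof. On a high-probability event you peel off $\|\bM\|_{\rm op}^{\ell-1}\le (C_*\rho_\lambda(n)/n)^{\ell-1}$. You reduce the case $\ell=1$ to $\|\bB^{1/2}\bM\bB^{1/2}\|_{\rm op}$ plus a Loewner comparison $\bM\preceq C_*\rho_\lambda(n)\overline\bM$. You handle the bad event with $\bM\preceq\lambda^{-1}\bSigma$ and the polynomial lower bound on $\lambda$. Your Markov treatment of $\E_i[\ind_{\mathcal A^c}]$ is more careful than the paper's, which writes the unconditional $\E[\ind_{\mathcal A^c}]$. The paper never proves the comparison $\bM\preceq C_*\rho_\lambda(n)\overline\bM$: its event $\mathcal A$ is stated only as $\|\bM\|_{\rm op}\lesssim\rho_\lambda(n)/n$, and the comparison is used silently in the top-eigenvector step, implicitly taken from \citep[Lemma 4]{misiakiewicz2024non}. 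Your head/tail derivation of it is therefore extra, and that is where your sketch needs repair.

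Write $\bX_h=\bX_{\le n_*}$ and $\bX_t=\bX_{>n_*}$. Suppose ``head block'' means $\bX_h^\sT\bX_h+\lambda$. Then $(\bX^\sT\bX+\lambda)^{-1}\preceq 2\,\mathrm{diag}\bigl((\bX_h^\sT\bX_h+\lambda)^{-1},\lambda^{-1}\bI\bigr)$ is false. It is equivalent to $\bX^\sT\bX+\lambda\succeq\frac12\mathrm{diag}(\bX_h^\sT\bX_h+\lambda,\lambda\bI)$, and already for a single row $\bX=(1,1)$ the difference has determinant $-\frac12+\frac34\lambda+\frac14\lambda^2<0$ for small $\lambda$. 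The cross term $\bX_h^\sT\bX_t$ lets a tail vector cancel $\bX_h\bu_h$ cheaply. Putting a factor $\rho_\lambda(n)$ on both blocks does not help: the tail comparison $\lambda^{-1}\bSigma_{>n_*}\preceq\rho_\lambda(n)\overline\bM$ then costs a second factor, and you end with $\rho_\lambda(n)^{\ell+1}$.

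The correct version goes through the diagonal blocks of the inverse. For p.s.d.\ $\boldsymbol N$ one has $\boldsymbol N\preceq 2\,\mathrm{diag}(\boldsymbol N_{hh},\boldsymbol N_{tt})$. For $\boldsymbol N=(\bX^\sT\bX+\lambda)^{-1}$ these blocks are inverse Schur complements:
\begin{itemize}
\item $\boldsymbol N_{tt}=\bigl(\lambda\bX_t^\sT(\bX_h\bX_h^\sT+\lambda)^{-1}\bX_t+\lambda\bigr)^{-1}\preceq\lambda^{-1}\bI$ always;
\item $\boldsymbol N_{hh}=\bigl(\lambda\bX_h^\sT(\bX_t\bX_t^\sT+\lambda)^{-1}\bX_h+\lambda\bigr)^{-1}\preceq (C+1)\rho_\lambda(n)(\bX_h^\sT\bX_h+\lambda)^{-1}$ on the event $\|\bX_t\bX_t^\sT\|_{\rm op}\le C\lambda\rho_\lambda(n)$.
\end{itemize}
That tail-kernel bound is the ingredient your sketch never names. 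It is where $\rho_\lambda$ genuinely enters, through its $r_\bSigma(n_*)$ term. With it, your head isometry $\bX_h^\sT\bX_h\succeq cn\bSigma_{\le n_*}$ and $\mu_*\le n$ give $C\rho_\lambda(n)\overline\bM$ on the head, and the tail gives $2\rho_\lambda(n)\overline\bM$. So $\bM\preceq C\rho_\lambda(n)\overline\bM$ with a single factor, exactly what the lemma's exponent $\rho_\lambda(n)^\ell$ requires.
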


\begin{proof}
    By  \citep[Proof Lemma 4.(b)]{misiakiewicz2024non}, denoting $\mathcal A$ to be the event on which  $\bM \preceq C_* \frac{\rho_\lambda(n)}{n},$ we have $\Pr[\mathcal A] \geq 1 - n^{-D'}.$ Furthermore, we have \begin{equation*}
        \begin{split}
            \E_i[ \| \bM^{\ell/2} \bB \bM^{\ell/2} \|_{op} ^q ]^{1/q} & \leq  \E_i[ \| \bM^{\ell/2} \bB \bM^{\ell/2} \|_{op} ^q 1_{\mathcal A} ]^{1/q} + \E_i[ \| \bM^{\ell/2} \bB \bM^{\ell/2} \|_{op} ^q 1_{\mathcal A^c}]^{1/q}. 
        \end{split}
    \end{equation*}

    For the first part, we have \begin{equation*}
        \begin{split}
             \E_i[ \| \bM^{\ell/2} \bB \bM^{\ell/2} \|_{op} ^q 1_{\mathcal A} ]^{1/q} & \leq \E_i[ \| \bM^{\ell -1} \|_{op} \| \bM^{1/2} \bB \bM^{1/2} \|_{op} ^q 1_{\mathcal A} ]^{1/q} \\
             & \leq C_* \frac{\rho_\lambda(n)^{\ell-1}}{n^{\ell -1}}  \E_i[  \| \bM^{1/2} \bB \bM^{1/2} \|_{op} ^q 1_{\mathcal A} ]^{1/q} \\
             & \leq  C_* \frac{\rho_\lambda(n)^{\ell}}{n^{\ell -1}}    \| \overline \bM^{1/2} \bB \overline\bM^{1/2} \|_{op} ,
        \end{split}
    \end{equation*} where we use $\|\bM \|_{op} \leq C_* \frac{\rho_\lambda(n)}{n},$ and for the last step we define $\bv$ to be the eigenvalue of $\bB^{1/2} \bM \bB^{1/2}$ of unit norm and obtain: \begin{equation*}
        \begin{split}
             \| \bM^{1/2} \bB \bM^{1/2} \|_{op} = & \|\bB^{1/2} \bM \bB^{1/2}\|_{op} \\
             = & \bv^\top  \bB^{1/2} \bM \bB^{1/2} \bv \\
             \leq & C_* \rho_\lambda(n) \bv^\top  \bB^{1/2} \overline \bM \bB^{1/2} \bv \\
             \leq  & C_* \rho_\lambda(n) \|\overline \bM^{1/2} \bB \overline \bM^{1/2}\|_{op}.
        \end{split}
    \end{equation*}

    For the second part, we use the same argument as in \citep[Proof of Lemma 4]{misiakiewicz2024non}, and we have \begin{equation*}
        \begin{split}
            \E_i[ \| \bM^{\ell/2} \bB \bM^{\ell/2} \|_{op} ^q 1_{\mathcal A^c}]^{1/q} \leq &\frac{\|\bSigma^{\ell/2} \bB \bSigma^{\ell/2} \|_{op}}{\lambda^{\ell}} \E[1_{\mathcal A^c}]^{1/q} \\
            \leq &  \frac{\|\bSigma^{1/2} \bB \bSigma^{1/2} \|_{op}}{\lambda^{\ell}} \E[1_{\mathcal A^c}]^{1/q} \\
            \leq & (\mu_* + \lambda)\frac{\|\overline \bM^{1/2} \bB \overline \bM^{1/2} \|_{op}}{\lambda^{\ell}} \\
            \leq & \frac{C_* \rho_\lambda(n)^{\ell}}{n^{\ell-1}} \|\overline \bM^{1/2} \bB \overline \bM^{1/2} \|_{op},
        \end{split}
    \end{equation*} where the last step follows form the same steps in \citep[Proof of Lemma 4.(b)]{misiakiewicz2024non}, by picking $r = \ell -1.$
\end{proof}

\section{Technical results for scaling laws}
\subsection{Approximation of the trace}
\label{apdx:trace}

We give a generalization of the estimation in \citep[Appendix D]{defilippis2024dimension}. We have the conditions: \begin{equation*}
    s_1+s_2 \in \{0,1\}, 0 \leq \delta_1 \leq \gamma_1,  0 \leq \delta_2 \leq \gamma_2.
\end{equation*}

Define \begin{equation*}
    T(\mu, \nu; (s_1, \delta_1, \gamma_1), (s_2, \delta_2, \gamma_2)) = \sum_{k=1}^\infty \frac{k^{-s_1 - \delta_1 \alpha}}{(k^{-\alpha} + \mu)^{\gamma_1}} \frac{k^{-s_2 - \delta_2 \alpha}}{(k^{-\alpha} + \nu)^{\gamma_2}}.
\end{equation*}{  We denote this quantity by  $T^{s}_{\delta, (\gamma_1, \gamma_2)},$ with $s= s_1+ s_2$ and $\delta = \delta_1+ \delta_2$ for short. We assume $\mu=o_{n_\st}(1)$, $\nu=o_{n_\st}(1),$ noting that in all cases considered later (see Lemma \ref{lem:approx}) this condition is satisfied.

We first prove the following estimation: \begin{equation}
\label{eq:sum-order}
    \begin{split}
        \sum_{k=N_1}^{N_2} k^{-b} = \begin{cases}
            & \Theta(N_2^{1-b} - N_1^{1-b}), \quad 0<b<1, \\
            & \Theta(\log \frac{N_2}{N_1}),\quad b=1, \\
            & \Theta(N_1^{1-b} - N_2^{1-b}), \quad b>1,
        \end{cases}
    \end{split}
\end{equation} for all $N_1, N_2 \in \mathbb{N}$ such that $1 \leq N_1 < N_2.$ To see this, simply note the following upper and lower bound on the sum: \begin{equation*}
    \int_{N_1}^{N_2+1} x^{-b} \, \dd x \leq \sum_{k=N_1}^{N_2} k^{-b} \leq \int_{N_1 -1}^{N_2} x^{-b} \, \dd x,
\end{equation*} and in the case $N_1 = 1,$ the upper bound becomes $1 + \int_{1}^{N_2} x^{-b} \, \dd x.$ By computing the integral on both sides, we obtain \eqref{eq:sum-order}.

Next, to estimate the order of $T^{s}_{\delta, (\gamma_1, \gamma_2)}(\mu, \nu),$ we first define: \begin{equation*}
    \rho = \gamma_1 + \gamma_2 - \delta + \frac{1-s}{\alpha} 
\end{equation*} for simplicity. Note that if $\mu = \nu,$ then by \citep[Appendix D]{defilippis2024dimension} we immediately have $T^{s}_{\delta, (\gamma_1, \gamma_2)}(\mu, \nu) = \bTheta(\mu^{ (-\rho) \wedge 0}).$ Thus w.l.o.g.\ we assume $0 <\mu < \nu,$ and define $N_{\nu} = \lfloor \nu^{- \frac{1}{\alpha}} \rfloor, N_{\mu} = \lfloor \mu^{- \frac{1}{\alpha}} \rfloor.$  We split $k$ into three regimes $[1, N_{\nu}], (N_\nu, N_\mu], (N_{\mu}, \infty),$ and compute the sum in the three regimes separately.

\begin{itemize}
    \item[(a)] For $1 \leq k \leq N_{\nu},$ we have $k^{-\alpha} \geq \nu > \mu,$ and thus: \begin{equation*}
        \begin{split}
           2^{- (\gamma_1 + \gamma_2)}\sum_{k=1}^{N_\nu} k^{-(1 - \alpha \rho)}\leq \sum_{k=1}^{N_\nu} \frac{k^{- s - \delta \alpha}}{(k^{-\alpha} + \mu)^{\gamma_1} (k^{-\alpha} + \nu)^{\gamma_2}} \leq \sum_{k=1}^{N_\nu} k^{-(1-\alpha \rho)},
        \end{split}
    \end{equation*} 
    implying that \begin{equation*}
    \begin{split}
        \sum_{k=1}^{N_\nu}
        \frac{k^{- s - \delta \alpha}}
        {(k^{-\alpha} + \mu)^{\gamma_1} (k^{-\alpha} + \nu)^{\gamma_2}}
        =
        \Theta\left(
        \sum_{k=1}^{N_\nu} k^{-(1-\alpha\rho)}
        \right).
    \end{split}
\end{equation*}
Therefore, by \eqref{eq:sum-order}, we have
\begin{equation}
\label{eq:first-regime-order}
    \begin{split}
        \sum_{k=1}^{N_\nu}
        \frac{k^{- s - \delta \alpha}}
        {(k^{-\alpha} + \mu)^{\gamma_1} (k^{-\alpha} + \nu)^{\gamma_2}}
        =
        \begin{cases}
            \Theta\left(N_\nu^{\alpha\rho}\right), & \rho > 0, \\[3pt]
            \Theta\left(\log N_\nu\right), & \rho = 0, \\[3pt]
            \Theta(1), & \rho < 0.
        \end{cases}
    \end{split}
\end{equation}
Since $N_\nu = \lfloor \nu^{-1/\alpha}\rfloor$, this can be written as
\begin{equation}
\label{eq:first-regime-order-nu}
    \begin{split}
        \sum_{k=1}^{N_\nu}
        \frac{k^{- s - \delta \alpha}}
        {(k^{-\alpha} + \mu)^{\gamma_1} (k^{-\alpha} + \nu)^{\gamma_2}}
        =
        \begin{cases}
            \Theta\left(\nu^{-\rho}\right), & \rho > 0, \\[3pt]
            \Theta\left(\log \frac{1}{\nu}\right), & \rho = 0, \\[3pt]
            \Theta(1), & \rho < 0.
        \end{cases}
    \end{split}
\end{equation}

\item[(b)] For $N_\nu < k \leq N_\mu,$ we have $\mu \leq k^{-\alpha} < \nu,$ and thus:
\begin{equation*}
    \begin{split}
        2^{-(\gamma_1+\gamma_2)}
        \nu^{-\gamma_2}
        \sum_{k=N_\nu+1}^{N_\mu}
        k^{-s-\delta\alpha+\alpha\gamma_1}
        &\leq
        \sum_{k=N_\nu+1}^{N_\mu}
        \frac{k^{-s-\delta\alpha}}
        {(k^{-\alpha}+\mu)^{\gamma_1}(k^{-\alpha}+\nu)^{\gamma_2}}
        \\
        &\leq
        \nu^{-\gamma_2}
        \sum_{k=N_\nu+1}^{N_\mu}
        k^{-s-\delta\alpha+\alpha\gamma_1}.
    \end{split}
\end{equation*}
Define
\begin{equation*}
    \rho_1
    =
    \gamma_1 - \delta + \frac{1-s}{\alpha}
    =
    \rho - \gamma_2.
\end{equation*}
Then
\begin{equation*}
    -s-\delta\alpha+\alpha\gamma_1
    =
    -(1-\alpha\rho_1).
\end{equation*}
Therefore,
\begin{equation*}
    \begin{split}
        \sum_{k=N_\nu+1}^{N_\mu}
        \frac{k^{-s-\delta\alpha}}
        {(k^{-\alpha}+\mu)^{\gamma_1}(k^{-\alpha}+\nu)^{\gamma_2}}
        =
        \Theta\left(
        \nu^{-\gamma_2}
        \sum_{k=N_\nu+1}^{N_\mu}
        k^{-(1-\alpha\rho_1)}
        \right).
    \end{split}
\end{equation*}
Applying \eqref{eq:sum-order}, we obtain
\begin{equation}
\label{eq:second-regime-order}
    \begin{split}
        \sum_{k=N_\nu+1}^{N_\mu}
        \frac{k^{-s-\delta\alpha}}
        {(k^{-\alpha}+\mu)^{\gamma_1}(k^{-\alpha}+\nu)^{\gamma_2}}
        =
        \begin{cases}
            \Theta\left(
            \nu^{-\gamma_2}
            \left(
            N_\mu^{\alpha\rho_1}
            -
            N_\nu^{\alpha\rho_1}
            \right)
            \right),
            & \rho_1 > 0, \\[5pt]
            \Theta\left(
            \nu^{-\gamma_2}
            \log \frac{N_\mu}{N_\nu}
            \right),
            & \rho_1 = 0, \\[5pt]
            \Theta\left(
            \nu^{-\gamma_2}
            \left(
            N_\nu^{\alpha\rho_1}
            -
            N_\mu^{\alpha\rho_1}
            \right)
            \right),
            & \rho_1 < 0.
        \end{cases}
    \end{split}
\end{equation}
Since $N_\nu = \lfloor \nu^{-1/\alpha}\rfloor$ and $N_\mu = \lfloor \mu^{-1/\alpha}\rfloor$, this is equivalent to
\begin{equation}
\label{eq:second-regime-order-munu}
    \begin{split}
        \sum_{k=N_\nu+1}^{N_\mu}
        \frac{k^{-s-\delta\alpha}}
        {(k^{-\alpha}+\mu)^{\gamma_1}(k^{-\alpha}+\nu)^{\gamma_2}}
        =
        \begin{cases}
            \Theta\left(
            \nu^{-\gamma_2}
            \left(
            \mu^{-\rho_1}
            -
            \nu^{-\rho_1}
            \right)
            \right),
            & \rho_1 > 0, \\[5pt]
            \Theta\left(
            \nu^{-\gamma_2}
            \log \frac{\nu}{\mu}
            \right),
            & \rho_1 = 0, \\[5pt]
            \Theta\left(
            \nu^{-\gamma_2}
            \left(
            \nu^{-\rho_1}
            -
            \mu^{-\rho_1}
            \right)
            \right),
            & \rho_1 < 0.
        \end{cases}
    \end{split}
\end{equation}
Equivalently, using $\rho_1=\rho-\gamma_2,$
\begin{equation}
\label{eq:second-regime-order-rho}
    \begin{split}
        \sum_{k=N_\nu+1}^{N_\mu}
        \frac{k^{-s-\delta\alpha}}
        {(k^{-\alpha}+\mu)^{\gamma_1}(k^{-\alpha}+\nu)^{\gamma_2}}
        =
        \begin{cases}
            \Theta\left(
            \nu^{-\gamma_2}
            \left(
            \mu^{-(\rho-\gamma_2)}
            -
            \nu^{-(\rho-\gamma_2)}
            \right)
            \right),
            & \rho > \gamma_2, \\[5pt]
            \Theta\left(
            \nu^{-\gamma_2}
            \log \frac{\nu}{\mu}
            \right),
            & \rho = \gamma_2, \\[5pt]
            \Theta\left(
            \nu^{-\gamma_2}
            \left(
            \nu^{-(\rho-\gamma_2)}
            -
            \mu^{-(\rho-\gamma_2)}
            \right)
            \right),
            & \rho < \gamma_2.
        \end{cases}
    \end{split}
\end{equation}

\item[(c)] For $k > N_\mu,$ we have $k^{-\alpha} < \mu < \nu,$ and thus:
\begin{equation*}
    \begin{split}
        2^{-(\gamma_1+\gamma_2)}
        \mu^{-\gamma_1}\nu^{-\gamma_2}
        \sum_{k=N_\mu+1}^{\infty}
        k^{-s-\delta\alpha}
        &\leq
        \sum_{k=N_\mu+1}^{\infty}
        \frac{k^{-s-\delta\alpha}}
        {(k^{-\alpha}+\mu)^{\gamma_1}(k^{-\alpha}+\nu)^{\gamma_2}}
        \\
        &\leq
        \mu^{-\gamma_1}\nu^{-\gamma_2}
        \sum_{k=N_\mu+1}^{\infty}
        k^{-s-\delta\alpha}.
    \end{split}
\end{equation*}
Assume $s+\delta\alpha>1.$ Then
\begin{equation*}
    \sum_{k=N_\mu+1}^{\infty}
    k^{-s-\delta\alpha}
    =
    \Theta\left(
    N_\mu^{1-s-\delta\alpha}
    \right).
\end{equation*}
Hence
\begin{equation}
\label{eq:third-regime-order}
    \begin{split}
        \sum_{k=N_\mu+1}^{\infty}
        \frac{k^{-s-\delta\alpha}}
        {(k^{-\alpha}+\mu)^{\gamma_1}(k^{-\alpha}+\nu)^{\gamma_2}}
        &=
        \Theta\left(
        \mu^{-\gamma_1}\nu^{-\gamma_2}
        N_\mu^{1-s-\delta\alpha}
        \right)
        \\
        &=
        \Theta\left(
        \mu^{-\gamma_1}\nu^{-\gamma_2}
        \mu^{\delta+\frac{s-1}{\alpha}}
        \right)
        \\
        &=
        \Theta\left(
        \nu^{-\gamma_2}
        \mu^{-\rho_1}
        \right)
        \\
        &=
        \Theta\left(
        \nu^{-\gamma_2}
        \mu^{-(\rho-\gamma_2)}
        \right).
    \end{split}
\end{equation}
\end{itemize}

Combining the three regimes, we have
\begin{equation}
\label{eq:T-three-regime-bound}
    \begin{split}
        T^{s}_{\delta,(\gamma_1,\gamma_2)}(\mu,\nu)
        =
        \Theta\left(
        A_\nu + B_{\mu,\nu} + C_{\mu,\nu}
        \right),
    \end{split}
\end{equation}
where
\begin{equation*}
    A_\nu
    =
    \begin{cases}
        \nu^{-\rho}, & \rho > 0, \\[3pt]
        \log \frac{1}{\nu}, & \rho = 0, \\[3pt]
        1, & \rho < 0,
    \end{cases}
\end{equation*}
\begin{equation*}
    B_{\mu,\nu}
    =
    \begin{cases}
        \nu^{-\gamma_2}
        \left(
        \mu^{-(\rho-\gamma_2)}
        -
        \nu^{-(\rho-\gamma_2)}
        \right),
        & \rho > \gamma_2, \\[5pt]
        \nu^{-\gamma_2}
        \log \frac{\nu}{\mu},
        & \rho = \gamma_2, \\[5pt]
        \nu^{-\gamma_2}
        \left(
        \nu^{-(\rho-\gamma_2)}
        -
        \mu^{-(\rho-\gamma_2)}
        \right),
        & \rho < \gamma_2,
    \end{cases}
\end{equation*}
and
\begin{equation*}
    C_{\mu,\nu}
    =
    \nu^{-\gamma_2}
    \mu^{-(\rho-\gamma_2)}.
\end{equation*}
Since $0<\mu<\nu$, the above expression simplifies to
\begin{equation}
\label{eq:T-mu-less-nu-final}
    T^{s}_{\delta,(\gamma_1,\gamma_2)}(\mu,\nu)
    =
    \begin{cases}
        \Theta(1),
        & \rho < 0, \\[4pt]
        \Theta\left(\log\frac{1}{\nu}\right),
        & \rho = 0, \\[4pt]
        \Theta\left(\nu^{-\rho}\right),
        & 0<\rho<\gamma_2, \\[4pt]
        \Theta\left(
        \nu^{-\gamma_2}
        \left(
        1+\log\frac{\nu}{\mu}
        \right)
        \right),
        & \rho=\gamma_2, \\[4pt]
        \Theta\left(
        \nu^{-\gamma_2}
        \mu^{-(\rho-\gamma_2)}
        \right),
        & \rho>\gamma_2.
    \end{cases}
\end{equation}
Equivalently, ignoring logarithmic factors at the critical cases $\rho=0$ and $\rho=\gamma_2$, we can write
\begin{equation}
\label{eq:T-mu-less-nu-compact}
    T^{s}_{\delta,(\gamma_1,\gamma_2)}(\mu,\nu)
    =
    \Theta\left(
    \nu^{(-\rho)\wedge 0}
    \left(\nu\mu^{-1}\right)^{(\rho-\gamma_2)\vee 0}
    \right),
    \qquad 0<\mu<\nu.
\end{equation}
By symmetry, for $0<\nu<\mu$,
\begin{equation}
\label{eq:T-nu-less-mu-compact}
    T^{s}_{\delta,(\gamma_1,\gamma_2)}(\mu,\nu)
    =
    \Theta\left(
    \mu^{(-\rho)\wedge 0}
    \left(\mu\nu^{-1}\right)^{(\rho-\gamma_1)\vee 0}
    \right),
    \qquad 0<\nu<\mu.
\end{equation}
Therefore, up to logarithmic factors at the critical cases, we obtain
\begin{equation}
\label{eq:T-final-compact}
    \begin{split}
        T^{s}_{\delta,(\gamma_1,\gamma_2)}(\mu,\nu)
        =
        \begin{cases}
            \Theta\left(
            \nu^{(-\rho)\wedge 0}
            \left(\nu\mu^{-1}\right)^{(\rho-\gamma_2)\vee 0}
            \right),
            & \text{if } \mu\leq \nu, \\[6pt]
            \Theta\left(
            \mu^{(-\rho)\wedge 0}
            \left(\mu\nu^{-1}\right)^{(\rho-\gamma_1)\vee 0}
            \right),
            & \text{if } \nu<\mu.
        \end{cases}
    \end{split}
\end{equation}

}

\subsection{Technical lemmas}

{ 
\begin{lemma}
\label{lem:approx}
    We have the following approximations regarding the traces and inner products appearing in the expression of the test error:
    \begin{equation*}
        \begin{split}
            &\Tr(\bSigma^2 (\bSigma + \mu_{\ss,2})^{-2} \bSigma (\bSigma + \mu_{\st,2})^{-1})
            \approx
            (\mu_{\st,2} \vee \mu_{\ss,2})^{- \frac{1}{\alpha}},
            \\[3pt]
            &\Tr( (\bSigma + \mu_{\ss,2})^{-2} \bSigma (\bSigma + \mu_{\st,2})^{-1})
            \approx
            \begin{cases}
                \mu_{\st,2}^{-1}\mu_{\ss,2}^{-\left(1+\frac{1}{\alpha}\right)},
                & \mu_{\ss,2}\leq \mu_{\st,2}, \\[3pt]
                \mu_{\ss,2}^{-2}\mu_{\st,2}^{-\frac{1}{\alpha}},
                & \mu_{\st,2}<\mu_{\ss,2},
            \end{cases}
            \\[3pt]
            &\Tr( \bSigma (\bSigma + \mu_{\ss,2})^{-2} \bSigma (\bSigma + \mu_{\st,2})^{-1})
            \approx
            \begin{cases}
                \mu_{\st,2}^{-1}\mu_{\ss,2}^{-\frac{1}{\alpha}},
                & \mu_{\ss,2}\leq \mu_{\st,2}, \\[3pt]
                \mu_{\ss,2}^{-\left(1+\frac{1}{\alpha}\right)},
                & \mu_{\st,2}<\mu_{\ss,2},
            \end{cases}
            \\[3pt]
            &\Tr(\bSigma^2 (\bSigma + \mu_{\ss,2})^{-2} \bSigma (\bSigma + \mu_{\st,2})^{-2})
            \approx
            (\mu_{\st,2} \vee \mu_{\ss,2})^{-\left(1+\frac{1}{\alpha}\right)},
            \\[3pt]
            &\Tr( (\bSigma + \mu_{\ss,2})^{-2} \bSigma (\bSigma + \mu_{\st,2})^{-2})
            \approx
            (\mu_{\st,2} \vee \mu_{\ss,2})^{-2}
            (\mu_{\st,2} \wedge \mu_{\ss,2})^{-\left(1+\frac{1}{\alpha}\right)},
            \\[3pt]
            &\Tr( \bSigma (\bSigma + \mu_{\ss,2})^{-2} \bSigma (\bSigma + \mu_{\st,2})^{-2})
            \approx
            (\mu_{\st,2} \vee \mu_{\ss,2})^{-2}
            (\mu_{\st,2} \wedge \mu_{\ss,2})^{-\frac{1}{\alpha}},
            \\[3pt]
            &\Tr( \bSigma^2 (\bSigma + \mu_{\ss,2})^{-2} \bSigma^2 (\bSigma + \mu_{\st,2})^{-2})
            \approx
            (\mu_{\st,2} \vee \mu_{\ss,2})^{-\frac{1}{\alpha}}
            \\[3pt] 
            &\< \bbeta_*,
            \bSigma^2 (\bSigma+ \mu_{\st,2})^{-2}
            (\bSigma+ \mu_{\ss,2})^{-2}
            \bbeta_* \>
            \approx
            (\mu_{\st,2} \vee \mu_{\ss,2})^{(2r -2) \wedge 0},
            \\[3pt]
            &\langle \bbeta_*,
            (\bSigma + \mu_{\ss,2})^{-1}
            \bbeta_*\rangle
            =
            T_{2r,1}^1(\mu_{\ss,2})
            \approx
            \mu_{\ss,2}^{(2r-1) \wedge 0},
            \\[3pt]
            &\langle \bbeta_*,
            \bSigma (\bSigma + \mu_{\ss,2})^{-1}
            (\bSigma + \mu_{\st,2})^{-1}
            \bbeta_*\rangle
            \approx
            (\mu_{\st,2} \vee \mu_{\ss,2})^{(2r-1) \wedge 0},
            \\[3pt]
            &\< \bbeta_*,
            \bSigma^3 (\bSigma+ \mu_{\st,2})^{-2}
            (\bSigma+ \mu_{\ss,2})^{-2}
            \bbeta_*\>
            \approx
            (\mu_{\st,2} \vee \mu_{\ss,2})^{(2r -1) \wedge 0},
            \\[3pt]
            &\< \bbeta_*,
            (\bSigma+ \mu_{\st,2})^{-2}
            \bbeta_* \>
            \approx
            \mu_{\st,2}^{(2r -2) \wedge 0},
            \\[3pt]
            &\< \bbeta_*,
            \bSigma (\bSigma+ \mu_{\st,2})^{-2}
            \bbeta_* \>
            \approx
            \mu_{\st,2}^{(2r -1) \wedge 0}.
        \end{split}
    \end{equation*}
\end{lemma}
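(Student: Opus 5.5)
Each item in Lemma~\ref{lem:approx} is a series over $k$ of a product of two power-law factors. The plan is to rewrite each one as an instance of the two-parameter sum $T^{s}_{\delta,(\gamma_1,\gamma_2)}(\mu,\nu)$ from Appendix~\ref{apdx:trace}, and then read off the order from \eqref{eq:T-final-compact}. We take $\mu=\mu_{\ss,2}$ and $\nu=\mu_{\st,2}$, both $o(1)$. With $\bSigma_{kk}=k^{-\alpha}$, a trace such as $\Tr(\bSigma^{a}(\bSigma+\mu_{\ss,2})^{-\gamma_1}\bSigma^{b}(\bSigma+\mu_{\st,2})^{-\gamma_2})$ becomes $T$ with $s=0$ and $\delta=a+b$. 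An inner product $\<\bbeta_*,\cdot\,\bbeta_*\>$ contributes the extra factor $\beta_{*,k}^2=k^{-1-2\alpha r}$, so $s=1$ and $\delta$ is shifted by $2r$. In both cases the constraint $s_1+s_2\in\{0,1\}$ holds.

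For each item I would compute $\rho=\gamma_1+\gamma_2-\delta+\frac{1-s}{\alpha}$ and apply the case analysis in \eqref{eq:T-mu-less-nu-final}, together with its mirror for $\nu<\mu$.

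\textbf{Trace items ($s=0$).} The first item has $\delta=3$ and $\gamma=(2,1)$, so $\rho=\frac1\alpha$, which is strictly less than both $\gamma_1$ and $\gamma_2$. This gives $(\mu\vee\nu)^{-1/\alpha}$.

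The second item has $\delta=1$ and $\gamma=(2,1)$, so $\rho=2+\frac1\alpha$, which exceeds both $\gamma_i$. Then:
\begin{itemize}
\item if $\mu_{\ss,2}\le\mu_{\st,2}$, we get $\nu^{-\rho}(\nu/\mu)^{\rho-\gamma_2}=\nu^{-1}\mu^{-1-1/\alpha}$;
\item if $\mu_{\st,2}<\mu_{\ss,2}$, the mirror formula gives $\mu^{-\rho}(\mu/\nu)^{\rho-\gamma_1}=\mu^{-2}\nu^{-1/\alpha}$.
\end{itemize}
These match the stated expressions, and the remaining trace items follow by the same bookkeeping.

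\textbf{Inner-product items ($s=1$).} Here $\rho=\gamma_1+\gamma_2-\delta$ with $\delta$ including the $2r$ shift. For example, $\<\bbeta_*,\bSigma^2(\bSigma+\mu_{\st,2})^{-2}(\bSigma+\mu_{\ss,2})^{-2}\bbeta_*\>$ has $\delta=2+2r$ and $\gamma=(2,2)$, so $\rho=2-2r$. When $r\ge1$ we have $\rho\le0$ and the sum is $\Theta(1)$ (up to a log at $r=1$). When $r<1$ we have $\rho<2=\gamma_i$, giving $(\mu\vee\nu)^{2r-2}$. The single-parameter items, such as $\<\bbeta_*,(\bSigma+\mu_{\st,2})^{-2}\bbeta_*\>$, are the degenerate case $\gamma_2=0$ and are covered by \citep[Appendix D]{defilippis2024dimension}.

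\textbf{Validity of the third-regime estimate.} This is the step I expect to need care. Estimate \eqref{eq:third-regime-order} requires $s+\delta\alpha>1$, i.e.\ the tail $\sum_{k>N_\mu}k^{-s-\delta\alpha}$ must converge. This holds for every item listed: $\delta\ge1$ with $\alpha>1$ for the traces, and $s=1$ with $\delta\alpha>0$ for the inner products. So \eqref{eq:T-final-compact} applies throughout.

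\textbf{Main obstacle: boundary cases.} The difficulty is the critical values $\rho\in\{0,\gamma_1,\gamma_2\}$, e.g.\ $r=\frac12$ or $r=1$, where logarithmic factors appear. Following the convention of the statement ("$\approx$" up to logs), I would state that these are absorbed. I would then verify case by case that no item lands in the regime $\rho>\gamma_i$ for the smaller parameter unless the stated formula already carries the mixed $\mu$, $\nu$ dependence. The items with mixed dependence are exactly the second, third, fifth and sixth, and the check amounts to the comparison of $\rho$ against $\gamma_1$ and $\gamma_2$ described above.
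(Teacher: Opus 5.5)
Your proposal is correct and is essentially the paper's proof. You identify each trace and inner product with $T^{s}_{\delta,(\gamma_1,\gamma_2)}$ evaluated at $(\mu_{\ss,2},\mu_{\st,2})$, taking $s=0$ for the traces and $s=1$ with $\delta$ shifted by $2r$ for the $\bbeta_*$ terms, and then read off the order from \eqref{eq:T-final-compact} in Appendix~\ref{apdx:trace}. Your extra checks are consistent with the paper's convention of ignoring logarithmic factors at critical cases: the convergence condition $s+\delta\alpha>1$ holds for every item, and the only logarithmic corrections come from $\rho=0$, i.e.\ $r\in\{\tfrac12,1\}$.
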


\begin{proof}
    We apply the approximation in Appendix \ref{apdx:trace}. For the trace terms, we obtain
\[
\begin{aligned}
    &\Tr(\bSigma^2 (\bSigma + \mu_{\ss,2})^{-2} \bSigma (\bSigma + \mu_{\st,2})^{-1})
    =
    T_{3,(2,1)}^0(\mu_{\ss,2},\mu_{\st,2})
    \approx
    (\mu_{\st,2} \vee \mu_{\ss,2})^{- \frac{1}{\alpha}},
    \\
    &\Tr( (\bSigma + \mu_{\ss,2})^{-2} \bSigma (\bSigma + \mu_{\st,2})^{-1})
    =
    T_{1,(2,1)}^0(\mu_{\ss,2},\mu_{\st,2})
    \approx
    \begin{cases}
        \mu_{\st,2}^{-1}\mu_{\ss,2}^{-\left(1+\frac{1}{\alpha}\right)},
        & \mu_{\ss,2}\leq \mu_{\st,2}, \\[3pt]
        \mu_{\ss,2}^{-2}\mu_{\st,2}^{-\frac{1}{\alpha}},
        & \mu_{\st,2}<\mu_{\ss,2},
    \end{cases}
    \\
    &\Tr( \bSigma (\bSigma + \mu_{\ss,2})^{-2} \bSigma (\bSigma + \mu_{\st,2})^{-1})
    =
    T_{2,(2,1)}^0(\mu_{\ss,2},\mu_{\st,2})
    \approx
    \begin{cases}
        \mu_{\st,2}^{-1}\mu_{\ss,2}^{-\frac{1}{\alpha}},
        & \mu_{\ss,2}\leq \mu_{\st,2}, \\[3pt]
        \mu_{\ss,2}^{-\left(1+\frac{1}{\alpha}\right)},
        & \mu_{\st,2}<\mu_{\ss,2},
    \end{cases}
    \\
    &\Tr(\bSigma^2 (\bSigma + \mu_{\ss,2})^{-2} \bSigma (\bSigma + \mu_{\st,2})^{-2})
    =
    T_{3,(2,2)}^0(\mu_{\ss,2},\mu_{\st,2})
    \approx
    (\mu_{\st,2} \vee \mu_{\ss,2})^{-\left(1+\frac{1}{\alpha}\right)},
    \\
    &\Tr( (\bSigma + \mu_{\ss,2})^{-2} \bSigma (\bSigma + \mu_{\st,2})^{-2})
    =
    T_{1,(2,2)}^0(\mu_{\ss,2},\mu_{\st,2})
    \approx
    (\mu_{\st,2} \vee \mu_{\ss,2})^{-2}
    (\mu_{\st,2} \wedge \mu_{\ss,2})^{-\left(1+\frac{1}{\alpha}\right)},
    \\
    &\Tr( \bSigma (\bSigma + \mu_{\ss,2})^{-2} \bSigma (\bSigma + \mu_{\st,2})^{-2})
    =
    T_{2,(2,2)}^0(\mu_{\ss,2},\mu_{\st,2})
    \approx
    (\mu_{\st,2} \vee \mu_{\ss,2})^{-2}
    (\mu_{\st,2} \wedge \mu_{\ss,2})^{-\frac{1}{\alpha}} \\
    & \Tr( \bSigma^2 (\bSigma + \mu_{\ss,2})^{-2}
    \bSigma^2 (\bSigma + \mu_{\st,2})^{-2})
    =
    T_{4,(2,2)}^0(\mu_{\ss,2},\mu_{\st,2})
    \approx
    (\mu_{\st,2} \vee \mu_{\ss,2})^{-\frac{1}{\alpha}} 
\end{aligned}
\]
For the inner product terms, similarly,
\[
\begin{aligned}
    &\< \bbeta_*,
    \bSigma^2 (\bSigma+ \mu_{\st,2})^{-2}
    (\bSigma+ \mu_{\ss,2})^{-2}
    \bbeta_* \>
    =
    \sum_{k=1}^\infty
    \frac{k^{-1 - 2(r+1) \alpha}}
    {(k^{-\alpha } + \mu_{\st,2})^2
    (k^{-\alpha } + \mu_{\ss,2})^2}
    \\
    &\hspace{6cm}
    =
    T_{2r+2,(2,2)}^1(\mu_{\st,2},\mu_{\ss,2})
    \approx
    (\mu_{\st,2} \vee \mu_{\ss,2})^{(2r -2) \wedge 0},
    \\
    &\langle \bbeta_*,
    (\bSigma + \mu_{\ss,2})^{-1}
    \bbeta_*\rangle
    =
    \sum_{k=1}^\infty
    \frac{k^{-1-2r\alpha}}
    {k^{-\alpha} + \mu_{\ss,2}}
    =
    T_{2r,1}^1(\mu_{\ss,2})
    \approx
    \mu_{\ss,2}^{(2r-1) \wedge 0},
    \\
    &\langle \bbeta_*,
    \bSigma (\bSigma + \mu_{\ss,2})^{-1}
    (\bSigma + \mu_{\st,2})^{-1}
    \bbeta_*\rangle
    =
    \sum_{k=1}^\infty
    \frac{k^{-1-(2r+1)\alpha}}
    {(k^{-\alpha} + \mu_{\st,2})
    (k^{-\alpha} + \mu_{\ss,2})}
    \\
    &\hspace{6cm}
    =
    T_{2r+1,(1,1)}^1(\mu_{\st,2},\mu_{\ss,2})
    \approx
    (\mu_{\st,2} \vee \mu_{\ss,2})^{(2r-1) \wedge 0},
    \\
    &\< \bbeta_*,
    \bSigma^3 (\bSigma+ \mu_{\st,2})^{-2}
    (\bSigma+ \mu_{\ss,2})^{-2}
    \bbeta_*\>
    =
    \sum_{k=1}^\infty
    \frac{k^{-1 - (2r+3) \alpha}}
    {(k^{-\alpha } + \mu_{\st,2})^2
    (k^{-\alpha } + \mu_{\ss,2})^2}
    \\
    &\hspace{6cm}
    =
    T_{2r+3,(2,2)}^1(\mu_{\st,2},\mu_{\ss,2})
    \approx
    (\mu_{\st,2} \vee \mu_{\ss,2})^{(2r -1) \wedge 0},
    \\
    &\< \bbeta_*,
    (\bSigma+ \mu_{\st,2})^{-2}
    \bbeta_* \>
    =
    \sum_{k=1}^\infty
    \frac{k^{-1 - 2r \alpha}}
    {(k^{-\alpha } + \mu_{\st,2})^2 }
    =
    T_{2r,2}^1(\mu_{\st,2})
    \approx
    \mu_{\st,2}^{(2r -2) \wedge 0},
    \\
    &\< \bbeta_*,
    \bSigma (\bSigma+ \mu_{\st,2})^{-2}
    \bbeta_* \>
    =
    \sum_{k=1}^\infty
    \frac{k^{-1 - (2r+1) \alpha}}
    {(k^{-\alpha } + \mu_{\st,2})^2 }
    =
    T_{2r+1,2}^1(\mu_{\st,2})
    \approx
    \mu_{\st,2}^{(2r -1) \wedge 0}.
\end{aligned}
\]
This completes the proof.
\end{proof}

}

Next, we compute the scaling of $\mu_{\st,1}, \mu_{\st,2}, \Upsilon_{n_\st,p_\st}(\mu_{\st,1}, \mu_{\st,2}), \chi_{n_\st,p_\st}(\mu_{\st,1}, \mu_{\st,2}), \mu_{\ss,1}, \mu_{\ss,2}, \Upsilon_{n_\ss,p_\ss}(\mu_{\ss,1}, \mu_{\ss,2}), \chi_{n_\ss,p_\ss}(\mu_{\ss,1}, \mu_{\ss,2}).$ 

\begin{lemma}
    \label{lem:scale-ori}
    We have the following scaling of the quantities appearing in the expression of the test error: \begin{equation*}
    \begin{split}
        & \begin{cases}
            &\mu_{\st,2} \approx  n_\st^{- \alpha\left( \frac{1+\gamma_{\lambda_\st}}{\alpha} \wedge 1 \wedge \gamma_{p_\st} \right)}\\
            &\mu_{\st,1} \approx \begin{cases}
                n_\st^{-\alpha } \text{ if } 1 < (\frac{1+ \gamma_{\lambda_\st}}{\alpha} \wedge \gamma_{p_\st}) \\
                n_\st^{-(1+\gamma_{\lambda_\st})}, \quad \text{else}.
            \end{cases}
        \end{cases}  \qquad \begin{cases}
            &\mu_{\ss,2} \approx  n_\ss^{- \alpha\left( \frac{\gamma_{n_\ss}+\gamma_{\lambda_\ss}}{\alpha \gamma_{n_\ss}} \wedge 1 \wedge \frac{\gamma_{p_\ss}}{\gamma_{n_\ss}} \right)}\\
            &\mu_{\ss,1} \approx \begin{cases}
                n_\ss^{-\alpha } \text{ if } 1 < \frac{\gamma_{n_\ss}+\gamma_{\lambda_\ss}}{\alpha \gamma_{n_\ss}}  \wedge \frac{\gamma_{p_\ss}}{\gamma_{n_\ss}} \\
                n_\ss^{-\frac{\gamma_{n_\ss}+\gamma_{\lambda_\ss}}{ \gamma_{n_\ss}}}, \quad \text{else}.
            \end{cases}
        \end{cases} \\
        & \Upsilon_{n_\st,p_\st}(\mu_{\st,1}, \mu_{\st,2}) \approx n_\st^{-1 +(\frac{1+\gamma_{\lambda_\st}}{\alpha} \wedge 1 \wedge \gamma_{p_\st})}, \qquad \Upsilon_{n_\ss,p_\ss}(\mu_{\ss,1}, \mu_{\ss,2}) \approx n_\ss^{ -1 +  (\frac{1+ \frac{\gamma_{\lambda_\ss}}{ \gamma_{n_\ss}}  }{\alpha} \wedge 1 \wedge \frac{\gamma_{p_{\ss}}}{\gamma_{n_\ss}})}, \\
        & \chi_{n_\st,p_\st}(\mu_{\st,1}, \mu_{\st,2}) \approx p_\st^{-1} \mu_{\st,2}^{-1-\frac{1}{\alpha}}, \qquad \chi_{n_\ss,p_\ss}(\mu_{\ss,1}, \mu_{\ss,2}) \approx p_\ss^{-1} \mu_{\ss,2}^{-1-\frac{1}{\alpha}}.
    \end{split}
\end{equation*}
\end{lemma}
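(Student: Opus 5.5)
The plan is to solve the self-consistency equations \eqref{eq:fixed-points-setting} asymptotically under the power-law spectrum $\bSigma_{k,k}=k^{-\alpha}$. The student case is identical to the teacher case after rescaling exponents by $\gamma_{n_\ss}$, since $p_\ss=n_\ss^{\gamma_{p_\ss}/\gamma_{n_\ss}}$ and $\lambda_\ss=n_\ss^{-\gamma_{\lambda_\ss}/\gamma_{n_\ss}}$. So I only treat the teacher with generic $(n,p,\lambda)=(n,n^{\gamma_p},n^{-\gamma_\lambda})$. The basic input is the single-parameter trace estimate from Appendix~\ref{apdx:trace} (the case $\mu=\nu$, i.e.\ \citet[Appendix D]{defilippis2024dimension}). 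It gives
\[
\Tr(\bSigma(\bSigma+\mu_2)^{-1})\asymp\mu_2^{-1/\alpha}, \qquad \Tr(\bSigma^2(\bSigma+\mu_2)^{-2})\asymp\mu_2^{-1/\alpha},
\]
valid when $\mu_2=o(1)$, with $\Tr(\bSigma(\bSigma+\mu_2)^{-2})\asymp\mu_2^{-1-1/\alpha}$.

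First I rewrite the fixed-point system in the equivalent product form used in \citet{defilippis2024dimension}. Adding and subtracting the two equations, and multiplying the resulting expressions for $\mu_1/\mu_2$ and $1-\tfrac{n}{p}$, gives two relations:
\[
\mu_1\Big(p-\Tr(\bSigma(\bSigma+\mu_2)^{-1})\Big)=\ldots, \qquad n-\frac{\lambda}{\mu_1}\approx\Tr\big(\bSigma(\bSigma+\mu_2)^{-1}\big).
\]
Concretely, $\Tr(\bSigma(\bSigma+\mu_2)^{-1})=p(1-\mu_1/\mu_2)$, together with $n\mu_1\,(1-\tfrac1n\Tr(\cdot))=\lambda\cdot(\text{ratio})$. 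From these, $\mu_2$ is determined by the tightest of three constraints, each of which forces the effective dimension $\mu_2^{-1/\alpha}$ to be at most a given size:
\begin{itemize}
\item[(i)] $\mu_2^{-1/\alpha}\lesssim n$, since $\Tr(\bSigma(\bSigma+\mu_2)^{-1})\le n$;
\item[(ii)] $\mu_2^{-1/\alpha}\lesssim p$, from the $p$ equation;
\item[(iii)] $\mu_2\gtrsim\lambda/n$, from the regularization term.
\end{itemize}
Constraint (iii) gives $\mu_2^{-1/\alpha}\lesssim n^{(1+\gamma_\lambda)/\alpha}$. I would verify by case analysis that $\mu_2\asymp n^{-\alpha z}$ with $z=\frac{1+\gamma_\lambda}{\alpha}\wedge1\wedge\gamma_p$. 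The method is to plug this candidate in and check that each equation balances up to constants, using monotonicity of the left-hand sides in $\mu_2$ to get uniqueness and two-sided $\Theta$ bounds. Then $\mu_1=\mu_2(1-\tfrac1p\Tr(\cdot))$. When $z=\gamma_p<\ldots$ the bracket is $\Theta(1)$ but can degenerate; more robustly, I use $\mu_1\asymp\lambda/(n-\Tr)$. This yields $\mu_1\asymp n^{-(1+\gamma_\lambda)}$ when regularization binds, and $\mu_1\asymp n^{-\alpha}$ in the regime $1<\frac{1+\gamma_\lambda}{\alpha}\wedge\gamma_p$, where $n$ is the binding constraint and $\mu_1\approx\mu_2$.

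Next come $\Upsilon$ and $\chi$. For $\chi$, I use $\Tr(\bSigma(\bSigma+\mu_2)^{-2})\asymp\mu_2^{-1-1/\alpha}$ and $p-\Tr(\bSigma^2(\bSigma+\mu_2)^{-2})\asymp p$. The latter holds because $\mu_2^{-1/\alpha}\lesssim p$, with the implicit constant below one away from the degenerate boundary. This gives $\chi\asymp p^{-1}\mu_2^{-1-1/\alpha}$. For $\Upsilon$, the definition is $\frac1n\big[\Tr(\bSigma(\bSigma+\mu_2)^{-1})-\mu_1\Tr(\bSigma(\bSigma+\mu_2)^{-2})/(1-\tfrac1p\Tr(\cdot))\big]$. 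Each term is $O(n^{-1}\mu_2^{-1/\alpha})=O(n^{-1+z})$, and for the lower bound I would write the bracket as $\Tr(\bSigma^2(\bSigma+\mu_2)^{-2})+(\mu_2-\mu_1/(1-\cdots))\Tr(\bSigma(\bSigma+\mu_2)^{-2})$. The first summand is $\asymp\mu_2^{-1/\alpha}$; the second I would show is nonnegative, using the first fixed-point equation.

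The main obstacle is the boundary regimes. When two of the three constraints are comparable, or when $z=1$ and $1-\Upsilon$ is near $0$, the constants in $\Theta$ must stay uniform. The same care is needed for the sign and non-degeneracy of $1-\tfrac1p\Tr(\bSigma^2(\bSigma+\mu_2)^{-2})$. I would handle this as \citet{defilippis2024dimension} do, by tracking explicit constants in the monotone scalar equation, and accept the $\approx$ (up to constants and logs) as stated.
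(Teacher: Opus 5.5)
Your route is the paper's. Its proof rewrites the fixed point as $\mu_2=\frac{\mu_2}{p}T+\mu_1$ and $\mu_1=\frac{\mu_1}{n}T+\frac{\lambda}{n}$, with $T:=\Tr(\bSigma(\bSigma+\mu_2)^{-1})$. It then imports the teacher asymptotics from Appendix D of \citet{defilippis2024dimension} and transfers them to the student via $p_\ss=n_\ss^{\gamma_{p_\ss}/\gamma_{n_\ss}}$ and $\lambda_\ss=n_\ss^{-\gamma_{\lambda_\ss}/\gamma_{n_\ss}}$. Your three-constraint lower bound on $\mu_2$ is correct and more explicit than the paper; the matching upper bound is still only promised. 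Your sandwich $\frac1n D_2\le\Upsilon\le\frac1n T$ is also correct, with $D_2:=\Tr(\bSigma^2(\bSigma+\mu_2)^{-2})$ and second coefficient $\mu_2\frac{(T-D_2)/p}{1-D_2/p}\ge 0$.

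The one step whose justification fails is $p-D_2\asymp p$ in $\chi$. Consider the whole $p$-limited region $\gamma_p<1\wedge\frac{1+\gamma_\lambda}{\alpha}$; this is an open set, not a degenerate boundary. There the exact relation $1-T/p=\mu_1/\mu_2\asymp n^{\alpha\gamma_p-1-\gamma_\lambda}$ tends to $0$. So your remark that this bracket is $\Theta(1)$ when $z=\gamma_p$ is backwards. Hence $T/p\to1$, and the bound $\mu_2^{-1/\alpha}\lesssim p$ gives no constant below one. What you need is the uniform gap $T-D_2=\mu_2\Tr(\bSigma(\bSigma+\mu_2)^{-2})\asymp\mu_2^{-1/\alpha}\asymp T$. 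This gives $D_2\le(1-c)T<(1-c)p$, so $p-D_2\ge cp$ in every regime, and then $\chi\asymp p^{-1}\mu_2^{-1-1/\alpha}$ follows as you say.

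Separately, the two relations are exact: $T=p(1-\mu_1/\mu_2)$ and $\mu_1(n-T)=\lambda$, with no $\approx$ and no extra ratio. In the $p$-limited case, $\mu_1\asymp n^{-(1+\gamma_\lambda)}$ comes from $n-T\ge n-p\asymp n$. So describing the second branch as ``when regularization binds'' is too narrow.

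Now apply the same relations at the exact boundary $\gamma_p=1<\frac{1+\gamma_\lambda}{\alpha}$, i.e.\ $p=n$. They give $\mu_1^2=\lambda\mu_2/n$ with $\mu_2\asymp n^{-\alpha}$, hence $\mu_1\asymp n^{-(1+\alpha+\gamma_\lambda)/2}$. This matches neither branch of the stated formula for $\mu_1$, so any case analysis can establish that claim only off this boundary. The paper's citation passes over the same point. The place where $\mu_1$ is used later (Lemma~\ref{lemma:scalings}) only needs $p^{-1}\mu_2^{-1/\alpha}+\mu_1/\mu_2=\Theta(1)$, which still holds there.
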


\begin{proof}
    From \citep[Appendix D]{defilippis2024dimension} and using the relation $p_\ss = n_\ss^{\frac{\gamma_{p_\ss}}{\gamma_{n_\ss}}}, \lambda_\ss = n_\ss^{-\frac{\gamma_{\lambda_\ss}}{\gamma_{n_\ss}}}$, we have  \begin{equation*}
    \begin{split}
        & \begin{cases}
            &\mu_{\st,2} = \frac{\mu_{\st,2}}{p_\st} T_{11}^0(\mu_{\st,2}) + \mu_{\st,1} \\
            &\mu_{\st,1} = \frac{\mu_{\st,1}}{n_\st} T_{11}^0(\mu_{\st,2}) + \frac{\lambda_\st}{n_\st}
        \end{cases}  \qquad \begin{cases}
            &\mu_{\ss,2} = \frac{\mu_{\ss,2}}{p_\ss} T_{11}^0(\mu_{\ss,2}) + \mu_{\ss,1} \\
            &\mu_{\ss,1} = \frac{\mu_{\ss,1}}{n_\ss} T_{11}^0(\mu_{\ss,2}) + \frac{\lambda_\ss}{n_\ss}
        \end{cases} \\
        & \Upsilon_{n_\st,p_\st}(\mu_{\st,1}, \mu_{\st,2}) \approx n_\st^{-1 +(\frac{1+\gamma_{\lambda_\st}}{\alpha} \wedge 1 \wedge \gamma_{p_\st})}, \qquad \Upsilon_{n_\ss,p_\ss}(\mu_{\ss,1}, \mu_{\ss,2}) \approx n_\ss^{ -1 +  (\frac{1+ \frac{\gamma_{\lambda_\ss}}{ \gamma_{n_\ss}}  }{\alpha} \wedge 1 \wedge \frac{\gamma_{p_{\ss}}}{\gamma_{n_\ss}})}, \\
        & \chi_{n_\st,p_\st}(\mu_{\st,1}, \mu_{\st,2}) \approx p_\st^{-1} \mu_{\st,2}^{-1-\frac{1}{\alpha}}, \qquad \chi_{n_\ss,p_\ss}(\mu_{\ss,1}, \mu_{\ss,2}) \approx p_\ss^{-1} \mu_{\ss,2}^{-1-\frac{1}{\alpha}}.
    \end{split}
\end{equation*}

Then, directly from \citep[Appendix D]{defilippis2024dimension} we get \begin{equation*}
        \begin{split}
            & \mu_{\st,1} \approx \begin{cases}
                n_\st^{-\alpha } \text{ if } 1 < (\frac{1+ \gamma_{\lambda_\st}}{\alpha} \wedge \gamma_{p_\st}) \\
                n_\st^{-(1+\gamma_{\lambda_\st})}, \quad \text{else}.
            \end{cases}, \qquad \mu_{\st,2} \approx n_\st^{- \alpha\left( \frac{1+\gamma_{\lambda_\st}}{\alpha} \wedge 1 \wedge \gamma_{p_\st} \right)} ,
        \end{split}
    \end{equation*} and by further applying the same analysis to $\mu_{\ss,1}, \mu_{\ss,2}$, the result readily follows. 
\end{proof}

Next, we compute $\bLambda_0, \Upsilon_{n_\st,p_\st}(\bLambda_0), \chi_{n_\st,p_\st}(\bLambda_0).$

\begin{lemma}\label{lemma:scalings}
    We have the following scaling of the quantities appearing in the expression of the test error: \begin{equation*}
        \begin{split}
            &\bLambda_0 \approx \bSigma^2 ( \bSigma +\mu_{\ss,2})^{-2} + n_\ss^{-1} \mu_{\ss,2}^{-\frac{1}{\alpha}} \mu_{\ss,2}^2( \bSigma +\mu_{\ss,2})^{-2} + p_\ss^{-1} \mu_{\ss,2}^{- \frac{(1+\alpha)}{\alpha}} \mu_{\ss,2}^2 \bSigma ( \bSigma +\mu_{\ss,2})^{-2},\\
            &\Upsilon_{n_\st,p_\st}(\bLambda_0)  \approx { n_{\st}^{-1} \begin{cases}
            (\mu_{\st,2})^{-\frac{1}{\alpha}}
            & \mu_{\ss,2}\leq \mu_{\st,2},
            \\[6pt]
            (\mu_{\ss,2})^{-\frac{1}{\alpha}}
            +
            n_\ss^{-1}
            \mu_{\ss,2}^{-\frac{1}{\alpha}}
            \mu_{\st,2}^{-\frac{1}{\alpha}}
            & \mu_{\st,2}<\mu_{\ss,2}.
        \end{cases}} \\
    &\chi_{n_\st,p_\st}(\bLambda_0) \approx {  p_\st^{-1} \cdot
        \begin{cases}
            \mu_{\st,2}^{-\left(1+\frac{1}{\alpha}\right)}
           ,
            & \mu_{\ss,2}\leq \mu_{\st,2},
            \\[6pt]
            \mu_{\ss,2}^{-\left(1+\frac{1}{\alpha}\right)}
            +
            n_\ss^{-1}
            \mu_{\ss,2}^{-\frac{1}{\alpha}}
            \mu_{\st,2}^{-\left(1+\frac{1}{\alpha}\right)}
            +
            p_\ss^{-1}
            \mu_{\ss,2}^{-\left(1+\frac{1}{\alpha}\right)}
            \mu_{\st,2}^{-\frac{1}{\alpha}},
            & \mu_{\st,2}<\mu_{\ss,2}.
        \end{cases}}
        \end{split}
    \end{equation*}
\end{lemma}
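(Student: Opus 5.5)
The proof is a direct substitution. We plug the scalings of Lemma \ref{lem:scale-ori} into the definition of $\bLambda_0$ (Definition \ref{def:st}), and then evaluate the two trace functionals $\Upsilon_{n_\st,p_\st}(\bLambda_0)$ and $\chi_{n_\st,p_\st}(\bLambda_0)$ term by term using the two-scale trace estimates of Lemma \ref{lem:approx} (themselves derived from \eqref{eq:T-final-compact}).

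\textbf{Step 1: scaling of $\bLambda_0$.} By Lemma \ref{lem:scale-ori}, $\Upsilon_\ss \approx n_\ss^{-1}\mu_{\ss,2}^{-1/\alpha}$. In the stable regime $\Upsilon_\ss$ is bounded away from $1$, so $1-\Upsilon_\ss=\Theta(1)$. Also $\chi_\ss\approx p_\ss^{-1}\mu_{\ss,2}^{-1-1/\alpha}$. Substituting into the three summands of $\bLambda_0$ gives the claimed diagonal-operator equivalence, coordinatewise up to constants.

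\textbf{Step 2: $\Upsilon_{n_\st,p_\st}(\bLambda_0)$.} The functional is
\[
\tfrac1{n_\st}\Tr(\bLambda_0\bSigma(\bSigma+\mu_{\st,2})^{-1}) \;-\; \tfrac{\mu_{\st,1}}{n_\st}\,\frac{\Tr(\bLambda_0\bSigma(\bSigma+\mu_{\st,2})^{-2})}{1-p_\st^{-1}\Tr(\cdot)}.
\]
The subtracted term is nonnegative and, exactly as in the one-stage analysis of \citet{defilippis2024dimension}, at most a constant fraction of the first (using $\mu_{\st,1}\le\mu_{\st,2}$ and the fixed-point equations). So it suffices to estimate $n_\st^{-1}\Tr(\bLambda_0\bSigma(\bSigma+\mu_{\st,2})^{-1})$. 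Its three pieces are the first three traces of Lemma \ref{lem:approx}:
\begin{itemize}
\item The first piece gives $(\mu_{\st,2}\vee\mu_{\ss,2})^{-1/\alpha}$.
\item The second piece, with prefactor $n_\ss^{-1}\mu_{\ss,2}^{2-1/\alpha}$, gives $n_\ss^{-1}\mu_{\ss,2}^{-1/\alpha}(\mu_{\ss,2}/\mu_{\st,2})\mu_{\ss,2}^{-1/\alpha}$ when $\mu_{\ss,2}\le\mu_{\st,2}$, and $n_\ss^{-1}\mu_{\ss,2}^{-1/\alpha}\mu_{\st,2}^{-1/\alpha}$ otherwise.
\item The third piece, with prefactor $p_\ss^{-1}\mu_{\ss,2}^{1-1/\alpha}$, gives $p_\ss^{-1}\mu_{\ss,2}^{-1/\alpha}(\mu_{\ss,2}/\mu_{\st,2})\mu_{\ss,2}^{-1/\alpha}$, respectively $p_\ss^{-1}\mu_{\ss,2}^{-2/\alpha}$.
\end{itemize}
The fixed-point relations give $n_\ss\gtrsim\mu_{\ss,2}^{-1/\alpha}$ and $p_\ss\gtrsim\mu_{\ss,2}^{-1/\alpha}$ (i.e.\ $z_\ss\le\gamma_{n_\ss}\wedge\gamma_{p_\ss}$). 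Using these, both extra pieces are dominated by $\mu_{\st,2}^{-1/\alpha}$ when $\mu_{\ss,2}\le\mu_{\st,2}$. When $\mu_{\st,2}<\mu_{\ss,2}$, the $p_\ss$ piece is dominated by $\mu_{\ss,2}^{-1/\alpha}$, while the $n_\ss$ piece can exceed it. This yields the stated two-case formula.

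\textbf{Step 3: $\chi_{n_\st,p_\st}(\bLambda_0)$.} The denominator $p_\st-\Tr(\bSigma^2(\bSigma+\mu_{\st,2})^{-2})$ is $\Theta(p_\st)$, since $p_\st\gtrsim\mu_{\st,2}^{-1/\alpha}$ with a constant gap from the fixed point. So we need $\Tr(\bLambda_0\bSigma(\bSigma+\mu_{\st,2})^{-2})$, which is handled with traces 4--6 of Lemma \ref{lem:approx}:
\begin{itemize}
\item The first gives $(\mu_{\st,2}\vee\mu_{\ss,2})^{-1-1/\alpha}$.
\item The second gives $n_\ss^{-1}\mu_{\ss,2}^{-1/\alpha}\mu_{\ss,2}^{2}(\mu_{\st,2}\vee\mu_{\ss,2})^{-2}(\mu_{\st,2}\wedge\mu_{\ss,2})^{-1-1/\alpha}$.
\item The third gives the analogous expression with $p_\ss^{-1}$.
\end{itemize}
Case-splitting on $\mu_{\ss,2}\le\mu_{\st,2}$ versus $\mu_{\st,2}<\mu_{\ss,2}$ and simplifying with the same bounds $n_\ss,p_\ss\gtrsim\mu_{\ss,2}^{-1/\alpha}$ collapses the first case to $\mu_{\st,2}^{-1-1/\alpha}$ and leaves the three-term sum in the second case.

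\textbf{Main obstacle.} The delicate point is the dominance bookkeeping: verifying, in each case, which of the three $\bLambda_0$ contributions can be dropped. This hinges on the inequalities $n_\ss,p_\ss\gtrsim\mu_{\ss,2}^{-1/\alpha}$ and on the second term of $\Upsilon$ being comparable to, rather than cancelling, the first. I would establish the non-cancellation first, via the ratio bound $p_\st\mu_{\st,1}/(p_\st-\Tr)\le\mu_{\st,2}$ already used in Appendix \ref{app:proof-det-equiv-student}, and then check the critical log cases, which only affect constants up to logarithmic factors.
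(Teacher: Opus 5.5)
Your Steps 1 and 3 agree with the paper, including the dominance bookkeeping via $n_\ss^{-1}\mu_{\ss,2}^{-1/\alpha},\,p_\ss^{-1}\mu_{\ss,2}^{-1/\alpha}=O(1)$ and $\alpha>1$. The gap is in Step 2, where you reduce $\Upsilon_{n_\st,p_\st}(\bLambda_0)$ to $n_\st^{-1}T_1(\bLambda_0)$. Write $T_1(\bA):=\Tr(\bA\bSigma(\bSigma+\mu_{\st,2})^{-1})$ and $T_2(\bA):=\Tr(\bA\bSigma^2(\bSigma+\mu_{\st,2})^{-2})$. Using $\mu_{\st,1}/\mu_{\st,2}=1-p_\st^{-1}\Tr(\bSigma(\bSigma+\mu_{\st,2})^{-1})$ and $\mu_{\st,2}\Tr(\bA\bSigma(\bSigma+\mu_{\st,2})^{-2})=T_1(\bA)-T_2(\bA)$, one gets exactly
\[
n_\st\Upsilon_{n_\st,p_\st}(\bA)=(1-c)\,T_1(\bA)+c\,T_2(\bA),\qquad c:=\frac{\mu_{\st,1}/\mu_{\st,2}}{1-p_\st^{-1}\Tr(\bSigma^2(\bSigma+\mu_{\st,2})^{-2})}\in[0,1],
\]
with $1-c\approx p_\st^{-1}\mu_{\st,2}^{-1/\alpha}$.

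The ratio bound you plan to invoke is precisely $c\le1$. It gives $T_2(\bA)\le n_\st\Upsilon_{n_\st,p_\st}(\bA)\le T_1(\bA)$, but not a lower bound of order $T_1(\bA)$. Whenever $z_\st<\gamma_{p_\st}$ (e.g.\ in the kernel limit), $c\to1$ and $\Upsilon_{n_\st,p_\st}(\bA)\approx n_\st^{-1}T_2(\bA)$. This can be much smaller than $n_\st^{-1}T_1(\bA)$, because the subtracted term nearly cancels any mass of $\bA$ on directions with $\xi_k^2\ll\mu_{\st,2}$.

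This does happen inside $\bLambda_0$. For its $n_\ss$-piece with $\mu_{\ss,2}\le\mu_{\st,2}$, the contributions to $T_1$ and $T_2$ are $n_\ss^{-1}\mu_{\st,2}^{-1}\mu_{\ss,2}^{1-2/\alpha}$ and $n_\ss^{-1}\mu_{\st,2}^{-2}\mu_{\ss,2}^{2-2/\alpha}$, with ratio $\mu_{\ss,2}/\mu_{\st,2}$. So the appeal to the one-stage analysis does not transfer. There $\bA=\bI$ and $T_2\approx T_1$ holds automatically; here it is a property of $\bLambda_0$ that has to be checked.

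The missing step is to compute $T_2(\bLambda_0)$ as well and verify that it has the same two-case order as $T_1(\bLambda_0)$. Its leading piece is the seventh trace estimate of Lemma \ref{lem:approx}, $\approx(\mu_{\st,2}\vee\mu_{\ss,2})^{-1/\alpha}$. When $\mu_{\st,2}<\mu_{\ss,2}$, the $n_\ss$-piece contributes $n_\ss^{-1}\mu_{\ss,2}^{-1/\alpha}\mu_{\st,2}^{-1/\alpha}$ by \eqref{eq:T-final-compact}. This is what the paper does: it rewrites $\Upsilon_{n_\st,p_\st}(\bLambda_0)$ as the positive combination above, evaluates both traces, and uses $p_\st^{-1}\mu_{\st,2}^{-1/\alpha}+\mu_{\st,1}/\mu_{\st,2}=\Theta(1)$. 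With that addition, the sandwich $T_2(\bLambda_0)\le n_\st\Upsilon_{n_\st,p_\st}(\bLambda_0)\le T_1(\bLambda_0)$ closes and your two-case formula follows.
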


\begin{proof}
    We first recall that \[
\begin{aligned}
 \bLambda_0 =&~ \bSigma^2 ( \bSigma +\mu_{\ss,2})^{-2} + \frac{ \Upsilon_{n_\ss,p_\ss} (\mu_{\ss,1},\mu_{\ss,2})}{1 - \Upsilon_{n_\ss,p_\ss} (\mu_{\ss,1},\mu_{\ss,2})} \mu_{\ss,2}^2( \bSigma +\mu_{\ss,2})^{-2} + \frac{ \chi_{n_\ss,p_\ss} (\mu_{\ss,2})}{1 - \Upsilon_{n_\ss,p_\ss} (\mu_{\ss,1},\mu_{\ss,2})} \mu_{\ss,2}^2 \bSigma ( \bSigma +\mu_{\ss,2})^{-2}\\
 =&~ \bSigma^2 ( \bSigma +\mu_{\ss,2})^{-2} + \bar \bLambda_0.
\end{aligned}
\]

From  Lemma \ref{lem:scale-ori}, we know that $\Upsilon_{n_\ss,p_\ss} \approx n_\st^{-\gamma_{n_\ss} + \left( \frac{\gamma_{n_\ss} + \gamma_{\lambda_\ss}}{\alpha} \wedge \gamma_{n_\ss} \wedge \gamma_{p_\ss}\right)}, \chi_{n_\ss,p_\ss} \approx p_\ss^{-1} \mu_{\ss,2}^{-\frac{(1+\alpha)}{\alpha}}.$ Thus, we know that $1 - \Upsilon_{n_\ss,p_\ss} = \Theta(1).$ 

Hence, we have \[
\begin{aligned}
 \bLambda_0 \approx &~ \bSigma^2 ( \bSigma +\mu_{\ss,2})^{-2} + n_\ss^{-1} \mu_{\ss,2}^{-\frac{1}{\alpha}} \mu_{\ss,2}^2( \bSigma +\mu_{\ss,2})^{-2} + p_\ss^{-1} \mu_{\ss,2}^{- \frac{(1+\alpha)}{\alpha}} \mu_{\ss,2}^2 \bSigma ( \bSigma +\mu_{\ss,2})^{-2}.
\end{aligned}
\]

To compute $\Upsilon_{n_\st,p_\st}(\bLambda_0)$, 
{  we first express it 
as a sum of positive terms. Using the self-consistency equation $p_\st - \frac{p_\st \mu_{\st,1}}{\mu_{\st,2}} = \Tr(\bSigma (\bSigma + \mu_{\st,2})^{-1}), $ we have: \begin{equation*}
    \begin{split}
        \Upsilon_{n_\st,p_\st} ( \bLambda_0) =& \frac{1}{n_\st} \left[\Tr (\bLambda_0 \bSigma ( \bSigma + \mu_{\st,2})^{-1} ) - \mu_{\st,1} \frac{\Tr (\bLambda_0 \bSigma ( \bSigma + \mu_{\st,2})^{-2} )}{1 - \frac{1}{p_\st}\Tr ( \bSigma^2 ( \bSigma + \mu_{\st,2})^{-2} )} \right]\\
        =&  \frac{1}{n_\st} \left[  \Tr (\bLambda_0 \bSigma ( \bSigma + \mu_{\st,2})^{-1} ) -  \frac{1 - \frac{1}{p_\st}\Tr ( \bSigma ( \bSigma + \mu_{\st,2})^{-1} )}{1 - \frac{1}{p_\st}\Tr ( \bSigma^2 ( \bSigma + \mu_{\st,2})^{-2} )} \mu_{\st, 2} \Tr (\bLambda_0 \bSigma ( \bSigma + \mu_{\st,2})^{-2} )\right] \\
        =&  \frac{1}{n_\st} \left[  \Tr (\bLambda_0 \bSigma ( \bSigma + \mu_{\st,2})^{-1} ) -  \frac{1 - \frac{1}{p_\st}\Tr ( \bSigma ( \bSigma + \mu_{\st,2})^{-1} )}{1 - \frac{1}{p_\st}\Tr ( \bSigma^2 ( \bSigma + \mu_{\st,2})^{-2} )} ( \Tr (\bLambda_0 \bSigma ( \bSigma + \mu_{\st,2})^{-1} ) -  \Tr (\bLambda_0 \bSigma^2 ( \bSigma + \mu_{\st,2})^{-2} ))\right] \\
        =& \frac{1}{n_\st} \left[  \frac{1}{p_\st} \frac{\mu_{\st,2} \Tr(\bSigma (\bSigma+ \mu_{\st,2})^{-2})}{1 - \frac{1}{p_\st} \Tr(\bSigma^2 (\bSigma + \mu_{t,2})^{-2})} \Tr(\bLambda_0 \bSigma (\bSigma+{\mu_{\st,2}})^{-1})  +  \frac{\mu_{t,1}}{\mu_{t,2}} \frac{\Tr(\bLambda_0\bSigma^2 (\bSigma + \mu_{t,2})^{-2})}{1 - \frac{1}{p_\st} \Tr(\bSigma^2 (\bSigma + \mu_{t,2})^{-2})} \right]
    \end{split}
\end{equation*}

From \citep[Appendix D]{defilippis2024dimension} we know $1 - \frac{1}{p_\st} \Tr(\bSigma^2 (\bSigma + \mu_{t,2})^{-2}) = \Theta(1).$ Using the approximation of $\bLambda_0$, we have
\begin{equation*}
    \begin{split}
        \Tr(\bLambda_0 \bSigma(\bSigma+\mu_{\st,2})^{-1})
        \approx&
        \Tr(\bSigma^2(\bSigma+\mu_{\ss,2})^{-2}
        \bSigma(\bSigma+\mu_{\st,2})^{-1})
        \\
        &+
        n_\ss^{-1}
        \mu_{\ss,2}^{2-\frac{1}{\alpha}}
        \Tr((\bSigma+\mu_{\ss,2})^{-2}
        \bSigma(\bSigma+\mu_{\st,2})^{-1})
        \\
        &+
        p_\ss^{-1}
        \mu_{\ss,2}^{1-\frac{1}{\alpha}}
        \Tr(\bSigma(\bSigma+\mu_{\ss,2})^{-2}
        \bSigma(\bSigma+\mu_{\st,2})^{-1})
        \\
        \approx&
        \begin{cases}
            (\mu_{\st,2})^{-\frac{1}{\alpha}}
            +
            n_\ss^{-1}
            \mu_{\st,2}^{-1}
            \mu_{\ss,2}^{1-\frac{2}{\alpha}}
            +
            p_\ss^{-1}
            \mu_{\st,2}^{-1}
            \mu_{\ss,2}^{1-\frac{2}{\alpha}},
            & \mu_{\ss,2}\leq \mu_{\st,2},
            \\[6pt]
            (\mu_{\ss,2})^{-\frac{1}{\alpha}}
            +
            n_\ss^{-1}
            \mu_{\ss,2}^{-\frac{1}{\alpha}}
            \mu_{\st,2}^{-\frac{1}{\alpha}}
            +
            p_\ss^{-1}
            \mu_{\ss,2}^{-\frac{2}{\alpha}},
            & \mu_{\st,2}<\mu_{\ss,2}.
        \end{cases}
    \end{split}
\end{equation*}

In case where $\mu_{s,2} \leq \mu_{t,2},$ by Lemma \ref{lem:scale-ori} we have $n_s^{-1} \mu_{s,2}^{- \frac{1}{\alpha}} = O(1), p_s^{-1} \mu_{s,2}^{- \frac{1}{\alpha}} = O(1).$ Further, $\mu_{\st,2}^{-1} \mu_{\ss,2}^{1- \frac{1}{\alpha}}  \leq \mu_{\st,2}^{- \frac{1}{\alpha}}$ as $1 - \frac{1}{\alpha} > 0.$ Thus, $\Tr(\bLambda_0 \bSigma(\bSigma+\mu_{\st,2})^{-1}) = \Theta(\mu_{\st,2}^{- \frac{1}{\alpha}}).$ 

In the case where $\mu_{t,2} < \mu_{s,2},$ as $p_s^{-1} \mu_{s,2}^{- \frac{1}{\alpha}} = O(1),$ we have $\Tr(\bLambda_0 \bSigma(\bSigma+\mu_{\st,2})^{-1}) = \Theta((\mu_{\ss,2})^{-\frac{1}{\alpha}}  +n_\ss^{-1}\mu_{\ss,2}^{-\frac{1}{\alpha}} \mu_{\st,2}^{-\frac{1}{\alpha}}).$

Thus, \begin{equation*}
    \begin{split}
        \Tr(\bLambda_0 \bSigma(\bSigma+\mu_{\st,2})^{-1})
        \approx \begin{cases}
            (\mu_{\st,2})^{-\frac{1}{\alpha}}
            & \mu_{\ss,2}\leq \mu_{\st,2},
            \\[6pt]
            (\mu_{\ss,2})^{-\frac{1}{\alpha}}
            +
            n_\ss^{-1}
            \mu_{\ss,2}^{-\frac{1}{\alpha}}
            \mu_{\st,2}^{-\frac{1}{\alpha}}
            & \mu_{\st,2}<\mu_{\ss,2}.
        \end{cases}
    \end{split}
\end{equation*}

Similarly,
\begin{equation*}
    \begin{split}
        \Tr(\bLambda_0\bSigma^2(\bSigma + \mu_{\st,2})^{-2})
        \approx&
        \Tr(\bSigma^2(\bSigma+\mu_{\ss,2})^{-2}
        \bSigma^2(\bSigma+\mu_{\st,2})^{-2})
        \\
        &+
        n_\ss^{-1}
        \mu_{\ss,2}^{2-\frac{1}{\alpha}}
        \Tr((\bSigma+\mu_{\ss,2})^{-2}
        \bSigma^2(\bSigma+\mu_{\st,2})^{-2})
        \\
        &+
        p_\ss^{-1}
        \mu_{\ss,2}^{1-\frac{1}{\alpha}}
        \Tr(\bSigma(\bSigma+\mu_{\ss,2})^{-2}
        \bSigma^2(\bSigma+\mu_{\st,2})^{-2})
        \\
        \approx&
        \begin{cases}
            (\mu_{\st,2})^{-\frac{1}{\alpha}}
            +
            n_\ss^{-1}
            \mu_{\st,2}^{-2}
            \mu_{\ss,2}^{2-\frac{2}{\alpha}}
            +
            p_\ss^{-1}
            \mu_{\ss,2}^{1-\frac{1}{\alpha}}
            \mu_{\st,2}^{-\left(1+\frac{1}{\alpha}\right)},
            & \mu_{\ss,2}\leq \mu_{\st,2},
            \\[6pt]
            (\mu_{\ss,2})^{-\frac{1}{\alpha}}
            +
            n_\ss^{-1}
            \mu_{\ss,2}^{-\frac{1}{\alpha}}
            \mu_{\st,2}^{-\frac{1}{\alpha}}
            +
            p_\ss^{-1}
            \mu_{\ss,2}^{-\frac{2}{\alpha}},
            & \mu_{\st,2}<\mu_{\ss,2}.
        \end{cases}.
    \end{split}
\end{equation*}

Using the same argument as for  $\Tr(\bLambda_0 \bSigma(\bSigma+\mu_{\st,2})^{-1}),$ we have: \begin{equation*}
    \begin{split}
        \Tr(\bLambda_0 \bSigma^2(\bSigma+\mu_{\st,2})^{-2})
        \approx \begin{cases}
            (\mu_{\st,2})^{-\frac{1}{\alpha}}
            & \mu_{\ss,2}\leq \mu_{\st,2},
            \\[6pt]
            (\mu_{\ss,2})^{-\frac{1}{\alpha}}
            +
            n_\ss^{-1}
            \mu_{\ss,2}^{-\frac{1}{\alpha}}
            \mu_{\st,2}^{-\frac{1}{\alpha}}
            & \mu_{\st,2}<\mu_{\ss,2}.
        \end{cases}
    \end{split}
\end{equation*}

From \citep[Appendix D]{defilippis2024dimension} and Lemma \ref{lem:scale-ori} we further have: \begin{equation*}
    \begin{split}
        & \Tr(\bSigma (\bSigma+\mu_{\st,2})^{-2}) \approx \mu_{\st,2}^{- \frac{(1+\alpha)}{\alpha}} \\
        &\frac{\mu_{\st,1}} {\mu_{\st,2}} =  \begin{cases}
            \Theta(1), \quad \text{if } (1 \wedge \frac{1+\gamma_{\lambda_t}}{\alpha} ) \leq \gamma_{p_\st} \\
            o(1), \quad \text{otherwise }
        \end{cases}
    \end{split}
\end{equation*} 

Thus, combining all the terms, we have: \begin{equation*}
    \begin{split}
         \Upsilon_{n_\st,p_\st} ( \bLambda_0) = n_{\st}^{-1}( p_\st^{-1} \mu_{\st,2}^{-\frac{1}{\alpha}} + \frac{\mu_{\st,1}}{\mu_{\st,2}} ) \cdot \begin{cases}
            (\mu_{\st,2})^{-\frac{1}{\alpha}}
            & \mu_{\ss,2}\leq \mu_{\st,2},
            \\[6pt]
            (\mu_{\ss,2})^{-\frac{1}{\alpha}}
            +
            n_\ss^{-1}
            \mu_{\ss,2}^{-\frac{1}{\alpha}}
            \mu_{\st,2}^{-\frac{1}{\alpha}}
            & \mu_{\st,2}<\mu_{\ss,2}.
        \end{cases}
    \end{split}
\end{equation*}

We note that $ p_\st^{-1} \mu_{\st,2}^{-\frac{1}{\alpha}} + \frac{\mu_{\st,1}}{\mu_{\st,2}}  = \Theta(1)$ since when $(1 \wedge \frac{1+\gamma_{\lambda_t}}{\alpha} ) \leq \gamma_{p_\st}, \frac{\mu_{\st,1}}{\mu_{\st,2}} = \Theta(1), p_\st^{-1} \mu_{\st,2}^{-\frac{1}{\alpha}} = O(1),$ otherwise when $\gamma_{p_\st} < (1 \wedge \frac{1+\gamma_{\lambda_t}}{\alpha} ),p_\st^{-1} \mu_{\st,2}^{-\frac{1}{\alpha}} = \Theta(1).$ Thus, we finally have: \begin{equation*}
    \Upsilon_{n_\st,p_\st} ( \bLambda_0) = n_{\st}^{-1} \begin{cases}
            (\mu_{\st,2})^{-\frac{1}{\alpha}}
            & \mu_{\ss,2}\leq \mu_{\st,2},
            \\[6pt]
            (\mu_{\ss,2})^{-\frac{1}{\alpha}}
            +
            n_\ss^{-1}
            \mu_{\ss,2}^{-\frac{1}{\alpha}}
            \mu_{\st,2}^{-\frac{1}{\alpha}}
            & \mu_{\st,2}<\mu_{\ss,2}.
        \end{cases}
\end{equation*} 

Next we compute $\chi_{n_\st,p_\st} ( \bLambda_0)$ as:
 \begin{equation*}
    \begin{split}
        \chi_{n_\st,p_\st} ( \bLambda_0)
        =&~
        \frac{\Tr (\bLambda_0 \bSigma ( \bSigma + \mu_{\st,2})^{-2} )}
        {p_\st - \Tr ( \bSigma^2 ( \bSigma + \mu_{\st,2})^{-2} )}
        \\
        \approx&~
        \frac{1}{p_\st}
        \Tr (\bLambda_0 \bSigma ( \bSigma + \mu_{\st,2})^{-2} )
        \\
        \approx&~
        \frac{1}{p_\st}
        \begin{cases}
            \mu_{\st,2}^{-\left(1+\frac{1}{\alpha}\right)}
            +
            n_\ss^{-1}
            \mu_{\st,2}^{-2}
            \mu_{\ss,2}^{1-\frac{2}{\alpha}}
            +
            p_\ss^{-1}
            \mu_{\st,2}^{-2}
            \mu_{\ss,2}^{1-\frac{2}{\alpha}},
            & \mu_{\ss,2}\leq \mu_{\st,2},
            \\[6pt]
            \mu_{\ss,2}^{-\left(1+\frac{1}{\alpha}\right)}
            +
            n_\ss^{-1}
            \mu_{\ss,2}^{-\frac{1}{\alpha}}
            \mu_{\st,2}^{-\left(1+\frac{1}{\alpha}\right)}
            +
            p_\ss^{-1}
            \mu_{\ss,2}^{-\left(1+\frac{1}{\alpha}\right)}
            \mu_{\st,2}^{-\frac{1}{\alpha}},
            & \mu_{\st,2}<\mu_{\ss,2}.
        \end{cases}
    \end{split}
\end{equation*}

In the case $\mu_{\ss, 2} \leq \mu_{\st,2},$ by Lemma \ref{lem:scale-ori} we have $n_s^{-1} \mu_{s,2}^{- \frac{1}{\alpha}} = O(1), p_s^{-1} \mu_{s,2}^{- \frac{1}{\alpha}} = O(1).$ Further, $\mu_{\st,2}^{-2} \mu_{\ss,2}^{1- \frac{1}{\alpha}}  \leq \mu_{\st,2}^{- (1+\frac{1}{\alpha})}$ as $1 - \frac{1}{\alpha} > 0.$ Thus, $\chi_{n_\st,p_\st} ( \bLambda_0) = \Theta(\mu_{\st,2}^{- \left(1+\frac{1}{\alpha}\right)}).$ 

Thus we have: 
        \begin{equation*}
    \begin{split}
        \chi_{n_\st,p_\st} ( \bLambda_0)
        =& p_\st^{-1} \cdot
        \begin{cases}
            \mu_{\st,2}^{-\left(1+\frac{1}{\alpha}\right)}
           ,
            & \mu_{\ss,2}\leq \mu_{\st,2},
            \\[6pt]
            \mu_{\ss,2}^{-\left(1+\frac{1}{\alpha}\right)}
            +
            n_\ss^{-1}
            \mu_{\ss,2}^{-\frac{1}{\alpha}}
            \mu_{\st,2}^{-\left(1+\frac{1}{\alpha}\right)}
            +
            p_\ss^{-1}
            \mu_{\ss,2}^{-\left(1+\frac{1}{\alpha}\right)}
            \mu_{\st,2}^{-\frac{1}{\alpha}},
            & \mu_{\st,2}<\mu_{\ss,2}.
        \end{cases}
    \end{split}
\end{equation*}

}

\end{proof}

\section{Decay rates under source and capacity conditions}
\label{app:decay-rates-W2SG}

\subsection*{ Notation} Since in this section we only care about the decay rate of the deterministic equivalent rather than a precise computation, we introduce the following notation. For two functions $f_{n_\st}, g_{n_\st}, $ we define \begin{itemize}
    \item $f_{n_\st} \approx g_{n_\st}$ if $f_{n_\st} = \Theta(g_{n_\st}).$
    \item $f_{n_\st} \leq  g_{n_\st}$ if $f_{n_\st} \leq  \mathcal O(g_{n_\st}).$
    \item $f_{n_\st} \ll  g_{n_\st}$ if $f_{n_\st} \leq o(g_{n_\st}).$
\end{itemize}

Recall that the bias term, variance term and test error of the teacher model are \begin{equation*}
    \begin{split}
         \sB_{n_{\st},p_{\st}} ( \bbeta_*, \lambda_\st) :=&~ \frac{\mu_{\st,2}^2}{1 - \Upsilon_{n_{\st},p_{\st}} ( \mu_{\st,1},\mu_{\st,2})} \left[ \< \bbeta_*,  (\bSigma + \mu_{\st,2})^{-2} \bbeta_* \> +  \chi_{n_{\st},p_{\st}} (\mu_{\st,2})  \< \bbeta_*, \bSigma (\bSigma + \mu_{\st,2})^{-2} \bbeta_* \>\right], \\
    \sV_{n_{\st},p_{\st}} ( \lambda_\st) :=&~ \tau_\st^2 \frac{ \Upsilon_{n_{\st},p_{\st}} (\mu_{\st,1},\mu_{\st,2})}{1 - \Upsilon_{n_{\st},p_{\st}} (\mu_{\st,1},\mu_{\st,2})} ,\\
    \sR_{n_{\st},p_{\st}} (\bbeta_*,\lambda_\st):=&~  \sB_{n_{\st},p_{\st}} ( \bbeta_*, \lambda_\st)  + \sV_{n_{\st},p_{\st}} ( \lambda_\st) ,
    \end{split}
\end{equation*} and the bias term of the student model is \begin{equation*}
        \sB_{n_\ss,p_\ss} = \underbrace{\< \bbeta_*, \bLambda \bbeta_* \> + \tau_\st^2 \cdot \frac{\Upsilon_{n_\st,p_\st} ( \bLambda_0; \mu_{\st,1},\mu_{\st,2})}{ 1 - \Upsilon_{n_\st,p_\st} (\mu_{\st,1},\mu_{\st,2})}}_{\text{Student bias term: teacher bias + teacher variance}} ,
\end{equation*} where \begin{equation*}
    \begin{aligned}
        \bLambda = &~  \left[ \bI - \bSigma^2 (\bSigma + \mu_{\ss,2})^{-1} (\bSigma + \mu_{\st,2})^{-1} \right]^2 +  \bar \bLambda_0 \bSigma^2 ( \bSigma + \mu_{\st,2})^{-2} \\
    &~+ \frac{\Upsilon_{n_\st,p_\st} (\bLambda_0 ; \mu_{\st,1},\mu_{\st,2}) }{1 - \Upsilon_{n_\st,p_\st} ( \mu_{\st,1},\mu_{\st,2}) } \mu_{\st,2}^2  ( \bSigma + \mu_{\st,2})^{-2}\\
&~ + \left[ \chi_{n_\st,p_\st} (\bLambda_0 ; \mu_{\st,1},\mu_{\st,2}) + \frac{\Upsilon_{n_\st,p_\st} (\bLambda_0 ; \mu_{\st,1},\mu_{\st,2}) }{1 - \Upsilon_{n_\st,p_\st} ( \mu_{\st,1},\mu_{\st,2}) } \chi_{n_\st,p_\st} ( \mu_{\st,1},\mu_{\st,2})\right] \mu_{\st,2}^2 \bSigma ( \bSigma + \mu_{\st,2})^{-2},
\end{aligned}
\end{equation*} and \begin{equation*}
    \bar \bLambda_0  = \frac{ \Upsilon_{n_\ss, p_\ss} (\mu_{\ss,1},\mu_{\ss,2})}{1 - \Upsilon_{n_\ss, p_\ss} (\mu_{\ss,1},\mu_{\ss,2})} \mu_{\ss,2}^2( \bSigma +\mu_{\ss,2})^{-2} + \frac{ \chi_{n_\ss, p_\ss} (\mu_{\ss,2})}{1 - \Upsilon_{n_\ss, p_\ss} (\mu_{\ss,1},\mu_{\ss,2})} \mu_{\ss,2}^2 \bSigma ( \bSigma +\mu_{\ss,2})^{-2}.
\end{equation*}

We will assume that
\[
\bSigma = \diag ( \xi_k^2)_{k \geq 1} = \diag ( k^{-\alpha})_{k \geq 1}, \qquad \bbeta_* = (\beta_{*,k})_{k \geq 1} = ( k^{-\frac{(1+2\alpha r)}{2}} )_{ k \geq 1}.
\]

We fix $n_\st,$ i.e. the number of training samples of the teacher's model, and define \begin{equation*}
    p_\st = n_\st^{ \gamma_{p_\st}},  \qquad \lambda_\st =n_\st^{ - \gamma_{\lambda_\st}} ,  \qquad n_\ss = n_\st^{ \gamma_{n_\ss}}, \qquad p_\ss = n_\st^{ \gamma_{p_\ss}}, \qquad \lambda_\ss = n_\st^{ - \gamma_{\lambda_\ss}}.
\end{equation*}

\subsection{Scaling law of the teacher model}

\label{sec:scaling law te}


\paragraph{Derivation of Optimal Scaling Law.}  In order to derive the optimal scaling law, we  fix a dataset of size $n_\st$ and want to understand what is the optimal scaling $\gamma_*$ such that  $   \mathsf{R}_{\st} = n_\st^{\gamma_*}.$

 Define\[
\begin{aligned}
    \gamma_{\sB_1} = -2 \alpha (r \wedge 1) z_t, \qquad   \gamma_{\sB_2} = -\gamma_{p_\st} + ( 1- 2 \alpha (r \wedge \frac{1}{2}) ) z_t, \qquad \gamma_{\sV} = - 1 + z_t,
\end{aligned}
\]  and we have \[
\gamma_* = \min_{z_t} \max \{ \gamma_{\sB_1}(z_t), \gamma_{\sB_2}(z_t), \gamma_{\sV}(z_t)  \}.
\]

It is easy to see the following lower bounds on $\gamma_*$: \[
\gamma_* \geq  \min_{z_t} \max \{ \gamma_{\sB_1}(z_t),  \gamma_{\sV}(z_t)  \} = \frac{- 2 \alpha(r \wedge1) }{1+ 2 \alpha(r \wedge1)},
\] achieved by $z_\st^* = \frac{1}{1+ 2 \alpha(r \wedge1)}.$ 

Next, we just require $\gamma_{\sB_2} < \gamma_*,$ which implies that: \begin{equation*}
    \begin{split}
        \gamma_{p_\st} > \frac{1 + 2 \alpha(r \wedge 1) - 2 \alpha(r \wedge \frac{1}{2})}{1 + 2 \alpha(r \wedge 1)}
    \end{split}
\end{equation*}

This lower bound is always achievable, by taking $\gamma_{\lambda_\st} = \frac{\alpha}{ 1 + 2 \alpha(r \wedge 1)} - 1, \gamma_{p_\st} > \frac{1 + 2 \alpha(r \wedge 1) - 2 \alpha(r \wedge \frac{1}{2})}{1 + 2 \alpha(r \wedge 1)}.$

\subsection{Scaling law of the student model: Proof of Theorem \ref{thm:scaling-st}}
\label{sec:scaling_law_st}



\subsubsection{Scaling law for $\mu_{\st,2} \geq \mu_{\ss,2}$}
First of all, by Lemma \ref{lem:scale-ori}, $\mu_{\st,2} \gg \mu_{\ss,2}$ implies that \begin{equation*}
    \frac{\gamma_{n_\ss} + \gamma_{\lambda_\ss}}{\alpha} \wedge \gamma_{n_\ss} \wedge \gamma_{p_\ss} > \frac{1 + \gamma_{\lambda_\st}}{\alpha} \wedge 1 \wedge \gamma_{p_\st}.
\end{equation*}

Recall that\[
\begin{aligned}
     \sB_{bias,\ss} &= \< \bbeta_*, \bLambda \bbeta_* \>, \qquad  \sB_{var,\ss} = \tau_\st^2 \cdot \frac{\Upsilon_{n_\st,p_\st} ( \bLambda_0; \mu_{\st,1},\mu_{\st,2})}{ 1 - \Upsilon_{n_\st,p_\st} (\mu_{\st,1},\mu_{\st,2})} ,\\
     \bLambda &=   \left[ \bI - \bSigma^2 (\bSigma + \mu_{\ss,2})^{-1} (\bSigma + \mu_{\st,2})^{-1} \right]^2 +  \bar \bLambda_0 \bSigma^2 ( \bSigma + \mu_{\st,2})^{-2} \\
    &~ + \frac{\Upsilon_{n_\st,p_\st} (\bLambda_0 ; \mu_{\st,1},\mu_{\st,2}) }{1 - \Upsilon_{n_\st,p_\st} ( \mu_{\st,1},\mu_{\st,2}) } \mu_{\st,2}^2  ( \bSigma + \mu_{\st,2})^{-2}\\
&~ + \left[ \chi_{n_\st,p_\st} (\bLambda_0 ; \mu_{\st,1},\mu_{\st,2}) + \frac{\Upsilon_{n_\st,p_\st} (\bLambda_0 ; \mu_{\st,1},\mu_{\st,2}) }{1 - \Upsilon_{n_\st,p_\st} ( \mu_{\st,1},\mu_{\st,2}) } \chi_{n_\st,p_\st} ( \mu_{\st,1},\mu_{\st,2})\right] \mu_{\st,2}^2 \bSigma ( \bSigma + \mu_{\st,2})^{-2},\\
\bar{\bLambda}_0 &= \frac{ \Upsilon_{n_\ss,p_\ss} (\mu_{\ss,1},\mu_{\ss,2})}{1 - \Upsilon_{n_\ss,p_\ss} (\mu_{\ss,1},\mu_{\ss,2})} \mu_{\ss,2}^2( \bSigma +\mu_{\ss,2})^{-2} + \frac{ \chi_{n_\ss,p_\ss} (\mu_{\ss,2})}{1 - \Upsilon_{n_\ss,p_\ss} (\mu_{\ss,1},\mu_{\ss,2})} \mu_{\ss,2}^2 \bSigma ( \bSigma +\mu_{\ss,2})^{-2}. 
\end{aligned}
\]

{ Using Lemma \ref{lemma:scalings}, we have \begin{equation*}
    \begin{split}
        \chi_{n_\st,p_\st}(\bLambda_0) & \approx p_\st^{-1} \mu_{\st,2}^{ -\frac{(1+\alpha)}{\alpha}}, \quad \Upsilon_{n_\st, p_\st}(\Lambda_0) \approx n_\st^{-1} \mu_{\st,2}^{- \frac{1}{\alpha}}.
    \end{split}
\end{equation*}}

Now, it remains to compute  $\sB_{bias,\ss}.$
Recall that \begin{equation*}
    \begin{split}
        \langle \bbeta_*,  \left[ \bI - \bSigma^2 (\bSigma + \mu_{\ss,2})^{-1} (\bSigma + \mu_{\st,2})^{-1} \right]^2 \bbeta_*\rangle  =& \langle \bbeta_*,  \big(\mu_{\ss,2}^2 (\bSigma + \mu_{\ss,2})^{-2} + 2 \mu_{\st,2} \mu_{\ss,2} \bSigma (\bSigma + \mu_{\ss,2})^{-2} (\bSigma + \mu_{\st,2})^{-1}  \\
        &+ \mu_{\st,2}^2 \bSigma^2 (\bSigma + \mu_{\ss,2})^{-2} (\bSigma + \mu_{\st,2})^{-2}\big)  \bbeta_*\rangle\\
        =& \mu_{\ss,2}^2 \mu_{\ss,2}^{(2r-2) \wedge 0} + \mu_{\st,2} \mu_{\ss,2} (\mu_{\st,2} \vee \mu_{\ss,2})^{(2r-2) \wedge 0}\\ 
        &+   \mu_{\st,2}^2 (\mu_{\st,2} \vee \mu_{\ss,2})^{(2r-2) \wedge 0}\\
        \approx&  \mu_{\st,2}^{2 (r \wedge 1)}. 
    \end{split}
\end{equation*}

Hence, by plugging in the above,  $\mu_{\ss,2} = n_\st^{-\alpha ( \frac{\gamma_{n_\ss}+\gamma_{\lambda_\ss}}{\alpha} \wedge \gamma_{n_\ss} \wedge \gamma_{p_\ss})}  $ and $\mu_{\st,2} = n_\st^{-\alpha (\frac{1 + \gamma_{\lambda_\st}}{\alpha} \wedge 1 \wedge \gamma_{p_\st})}$, we have \begin{equation*}
    \begin{split}
        \frac{\Upsilon_{n_\ss,p_\ss}(\mu_{\ss,1}, \mu_{\ss,2})}{1- \Upsilon_{n_\ss,p_\ss}(\mu_{\ss,1}, \mu_{\ss,2})} \mu_{\ss,2}^2 \< \bbeta_*, \bSigma^2 (\bSigma+ \mu_{\st,2})^{-2} (\bSigma+ \mu_{\ss,2})^{-2} \bbeta_* \> & \approx n_\ss^{-1} \mu_{\ss,2}^{- \frac{1}{\alpha}} \mu_{\ss,2}^2 \mu_{\st,2}^{(2r-2) \wedge 0} \\
        & \ll \mu_{\st,2}^{2(r \wedge 1)}  \qquad (\text{use $n_\ss^{-1} \mu_{\ss,2}^{- \frac{1}{\alpha}} =O(1), \mu_{\ss,2}^2 \mu_{\st,2}^{-2} \ll 1$}), \\
        \frac{\chi_{n_\ss,p_\ss}(\mu_{\ss,1}, \mu_{\ss,2})}{1- \Upsilon_{n_\ss,p_\ss}(\mu_{\ss,1}, \mu_{\ss,2})} \mu_{\ss,2}^2  \< \bbeta_*, \bSigma^3 (\bSigma+ \mu_{\st,2})^{-2} (\bSigma+ \mu_{\ss,2})^{-2}\bbeta_* \>  & \approx p_\ss^{-1} \mu_{\ss,2}^{-\frac{(1+\alpha)}{\alpha}} \mu_{\ss,2}^2 \mu_{\st,2}^{(2r-1) \wedge 0},\\  
         \frac{\Upsilon_{n_\st,p_\st}(\bLambda_0 ; \mu_{\st,1}, \mu_{\st,2})}{1- \Upsilon_{n_\st,p_\st}(\mu_{\st,1}, \mu_{\st,2})} \mu_{\st,2}^2  \< \bbeta_*,  (\bSigma+ \mu_{\st,2})^{-2} \bbeta_* \> &\approx n_\st^{-1} \mu_{\st,2}^{- \frac{1}{\alpha}} \mu_{\st,2}^2 \mu_{\st,2}^{(2r-2) \wedge 0} \\
         &=O( \mu_{\st,2}^{2(r \wedge 1)})   \qquad (\text{use $n_\st^{-1} \mu_{\st,2}^{- \frac{1}{\alpha}} =O(1)$}),\\
        \left[ \chi_{n_\st,p_\st}(\bLambda_0) + \frac{\Upsilon_{n_\st,p_\st}(\bLambda_0 )}{1- \Upsilon_{n_\st,p_\st}} \chi_{n_\st,p_\st} \right] \mu_{\st,2}^2 \< \bbeta_*, \bSigma (\bSigma+ \mu_{\st,2})^{-2} \bbeta_* \>  & \approx (p_\st^{-1} \mu_{\st,2}^{-\frac{(\alpha+1)}{\alpha}}  +  n_\st^{-1} \mu_{\st,2}^{- \frac{1}{\alpha}}p_\st^{-1} \mu_{\st,2}^{- \frac{(\alpha + 1)}{\alpha}}) \mu_{\st,2}^2 \mu_{\st,2}^{(2r-1) \wedge 0}\\
        & = (p_\st^{-1} \mu_{\st,2}^{-\frac{(\alpha+1)}{\alpha}}  + n_\st^{-1} \mu_{\st,2}^{- \frac{1}{\alpha}}p_\st^{-1} \mu_{\st,2}^{- \frac{(\alpha + 1)}{\alpha}} )\mu_{\st,2}  \mu_{\st,2}^{2(r \wedge \frac{1}{2})} \\
        & \approx p_\st^{-1} \mu_{\st,2}^{-\frac{(\alpha+1)}{\alpha}} \mu_{\st,2}\mu_{\st,2}^{2(r \wedge \frac{1}{2})} \qquad  (\text{use $
        n_\st^{-1} \mu_{\st,2}^{-\frac{1}{\alpha}} =O(1)$}) \\
        & \approx p_\st^{-1} \mu_{\st,2}^{-\frac{1}{\alpha}} \mu_{\st,2}^{2(r \wedge \frac{1}{2})}. 
    \end{split}
\end{equation*}

Thus, combining all terms, we have \begin{equation*}
    \begin{split}
        &\sB_{var,\ss}  \approx   n_\st^{-1 + \left(\frac{1 + \gamma_{\lambda_\st}}{\alpha} \wedge 1 \wedge \gamma_{p_\st}\right)},   \\
        & \sB_{bias,\ss} \approx \mu_{\st,2}^{2 (r \wedge 1)}  + p_\ss^{-1}\frac{\mu_{\ss,2}}{\mu_{\st,2}} \mu_{\ss,2}^{-\frac{1}{\alpha}} \mu_{\st,2}^{2(r \wedge \frac{1}{2})}+ p_\st^{-1} \mu_{\st,2}^{-\frac{1}{\alpha}} \mu_{\st,2}^{2(r \wedge \frac{1}{2})} \\
        &\approx  n_\st^{- 2 \alpha (r \wedge1) \left(\frac{1 + \gamma_{\lambda_\st}}{\alpha} \wedge 1 \wedge \gamma_{p_\st}\right)}  + (n_{\st}^{- \gamma_{p_\ss} +\alpha(z_\st-z_\ss) +\left(\frac{\gamma_{n_\ss} + \gamma_{\lambda_\ss}}{\alpha} \wedge \gamma_{n_\ss} \wedge \gamma_{p_\ss}\right) } + n_\st^{- \gamma_{p_\st} + \left(\frac{1 + \gamma_{\lambda_\st}}{\alpha} \wedge 1 \wedge \gamma_{p_\st}\right)} )\cdot n_\st^{- 2 \alpha (r \wedge \frac{1}{2}) \left(\frac{1 + \gamma_{\lambda_\st}}{\alpha} \wedge 1 \wedge \gamma_{p_\st}\right)}  \\ 
        & =  n_\st^{- 2 \alpha (r \wedge1) z_\st}  + (n_{\st}^{- \gamma_{p_\ss} +z_\ss+\alpha(z_\st-z_\ss) } + n_\st^{- \gamma_{p_\st} + z_\st} )\cdot n_\st^{- 2 \alpha (r \wedge \frac{1}{2}) z_\st}.
    \end{split}
\end{equation*}

\subsubsection{Scaling law for $\mu_{\st,2} < \mu_{\ss,2}$}

First of all, by Lemma \ref{lem:scale-ori}, $\mu_{\st,2} < \mu_{\ss,2}$ implies that \begin{equation*}
     \frac{\gamma_{n_\ss} + \gamma_{\lambda_\ss}}{\alpha} \wedge \gamma_{n_\ss} \wedge \gamma_{p_\ss} \leq \frac{1 + \gamma_{\lambda_\st}}{\alpha} \wedge 1 \wedge \gamma_{p_\st}.
\end{equation*}

{ Using Lemma \ref{lemma:scalings}, we have \begin{equation*}
    \begin{split}
        &\chi_{n_\st,p_\st}(\bLambda_0)  \approx p_\st^{-1} (\mu_{\ss,2}^{-\left(1+\frac{1}{\alpha}\right)}
            +
            n_\ss^{-1}
            \mu_{\ss,2}^{-\frac{1}{\alpha}}
            \mu_{\st,2}^{-\left(1+\frac{1}{\alpha}\right)}
            +
            p_\ss^{-1}
            \mu_{\ss,2}^{-\left(1+\frac{1}{\alpha}\right)}
            \mu_{\st,2}^{-\frac{1}{\alpha}}), \\ 
            &\Upsilon_{n_\st, p_\st}(\bLambda_0) \approx n_\st^{-1} ((\mu_{\ss,2})^{-\frac{1}{\alpha}}
            +
            n_\ss^{-1}
            \mu_{\ss,2}^{-\frac{1}{\alpha}}
            \mu_{\st,2}^{-\frac{1}{\alpha}}).
    \end{split}
\end{equation*}
Now, it remains to compute $\sB_{bias,\ss}.$
Recall that
\begin{equation*}
    \begin{split}
        &\langle \bbeta_*,
        \left[
        \bI - \bSigma^2
        (\bSigma + \mu_{\ss,2})^{-1}
        (\bSigma + \mu_{\st,2})^{-1}
        \right]^2
        \bbeta_*
        \rangle
        \\
        =&~
        \mu_{\ss,2}^2
        \langle \bbeta_*,
        (\bSigma + \mu_{\ss,2})^{-2}
        \bbeta_*
        \rangle
        +
        2\mu_{\st,2}\mu_{\ss,2}
        \langle \bbeta_*,
        \bSigma(\bSigma+\mu_{\ss,2})^{-2}
        (\bSigma+\mu_{\st,2})^{-1}
        \bbeta_*
        \rangle
        \\
        &+
        \mu_{\st,2}^2
        \langle \bbeta_*,
        \bSigma^2
        (\bSigma+\mu_{\ss,2})^{-2}
        (\bSigma+\mu_{\st,2})^{-2}
        \bbeta_*
        \rangle
        \\
        \approx&~
        \mu_{\ss,2}^{2(r\wedge 1)}
        +
        \mu_{\st,2}\mu_{\ss,2}
        \mu_{\ss,2}^{(2r-2)\wedge 0}
        +
        \mu_{\st,2}^2
        \mu_{\ss,2}^{(2r-2)\wedge 0}
        \\
        \approx&~
        \mu_{\ss,2}^{2(r\wedge 1)} ,
    \end{split}
\end{equation*}
where we used $\mu_{\st,2}\leq \mu_{\ss,2}$ in the last step.

Next, using Lemma \ref{lemma:scalings}, we have
\begin{equation*}
    \begin{split}
        &\frac{\Upsilon_{n_\ss,p_\ss}(\mu_{\ss,1}, \mu_{\ss,2})}
        {1-\Upsilon_{n_\ss,p_\ss}(\mu_{\ss,1}, \mu_{\ss,2})}
        \mu_{\ss,2}^2
        \langle \bbeta_*,
        \bSigma^2
        (\bSigma+\mu_{\st,2})^{-2}
        (\bSigma+\mu_{\ss,2})^{-2}
        \bbeta_*
        \rangle
        \\
        &\qquad\approx
        n_\ss^{-1}
        \mu_{\ss,2}^{-\frac{1}{\alpha}}
        \mu_{\ss,2}^2
        \mu_{\ss,2}^{(2r-2)\wedge 0}
        \\
        &\qquad=
        n_\ss^{-1}
        \mu_{\ss,2}^{-\frac{1}{\alpha}}
        \mu_{\ss,2}^{2(r\wedge 1)}
        =
        O\left(
        \mu_{\ss,2}^{2(r\wedge 1)}
        \right),
        \qquad
        \left(\text{use }
        n_\ss^{-1}\mu_{\ss,2}^{-\frac{1}{\alpha}}=O(1)
        \right),
        \\[6pt]
        &\frac{\chi_{n_\ss,p_\ss}(\mu_{\ss,1}, \mu_{\ss,2})}
        {1-\Upsilon_{n_\ss,p_\ss}(\mu_{\ss,1}, \mu_{\ss,2})}
        \mu_{\ss,2}^2
        \langle \bbeta_*,
        \bSigma^3
        (\bSigma+\mu_{\st,2})^{-2}
        (\bSigma+\mu_{\ss,2})^{-2}
        \bbeta_*
        \rangle
        \\
        &\qquad\approx
        p_\ss^{-1}
        \mu_{\ss,2}^{-\frac{1+\alpha}{\alpha}}
        \mu_{\ss,2}^2
        \mu_{\ss,2}^{(2r-1)\wedge 0}
        \\
        &\qquad=
        p_\ss^{-1}
        \mu_{\ss,2}^{-\frac{1}{\alpha}}
        \mu_{\ss,2}^{2(r\wedge \frac{1}{2})},
        \\[6pt]
        &\frac{\Upsilon_{n_\st,p_\st}(\bLambda_0 ; \mu_{\st,1}, \mu_{\st,2})}
        {1-\Upsilon_{n_\st,p_\st}(\mu_{\st,1}, \mu_{\st,2})}
        \mu_{\st,2}^2
        \langle \bbeta_*,
        (\bSigma+\mu_{\st,2})^{-2}
        \bbeta_*
        \rangle
        \\
        &\qquad\approx
        n_\st^{-1}
        \left(
        \mu_{\ss,2}^{-\frac{1}{\alpha}}
        +
        n_\ss^{-1}
        \mu_{\ss,2}^{-\frac{1}{\alpha}}
        \mu_{\st,2}^{-\frac{1}{\alpha}}
        \right)
        \mu_{\st,2}^2
        \mu_{\st,2}^{(2r-2)\wedge 0}
        \\
        &\qquad=
        n_\st^{-1}
        \left(
        \mu_{\ss,2}^{-\frac{1}{\alpha}}
        +
        n_\ss^{-1}
        \mu_{\ss,2}^{-\frac{1}{\alpha}}
        \mu_{\st,2}^{-\frac{1}{\alpha}}
        \right)
        \mu_{\st,2}^{2(r\wedge 1)}
        \\
        &\qquad=
        O\left(
        \mu_{\ss,2}^{2(r\wedge 1)}
        \right),
        \qquad
        \left(
        \text{use }
        n_\st^{-1}\mu_{\st,2}^{-\frac{1}{\alpha}}=O(1),
        \ n_\ss^{-1}\mu_{\ss,2}^{-\frac{1}{\alpha}}=O(1),
        \ \mu_{\st,2}\leq\mu_{\ss,2}
        \right),
        \\[6pt]
        &\left[
        \chi_{n_\st,p_\st}(\bLambda_0)
        +
        \frac{\Upsilon_{n_\st,p_\st}(\bLambda_0)}
        {1-\Upsilon_{n_\st,p_\st}}
        \chi_{n_\st,p_\st}
        \right]
        \mu_{\st,2}^2
        \langle \bbeta_*,
        \bSigma(\bSigma+\mu_{\st,2})^{-2}
        \bbeta_*
        \rangle
        \\
        &\approx p_\st^{-1}
        \Bigg[
        \mu_{\ss,2}^{-\left(1+\frac{1}{\alpha}\right)}
        \mu_{\st,2}
        +
        n_\ss^{-1}
        \mu_{\ss,2}^{-\frac{1}{\alpha}}
        \mu_{\st,2}^{-\frac{1}{\alpha}}
        +
        p_\ss^{-1}
        \mu_{\ss,2}^{-\left(1+\frac{1}{\alpha}\right)}
        \mu_{\st,2}^{1-\frac{1}{\alpha}}
        +
        n_\st^{-1}
        \mu_{\ss,2}^{-\frac{1}{\alpha}}
        \mu_{\st,2}^{-\frac{1}{\alpha}}
        \Bigg]
        \mu_{\st,2}^{2(r\wedge \frac{1}{2})}.
    \end{split}
\end{equation*}

Thus, combining all terms, we have
\begin{equation*}
    \begin{split}
        \sB_{var,\ss}
        &\approx
        n_\st^{-1}
        \left(
        \mu_{\ss,2}^{-\frac{1}{\alpha}}
        +
        n_\ss^{-1}
        \mu_{\ss,2}^{-\frac{1}{\alpha}}
        \mu_{\st,2}^{-\frac{1}{\alpha}}
        \right),
        \\
        \sB_{bias,\ss}
        &\approx
        \mu_{\ss,2}^{2(r\wedge 1)}
        +
        p_\ss^{-1}
        \mu_{\ss,2}^{-\frac{1}{\alpha}}
        \mu_{\ss,2}^{2(r\wedge \frac{1}{2})}
        \\
        &\quad+
        p_\st^{-1}
        \Bigg[
        \mu_{\ss,2}^{-\left(1+\frac{1}{\alpha}\right)}
        \mu_{\st,2}
        +
        n_\ss^{-1}
        \mu_{\ss,2}^{-\frac{1}{\alpha}}
        \mu_{\st,2}^{-\frac{1}{\alpha}}
        +
        n_\st^{-1}
        \mu_{\ss,2}^{-\frac{1}{\alpha}}
        \mu_{\st,2}^{-\frac{1}{\alpha}}
        \\
        &\hspace{3.7cm}
        +
        p_\ss^{-1}
        \mu_{\ss,2}^{-\left(1+\frac{1}{\alpha}\right)}
        \mu_{\st,2}^{1-\frac{1}{\alpha}}
        \Bigg]
        \mu_{\st,2}^{2(r\wedge \frac{1}{2})}.
    \end{split}
\end{equation*} Note that \begin{equation*}
    \begin{split}
        p_\st^{-1} p_\ss^{-1}
        \mu_{\ss,2}^{-\left(1+\frac{1}{\alpha}\right)}
        \mu_{\st,2}^{1-\frac{1}{\alpha}} \mu_{\st,2}^{2(r\wedge \frac{1}{2})} \leq p_\ss^{-1}
        \mu_{\ss,2}^{-\frac{1}{\alpha}}
        \mu_{\ss,2}^{2(r\wedge \frac{1}{2})},
    \end{split}
\end{equation*} since $\mu_{\st,2} \mu_{\ss,2}^{-1} \leq 1$ and $ p_\st^{-1} \mu_{\st,2}^{- \frac{1}{\alpha}} = O(1)$.
Finally, plugging in
\[
    \mu_{\ss,2}
    =
    n_\st^{-\alpha z_\ss},
    \qquad
    \mu_{\st,2}
    =
    n_\st^{-\alpha z_\st},
\]
where
\[
    z_\ss
    =
    \frac{\gamma_{n_\ss}+\gamma_{\lambda_\ss}}{\alpha}
    \wedge
    \gamma_{n_\ss}
    \wedge
    \gamma_{p_\ss},
    \qquad
    z_\st
    =
    \frac{1+\gamma_{\lambda_\st}}{\alpha}
    \wedge
    1
    \wedge
    \gamma_{p_\st},
\]
and using $z_\ss\leq z_\st$, we obtain
\begin{equation*}
    \begin{split}
        \sB_{var,\ss}
        &\approx
        n_\st^{-1+z_\ss}
        +
        n_\st^{-1-\gamma_{n_\ss}+z_\ss+z_\st},
        \\
        \sB_{bias,\ss}
        &\approx
        n_\st^{-2\alpha(r\wedge 1)z_\ss}
        +
        n_\st^{-\gamma_{p_\ss}+z_\ss
        -2\alpha(r\wedge \frac{1}{2})z_\ss}
        \\
        &\quad+
        \Bigg[
        n_\st^{-\gamma_{p_\st}
        +(\alpha+1)z_\ss
        -\alpha z_\st}
        \\
        &\qquad\quad+
        n_\st^{-\gamma_{p_\st}-\gamma_{n_\ss}
        +z_\ss+z_\st}
        +
        n_\st^{-\gamma_{p_\st}-1
        +z_\ss+z_\st}
        \\
        &\qquad\quad+
        n_\st^{-\gamma_{p_\st}-\gamma_{p_\ss}
        +(\alpha+1)z_\ss
        -(\alpha-1)z_\st}
        \Bigg]
        n_\st^{-2\alpha(r\wedge \frac{1}{2})z_\st}
        \\
        &=
        n_\st^{-2\alpha(r\wedge 1)z_\ss}
        +
        n_\st^{-\gamma_{p_\ss}+z_\ss
        -2\alpha(r\wedge \frac{1}{2})z_\ss}
        \\
        &\quad+
        n_\st^{-\gamma_{p_\st}
        +(\alpha+1)z_\ss
        -\alpha z_\st
        -2\alpha(r\wedge \frac{1}{2})z_\st}
        \\
        &\quad+
        n_\st^{-\gamma_{p_\st}-\gamma_{n_\ss}
        +z_\ss+z_\st
        -2\alpha(r\wedge \frac{1}{2})z_\st}
        \\
        &\quad+
        n_\st^{-\gamma_{p_\st}-1
        +z_\ss+z_\st
        -2\alpha(r\wedge \frac{1}{2})z_\st}.
    \end{split}
\end{equation*}
}

\paragraph*{Derivation of Optimal Scaling Law.} 

{
In order to derive the optimal scaling law, we separately consider the two cases
$z_\st \leq z_\ss$ and $z_\st > z_\ss$. For simplicity, denote
\[
    R:= r\wedge 1,
    \qquad
    q:= r\wedge \frac{1}{2}.
\]
The target optimal exponent is obtained by balancing the first term in the bias and the first term in the variance:
\[
    2\alpha R z = 1-z.
\]
Thus,
\[
    z_*=\frac{1}{1+2\alpha R},
    \qquad
    \gamma_*=\frac{2\alpha R}{1+2\alpha R}.
\]

When $z_\st \leq z_\ss$, or equivalently $\mu_{\st,2}\geq \mu_{\ss,2}$, we have
\begin{equation*}
    \begin{split}
        \sB_{var,\ss}
        &\approx
        n_\st^{-1+z_\st},
        \\
        \sB_{bias,\ss}
        &\approx
        n_\st^{-2\alpha R z_\st}
        +
        \left(
        n_\st^{-\gamma_{p_\ss}
        +z_\ss+\alpha(z_\st-z_\ss)}
        +
        n_\st^{-\gamma_{p_\st}+z_\st}
        \right)
        n_\st^{-2\alpha q z_\st}.
    \end{split}
\end{equation*}
Therefore,
\begin{equation*}
    \begin{split}
        \gamma_{\ss,bias}\wedge \gamma_{\ss,var}
        \leq
        \left[
        2\alpha R z_\st
        \right]
        \wedge
        \left[
        1-z_\st
        \right]
        \leq
        \frac{2\alpha R}{1+2\alpha R}.
    \end{split}
\end{equation*}
The upper bound is achieved by taking
\[
    z_\st=z_*=\frac{1}{1+2\alpha R},
\]
provided that $z_\st\leq z_\ss$ and the remaining bias terms are no larger than the target order. Equivalently, we require
\begin{equation*}
    \begin{split}
        &\gamma_{p_\st}
        -z_\st
        +
        2\alpha q z_\st
        \geq
        \gamma_*,
        \\
        &\gamma_{p_\ss}
        -z_\ss
        -\alpha(z_\st-z_\ss)
        +
        2\alpha q z_\st
        \geq
        \gamma_*.
    \end{split}
\end{equation*}
Plugging in $z_\st = \frac{1}{1+ 2 \alpha R},$ we obtain \eqref{eq:condzt}.

When $z_\st>z_\ss$, or equivalently $\mu_{\st,2}<\mu_{\ss,2}$, we have
\begin{equation*}
    \begin{split}
        \sB_{var,\ss}
        &\approx
        n_\st^{-1+z_\ss}
        +
        n_\st^{-1-\gamma_{n_\ss}+z_\ss+z_\st},
        \\
        \sB_{bias,\ss}
        &\approx
        n_\st^{-2\alpha R z_\ss}
        +
        n_\st^{-\gamma_{p_\ss}+z_\ss
        -2\alpha q z_\ss}
        \\
        &\quad+
        n_\st^{-\gamma_{p_\st}
        +(\alpha+1)z_\ss
        -\alpha z_\st
        -2\alpha q z_\st}
        \\
        &\quad+
        n_\st^{-\gamma_{p_\st}-\gamma_{n_\ss}
        +z_\ss+z_\st
        -2\alpha q z_\st}
        \\
        &\quad+
        n_\st^{-\gamma_{p_\st}-1
        +z_\ss+z_\st
        -2\alpha q z_\st}.
    \end{split}
\end{equation*}
Therefore,
\begin{equation*}
    \begin{split}
        \gamma_{\ss,bias}\wedge \gamma_{\ss,var}
        \leq
        \left[
        2\alpha R z_\ss
        \right]
        \wedge
        \left[
        1-z_\ss
        \right]
        \leq
        \frac{2\alpha R}{1+2\alpha R}.
    \end{split}
\end{equation*}
The upper bound is achieved by taking
\[
    z_\ss=z_*=\frac{1}{1+2\alpha R},
\]
provided that $z_\st>z_\ss$ and all remaining variance and bias terms are no larger than the target order. Equivalently, we require
\begin{equation*}
    \begin{split}
        &1+\gamma_{n_\ss}-z_\ss-z_\st
        \geq
        \gamma_*,
        \\
        &\gamma_{p_\ss}
        -z_\ss
        +
        2\alpha q z_\ss
        \geq
        \gamma_*,
        \\
        &\gamma_{p_\st}
        -(\alpha+1)z_\ss
        +
        \alpha z_\st
        +
        2\alpha q z_\st
        \geq
        \gamma_*,
        \\
        &\gamma_{p_\st}
        +\gamma_{n_\ss}
        -z_\ss
        -z_\st
        +
        2\alpha q z_\st
        \geq
        \gamma_*,
        \\
        &\gamma_{p_\st}
        +1
        -z_\ss
        -z_\st
        +
        2\alpha q z_\st
        \geq
        \gamma_*.
    \end{split}
\end{equation*}
Plugging in $z_\ss = \frac{1}{1+ 2 \alpha R},$ we obtain
\begin{equation*}
    \begin{split}
        &\gamma_{n_\ss}
        \geq
        z_\st,
        \\[3pt]
        &\gamma_{p_\ss}
        \geq
        1-\frac{2\alpha q}{1+2\alpha R},
        \\[3pt]
        &\gamma_{p_\st}
        \geq
        \max\left\{
        \frac{2\alpha R+\alpha+1}{1+2\alpha R}
        -
        (\alpha+2\alpha q)z_\st,\,
        1+(1-2\alpha q)z_\st-\gamma_{n_\ss},\,
        (1-2\alpha q)z_\st
        \right\},
    \end{split}
\end{equation*}
which corresponds to \eqref{eq:W2S2}.
}

\subsection{Proof of Corollary \ref{coro:w2s-cond-var}} 
\label{sec:pf-coro:w2s-cond-var}

{
\begin{proof}
By Corollary~\ref{coro:nece-cond}, W2SG only occurs when $z_\ss<z_\st$.
In this case, $\mu_{\st,2}\leq \mu_{\ss,2}$, and from the previous derivation,
\begin{equation*}
    \begin{split}
        \sB_{var,\ss}
        &\approx
        n_\st^{-1+z_\ss}
        +
        n_\st^{-1-\gamma_{n_\ss}+z_\ss+z_\st},
        \\
        \sB_{bias,\ss}
        &\approx
        n_\st^{-2\alpha(r\wedge 1)z_\ss}
        +
        n_\st^{-\gamma_{p_\ss}+z_\ss
        -2\alpha(r\wedge \frac{1}{2})z_\ss}
        \\
        &\quad+
        n_\st^{-\gamma_{p_\st}
        +(\alpha+1)z_\ss
        -\alpha z_\st
        -2\alpha(r\wedge \frac{1}{2})z_\st}
        \\
        &\quad+
        n_\st^{-\gamma_{p_\st}-\gamma_{n_\ss}
        +z_\ss+z_\st
        -2\alpha(r\wedge \frac{1}{2})z_\st}
        \\
        &\quad+
        n_\st^{-\gamma_{p_\st}-1
        +z_\ss+z_\st
        -2\alpha(r\wedge \frac{1}{2})z_\st}.
    \end{split}
\end{equation*}
In order for $\sB_{var,\ss}\ll \sV_\st,$ we require \begin{equation*}
    1-z_\ss > 1-z_\st,
    \qquad
    1+\gamma_{n_\ss}-z_\ss-z_\st > 1-z_\st,
\end{equation*}
which holds as $z_\ss<z_\st$ and $z_\ss < \gamma_{n_\ss}.$

Recall that
\[
    \sV_\st \approx n_\st^{-(1-z_\st)},
\]
and
\[
    \sB_\st
    \approx
    n_\st^{-2\alpha(r\wedge 1)z_\st}
    +
    n_\st^{-\gamma_{p_\st}
    -(2\alpha(r\wedge \frac{1}{2})-1)z_\st}.
\]
Thus, $\sV_\st\gg \sB_\st$ is equivalent to
\begin{equation*}
    \begin{split}
        1-z_\st
        &<
        2\alpha(r\wedge 1)z_\st,
        \\
        1-z_\st
        &<
        \gamma_{p_\st}
        +
        \left(2\alpha(r\wedge \frac{1}{2})-1\right)z_\st.
    \end{split}
\end{equation*}
Equivalently,
\begin{equation*}
    \begin{split}
        z_\st
        >
        \frac{1}{1+2\alpha(r\wedge 1)},
        \qquad
        z_\st
        >
        \frac{1-\gamma_{p_\st}}
        {2\alpha(r\wedge \frac{1}{2})}.
    \end{split}
\end{equation*}
Moreover, by definition we have $z_\st\leq \gamma_{p_\st}$. Hence, for the above interval to be feasible, it is necessary that
\begin{equation*}
    \frac{1-\gamma_{p_\st}}
    {2\alpha(r\wedge \frac{1}{2})}
    <
    \gamma_{p_\st},
\end{equation*}
which gives
\begin{equation*}
    \gamma_{p_\st}
    >
    \frac{1}
    {1+2\alpha(r\wedge \frac{1}{2})}.
\end{equation*}

It remains to impose $\sV_\st\gg \sB_{bias,\ss}$. The first two terms in the bias give the conditions
\begin{equation}
\label{eq:var-dom-source-bias-corrected}
    \begin{split}
        1-z_\st
        &<
        2\alpha(r\wedge 1)z_\ss,
        \\
        1-z_\st
        &<
        \gamma_{p_\ss}
        +
        \left(2\alpha(r\wedge \frac{1}{2})-1\right)z_\ss.
    \end{split}
\end{equation}
Equivalently,
\begin{equation*}
    z_\ss
    >
    \frac{1-z_\st}
    {2\alpha(r\wedge 1)},
    \qquad
    \gamma_{p_\ss}
    +
    \left(2\alpha(r\wedge \frac{1}{2})-1\right)z_\ss
    >
    1-z_\st.
\end{equation*}

We now check that the remaining terms in $\sB_{bias,\ss}$ are automatically controlled by the requirement $\sV_\st\gg \sB_\st$.
Indeed, the corresponding exponents are
\begin{equation*}
    \begin{split}
        E_1
        &=
        \gamma_{p_\st}
        -(\alpha+1)z_\ss
        +
        \alpha z_\st
        +
        2\alpha(r\wedge \frac{1}{2})z_\st,
        \\
        E_2
        &=
        \gamma_{p_\st}
        +
        \gamma_{n_\ss}
        -
        z_\ss
        -
        z_\st
        +
        2\alpha(r\wedge \frac{1}{2})z_\st,
        \\
        E_3
        &=
        \gamma_{p_\st}
        +
        1
        -
        z_\ss
        -
        z_\st
        +
        2\alpha(r\wedge \frac{1}{2})z_\st.
    \end{split}
\end{equation*}
The teacher's bias exponent satisfies
\[
    \gamma_{\st, \sV} \leq E_\st
    =
    \gamma_{p_\st}
    +
    \left(2\alpha(r\wedge \frac{1}{2})-1\right)z_\st.
\]
Since $z_\st>z_\ss$, we have
\begin{equation*}
    \begin{split}
        E_1-E_\st
        &=
        (\alpha+1)(z_\st-z_\ss)>0,
        \\
        E_2-E_\st
        &=
        \gamma_{n_\ss}-z_\ss\geq 0,
        \\
        E_3-E_\st
        &=
        1-z_\ss>0.
    \end{split}
\end{equation*}
Therefore, once
\[
    1-z_\st < E_\st
\]
holds, namely once $\sV_\st\gg \sB_\st$, these three bias terms are also dominated by $\sV_\st$.

Combining the above, $\sV_\st$ dominates both $\sB_\st$ and $\sB_{bias,\ss}$ provided that
\begin{equation*}
    \begin{split}
        &z_\st
        >
        \frac{1}{1+2\alpha(r\wedge 1)},
        \\
        &z_\st
        >
        \frac{1-\gamma_{p_\st}}
        {2\alpha(r\wedge \frac{1}{2})},
        \\
        &z_\ss
        >
        \frac{1-z_\st}
        {2\alpha(r\wedge 1)},
        \\
        &\gamma_{p_\ss}
        +
        \left(2\alpha(r\wedge \frac{1}{2})-1\right)z_\ss
        >
        1-z_\st,
        \\
        &z_\ss<z_\st\leq \gamma_{p_\st}.
    \end{split}
\end{equation*}
In particular, the feasibility of $z_\st\leq \gamma_{p_\st}$ together with
\[
    z_\st
    >
    \frac{1-\gamma_{p_\st}}
    {2\alpha(r\wedge \frac{1}{2})}
\]
requires
\[
    \gamma_{p_\st}
    >
    \frac{1}
    {1+2\alpha(r\wedge \frac{1}{2})}.
\]
This completes the proof.
\end{proof}
}

\subsection{Proof of Corollary \ref{coro:w2s-cond-bias}} 
\label{sec:pf-coro:w2s-cond-bias}
\begin{proof}
{
Let
\[
    R:=r\wedge 1,
    \qquad
    q:=r\wedge \frac{1}{2}.
\]
We focus on the regime $z_\st>z_\ss$. In this regime, we have
\[
    \mu_{\st,2} < \mu_{\ss,2}.
\]
Recall from the previous derivation that
\begin{equation*}
    \begin{split}
        \sB_{var,\ss}
        &\approx
        n_\st^{-1+z_\ss}
        +
        n_\st^{-1-\gamma_{n_\ss}+z_\ss+z_\st},
        \\
        \sB_{bias,\ss}
        &\approx
        n_\st^{-2\alpha R z_\ss}
        +
        n_\st^{-\gamma_{p_\ss}+z_\ss
        -2\alpha q z_\ss}
        \\
        &\quad+
        n_\st^{-\gamma_{p_\st}
        +(\alpha+1)z_\ss
        -\alpha z_\st
        -2\alpha q z_\st}
        \\
        &\quad+
        n_\st^{-\gamma_{p_\st}-\gamma_{n_\ss}
        +z_\ss+z_\st
        -2\alpha q z_\st}
        \\
        &\quad+
        n_\st^{-\gamma_{p_\st}-1
        +z_\ss+z_\st
        -2\alpha q z_\st}.
    \end{split}
\end{equation*}
Furthermore, the teacher bias exponent is
\begin{equation*}
    \gamma_{\st,\sB}
    =
    \left[
    2\alpha R z_\st
    \right]
    \wedge
    \left[
    \gamma_{p_\st}
    +
    (2\alpha q-1)z_\st
    \right].
\end{equation*}

We first observe that if
\begin{equation*}
    2\alpha R z_\st
    \leq
    \gamma_{p_\st}
    +
    (2\alpha q-1)z_\st,
\end{equation*}
or equivalently
\begin{equation*}
    \gamma_{p_\st}
    \geq
    \left(
    1+2\alpha R-2\alpha q
    \right)z_\st,
\end{equation*}
then
\[
    \gamma_{\st,\sB}=2\alpha R z_\st.
\]
Since $z_\st>z_\ss$, we have
\[
    2\alpha R z_\ss
    <
    2\alpha R z_\st.
\]
Thus the term $n_\st^{-2\alpha R z_\ss}$ in $\sB_{bias,\ss}$ is larger than $\sB_\st$. Hence
\[
    \sB_{bias,\ss}\gg \sB_\st,
\]
and weak-to-strong generalization does not occur.

Therefore, we focus on the complementary regime
\begin{equation}
\label{eq:gamma-p-target-small}
    \gamma_{p_\st}
    <
    \left(
    1+2\alpha R-2\alpha q
    \right)z_\st.
\end{equation}
In this regime,
\[
    \gamma_{\st,\sB}
    =
    \gamma_{p_\st}
    +
    (2\alpha q-1)z_\st.
\]
Moreover, since $z_\st\leq \gamma_{p_\st}$ by the definition of $z_\st$, condition \eqref{eq:gamma-p-target-small} cannot hold when $r\leq 1/2$. Hence we must have $r>1/2$, and therefore
\[
    q=\frac12.
\]
Thus
\[
    \gamma_{\st,\sB}
    =
    \gamma_{p_\st}
    +
    (\alpha-1)z_\st.
\]

For teacher's  bias to dominate the student's test error, we require
\begin{equation}
\label{eq:lastcor-corrected}
    \begin{split}
        &\gamma_{p_\st}+(\alpha-1)z_\st
        <
        1-z_\ss,
        \\
        &\gamma_{p_\st}+(\alpha-1)z_\st
        <
        1+\gamma_{n_\ss}-z_\ss-z_\st,
        \\
        &\gamma_{p_\st}+(\alpha-1)z_\st
        <
        2\alpha R z_\ss,
        \\
        &\gamma_{p_\st}+(\alpha-1)z_\st
        <
        \gamma_{p_\ss}+(\alpha-1)z_\ss, \\
        & \gamma_{p_\st}+(\alpha-1)z_\st < \gamma_{p_\st}+\gamma_{n_\ss}-z_\ss-z_\st+\alpha z_\st \\
    \end{split}
\end{equation}
The remaining two  bias terms are automatically controlled as
\[
    z_\ss< z_\st \leq 1.
\]
Indeed,
\begin{equation*}
    \begin{split}
        &\gamma_{p_\st}
        -(\alpha+1)z_\ss
        +2\alpha z_\st
        -
        \left(
        \gamma_{p_\st}
        +(\alpha-1)z_\st
        \right)
        \\
        &\qquad=
        (\alpha+1)(z_\st-z_\ss)>0,
    \end{split}
\end{equation*}
and
\begin{equation*}
    \begin{split}
    \gamma_{p_\st}+1-z_\ss-z_\st+\alpha z_\st
        -
        \left(
        \gamma_{p_\st}
        +(\alpha-1)z_\st
        \right)
        =
        1-z_\ss.
    \end{split}
\end{equation*}

The first and third inequalities in \eqref{eq:lastcor-corrected} imply
\begin{equation*}
    1-(\alpha-1)z_\st-\gamma_{p_\st}
    >
    z_\ss
    >
    \frac{(\alpha-1)z_\st+\gamma_{p_\st}}
    {2\alpha R}.
\end{equation*}
The second inequality further requires
\begin{equation*}
    z_\ss
    <
    1+\gamma_{n_\ss}-\alpha z_\st-\gamma_{p_\st},
\end{equation*}
and the fourth inequality requires
\begin{equation*}
    z_\ss
    >
    \frac{(\alpha-1)z_\st+\gamma_{p_\st}-\gamma_{p_\ss}}
    {\alpha-1}.
\end{equation*}

The fifth inequality requires \begin{equation*}
    \begin{split}
        z_\ss < \gamma_{n_\ss}.
    \end{split}
\end{equation*}

Therefore, the sufficient and necessary condition for weak-to-strong improvement in this regime is that
\begin{equation}
\label{eq:zss-interval-corrected}
    \begin{split}
        \max\left\{
        \frac{(\alpha-1)z_\st+\gamma_{p_\st}}
        {2\alpha R},
        \frac{(\alpha-1)z_\st+\gamma_{p_\st}-\gamma_{p_\ss}}
        {\alpha-1}
        \right\}
        <
        z_\ss
        <
        \min\left\{
        z_\st,
        1-(\alpha-1)z_\st-\gamma_{p_\st},
        1+\gamma_{n_\ss}-\alpha z_\st-\gamma_{p_\st},
        \gamma_{n_\ss}
        \right\}.
    \end{split}
\end{equation}

In order for the interval in  \eqref{eq:zss-interval-corrected} to be non-empty, we require \begin{equation*}
    \max\left\{
        \frac{(\alpha-1)z_\st+\gamma_{p_\st}}
        {2\alpha R},
        \frac{(\alpha-1)z_\st+\gamma_{p_\st}-\gamma_{p_\ss}}
        {\alpha-1}
        \right\} < \min\left\{
        z_\st,
        1-(\alpha-1)z_\st-\gamma_{p_\st},
        1+\gamma_{n_\ss}-\alpha z_\st-\gamma_{p_\st},
        \gamma_{n_\ss}
        \right\},
\end{equation*} which is equivalent to each term on the LHS of \eqref{eq:zss-interval-corrected} to be smaller than each term on the RHS. 

We first study the terms that only depend on the teacher's scaling, and identify a  necessary condition for the teacher. In particular, we require:
\begin{equation*}
    \begin{split}
        & \frac{(\alpha-1)z_\st+\gamma_{p_\st}}
    {2\alpha R}
    <
    1-(\alpha-1)z_\st-\gamma_{p_\st} \\
    & \frac{(\alpha-1)z_\st+\gamma_{p_\st}}
    {2\alpha R}
    < z_\st.
    \end{split}
\end{equation*}
The above is equivalent to
\begin{equation*}
      \frac{\gamma_{p_\st}}{1+2\alpha R-\alpha} < z_\st
    <
    \frac{1}{\alpha-1}
    \left(
    \frac{2\alpha R}{1+2\alpha R}
    -
    \gamma_{p_\st}
    \right),
\end{equation*}
where the lower bound is equivalent to \eqref{eq:gamma-p-target-small}. 
This  interval for $z_\st$ is non-empty whenever
\begin{equation*}
    \gamma_{p_\st}
    <
    1-
    \frac{\alpha}{1+2\alpha R}.
\end{equation*}

Finally, we study the terms that depend both the teacher's and the student's scaling, which leads to the conditions
\begin{equation}
\label{eq:W2S-final-cond}
    \begin{split}
        \gamma_{p_\ss}
&>
\max\left\{
\gamma_{p_\st},
\,
\alpha\big((\alpha-1)z_\st+\gamma_{p_\st}\big)-(\alpha-1)
\right\},
\\
\gamma_{n_\ss}
&>
\max\left\{
\frac{(\alpha-1)z_\st+\gamma_{p_\st}}{2\alpha R},
\,
\alpha z_\st+\gamma_{p_\st}-1
+
\frac{(\alpha-1)z_\st+\gamma_{p_\st}}{2\alpha R}
\right\},
\\
\gamma_{p_\ss}+(\alpha-1)\gamma_{n_\ss}
&>
\max\left\{
(\alpha-1)z_\st+\gamma_{p_\st},
\,
(\alpha^2-1)z_\st+\alpha\gamma_{p_\st}-(\alpha-1)
\right\},
    \end{split}
\end{equation}
thus concluding the proof.
}
\end{proof}

\end{document}